\newcommand{\polylog}{\mathrm{polylog}}
\newcommand{\poly}{\mathrm{poly}}
\newcommand{\bigO}{\mathcal{O}}
\newcommand\bigOtilde{\widetilde{\mathcal{O}}}
\newcommand\Omegatilde{\widetilde{\Omega}}
\newcommand\Thetatilde{\widetilde{\Theta}}
\newcommand\ERM{\mathrm{ERM}}
\newcommand\Cutout{\mathrm{Cutout}}
\newcommand\CutMix{\mathrm{CutMix}}
\theoremstyle{plain}
\newtheorem{theorem}{Theorem}[section]
\newtheorem{lemma}[theorem]{Lemma}
\theoremstyle{definition}
\newtheorem{definition}[theorem]{Definition}
\newtheorem{assumption}[theorem]{Assumption}
\theoremstyle{remark}
\newtheorem{remark}[theorem]{Remark}
\newcommand{\mycomment}[1]{}
\newcommand{\GG}[1]{}
\def\eqref#1{(\ref{#1})}
\def\rvz{{\mathbf{z}}}
\def\vzero{{\bm{0}}}
\def\va{{\bm{a}}}
\def\vc{{\bm{c}}}
\def\ve{{\bm{e}}}
\def\vu{{\bm{u}}}
\def\vv{{\bm{v}}}
\def\vw{{\bm{w}}}
\def\vx{{\bm{x}}}
\def\mI{{\bm{I}}}
\def\mJ{{\bm{J}}}
\def\mW{{\bm{W}}}
\def\mX{{\bm{X}}}
\def\mZ{{\bm{Z}}}
\def\mLambda{{\bm{\Lambda}}}
\DeclareMathAlphabet{\mathsfit}{\encodingdefault}{\sfdefault}{m}{sl}
\SetMathAlphabet{\mathsfit}{bold}{\encodingdefault}{\sfdefault}{bx}{n}
\def\gC{{\mathcal{C}}}
\def\gD{{\mathcal{D}}}
\def\gF{{\mathcal{F}}}
\def\gG{{\mathcal{G}}}
\def\gK{{\mathcal{K}}}
\def\gL{{\mathcal{L}}}
\def\gR{{\mathcal{R}}}
\def\gS{{\mathcal{S}}}
\def\gV{{\mathcal{V}}}
\def\gW{{\mathcal{W}}}
\def\gZ{{\mathcal{Z}}}
\newcommand{\R}{\mathbb{R}}
\newcommand{\N}{\mathbb{N}}
\newcommand{\iid}{\emph{i.i.d.}}
\definecolor{darkblue}{rgb}{0.0,0.0,0.65}
\definecolor{darkred}{rgb}{0.65,0.0,0.0}
\definecolor{darkgreen}{rgb}{0.0,0.65,0.0}
\definecolor{Gray}{gray}{0.9}
\title{Provable Benefit of Cutout and CutMix \\for Feature Learning}
\author{
  Junsoo Oh \\
  KAIST AI\\
  \texttt{junsoo.oh@kaist.ac.kr} \\
  \And
  Chulhee Yun \\
  KAIST AI \\
  \texttt{chulhee.yun@kaist.ac.kr} \\
}
\begin{document}

\maketitle

\begin{abstract}
Patch-level data augmentation techniques such as Cutout and CutMix have demonstrated significant efficacy in enhancing the performance of vision tasks. However, a comprehensive theoretical understanding of these methods remains elusive. In this paper, we study two-layer neural networks trained using three distinct methods: vanilla training without augmentation, Cutout training, and CutMix training. Our analysis focuses on a feature-noise data model, which consists of several label-dependent features of varying rarity and label-independent noises of differing strengths. Our theorems demonstrate that Cutout training can learn low-frequency features that vanilla training cannot, while CutMix training can learn even rarer features that Cutout cannot capture. From this, we establish that CutMix yields the highest test accuracy among the three. Our novel analysis reveals that CutMix training makes the network learn all features and noise vectors ``evenly'' regardless of the rarity and strength, which provides an interesting insight into understanding patch-level augmentation.
\end{abstract}

\vspace{-6pt}
\section{Introduction}
\vspace{-5pt}
Data augmentation is a crucial technique in deep learning, particularly in the image domain. It involves creating additional training examples by applying various transformations to the original data, thereby enhancing the generalization performance and robustness of deep learning models. Traditional data augmentation techniques typically focus on geometric transformations such as random rotations, horizontal and vertical flips, and cropping~\citep{krizhevsky2012imagenet}, or color-based adjustments such as color jittering~\citep{simonyan2014very}.

In recent years, several new data augmentation techniques have appeared.
Among them, patch-level data augmentation techniques like Cutout~\citep{devries2017improved} and CutMix~\citep{yun2019cutmix} have received considerable attention for their effectiveness in improving generalization. Cutout is a straightforward method where random rectangular regions of an image are removed during training.
In comparison, CutMix adopts a more complex strategy by cutting and pasting sections from different images and using mixed labels, encouraging the model to learn from blended contexts. 
The success of Cutout and CutMix has triggered the development of numerous variants including Random Erasing~\citep{zhong2020random}, GridMask~\citep{chen2020gridmask}, CutBlur~\citep{yoo2020rethinking}, Puzzle Mix~\citep{kim2020puzzle}, and Co-Mixup~\citep{kim2021co}. 
However, despite the empirical success of these patch-level data augmentation techniques in various image-related tasks, a lack of comprehensive theoretical understanding persists: \emph{why and how do they work?}

In this paper, we aim to address this gap by offering a theoretical analysis of two important patch-level data augmentation techniques: Cutout and CutMix. Our theoretical framework draws inspiration from a study by \citet{shen2022data}, which explores a data model comprising multiple label-dependent feature vectors and label-independent noises of varying frequencies and intensities. The key idea of this work is that learning features with low frequency can be challenging due to strong noises (i.e., low signal-to-noise ratio). We focus on how Cutout and CutMix can aid in learning such rare features.

\subsection{Our Contributions}

In this paper, we consider a patch-wise data model consisting of features and noises, and use two-layer convolutional neural networks as learner networks. We focus on three different training methods: vanilla training without any augmentation, Cutout training, and CutMix training. We refer to these training methods in our problem setting as \textsf{ERM}, \textsf{Cutout}, and \textsf{CutMix}. We investigate how these methods affect the network's ability to learn features. We summarize our contributions below:
\begin{itemize}[leftmargin=4mm]
    \item We analyze \textsf{ERM}, \textsf{Cutout}, and \textsf{CutMix}, revealing that \textsf{Cutout} outperforms \textsf{ERM} since it enables the learning of rarer features compared to \textsf{ERM} (Theorem~\ref{thm:ERM} and Theorem~\ref{thm:Cutout}). Furthermore, \textsf{CutMix} demonstrates almost perfect performance (Theorem~\ref{thm:CutMix}) by learning all features. 
    \item 
    Our main intuition behind the negative result for \textsf{ERM} is that
    \textsf{ERM} learns to classify training samples by memorizing noise vectors instead of learning meaningful features if the features do not appear frequently enough. Hence, \textsf{ERM} suffers low test accuracy because it cannot learn rare features. However, \textsf{Cutout} alleviates this challenge by removing some of the strong noise patches, allowing it to learn rare features to some extent.
    \item We prove the near-perfect performance of \textsf{CutMix} based on a novel technique that views the non-convex loss as a composition of a convex function and reparameterization. This enables us to characterize the global minimum of the loss and show that \textsf{CutMix} forces the model to activate almost uniformly across every patch of inputs, allowing it to learn all features.
\end{itemize}

\subsection{Related Works}
\paragraph{Feature Learning Theory.} Our work aligns with a recent line of studies investigating how training methods and neural network architectures influence feature learning. These studies focus on a specific data distribution composed of two components: label-dependent features and label-independent noise. The key contribution of this body of work is the exploration of which training methods or neural networks are most effective at learning meaningful features and achieving good generalization performance. 
\citet{allen2020towards} demonstrate that an ensemble model can achieve near-perfect performance by learning diverse features, while a single model tends to learn only certain parts of the feature space, leading to lower test accuracy. In other works, \citet{cao2022benign,kou2023benign} explore the phenomenon of benign overfitting when training a two-layer convolutional neural network. The authors identify the specific conditions under which benign overfitting occurs, providing valuable insights into how these networks behave during training.
Several other studies seek to understand various aspects of deep learning through the lens of feature learning \citep{zou2021understanding,jelassi2022towards,chen2022towards,chen2023does,li2023clean,huang2023graph, huang2023understanding}. 

\paragraph{Theoretical Analysis of Data Augmentation.}
Several works aim to analyze traditional data augmentation from different perspectives, including kernel theory~\citep{dao2019kernel}, margin-based approach~\citep{rajput2019does}, regularization effects~\citep{wu2020generalization}, group invariance~\citep{chen2020group}, and impact on optimization~\citep{hanin2021data}.
Moreover, many papers have explored various aspects of a recent technique called Mixup~\citep{zhang2017mixup}. For example, studies have explored its regularization effects~\citep{carratino2020mixup, zhang2020does}, its role in improving calibration~\citep{zhang2022and}, its ability to find optimal decision boundaries~\citep{oh2023provable} and its potential negative effects~\citep{chidambaram2021towards, chidambaram2024better}. 
Some works investigate the broader framework of Mixup, including CutMix, which aligns with the scope of our work.
\citet{park2022unified} study the regularization effect of mixed-sample data augmentation within a unified framework that contains both Mixup and CutMix. 
In \citet{oh2023provable}, the authors analyze masking-based Mixup, which is a class of Mixup variants that also includes CutMix. In their context, they show that masking-based Mixup can deviate from the Bayes optimal classifier but require less training sample complexity. 
However, neither work provides a rigorous explanation for why CutMix has been successful.
The studies most closely related to our work include \citet{shen2022data, chidambaram2023provably, zou2023benefits}. \citet{shen2022data} regard traditional data augmentation as a form of feature manipulation and investigate its advantages from a feature learning perspective. Both \citet{chidambaram2023provably} and \citet{zou2023benefits} analyze Mixup within a feature learning framework. However, patch-level data augmentation such as Cutout and CutMix, which are the focus of our work, have not yet been explored within this context.
\section{Problem Setting}\label{sec:problem_setting}
In this section, we introduce the data distribution and neural network architecture, and formally describe the three training methods considered in this paper.

\subsection{Data Distribution}
We consider a binary classification problem on structured data, consisting of patches of label-dependent vectors (referred to as \emph{features}) and label-independent vectors (referred to as \emph{noise}).

\begin{definition}[Feature Noise Patch Data]
    We define a data distribution $\gD$ on $\R^{d\times P} \times \{-1,1\}$ such that $(\mX,y) \sim \mathcal{D}$ where $\mX = \left (\vx^{(1)}, \dots, \vx^{(P)} \right) \in \R^{d \times P}$ and $y \in \{\pm 1 \}$ is constructed as follows.
    \begin{enumerate}[leftmargin = 5mm]
        \item Choose the \emph{label} $y \in \{\pm 1\}$ uniformly at random.
        \item Let $\{ \vv_{s,k} \}_{s \in \{\pm 1\},k\in [K]} \subset \R^{d}$ be a set of orthonormal \emph{feature vectors}. Choose the feature vector $\vv \in \R^d$ for data point $\mX$ as $\vv = \vv_{y,k}$ with probability $\rho_k$ from $\{ \vv_{y,k} \}_{k\in [K]} \subset \R^{d}$, where $\rho_1 +  \cdots + \rho_K = 1$ and $\rho_1 \geq \dots \geq \rho_K$. In our setting, there are three types of features with significantly different frequencies: \emph{common features}, \emph{rare features}, and \emph{extremely rare features}, ordered from most to least frequent. The indices of these features partition $[K]$ into $(\gK_C, \gK_R, \gK_E)$.
        \item  We construct $P$ patches of $\mX$ as follows.
        \begin{itemize}[leftmargin = 5mm]
            \item \textbf{Feature Patch}: Choose $p^*$ uniformly from $[P]$ and we set $\vx^{(p^*)}=  \vv$.
            \item \textbf{Dominant Noise Patch}: Choose $\tilde{p}$ uniformly from $[P]\setminus \{p^*\}$. We construct $\vx^{(\tilde{p})} = \alpha\vu + \xi^{(\tilde{p})}$ where $\alpha \vu$ is \emph{feature noise} drawn uniformly from $\{ \alpha \vv_{1,1}, \alpha \vv_{-1,1}\}$ with $0<\alpha<1$ and $\xi^{(\tilde{p})}$ is Gaussian \emph{dominant noise} drawn from $N(\vzero, \sigma_\mathrm{d}^2 \mLambda)$.           
            \item \textbf{Background Noise Patch}: The remaining patches $p \in [P]\setminus\{p^*, \tilde{p}\}$ consist of Gaussian \emph{background noise}, i.e., we set $\vx^{(p)} = \xi^{(p)}$ where $\xi^{(p)} \sim N(\vzero, \sigma_\mathrm{b}^2 \mLambda)$.
        \end{itemize}
        Here, the noise covariance matrix is defined as $\mLambda := \mI - \sum_{s,k}\vv_{s,k} \vv_{s,k}^\top$ which ensures that Gaussian noises are orthogonal to all features. We assume that the dominant noise is stronger than the background noise, i.e., $\sigma_\mathrm{b} < \sigma_\mathrm{d}$.
    \end{enumerate}
\end{definition}
Our data distribution captures characteristics of image data, where the input consists of several patches. Some patches contain information relevant to the image labels, such as cat faces for the label ``cat,'' while other patches contain information irrelevant to the labels, such as the background. Intuitively, there are two ways to fit the given data: learning features or memorizing noise. If a model fits the data by learning features, it can correctly classify test data having the same features. However, if a model fits the data by memorizing noise, it cannot generalize to unseen data because noise patches are not relevant to labels. Thus, learning more features is crucial for achieving better generalization.

In real-world scenarios, different features may appear with varying frequencies. For instance, the occurrences of cat's faces and cat's tails in a dataset might differ significantly, although both are relevant to the ``cat'' label. Our data distribution reflects these characteristics by considering features with varying frequencies. To emphasize the distinctions between the three training methods we analyze, we categorize features into three groups: common, rare, and extremely rare.  We refer to data points containing these features as \emph{common data}, \emph{rare data}, and \emph{extremely rare data}, respectively. We emphasize that these terminologies are chosen merely to distinguish the three different levels of rarity, and even ``extremely rare'' features appear in a nontrivial fraction of the training data with high probability (see our assumptions in Section~\ref{sec:choice}).

\paragraph{Comparison to Previous Work.}
Our data distribution is similar to those considered in~\citet{shen2022data} and \citet{zou2023benefits}, which investigate the benefits of standard data augmentation methods and Mixup by comparing them to vanilla training without any augmentation. These results consider two types of features—common and rare—with different levels of rarity, 
along with two types of noise: feature noise and Gaussian noise.  However, we consider three types of features: common, rare, and extremely rare, and three types of noise: feature noise, dominant noise, and background noise. This distinction allows us to compare three distinct methods and demonstrate the differences between them, whereas \citet{shen2022data} and \citet{zou2023benefits} compared only two methods.

\subsection{Neural Network Architecture}
For the prediction model, we focus on the following two-layer convolutional neural network where the weights in the second layer are fixed at $1$ and $-1$, with only the first layer being trainable. Several works including \citet{shen2022data} and \citet{zou2023benefits} also focus on similar two-layer convolutional neural networks.

\begin{definition}[2-Layer CNN]
    We define 2-layer CNN $f_{\mW} : \R^{d\times P} \rightarrow \R$ parameterized by $\mW = \{\vw_1, \vw_{-1}\} \in \R^{d \times 2}$. For each input $\mX = \left(\vx^{(1)}, \dots, \vx^{(P)}\right) \in \mathbb{R}^{d \times P}$, we define
    \begin{equation*}
        f_\mW(\mX) := \sum_{p \in [P]}  \phi \left(\left \langle \vw_1 , \vx^{(p)} \right \rangle\right) -   \sum_{p \in [P]} \phi \left(\left \langle \vw_{-1} , \vx^{(p)} \right \rangle \right) ,
    \end{equation*}
    where $\phi(\cdot)$ is a smoothed version of leaky ReLU activation, defined as follows.
    \begin{align*}
        \phi(z) := \begin{cases}
            z - \frac{(1- \beta)r}{2} & z \geq r\\
            \frac{1-\beta}{2r}z^2 + \beta z  &0 \leq z \leq r\\
            \beta z & z \leq 0
        \end{cases},
    \end{align*}
    where $ 0 < \beta \leq 1$ and $r>0$. 
\end{definition}

Previous works on the theory of feature learning often consider neural networks with (smoothed) ReLU or polynomial activation functions. 
However, we adopt a smoothed leaky ReLU activation, which always has a positive slope, to exclude the possibility of neurons ``dying'' during the complex optimization trajectory.
Using smoothed leaky ReLU to analyze the learning dynamics of neural networks is not entirely new; there is a body of work that studies phenomena such as benign overfitting~\citep{frei2022benign} and implicit bias~\citep{frei2022implicit,kou2023implicit} by analyzing neural networks with (smoothed) leaky ReLU activation.

A key difference between ReLU and leaky ReLU lies in the possibility of ReLU neurons ``dying'' in the negative region, where some negatively initialized neurons remain unchanged throughout training. As a result, using ReLU activation requires multiple neurons to ensure the survival of neurons at initialization, which becomes increasingly probable as the number of neurons increases. In contrast, the derivative of leaky ReLU is always positive, ensuring that a single neuron is often sufficient. Therefore, for mathematical simplicity, we consider the case where the network has a single neuron for each positive and negative output. We believe that our analysis can be extended to the multi-neuron case as we validate numerically in Appendix~\ref{sec:synthetic_additional_exp}.

\subsection{Training Methods}
Using a training set sampled from the distribution $\gD$, we would like to train our network $f_\mW$ to learn to correctly classify unseen data points from $\gD$.
We consider three learning methods: vanilla training without any augmentation, Cutout, and CutMix. We first introduce necessary notation for our data and parameters, and then formalize training methods within our framework.

\paragraph{Training Data.}
We consider a training set $\gZ= \{(\mX_i,y_i) \}_{i \in [n]}$ comprising $n$ data points, each independently drawn from $\gD$. For each $i \in [n]$, we denote $\mX_i = (\vx_i^{(1)}, \dots, \vx_i^{(P)} )$.

\paragraph{Initialization.}
We initialize the model parameters in our neural network using random initialization. Specifically, we initialize the model parameter $\mW^{(0)} = \{\vw_1^{(0)}, \vw_{-1}^{(0)} \}$, where $\vw_1^{(0)}, \vw_{-1}^{(0)} \overset{\text{i.i.d.}}{\sim} N(\vzero, \sigma_0^2 \mI_{d})$. Let us denote updated model parameters at iteration $t$ as $\mW^{(t)} = \{\vw_1^{(t)}, \vw_{-1}^{(t)} \}$.

\subsubsection{Vanilla Training} The vanilla approach to training a model $f_\mW$ is solving the empirical risk minimization problem using gradient descent. We refer to this method as \textsf{ERM}. Then, \textsf{ERM} updates parameters $\mW^{(t)}$ of a model using the following rule.
\begin{equation*}
    \mW^{(t+1)} = \mW^{(t)} - \eta \nabla_{\mW} \gL_{\ERM}\left(\mW^{(t)}\right),
\end{equation*}
where $\eta$ is a learning rate and $\gL_{\ERM}(\cdot)$ is the ERM training loss defined as
\begin{equation}\label{eq:ERM}
    \gL_\ERM (\mW):= \frac{1}{n} \sum_{i \in [n]} \ell(y_i f_{\mW}(\mX_i)),
\end{equation}
where $\ell(\cdot)$ is the logistic loss $\ell(z) = \log(1+e^{-z})$.

\subsubsection{Cutout Training.}
Cutout~\citep{devries2017improved} is a data augmentation technique that randomly cuts out 
rectangular regions of image inputs. In our patch-wise data, we regard Cutout training as using inputs with masked patches from the original data. For each subset $\gC$ of $[P]$ and $i \in [n]$, we define augmented data $\mX_{i,\gC} \in \R^{d \times P}$ as a data point generated by cutting the patches with indices in $\gC$ out of $\mX_i$. We can represent $\mX_{i, \gC}$ as 
\begin{align*}
    \mX_{i,\gC} =  \left( \vx_{i,\gC}^{(1)}, \dots, \vx_{i, \gC}^{(P)} \right), \text{ where }~    \vx_{i, \gC}^{(p)} = \begin{cases}
        \vx_i^{(p)} & \text{if $p \notin \gC$,}\\
        \vzero & \text{otherwise.}
    \end{cases}
\end{align*}
Note that the output of the model $f_\mW(\cdot)$ on this augmented data point $\mX_{i,\gC}$ is
\begin{align*}
     f_\mW (\mX_{i,\gC}) 
    = \sum_{p \notin \gC} \phi \left( \left \langle \vw_1, \vx_{i}^{(p)} \right \rangle \right) - \sum_{p \notin \gC}\phi \left( \left \langle \vw_{-1}, \vx_{i}^{(p)} \right \rangle \right) .
\end{align*}
Then, the objective function for Cutout training can be defined as 
\begin{equation*}
    \gL_\Cutout (\mW) := \frac{1}{n} \sum_{i \in [n]} \mathbb{E}_{\gC \sim \gD_\gC}  [\ell (y_i f_\mW(\mX_{i,\gC}))] ,
\end{equation*}
where $\gD_\gC$ is a uniform distribution on the collection of subsets of $[P]$ with cardinality $C$, where $C$ is a hyperparameter satisfying $1\leq C < \frac P 2$ .\footnote{\citet{devries2017improved} also employ a moderate size of cutting, such as cutting $16\times 16$ pixels on CIFAR-10 data, which originally has $32\times32$ pixels.} We refer to the process of training our model using gradient descent on Cutout loss $\gL_\Cutout(\mW)$ as \textsf{Cutout}, and its update rule is 
\begin{equation}\label{eq:Cutout}
    \mW^{(t+1)} = \mW^{(t)} - \eta \nabla_\mW \gL_\Cutout\left( \mW^{(t)}\right),
\end{equation}
where $\eta$ is a learning rate.

\subsubsection{CutMix Training.}
CutMix~\citep{yun2019cutmix} involves not only cutting parts of images, but also pasting them into different images as well as assigning them mixed labels. For each subset $\gS$ of $[P]$ and $i,j \in [n]$, we define the augmented data point $\mX_{i,j, S}  \in \R^{d \times P}$ as the data obtained by cutting patches with indices in $\gS$ from data $\mX_i$ and pasting them into $\mX_j$ at the same indices $\gS$. We can write $\mX_{i,j,\gS}$ as 
\begin{align*}
    \mX_{i,j, \gS} = \left( \vx_{i,j,\gS}^{(1)} , \dots, \vx_{i,j, \gS}^{(P)}\right), \text{ where }~      \vx_{i,j,\gS}^{(p)} = 
    \begin{cases}
        \vx_i^{(p)} &\quad \text{if $p \in \gS$,} \\
        \vx_j^{(p)} &\quad \text{otherwise.}  
    \end{cases}
\end{align*}
The one-hot encoding of the labels $y_i$ and $y_j$ are also mixed with proportions $\frac{|\gS|}{P}$ and $1- \frac{|\gS|}{P}$, respectively. 
This mixed label results in the loss of the form
\begin{equation*}
    \frac{|\gS|}{P} \ell(y_i f_\mW(\mX_{i,j,\gS})) + \left(1- \frac{|\gS|}{P}\right) \ell(y_j f_\mW(\mX_{i,j,\gS})).
\end{equation*}
From this, the CutMix training loss $\gL_{\CutMix}(\mW)$ can be defined as
\begin{equation*}
    \gL_{\CutMix}(\mW ) := \frac{1}{n^2 } \sum_{i,j \in [n]} \mathbb{E}_{\gS \sim \gD_\gS} \Bigg [\frac{|\gS|}{P} \ell(y_i f_\mW(\mX_{i,j,\gS}) )
     +\left(1- \frac{|\gS|}{P} \right) \ell(y_j f_\mW(\mX_{i,j,\gS})) \Bigg],
\end{equation*}
where $\gD_\gS$ is a probability distribution on the set of subsets of $[P]$ which samples $\gS \sim \gD_\gS$ as follows.\footnote{Other types of distributions, such as those considered in \citet{yun2019cutmix}, make the same conclusion. We adopt this distribution to make presentation simpler.}
\begin{enumerate}
    \item Choose the cardinality $s$ of $\gS$ uniformly at random from $\{0,1, \dots, P\}$, and
    \item Choose $\gS$ uniformly at random from the collection of subsets of $[P]$ with cardinality $s$.
\end{enumerate}
We refer to the process of training our network using gradient descent on CutMix loss $\gL_\CutMix(\mW)$ as \textsf{CutMix}, and its update rule is 
\begin{equation}\label{eq:CutMix Loss}
    \mW^{(t+1)} = \mW^{(t)}-  \eta \nabla_\mW \gL_\CutMix \left( \mW^{(t)}\right),
\end{equation}
where $\eta$ is a learning rate.

\subsection{Assumptions on the Choice of Problem Parameters}\label{sec:choice}
To control the quantities that appear in the analysis of training dynamics, we make assumptions on several quantities in our problem setting. For simplicity, we use choices of problem parameters as a function of the dimension of patches $d$ and consider sufficiently large $d$. 

We use the standard asymptotic notation $\bigO(\cdot), \Omega(\cdot), \Theta(\cdot), o(\cdot), \omega(\cdot)$ to express the dependency on $d$. We also use $\bigOtilde(\cdot), \Omegatilde(\cdot), \Thetatilde(\cdot)$ to hide logarithmic factors of $d$. Additionally, $\poly (d)$ (or $\polylog(d)$) represents quantities that increase faster than $d^{c_1}$(or $(\log d) ^{c_1}$) and slower than $d^{c_2}$ (or $(\log d)^{c_2}$) for some constant $0<c_1<c_2$. Similarly, $o(1/\poly(d))$ (or $o(1/\polylog(d))$) denotes some quantities that decrease faster than $1/d^c$ (or $1/(\log d)^c$) for any constant $c$. Finally, we use $f(d) = o(g(d)/\polylog(d))$ when $f(d)/g(d) = o (1/\polylog(d))$ for some function $f$ and $g$ of $d$.

\noindent\textbf{Assumptions.}~~
We assume that $P = \Theta(1)$ and $P\geq8$ for simplicity. Additionally, we consider a high-dimensional regime where the number of data points is much smaller than the dimension $d$, which is expressed as $n = o \left( \alpha \beta \sigma_\mathrm{d}^{-1} \sigma_\mathrm{b} d^{\frac 1 2} 
/ \polylog(d) \right)$. We also assume that $\rho_k n = \omega \left( n ^{\frac 1 2} \log d\right)$ for all $k \in [K]$, which ensures the sufficiency of data points with each feature.

In addition, as we will describe in Section~\ref{sec:overview}, the relative scales between the frequencies of features and the strengths of noises play crucial roles in our analysis, as they serve as a proxy for the ``learning speed'' in the initial phase. For common features $k \in \gK_C$, we assume $\rho_k = \Theta(1)$ and the learning speed of common features is much faster than that of dominant noise, which translates into the assumption $\sigma_\mathrm{d}^2 d = o (\beta n)$. For rare features $k \in \gK_R$, we assume $\rho_k = \Theta(\rho_R)$ for some $\rho_R$, and we consider the case where the learning speed of rare features is much slower than that of dominant noise but faster than background noise, which is expressed as $\rho_R n = o \left(\alpha^2 \sigma_\mathrm{d}^2 d /\polylog (d)\right)$ and $\sigma_\mathrm{b}^2 d = o (\beta \rho_R n)$. Finally, for extremely rare features $k \in \gK_E$, we say $\rho_k = \Theta(\rho_E)$ for some $\rho_E$ and their learning is even slower than that of background noises, which can be expressed as $\rho_E n = o \left(\alpha^2 \sigma_\mathrm{b}^2 d / \polylog (d)\right)$.

Lastly, we assume the strength of feature noise satisfies $\alpha = o \left( n^{-1} \beta \sigma_\mathrm{d}^2 d / \polylog (d)\right)$, and $r, \sigma_0, \eta>0$ are sufficiently small so that $\sigma_0,r = o\left (\alpha / \polylog (d)\right)$, $\eta = o\left(r \sigma_\mathrm{d}^{-2} d^{-1}/\polylog (d) \right)$. 

We list our assumptions in Assumption~\ref{assumption:parameters} and there are many choices of parameters satisfying the set of assumptions, including:
\vspace{+3pt}
\begin{align*}
    P = 8, C = 2, n = \Theta\left(d^{0.4}\right), \alpha = \Theta \left( d^{-0.02}\right),\beta = \frac{1}{\polylog (d)},  \sigma_0 = \Theta(d^{-0.2}), r = \Theta(d^{-0.2}), \\
    \sigma_\mathrm{d} = \Theta\left(d^{-0.305}\right), \sigma_\mathrm{b} = \Theta \left( d^{-0.375}\right), \rho_R = \Theta \left( d^{-0.1}\right), \rho_E = \Theta\left( d^{-0.195} \right),  \eta = \Theta(d^{-1}).
\end{align*}

\section{Main Results}
In this section, we provide a characterization of the high probability guarantees for the behavior of models trained using three distinct methods we have introduced. 
We denote by $T^*$ the maximum admissible training iterates and we assume $T^* = \frac{\poly(d)} \eta $ with a sufficiently large polynomial in $d$. 
In all of our theorem statements, the randomness is over the sampling of training data and the initialization of models and all results hold under the condition that $d$ is sufficiently large. 

The following theorem characterizes training accuracy and test accuracy achieved by \textsf{ERM}.

\begin{theorem}\label{thm:ERM}
    Let $\mW^{(t)}$ be iterates of \textsf{ERM}.
    Then with probability at least $1- o \left( \frac 1 {\poly(d)} \right)$, there exists $T_\ERM$ such that any $T \in [T_\ERM, T^*]$ satisfies the following:
    \begin{itemize}[leftmargin = 4 mm]
        \item (Perfectly fits training set): 
        For all $i \in [n],y_i f_{\mW^{(T)}}(\mX_i)>0$.
        \item (Random on (extremely) rare data): 
        $\mathbb{P}_{(\mX,y) \sim \gD} \left[ y f_{\mW^{(T)}}(\mX) \!> \! 0\right] \!=\! 1- \tfrac{1}{2} \!\!\sum\limits_{k \in \gK_R \cup \gK_E} \!\!\rho_k \pm o \left( \frac 1 {\poly (d)} \right).$
    \end{itemize}
\end{theorem}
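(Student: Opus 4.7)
The plan is to track the weight coefficients $a_{s,y,k}^{(t)} := \langle \vw_s^{(t)}, \vv_{y,k}\rangle$ along features and $b_{s,i,p}^{(t)} := \langle \vw_s^{(t)}, \xi_i^{(p)}\rangle$ along training-noise directions. The covariance $\mLambda$ makes every Gaussian noise vector orthogonal to every feature, so these coefficients evolve essentially independently at leading order; at initialization standard Gaussian concentration gives $|a_{s,y,k}^{(0)}| = \bigOtilde(\sigma_0)$ and $|b_{s,i,p}^{(0)}| = \bigOtilde(\sigma_0 \sigma_\mathrm{d} \sqrt{d})$, all $o(r)$ under the assumptions of Section~\ref{sec:choice}.

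First I would analyze an initial \emph{linear regime} in which $|\langle \vw_s, \vx_i^{(p)}\rangle| \leq r$, so $\phi'(\cdot) \in [\beta, 1]$ and is essentially $\beta$ near the origin. Expanding the gradient, the one-step update to each coefficient is $\Theta(\eta \beta)$ times a ``source strength'': $\Theta(\rho_k)$ for $a_{y,y,k}$ (aggregated over the $\sim \rho_k n/2$ training data carrying feature $\vv_{y,k}$), and $\Theta(\sigma_\mathrm{d}^2 d /n)$ for a single-data coefficient $b_{y_i, i, \tilde{p}_i}$ (from the dominant-noise patch whose squared norm is $\approx \sigma_\mathrm{d}^2 d$), with analogous rates for background noise and cross terms. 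The parameter assumptions arrange these source strengths into the strict ordering common-feature learning ($\Theta(1)$) $\gg$ dominant-noise memorization ($o(\beta)$) $\gg$ rare-feature learning ($\Theta(\rho_R)$) $\gg$ background-noise memorization ($\Theta(\sigma_\mathrm{b}^2 d / n)$) $\gg$ extremely-rare-feature learning. Hence common-feature coefficients cross the threshold $r$ first (in $\Theta(r/(\eta \beta))$ steps), dominant-noise coefficients memorize next, and rare/extremely-rare feature coefficients never leave the linear regime.

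In the subsequent \emph{nonlinear regime}, I would show by induction over iterations $t$ that once $a_{y,y,k}^{(t)} = \Theta(1)$ for $k \in \gK_C$ and $b_{y_i, i, \tilde{p}_i}^{(t)} = \Theta(1)$ for every $i$, the margin $y_i f_{\mW^{(t)}}(\mX_i)$ is $\Omega(1)$ for every training point, so the ERM loss decays to zero and all training points are correctly classified (first bullet). The key inductive claim, maintained uniformly for $t \leq T^*$, is that $|a_{s,y,k}^{(t)}| = o(r)$ for every $k \in \gK_R \cup \gK_E$: the residual $g_i$ on rare data stays $\Theta(1)$ only until dominant-noise memorization completes, and within that window the integrated growth of these coefficients is at most $\Theta(\rho_R n /(\sigma_\mathrm{d}^2 d)) \cdot r$, which is $o(r)$ by $\rho_R n = o(\alpha^2 \sigma_\mathrm{d}^2 d / \polylog(d))$ (and analogously for $\gK_E$).

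For the test-accuracy bullet, on common test data ($k \in \gK_C$) the learned feature gives $\langle \vw_y, \vv_{y,k}\rangle = \Theta(1)$ and $\langle \vw_{-y}, \vv_{y,k}\rangle$ negative of the same order, while $\langle \vw_s, \xi^{(p)}\rangle = o(1)$ on independent test noise by sub-Gaussian tails conditional on weight norms, so $y f_{\mW^{(T)}}(\mX) > 0$ with probability $1 - o(1/\poly(d))$. On rare/extremely-rare test data the feature patch contributes $\phi(o(r)) - \phi(o(r)) = o(\beta r) = o(\alpha)$ and the background patches contribute $o(\alpha)$ in total, so the sign of $f_{\mW^{(T)}}(\mX)$ is driven by the dominant-noise patch's feature-noise term $\alpha \vu$, $\vu \in \{\vv_{1,1}, \vv_{-1,1}\}$, via the learned common directions $\vv_{\pm 1,1}$. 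Since $\vu$ is sampled independently of $y$, symmetry yields $\mathbb{P}[y f_{\mW^{(T)}}(\mX) > 0 \mid \text{rare data}] = \tfrac{1}{2} \pm o(1/\poly(d))$, and combining with the common-data accuracy gives the stated test accuracy. The main obstacle is sustaining the inductive bound on rare-feature coefficients uniformly over $T \in [T_\ERM, T^*]$: because residuals on rare data remain $\Theta(1)$ until memorization completes, and $\phi'$ can exceed $\beta$ on some patches well before then, careful bookkeeping of when each coefficient leaves the linear regime is required to convert $\rho_R n = o(\alpha^2 \sigma_\mathrm{d}^2 d / \polylog(d))$ into a uniform-in-$T$ bound.
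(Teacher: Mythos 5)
Your high-level picture matches the paper's: you track the same coefficients (the paper packages them as $\gamma_s^{(t)}(s',k)$ and $\rho_s^{(t)}(i,p)$ in a feature--noise decomposition), you deduce the same ordering of "learning speeds," you run the same phase analysis (common features learned, then dominant noise memorized), and your test-accuracy argument (common test data classified by the learned feature direction; rare/extremely-rare test data decided by the feature-noise term $\alpha\vu$ in the fresh dominant-noise patch, giving probability $\tfrac12$ by symmetry) is essentially the paper's Section on test accuracy. The decomposition bookkeeping and the treatment of the $\alpha\vu$ term in $\vx^{(\tilde p)}$ differ only cosmetically.

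However, there is a genuine gap at the step you yourself flag as ``the main obstacle,'' and the bound you offer there does not close it. You claim the cumulative growth of rare-feature coefficients is at most $\Theta(\rho_R n/(\sigma_\mathrm{d}^2 d))\cdot r$ ``within that window'' (the memorization window), and then lean on ``the ERM loss decays to zero'' afterward. But the theorem must hold uniformly over $T\in[T_\ERM, T^*]$ with $T^* = \poly(d)/\eta$, and the residuals $g_i^{(t)} = 1/(1+e^{y_i f(\mX_i)})$ on rare data do \emph{not} become asymptotically small immediately after memorization: once the dominant-noise margin reaches $\Theta(1)$, $g_i^{(t)}$ is a constant in $(0,\tfrac12)$, and the per-step increment of a rare-feature coefficient is still $\Theta(\eta\rho_k)$, so a naive bound over the remaining $T^* - T_\ERM = \poly(d)/\eta$ steps gives growth of order $\rho_k\cdot\poly(d)$, which is not $o(r)$. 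The only leverage is that the \emph{cumulative} loss $\sum_{t\geq T_\ERM}\sum_{i\in\gV_{s^*,k^*}}\ell(y_if_{\mW^{(t)}}(\mX_i))$ is bounded, not that the loss at any one time is small. Establishing that requires an argument that couples the trajectory to a reference point, which your proposal never provides. The paper does exactly this in Section C.2.4: it introduces a projection $Q^{(s^*,k^*)}(\mW)$ onto the ``share of the weight update attributable to $\gV_{s^*,k^*}$,'' constructs an artificial comparator $\mW^*$ by adding $M\sum_{i\in\gV_{s^*,k^*}}\xi_i^{(\tilde p_i)}/\lVert\xi_i^{(\tilde p_i)}\rVert^2$ with $M = \Theta(\beta^{-1}\log(\eta\beta^2 T^*/\alpha^2))$ so that the loss at $\mW^*$ on these points is $e^{-\Omega(M\beta)}$, and runs a regret-style (potential-decrease) telescoping bound (Lemma C.8) to get $\frac{\eta}{n}\sum_{t=T_\ERM}^{T^*}\sum_{i\in\gV_{s^*,k^*}}\ell(y_if_{\mW^{(t)}}(\mX_i)) \leq \lVert Q(\mW^{(T_\ERM)})-Q(\mW^*)\rVert^2 + 2\eta T^* e^{-M\beta/4}$. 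The first term is $\bigOtilde(M^2\rho_{k^*}n/(\sigma_\mathrm{d}^2 d))$ and the second cancels the $\eta T^*$ factor by the choice of $M$, yielding the uniform $\bigOtilde(\alpha^2\beta^{-2})$ bound on $\gamma_s^{(T)}(s^*,k^*)$ (Lemma C.9). Without this (or some equivalent regret/online-convexity device), the ``careful bookkeeping'' you allude to has no concrete form, and the claim ``$|a_{s,y,k}^{(T)}| = o(r)$ uniformly for $T\leq T^*$'' is unsupported.

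One further inaccuracy: the bound you state, $\Theta(\rho_R n/(\sigma_\mathrm{d}^2 d))\cdot r$, has the wrong scale even for the memorization window. The window has length $T_\ERM = \Theta(n/(\eta\beta\sigma_\mathrm{d}^2 d))$ (Lemma C.5) and the per-step growth of a rare-feature coefficient is $\Theta(\eta\rho_k)$ while $g_i^{(t)}=\Theta(1)$, giving total growth $\Theta(\rho_k n/(\beta\sigma_\mathrm{d}^2 d))$, not an extra factor of $r$; the paper's corresponding statement is $\gamma_{s'}^{(T_\ERM)}(s^*,k^*)\leq\alpha^2\beta^{-1}$ (Lemma C.4), derived by counting iterations rather than by multiplying by $r$.
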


The proof is provided in Appendix~\ref{sec:proof_ERM}. Theorem~\ref{thm:ERM} demonstrates that \textsf{ERM} achieves perfect training accuracy; however, it performs almost like random guessing on unseen data points with rare and extremely rare features. This is because \textsf{ERM} can only learn common features and overfit rare or extremely rare data in the training set by memorizing noises to achieve perfect training accuracy.

In comparison, we show that \textsf{Cutout} can perfectly fit both augmented training data and original training data, and it can also learn rare features that \textsf{ERM} cannot. However, \textsf{Cutout} still makes random guesses on test data with extremely rare features. We state these in the following theorem with the proof provided in Appendix~\ref{sec:proof_Cutout}:

\begin{theorem}\label{thm:Cutout}
    Let $\mW^{(t)}$ be iterates of \textsf{Cutout} training.
    Then with probability at least $1- o \left( \frac 1 {\poly(d)} \right)$, there exists $T_\Cutout$ such that any $T \in [T_\Cutout, T^*]$ satisfies the following:
    \begin{itemize}[leftmargin = 4 mm]
        \item  (Perfectly fits augmented data): 
        For all $i \in [n]$ and $\gC\subset [P]$ with $|\gC| = C$, $y_i f_{\mW^{(T)}}(\mX_{i,\gC})>0$.
        \item (Perfectly fits original training data): 
        For all $i \in [n],y_i f_{\mW^{(T)}}(\mX_i)>0$.
        \item (Random on extremely rare data): 
        $\mathbb{P}_{(\mX,y) \sim \gD} \left[ y f_{\mW^{(T)}}(\mX)>0\right] = 1- \frac{1}{2} \sum\limits_{k \in \gK_E} \rho_k \pm o \left( \frac 1 {\poly (d)} \right).$
    \end{itemize}
\end{theorem}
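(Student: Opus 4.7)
The plan is to mirror the signal--noise decomposition framework used for \textsf{ERM} in Theorem~\ref{thm:ERM}, writing
$$\vw_s^{(t)} = \vw_s^{(0)} + \sum_{s',k} \gamma_{s,s',k}^{(t)} \vv_{s',k} + \sum_{i,p} \zeta_{s,i,p}^{(t)} \frac{\xi_i^{(p)}}{\|\xi_i^{(p)}\|_2^2},$$
and tracking the scalar coefficients $\gamma$ (feature alignment) and $\zeta$ (noise memorization) through the update~\eqref{eq:Cutout}. My first task is to derive a usable formula for the Cutout gradient: conditioned on a mask $\gC$ only patches $p \notin \gC$ contribute, so after taking expectation over $\gC \sim \gD_\gC$ each patch $p$ inherits a weight of the form $\mathbb{E}_{\gC}[\ell'(y_i f_\mW(\mX_{i,\gC}))\cdot \mathbbm{1}\{p \notin \gC\}]$, which couples the logistic loss derivative to the cutout distribution. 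I would re-express this as a weighted sum over cutout scenarios so the per-coordinate update can be compared patch-by-patch with the \textsf{ERM} one.

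Next I would run a three-phase analysis. Phase~I (common features): since $\rho_k = \Theta(1)$ for $k \in \gK_C$ and $\sigma_\mathrm{d}^2 d = o(\beta n)$, essentially the same argument as in Theorem~\ref{thm:ERM} shows that $\gamma_{y,y,k}$ reaches $\Theta(1)$ in $\bigOtilde(1/\eta)$ steps while all noise coefficients remain $o(1)$. Phase~II (rare features): this is where Cutout departs from \textsf{ERM}. For each rare training sample $i$ with feature $\vv_{y_i,k}$, $k \in \gK_R$, the mask $\gC$ contains the dominant-noise patch $\tilde p_i$ with probability $C/P = \Theta(1)$, and for these augmentations $\ell'(y_i f_{\mW}(\mX_{i,\gC}))$ cannot be driven to zero by memorizing $\xi_i^{(\tilde p_i)}$ since that patch has been zeroed out. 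The remaining signal in $\mX_{i,\gC}$ is the rare feature against background noise of strength $\sigma_\mathrm{b}$, and the assumption $\sigma_\mathrm{b}^2 d = o(\beta \rho_R n)$ makes feature learning the faster of the two races; hence $\gamma_{y,y,k}$ for $k \in \gK_R$ grows to $\Theta(1)$ in $\bigOtilde(1/(\eta \rho_R))$ additional steps. Phase~III (extremely rare features): for $k \in \gK_E$ the analogous effective learning rate $\rho_E n$ is $o(\alpha^2 \sigma_\mathrm{b}^2 d / \polylog(d))$, so even after the dominant-noise patch is cut out background-noise memorization still dominates, and $\gamma_{y,y,k} = o(1)$ throughout $[0,T^*]$ for $k \in \gK_E$.

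With the three phases in hand the three bullets follow. For the third bullet, on a test point carrying a common or rare feature the learned $\gamma$'s are $\Theta(1)$ and swamp the fresh Gaussian inner products, which are $\bigOtilde(\sigma_\mathrm{d}/\sqrt{d})$ with high probability; on an extremely rare test point the learned $\zeta$'s are orthogonal to the independent test noise, so the sign of $f_{\mW^{(T)}}$ is driven by the small initialization and the residual $\gamma_{y,y,k} = o(1)$, yielding probability $\tfrac{1}{2} \pm o(1/\poly(d))$ of success. For the first bullet, I would show that the memorization of an extremely rare training sample $i$ spreads across the strong $\tilde p_i$-patch and the $P-2$ background patches, so that since $C < P/2$ every mask $\gC$ leaves enough memorized patches to produce a positive margin; the second bullet then follows since removing patches only shrinks the margin on noise-memorized samples, while feature-driven samples are already classified robustly.

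The main obstacle I expect is Phase~II. One must rule out the alternative scenario in which the model partially memorizes the dominant noise of a rare sample $i$ so that on the $(P-C)/P$ fraction of masks leaving $\tilde p_i$ intact the loss derivative $\ell'$ vanishes, thereby downweighting the gradient signal on the other masks as well. Because $\ell'$ is evaluated after the mask is sampled, the Cutout gradient is a convex combination of loss derivatives at different outputs, and the updates on $\zeta_{y,i,\tilde p_i}$ and on $\gamma_{y,y,k}$ for $k\in\gK_R$ are tightly coupled. I plan to handle this by maintaining a joint potential comparing $\gamma_{y,y,k}$ with $\zeta_{y,i,\tilde p_i}$ across the dominant-noise-cut masks, and exploiting the gap $\sigma_\mathrm{b}^2 d = o(\beta \rho_R n) = o(\alpha^2 \sigma_\mathrm{d}^2 d / \polylog(d))$ to show that rare feature learning completes before dominant-noise memorization can neutralize the gradient on those cut masks.
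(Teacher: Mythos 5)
Your overall blueprint — signal--noise coefficient tracking, the observation that Cutout shifts the critical threshold from dominant-noise scale $\sigma_\mathrm{d}^2 d$ to background-noise scale $\sigma_\mathrm{b}^2 d$ by producing masks with $\tilde p_i \in \gC$, and per-mask overfitting for the fitting bullets — matches the paper's strategy. However, there are two substantive gaps, plus one misplaced worry.

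First, the ``random on extremely rare data'' bullet does not follow from your argument. You attribute the near-coin-flip behavior to ``small initialization and the residual $\gamma_{y,y,k}=o(1)$,'' but the paper's mechanism is the feature noise $\alpha\vv_{s,1}$ in the dominant-noise patch. Because $\vv_{1,1},\vv_{-1,1}$ are common features that \textsf{Cutout} does learn ($\gamma$'s of order $\Theta(1)$), the dominant-noise patch of a fresh test point contributes $\phi(\langle\vw_s,\vx^{(\tilde p)}\rangle)-\phi(\langle\vw_{-s},\vx^{(\tilde p)}\rangle)=\Omega(\alpha)$, whereas the residual feature term, the fresh-noise inner products, and the initialization are all $o(\alpha/\polylog(d))$. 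The prediction therefore tracks the sign $s$ of the feature noise, which is uniform in $\{\pm 1\}$ and independent of $y$, cleanly giving $\frac{1}{2}\pm o(1/\poly(d))$. A ``small initialization'' argument would require a delicate symmetry claim about the joint law of $\mW^{(0)}$ and $\mW^{(T)}$ conditioned on the training set, which you do not supply and which the paper's model is specifically designed to avoid needing.

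Second, and more fundamentally, you never address why extremely rare features stay unlearned over the \emph{entire} horizon $[T_\Cutout, T^*]$ with $T^* = \poly(d)/\eta$. Your Phase III only establishes that feature learning is slower than background-noise memorization during the initial race; once the augmented data are fit, the loss derivatives shrink and the coefficient increments per step decay, but a priori a polynomially long tail could still accumulate $\gamma_{s}^{(t)}(s,k)$ to order $1$. The paper devotes an entire section to ruling this out, using the function $Q^{(s^*,k^*)}$ to isolate the cumulative update contributed by data in $\gV_{s^*,k^*}$, a carefully chosen reference parameter $\mW^*$ that adds mass on the background-noise directions of those data, and a telescoping bound showing $\sum_{t\geq T_\Cutout}\sum_{i\in\gV_{s^*,k^*}}\mathbb{E}_{\gC}[\ell(y_i f_{\mW^{(t)}}(\mX_{i,\gC}))]$ stays $\bigOtilde(\alpha^2\beta^{-2})$. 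Without an analogue of this maintenance argument, the third bullet is unproved for any $T$ substantially larger than the initial fitting time.

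Finally, your Phase~II ``obstacle'' — that near-vanishing $\ell'$ on masks with $\tilde p_i\notin\gC$ could downweight the signal on the other masks — is not an actual problem. In the feature--noise decomposition the per-mask contributions to $\gamma_s^{(t+1)}(s,k)-\gamma_s^{(t)}(s,k)$ enter additively with the same sign, so one can simply lower-bound by the contributions from masks satisfying $p_i^*\notin\gC$ and $\tilde p_i\in\gC$ and discard the rest. That alone gives a per-step increment of order $\eta\beta\rho_R C(P-C)/(P(P-1))$ as long as the output on the favorable masks stays bounded, which the paper verifies by bounding the background-noise coefficients. Your proposed joint potential is unnecessary machinery.
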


In the case of \textsf{CutMix}, it is challenging to discuss training accuracy directly because the augmented data have soft labels generated by mixing pairs of labels. Instead, we prove that \textsf{CutMix} achieves a sufficiently small gradient of the loss, and the training accuracy on the original training data is perfect.
We also demonstrate that \textsf{CutMix} achieves almost perfect test accuracy, as it learns all types of features regardless of rarity. 
\begin{theorem}\label{thm:CutMix}
    Let $\mW^{(t)}$ be iterates of \textsf{CutMix} training.
    Then with probability at least $1 - o \left( \frac{1}{\poly (d)}\right)$, there exists some $T_\CutMix \in [0,T^*]$ that satisfies the following:
    \begin{itemize} [leftmargin = 4mm]
        \item (Finds a near stationary point): $\left \lVert \nabla _\mW \gL_\CutMix\left (\mW^{(T_\CutMix)} \right) \right\rVert = \frac{1}{\poly (d)}$.
        \item (Perfectly fits original training data): 
        For all $i \in [n]$, $y_i f_{\mW^{(T_\CutMix)}}(\mX_i) >0$.
        \item (Almost perfectly classifies test data):     
        $ \mathbb{P}_{(\mX,y) \sim \gD} \left[ y f_{\mW^{\left(T_\CutMix\right)}}(\mX) >0\right] = 1- o\left( \frac{1}{\poly (d)}\right).$
    \end{itemize}
\end{theorem}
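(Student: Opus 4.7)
My plan rests on the fact that $\gL_\CutMix(\mW)$ depends on $\mW$ only through the $nP$ per-patch scalars $g^{(i,p)}_\mW := \phi(\langle \vw_1, \vx_i^{(p)}\rangle) - \phi(\langle \vw_{-1}, \vx_i^{(p)}\rangle)$, because $f_\mW(\mX_{i,j,\gS}) = \sum_{p\in\gS} g^{(i,p)}_\mW + \sum_{p\notin\gS} g^{(j,p)}_\mW$ is affine in the tensor $\vg := (g^{(i,p)}_\mW)_{i,p}\in\R^{nP}$. Writing $\gL_\CutMix(\mW) = \widetilde{\gL}(\vg(\mW))$, the surrogate $\widetilde{\gL}$ is a convex function (a sum of compositions of the logistic loss with affine maps of $\vg$). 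The plan is to (i) drive $\nabla_\mW \gL_\CutMix$ to $1/\poly(d)$ via gradient descent, (ii) transfer this to near-optimality of $\widetilde{\gL}$ in $\vg$-space, (iii) read off uniform activation on every training patch, and (iv) conclude feature learning and near-perfect test accuracy.

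\textbf{Optimization.} I first show that $\gL_\CutMix$ is $\poly(d)$-smooth over any polynomial-radius ball (using boundedness of $\phi''$, $|\ell''|\le 1$, and standard concentration for $\|\vx_i^{(p)}\|$), and that $\|\mW^{(t)}\|$ stays polynomially bounded along the trajectory by a potential-type argument that leverages $\gL_\CutMix(\mW^{(0)}) - \inf\gL_\CutMix = \Theta(1)$. Applying the descent lemma over $T^* = \poly(d)/\eta$ steps gives $\min_{t\le T^*} \|\nabla_\mW \gL_\CutMix(\mW^{(t)})\|^2 \le 1/\poly(d)$, so choosing $T_\CutMix$ as the iterate attaining this minimum establishes the first bullet.

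\textbf{From stationarity in $\mW$ to uniform patch activation.} By the chain rule, $\nabla_\mW\gL_\CutMix = \sum_{i,p} (\partial_{g^{(i,p)}} \widetilde{\gL})\, \nabla_\mW g^{(i,p)}_\mW$. The crux is that the $2nP$ vectors $\{\nabla_\mW g^{(i,p)}_\mW\}_{i,p}$ are well-conditioned in $\R^{2d}$: orthonormality of features, the $n=o(\sqrt d/\polylog d)$-driven near-orthogonality of Gaussian noise patches, and the strict positivity $\phi'\ge\beta$ on both neurons together lower-bound the smallest singular value of the Jacobian of $\vg(\cdot)$. Hence $\|\nabla_\mW\gL_\CutMix\| = 1/\poly(d)$ forces each partial $\partial_{g^{(i,p)}} \widetilde{\gL} = 1/\poly(d)$, and by convexity of $\widetilde{\gL}$ this places $\vg(\mW^{(T_\CutMix)})$ within $1/\poly(d)$ of $\inf\widetilde{\gL}$. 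Inspecting the first-order conditions of $\widetilde{\gL}$, the mixed-label weighting $\tfrac{|\gS|}{P}\ell(y_i\cdot)+(1-\tfrac{|\gS|}{P})\ell(y_j\cdot)$ averaged uniformly over $\gS\subseteq[P]$ forces $y_i g^{(i,p)}_\mW = \Omega(1)$ for \emph{every} pair $(i,p)$; summing over $p$ yields the second bullet $y_i f_{\mW^{(T_\CutMix)}}(\mX_i)>0$.

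\textbf{Feature learning, test accuracy, and main obstacle.} Applied to each training sample $i$'s feature patch, uniform activation forces $\langle \vw_{y_i},\vv_{y_i,k(i)}\rangle = \Omega(1)$. Since the assumption $\rho_k n = \omega(\sqrt n \log d)$ guarantees every feature $k\in[K]$ (including indices in $\gK_E$) appears in the training set with high probability, all projections $\langle \vw_s,\vv_{s,k}\rangle$ are simultaneously $\Omega(1)$. On a test sample containing $\vv_{y,k}$, the feature patch contributes $\Omega(1)$ to $yf_\mW(\mX)$ while the noise patches contribute $o(1)$ via orthogonality of $\mLambda$-Gaussians to features together with the controlled norm $\|\vw_s\|$; a union bound over the exceptional events gives the third bullet. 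The main obstacle is step (iii): the map $\mW\mapsto\vg(\mW)$ is neither linear nor injective, so transferring stationarity from $\mW$ to $\vg$ demands a quantitative well-conditioning estimate for its Jacobian that carefully combines the high-dimensional near-orthogonality of training patches with the strict positivity $\phi'\ge\beta$ to rule out cancellation between the $\vw_1$ and $\vw_{-1}$ contributions.
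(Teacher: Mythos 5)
Your high-level plan matches the paper's: view $\gL_\CutMix$ as a convex function of the per-patch scalars, lower-bound the Jacobian of the reparametrization, and run the descent lemma. However, there are two concrete gaps that prevent the argument from closing as written.

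First, the Jacobian is \emph{not} well-conditioned as a map into $\R^{nP}$. The feature patch of every data point $i\in\gV_{s,k}$ equals the same vector $\vv_{s,k}$, so $\nabla_\mW g^{(i,p_i^*)}_\mW$ and $\nabla_\mW g^{(j,p_j^*)}_\mW$ are \emph{identical} whenever $i,j$ share a feature vector. Thus the $nP$ gradient vectors are linearly dependent, the Jacobian has rank at most $2K+n(P-1)< nP$, and its smallest singular value as a map to $\R^{nP}$ is zero. The claim that small $\|\nabla_\mW\gL_\CutMix\|$ forces each $\partial_{g^{(i,p)}}\widetilde\gL$ to be $1/\poly(d)$ therefore fails. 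The paper avoids this by working in the quotient coordinates $\mZ\in\R^{2K+n(P-1)}$ (identifying $z_i^{(p_i^*)}$ with a shared $z_{s,k}$); the lower bound $\sigma_{\min}(\mJ)\ge\beta/2$ is proved for this reduced Jacobian, and it additionally requires controlling the cross terms from the feature-noise $\alpha\vv_{s,1}$ sitting inside dominant noise patches, which your sketch does not address.

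Second, "by convexity of $\widetilde\gL$ this places $\vg$ within $1/\poly(d)$ of $\inf\widetilde\gL$" is not a valid implication. For a generic convex function, a small gradient gives neither small suboptimality nor proximity to a minimizer (consider $x\mapsto\sqrt{1+x^2}$ at large $x$). To make this step, the paper first establishes the existence, uniqueness, and explicit $\Theta(1)$ form of the global minimizer $\hat\mZ$ (Lemma~\ref{lemma:solution}, a delicate computation with the logistic loss and the uniform distribution over cut sizes), then proves $\mu$-strong convexity of $h$ on a bounded neighborhood $\gG$ of $\hat\mZ$ (Section~\ref{sec:CutMix_SC}), and finally proves a dedicated lemma (Lemma~\ref{lemma:grad_to_point}) showing that a sufficiently small gradient forces the iterate into $\gG$ and hence within $\epsilon$ of $\hat\mZ$; the ``sufficiently small'' threshold is exactly what is matched by the descent-lemma gradient bound and the $\sigma_{\min}(\mJ)\ge\beta/2$ lower bound. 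Your plan assumes the conclusion of these three lemmas without providing a mechanism, and this is the crux of the whole theorem: it is precisely what lets one assert that every patch activation equals $\Theta(1)$ uniformly.

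Minor remark: in the test-accuracy step, uniform patch activation gives $\phi(\langle\vw_{y},\vv_{y,k}\rangle)-\phi(\langle\vw_{-y},\vv_{y,k}\rangle)=\Theta(1)$, not directly $\langle\vw_{y},\vv_{y,k}\rangle=\Omega(1)$; the paper works throughout with the activation differences and separately bounds the fresh-noise contributions using the feature-noise decomposition of $\vw_1^{(T_\CutMix)}-\vw_{-1}^{(T_\CutMix)}$. This part is fixable with care, but it is worth being explicit that it is the activation difference, not a single inner product, that is controlled.
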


To prove Theorem~\ref{thm:CutMix}, we characterize the global minimum of objective loss of \textsf{CutMix}.
Surprisingly, at the global minimum, the model has the same outputs for all patches of the input data. In other words, the contributions of all feature vectors and noise vectors to the final outcome of the network are identical, regardless of their frequency and strength (see Section~\ref{sec:overview-cutmix} for more details). 
Moreover, this uniform ``contribution'' is large enough, which allows the model to learn all types of features by reaching the global minimum. We provide the detailed proof in Appendix~\ref{sec:proof_CutMix}.

Our three main theorems elucidate the benefits of \textsf{Cutout} and \textsf{CutMix}. \textsf{Cutout} enables a model to learn rarer features than \textsf{ERM}, while \textsf{CutMix} can even outperform \textsf{Cutout}. These advantages in learning rarer features lead to improvements in generalization performance.

\section{Overview of Analysis}\label{sec:overview}
In this section, we discuss key proof ideas and the main challenges in our analysis. For ease of presentation, we consider the case $\alpha = 0$. Although our assumptions do not allow the choice $\alpha = 0$, the choice of nonzero $\alpha$ is to show guarantees on the test accuracy and does not significantly affect the feature learning aspect.

To provide the proof overview, let us introduce some additional notation. For each $i \in [n]$, recall that the corresponding input point can be written as $\mX_i = (\vx_i^{(1)}, \dots, \vx_i^{(P)} )$.
We use $p_i^*$ and $\tilde{p}_i$ to denote the indices of its feature patch and dominant noise patch, respectively. For each feature vector $\vv_{s,k}$, where $s \in \{\pm1\}$ and $k \in [K]$, let $\gV_{s,k} \subset [n]$ represent the set of indices of data points having the feature vector $\vv_{s,k}$, and $\gV_s = \bigcup_{k=1}^K \gV_{s,k}$ denotes the set of indices of data with label $s$.
For each data point $i \in [n]$ and dominant or background noise patch $p \in [P] \setminus\{p_i^*\}$, we refer to the Gaussian noise inside $\vx_i^{(p)}$ as $\xi_i^{(p)}$.

\subsection{Vanilla Training and Cutout Training}

We now explain why \textsf{ERM} fails to learn (extremely) rare features, while \textsf{Cutout} can learn rare features but not extremely rare features. 
Let us consider \textsf{ERM}. From \eqref{eq:ERM}, for $s, s' \in \{\pm1\}, k \in [K], i \in [n]$ and $p \in [P] \setminus \{p_i^*\}$, 
the component of $\vw_s$ in the feature vector $\vv_{s',k}$'s direction is updated as
\begin{equation}\label{eq:ERM_feature_main}
    \left \langle \vw_s^{(t+1)} , \vv_{s',k} \right \rangle  = \left \langle \vw_s^{(t)} , \vv_{s',k} \right \rangle  - \frac{ss'\eta}{n} \sum_{j \in \gV_{s',k}} \ell'(y_j f_{\mW^{(t)}}(\mX_j)) \phi' \left( \left \langle \vw_s^{(t)} , \vv_{s',k} \right \rangle \right),
    \vspace{-8pt}
\end{equation}
and similarly, the ``update'' of inner product of $\vw_s$ with a noise patch $\xi_i^{(p)}$ can be written as
\begin{equation}\label{eq:ERM_noise_main}
    \left \langle \vw_s^{(t+1)} , \xi_i^{(p)} \right \rangle \approx  \left \langle \vw_s^{(t)} , \xi_i^{(p)} \right \rangle  - \frac{s y_i \eta}{n} \ell'(y_i f_{\mW^{(t)}}(\mX_i)) \phi' \left( \left \langle \vw_s^{(t)}, \xi_i^{(p)}\right \rangle \right) \left \lVert \xi_i^{(p)}\right \rVert^2,
\end{equation}
where the approximation is due to the near-orthogonality of Gaussian random vectors in the high-dimensional regime. This approximation shows that $ \langle \vw_s^{(t+1)} , \vv_{s',k}  \rangle$'s and $  \langle \vw_s^{(t)}, \xi_i^{(p)} \rangle$'s are almost monotonically increasing or decreasing. We address the approximation errors using a variant of the technique introduced by \citet{cao2022benign}, as detailed in Appendix~\ref{sec:decompose}.

From \eqref{eq:ERM_feature_main} and \eqref{eq:ERM_noise_main}, we can observe that in the early phase of training satisfying $ - \ell'(y_i f_{\mW^{(t)}}(\mX_i))  = \Theta(1)$, the main factor for the speed of learning features and noises are the number of feature occurrence $|\gV_{s',k}|$ and the strength of noise $\lVert \xi_i^{(p)} \rVert^2$. From our assumptions introduced in Section~\ref{sec:choice}, if we compare the learning speed of different components, we have 
\small
\begin{equation*}
    \text{common features} \gg \text{dominant noises} \gg \text{rare features} \gg \text{background noises} \gg \text{extremely rare features},
\end{equation*}
\normalsize
in terms of ``learning speed.''
Based on this observation, we conduct a three-phase analysis for \text{ERM}.
\begin{itemize}[leftmargin = 4mm]
    \item \textbf{Phase 1}: Learning common features quickly.
    \item \textbf{Phase 2}: Fitting (extremely) rare data by memorizing dominant noises instead of learning features.
    \item \textbf{Phase 3}: A model cannot learn (extremely) rare features since gradients of all data are small.  
\end{itemize}
The main intuition behind why \textsf{ERM} cannot learn (extremely) rare features is that the gradients of all data containing these features become small after quickly memorizing dominant noise patches. In contrast, since \textsf{Cutout} randomly cuts some patches out, there exist augmented data points that do not contain dominant noises and have only features and background noises. This allows \textsf{Cutout} to learn rare features, thanks to these augmented data. However, extremely rare features cannot be learned since the learning speed of background noise is much faster and there are too many background noise patches to cut them all out.

\begin{remark}
    \citet{shen2022data} conduct analysis on vanilla training and training using standard data augmentation, sharing the same intuition in similar but different data models and neural networks. Also, we emphasize that we proved the model cannot learn (extremely) rare features even if we run $\frac{\poly(d)}{\eta}$ iterations of GD, whereas \citet{shen2022data} only consider the first iteration that achieves perfect training accuracy.
\end{remark}

\paragraph{Practical Insights.} In practice, images contain features and noise across several patches. A larger cutting size can be more effective in removing noise but may also remove important features that the model needs to learn. Thus, there is a trade-off in choosing the optimal cutting size, a trend also observed in \citet{devries2017improved}. One limitation of Cutout is that it may not effectively remove dominant noise. Thus, dominant noise can persist in the augmented data, leading to potential noise memorization. We believe that developing strategies that can more precisely detect and remove these noise components from the image input could enhance the effectiveness of these methods.

\subsection{CutMix Training}
\label{sec:overview-cutmix}
In learning dynamics of \textsf{ERM} and \textsf{Cutout}, inner products between weight and data patches evolve (approximately) monotonically, which makes the analysis much more feasible.
However, analyzing the learning dynamics of \textsf{CutMix} involves non-monotone change of inner products, which is inevitable since \textsf{CutMix} uses mixed labels; this is also demonstrated in our experimental results (Section~\ref{sec:exp},especially the leftmost plot in Figure~\ref{fig:synthetic}). Non-monotonicity and non-convexity of the problem necessitates novel proof strategies.

Let us define $\mZ: = \{z_{s,k}\}_{s \in \{\pm 1\}, k \in [K]} \cup \{ z_i^{(p)} \}_{i \in [n], p \in [P] \setminus \{p_i^*\}}$ as a function of $\mW$ as follows,
\begin{equation*}
    z_i^{(p)}:= \phi \left( \left \langle \vw_1, \xi_i^{(p)} \right \rangle \right) - \phi \left( \left \langle \vw_{-1}, \xi_i^{(p)} \right \rangle \right),\quad  z_{s,k} := \phi (\langle \vw_1, \vv_{s,k} \rangle) - \phi(\langle \vw_{-1}, \vv_{s,k}\rangle).
\end{equation*}
Then, $\mZ$ represents the contribution of each noise patch and feature vector to the neural network output, and
the nonconvex function $\gL_\CutMix(\mW)$ can be viewed as the composition of $\mZ(\mW)$ and a convex function $h(\mZ)$.
By using the convexity of $h(\mZ)$, we can characterize the global minimum of $\gL_\CutMix(\mW)$. Surprisingly, we show that any global minimizer $\mW^* = \{\vw_1^*, \vw_{-1}^*\}$ satisfies 
\begin{equation*}
    \phi \left( \left \langle \vw_s^* , \vx_i^{(p)} \right \rangle \right)  - \phi \left( \left \langle \vw_{-s}^* , \vx_i^{(p)} \right \rangle \right) = C_s,
\end{equation*}
for all $s \in \{\pm 1\}, i \in \gV_{s}$, and $p \in [P]$, with some constants $C_1, C_{-1} = \Theta(1)$. In other words, at the global minimum, the output of model on each patch of the training data is uniform across the set of data with the same labels. We also prove that \textsf{CutMix} can achieve a point close to the global minimum within $\frac{\poly(d)}{\eta}$ iterations. 
As a result, the model trained by \textsf{CutMix} can learn all features including extremely rare features. The complete proof of Theorem~\ref{thm:CutMix} appears in Appendix~\ref{sec:proof_CutMix}.

\begin{remark}
\citet{zou2023benefits} investigate Mixup in a similar feature-noise model and show that Mixup can learn rarer features than vanilla training, with its benefits emerging from the early dynamics of training. However, our characterization of the global minimum of $\gL_\CutMix(\mW)$ and experimental results in our setting (Section~\ref{sec:exp}, Figure~\ref{fig:synthetic}) suggest that the benefits of \textsf{CutMix}, especially for learning extremely rare features, arise from the later stages of training. This suggests that Mixup and CutMix have different underlying mechanisms for promoting feature learning.
\end{remark}

\paragraph{Practical Insights.}The main underlying mechanism of CutMix is that it learns information almost uniformly from all patches in the training data. However, this approach also involves memorizing noise, which can potentially degrade performance in real-world scenarios. We believe that a more sophisticated strategy such as considering the positional information of patches as used in Puzzle Mix~\citep{kim2020puzzle} or Co-Mixup~\citep{kim2021co} could improve the ability to learn more from patches containing features and reduce the impact of noise.
\section{Experiments}\label{sec:exp}

We conduct experiments both in our setting and real-world data CIFAR-10 to support our theoretical findings and intuition. We defer CIFAR-10 experiment results to Appendix~\ref{sec:cifar_exp}.

For the numerical experiments on our setting, we set the number of patches $P = 3$, dimension $d=2000$, number of data points $n = 300$, dominant noise strength $\sigma_\mathrm{d} = 0.25$, background noise strength $\sigma_\mathrm{b} = 0.15$, and feature noise strength $\alpha = 0.005$. The feature vectors are given as the standard basis $\ve_1, \ve_2,\ve_3,\ve_4,\ve_5,\ve_6 \in  \R^d$, where $\ve_1,\ve_2,\ve_3$ are features for the positive label $y = 1$ and $\ve_4, \ve_5, \ve_6$ are features for the negative label $y=-1$. 
We categorize $\ve_1$ and $\ve_4$ as common features with a frequency of $0.8$, $\ve_2$ and $\ve_5$ as rare features with a frequency of $0.15$, and lastly, $\ve_3$ and $\ve_6$ as extremely rare features with a frequency of $0.05$. For the learner network, we set the slope of negative regime $\beta = 0.1$ and the length of the smoothed interval $r = 1$. We train models using three methods: \textsf{ERM}, \textsf{Cutout}, and \textsf{CutMix} with a learning rate $\eta = 1$. For \textsf{Cutout}, we cut a single patch of data ($C=1$). We apply full-batch gradient descent for all methods; for \textsf{Cutout} and \textsf{CutMix}, we utilize all possible augmented data points.\footnote{For \textsf{CutMix}, this may induce different choices of $\gD_\gS$ from those assumed in our analysis, but we mention that other general choices of $\gD_\gS$ do not alter the conclusions in our analysis.} We note that this choice of problem parameters does not exactly match the technical assumptions in Section~\ref{sec:choice}. However, we empirically observe the same conclusions, which suggests that our analysis could be extended beyond our assumptions.

For each feature vector $\vv$ of the positive label, we plot the output of the learned filters for the feature vector $\phi (  \langle \vw_1^{(t)}, \vv \rangle) - \phi ( \langle  \vw_{-1}^{(t)}, \vv\rangle) $ throughout training in Figure~\ref{fig:synthetic}. Our numerical findings confirm that \textsf{ERM} can only learn common features, \textsf{Cutout} can learn common and rare features but cannot learn extremely rare features, and \textsf{CutMix} can learn all types of features. Especially, \textsf{CutMix} learn common features, rare features, and extremely rare features almost evenly. Also, we observed non-monotone behavior of the output in the case of \textsf{CutMix}, which motivated our novel proof technique. 
The same trends are observed with different architectures, such as a smoothed (leaky) ReLU network with multiple neurons, as detailed in Appendix~\ref{sec:synthetic_additional_exp}.

\begin{figure}[h]
    \centering
    \includegraphics[width = \textwidth]{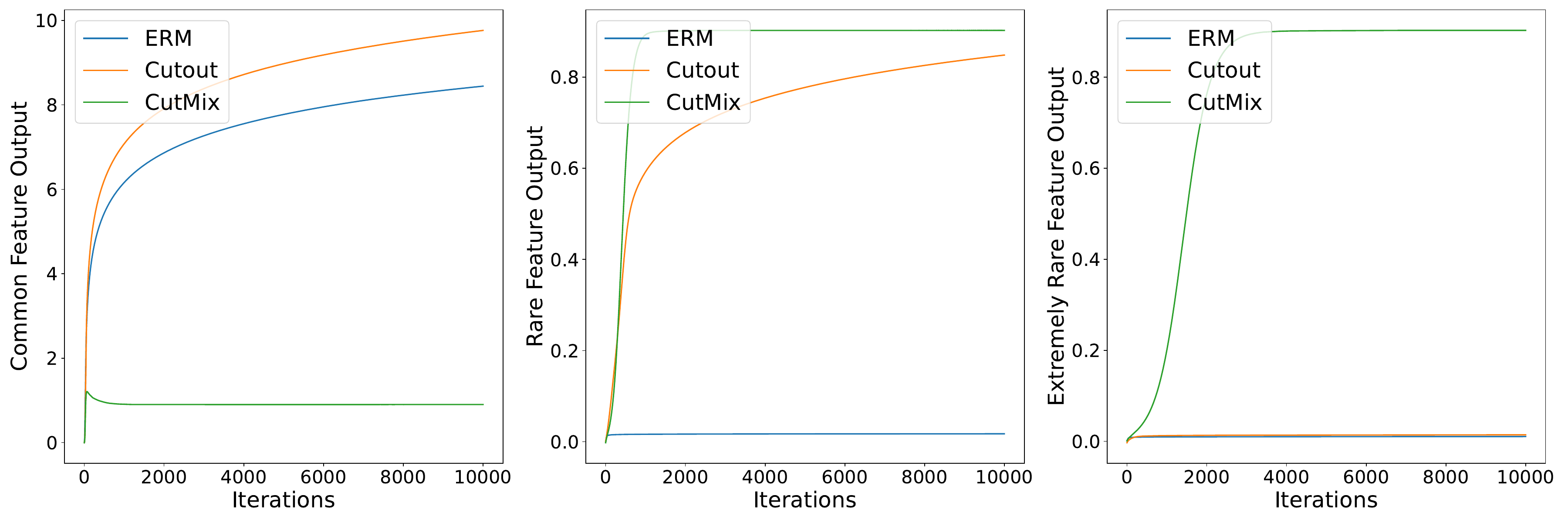}
    \caption{Numerical results on our problem setting. We validate our findings on the trends of \textsf{ERM}, \textsf{Cutout}, and \textsf{CutMix} in learning common feature (Left), rare feature (Center), and extremely rare feature (Right). The output of the common feature trained by \textsf{CutMix} shows non-monotone behavior.}
    \label{fig:synthetic}
\end{figure}

\section{Conclusion}\label{sec:conclusion}

We studied how Cutout and CutMix influence the ability to learn features in a patch-wise feature-noise data model learning with two-layer convolutional neural networks by comparing them with vanilla training. We showed that Cutout enables the learning of rare features that cannot be learned through vanilla training by mitigating the problem of memorizing label-independent noises instead of learning label-dependent features. Surprisingly, we further proved that CutMix can learn extremely rare features that Cutout cannot learn. We also present our theoretical insights on the underlying mechanism of these methods and provide experimental support. 

\paragraph{Limitation and Future Work.}
Our work has some limitations related to the neural network architecture, specifically, the use of a 2-layer two-neuron smoothed leaky ReLU network. Extending our results to neural networks with deeper, wider, and more general activation functions is a direction for future work.
Another future direction is to develop patch-level data augmentation based on our theoretical findings. Also, it would be interesting to perform theoretical analysis on state-of-the-art patch-level data augmentation such as Puzzle Mix~\citep{kim2020puzzle} or Co-Mixup~\citep{kim2021co}.  These methods utilize patch location information, thus it may require the development of a theoretical framework capturing more complex characteristics of image data.

\section*{Acknowledgement}
This work was supported by three Institute of Information \& communications Technology Planning \& Evaluation (IITP) grants (No.\ RS-2019-II190075, Artificial Intelligence Graduate School Program (KAIST); No.\ RS-2022-II220184, Development and Study of AI Technologies to Inexpensively Conform to Evolving Policy on Ethics; No.\ RS-2024-00457882, AI Research Hub Project) funded by the Korean government (MSIT), and a National Research Foundation of Korea (NRF) grant (No.\ RS-2019-NR040050) funded by the Korean government (MSIT). CY acknowledges support from a grant funded by Samsung Electronics Co., Ltd.
\bibliographystyle{plainnat}
\bibliography{reference}

\newpage

\appendix
\allowdisplaybreaks
\clearpage
\setcounter{tocdepth}{2}
\tableofcontents
\section{Additional Experimental Results}\label{sec:additional_exp}
For all experiments described in this section and in Section~\ref{sec:exp}, we use NVIDIA RTX A6000 GPUs.

\subsection{Experiments on CIFAR-10 Dataset }\label{sec:cifar_exp}

\subsubsection{Experimental Support for Our Intuition}\label{sec:cifar_intuition}
We compare three methods, ERM training, Cutout training, and CutMix training on CIFAR-10 classification. For ERM training, we apply only random cropping and random horizontal flipping on train dataset. In comparison, for Cutout training and CutMix training, we additionally apply Cutout and CutMix, respectively, on training data.  For Cutout training, we randomly cut $16 \times 16$ pixels of input images, and for CutMix training, we sample the mixing ratio from a beta distribution $\mathrm{Beta}(0.5,0.5)$. We train ResNet-18~\citep{he2016deep} for 200 epochs with a batch size of 128 using SGD with a learning rate $0.1$, momentum $0.9$, and weight decay $5 \times 10^{-4}$. Trained models using ERM, Cutout, and CutMix achieve test accuracy $95.16\%$, $96.05\%$, and $96.29\%$, respectively.

We randomly generate augmented data using CutMix from pairs of cat images and dog images in CIFAR-10 with varying mixing ratios $\lambda = 1,0.8,0.6$ (Dog:Cat  = $\lambda : 1-\lambda$). We randomly make $5,000$ (cat, dot)-pairs in CIFAR-10 training set and apply CutMix randomly $10$ times. By repeating this procedure $10$ times, we generate total $5,000 \times 10 \times 10 = 500,000$ augmented samples for each mixing ratio $\lambda$. We plot a histogram of dog prediction output subtracted by cat prediction output (before applying the softmax function), evaluated on $500,000$ augmented data in Figure~\ref{fig:cifar}. 

\begin{figure}[h]
    \centering
    \includegraphics[width = \textwidth]{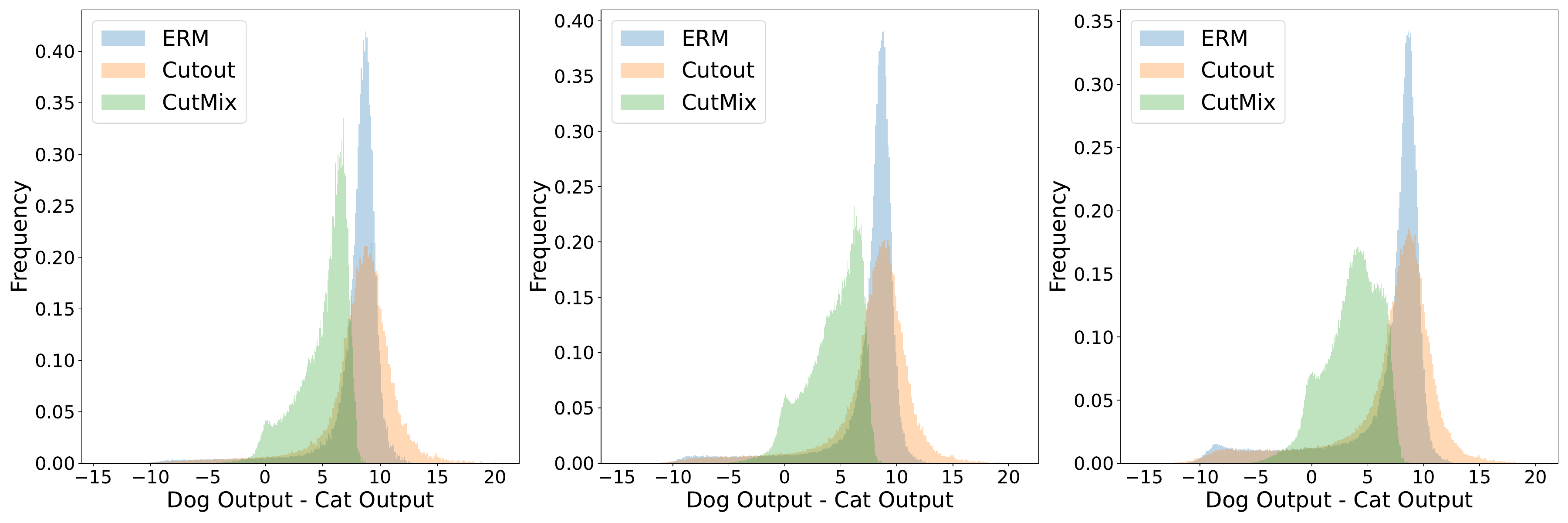}
    \caption{Histogram of dog prediction output subtracted by cat prediction output evaluated on data points augmented by CutMix data using cat data and dog data with varying mixing ratio $\lambda$ ($\text{Dog}:\text{Cat} = \lambda : 1-\lambda$) (Left) $\lambda = 1$ , (Center) $\lambda = 0.8$, (Right) $\lambda = 0.6$}
    \label{fig:cifar}
\end{figure}

The leftmost plot represents the evaluation results for original dog images, as it uses a mixing ratio of $\lambda=1$. We can observe that the output of the model trained using Cutout is skewed toward higher values compared to the output of the model trained using other methods. We believe this aligns with the theoretical intuition that Cutout learns more information from the original image using augmented data.

The remaining two plots show the output for randomly augmented data using CutMix. We observe that the models trained with CutMix exhibit a shorter tail, supporting our intuition from the CutMix analysis that the models learn uniformly across all patches.

\subsubsection{Experimental Support for Our Findings}
We train ResNet-18 using ERM training, Cutout training, and CutMix training following the same experimental details described in Appendix~\ref{sec:cifar_intuition}, except using only 10\% of the training set. This data-hungry setting is intended to highlight the benefits of Cutout and CutMix. We then evaluated the trained models on the remaining 90\% of the CIFAR-10 training dataset. The reason for evaluating the remaining training dataset is to analyze the misclassified data using C-score~\citep{jiang2021characterizing}, which is publicly available only for the training dataset.

C-score measures the structural regularity of data, with lower values indicating examples that are more difficult to classify correctly. In our framework, data with harder-to-learn features (corresponding to rarer features) would likely have lower C-scores. Since directly extracting and quantitatively evaluating features learned by the models is challenging, we use the C-score as a proxy to evaluate the misclassified data across models trained by ERM, Cutout, and CutMix.

Table~\ref{tab:C-score} illustrates that Cutout tends to misclassify data with lower C-scores compared to ERM, indicating that Cutout learns more hard-to-learn features than vanilla training. Furthermore, the data misclassified by CutMix has even lower C-scores than those misclassified by Cutout, suggesting that CutMix is effective at learning features that are the most challenging to classify. This observation aligns with our theoretical findings, demonstrating that CutMix captures even more difficult features compared to both ERM and Cutout.

\begin{table*}[h!]
\centering
\caption{Mean and quantiles of the C-score on misclassified data across models trained with ERM, Cutout, and CutMix. The results indicate that Cutout tends to misclassify data with lower C-scores compared to ERM, while CutMix exhibits even lower C-scores.}
\begin{tabular}{|l|c|c|c|c|} 
    \hline
    Method & Mean & Q1 & Q2 & Q3 \\ 
    \hline
    \textbf{ERM}     & 0.687 & 0.615 & 0.782 & 0.841 \\ 
    \hline
    \textbf{Cutout}  & 0.679 & 0.599 & 0.775 & 0.837 \\ 
    \hline
    \textbf{CutMix}  & 0.670 & 0.575 & 0.767 & 0.835 \\ 
    \hline
\end{tabular}
\label{tab:C-score} 
\end{table*}

Since directly visualizing features learned by a model is challenging, we present data that were misclassified by the model trained with ERM but correctly classified by the model trained with Cutout instead. In Figure~\ref{fig:rare}, we show 7 samples per class with the lowest C-scores, which are considered to have rare features. Similarly, we also visualize data misclassified by the model trained with Cutout but correctly classified by the model trained with CutMix to represent extremely rare data in Figure~\ref{fig:extreme}. This approach allows us to interpret some (extremely) rare features in CIFAR-10, such as frogs with unusual colors.

\begin{figure}[ht]
    \centering
    \includegraphics[width=\textwidth]{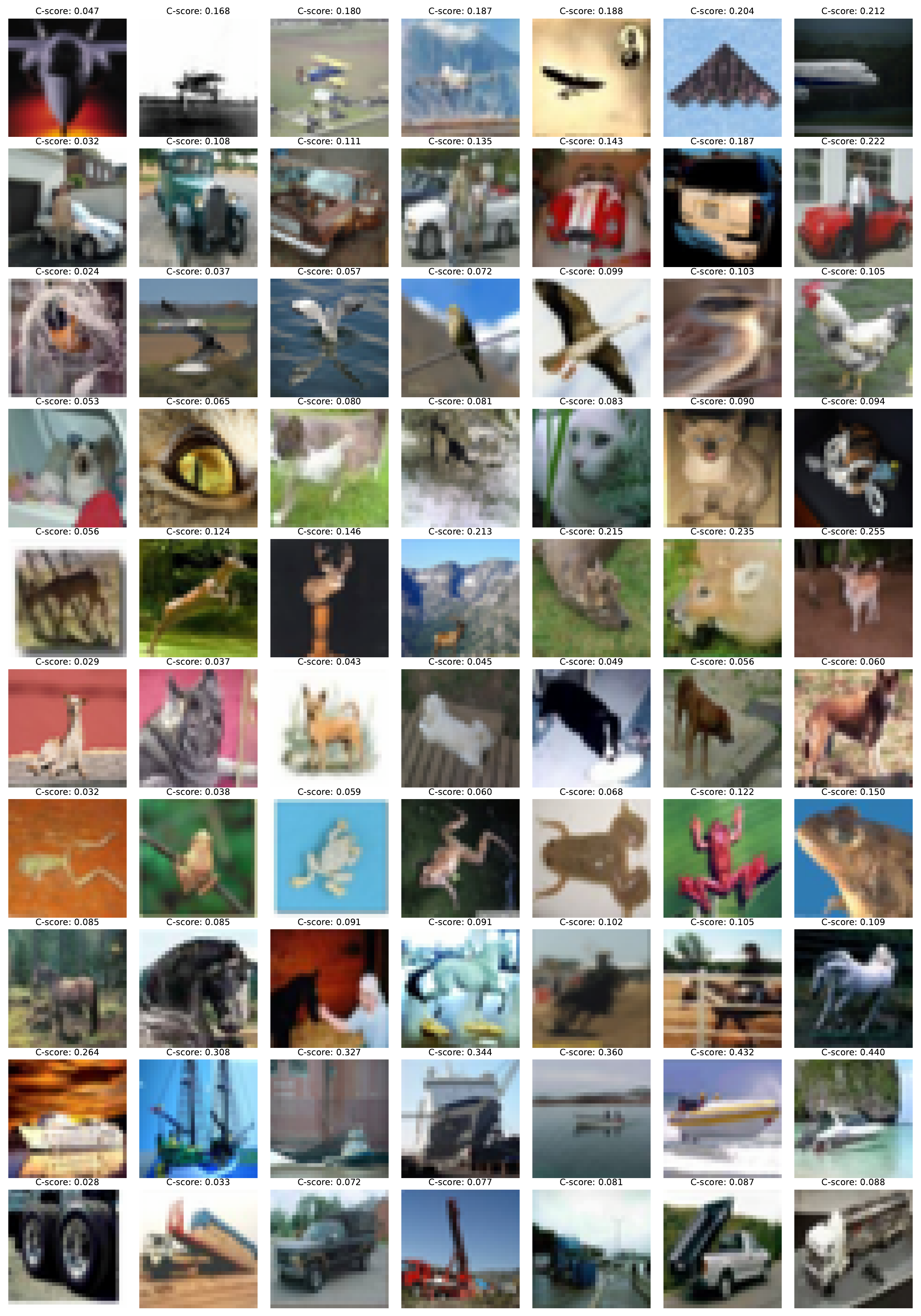}
    \caption{Examples of rare data in CIFAR-10}
    \label{fig:rare}
\end{figure}

\begin{figure}[ht]
    \centering
    \includegraphics[width=\textwidth]{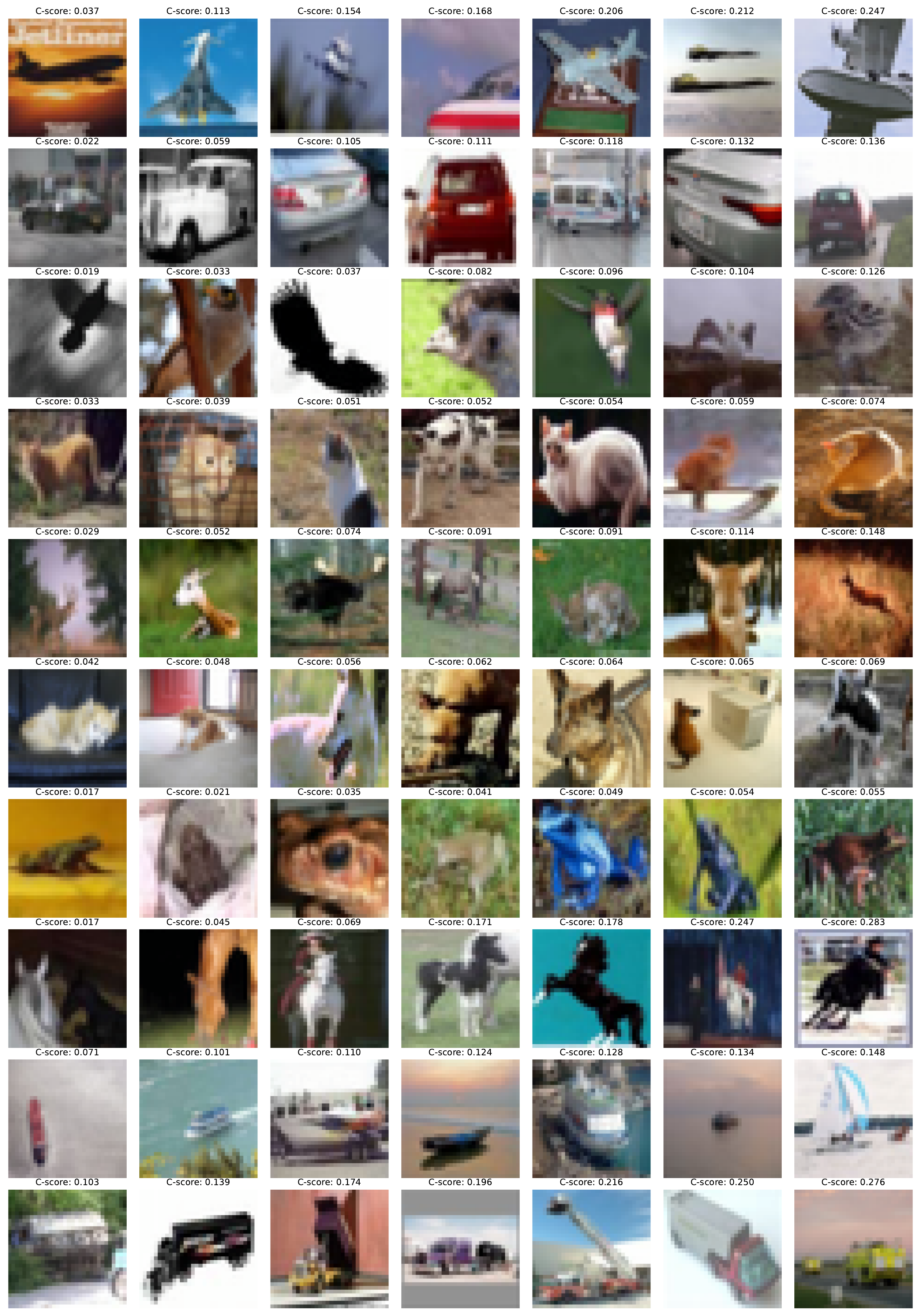}
    \caption{Examples of extreme data in CIFAR-10}
    \label{fig:extreme}
\end{figure}
\clearpage
\subsection{Additional Experimental Results on Our Data Distribution}\label{sec:synthetic_additional_exp}

In addition to the results described in Section~\ref{sec:exp}, we further conducted numerical experiments on our data distribution by applying two variations to our architecture: increasing the number of neurons, and increasing the number of neurons with a smoothed ReLU activation (instead of smoothed leaky ReLU). We observed the same trends as predicted by our theoretical findings and shown in Figure~\ref{fig:synthetic}.

Let us describe the setting of our experiments in detail. In both cases, We set the number of patches $P=3$, dimension $d = 2000$, and the number of data $n = 300$. The feature vectors are given by the standard basis $\ve_1, \ve_2,\ve_3,\ve_4,\ve_5,\ve_6 \in  \R^d$, where $\ve_1,\ve_2,\ve_3$ are features for the positive label $y = 1$ and $\ve_4, \ve_5, \ve_6$ are features for the negative label $y=-1$. 
We categorize $\ve_1$ and $\ve_4$ as common features, $\ve_2$ and $\ve_5$ as rare features, and lastly, $\ve_3$ and $\ve_6$ as extremely rare features. We apply full-batch gradient descent with learning rate $\eta = 1$ and for \textsf{Cutout} and \textsf{CutMix}, we utilize all possible augmented data.

For the multi-neuron with smoothed Leaky ReLU case (Figure~\ref{fig:multi-leaky}), we use $10$ neurons for each positive/negative output with the slope of negative regime $\beta = 0.1$ and the length of polynomial regime $r = 1$. We set the strength of dominant noise $\sigma_\mathrm{d} = 0.25$, the strength of background noise $\sigma_\mathrm{b} = 0.12$, and the strength of feature noise $\alpha = 0.05$. In addition, frequencies of common features, rare features, and extremely rare features are set to $0.72$, $0.15$, and $0.03$, respectively. 

For the multi-neuron with smoothed ReLU case i.e., $\beta = 0$ (Figure~\ref{fig:smooth-relu}), we set the length of the polynomial regime as $r = 1$, and we use $10$ neurons for each positive/negative output. We set the remaining problem parameters as follows: the strength of dominant noise $\sigma_\mathrm{d} = 0.25$, the strength of background noise $\sigma_\mathrm{b} = 0.12$, and the strength of feature noise $\alpha = 0.05$. In addition, frequencies of common features, rare features, and extremely rare features are set to $0.75$, $0.2$, and $0.05$, respectively. 
\begin{figure}[h]
    \centering
    \includegraphics[width = \textwidth]{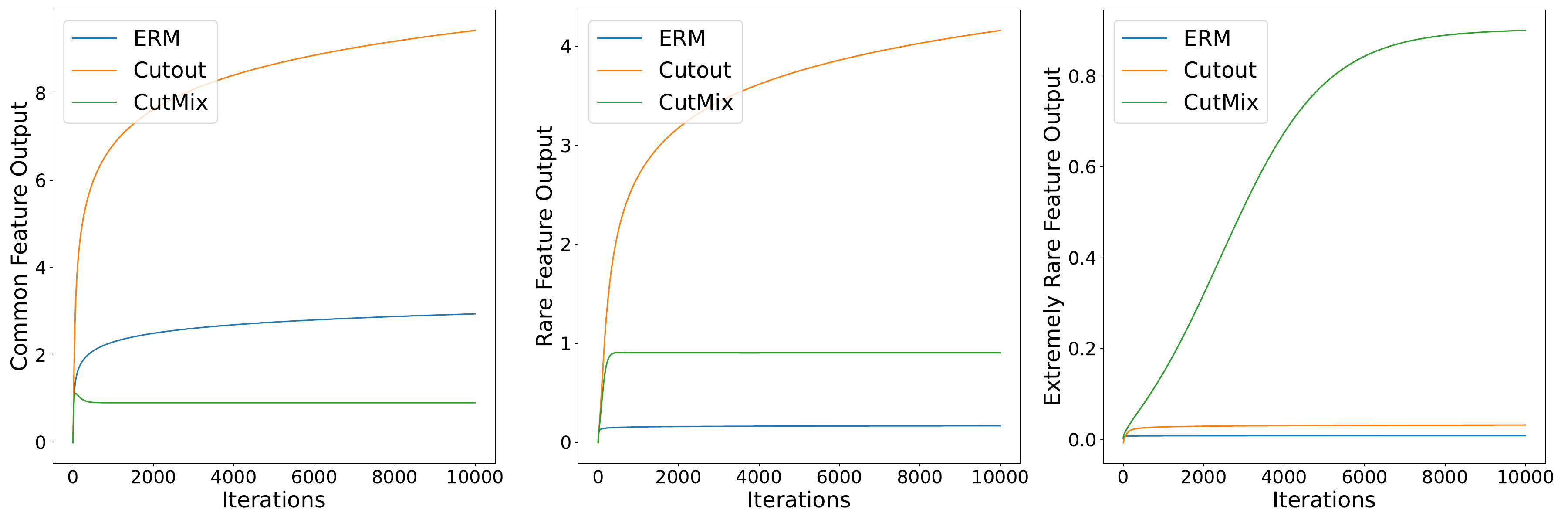}
    \caption{Multi-neuron with a smoothed leaky ReLU actiation}
    \label{fig:multi-leaky}
\end{figure}

\begin{figure}[h]
    \centering
\includegraphics[width = \textwidth]{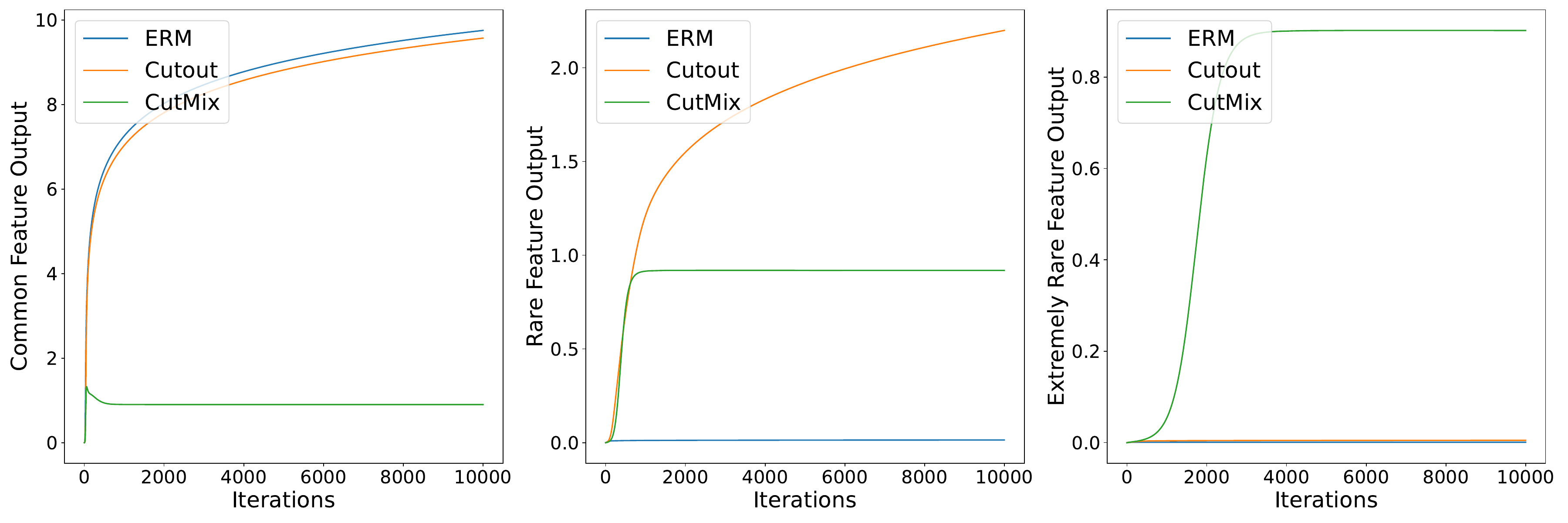}
    \caption{Multi-neuron with a smoothed ReLU}
    \label{fig:smooth-relu}
\end{figure}
\clearpage
\section{Proof Preliminaries}
\subsection{Properties of the Choice of Problem Parameters}
In our analysis, we consider the choice of problem parameters as a function of the dimension of patches $d$ and consider sufficiently large $d$. Let us summarize the assumptions on the parameters for the problem setting and assume they hold.
\begin{assumption}\label{assumption:parameters}
    The following conditions hold.
    \begin{enumerate}[label=A\arabic*.,ref=(A\arabic*), leftmargin=*]
        \item (The number of patches) $P =  \Theta(1)$ and $P \geq 8$. \label{assumption:P}
        \item (Overparameterized regime): $n = o \left(\alpha \beta \sigma_\mathrm{d}^{-1} \sigma_\mathrm{b} d^{\frac 1 2} / \polylog(d) \right)$. \label{assumption:n}
        \item (Sufficient feature data): For all $k \in [K]$, $\rho_k n = \omega \left( n ^{\frac 1 2} \log d\right) $.\label{assumption:rho}
        \item (Common feature vs dominant noise): For all $k \in \gK_C, \rho_k = \Theta(1)$ and $\sigma_\mathrm{d}^2 d = o(\beta n)$.\label{assumption:dominant}
        \item (Rare feature vs noise): For all $k \in \gK_R, \rho_k = \Theta(\rho_R)$ with $\rho_R n = o\left(\alpha^2 \sigma_\mathrm{d}^2 d / \polylog(d)\right)$ and $\sigma_\mathrm{b}^2 d= o( \beta \rho_R n )$.\label{assumption:rare}
        \item (Extremely rare feature vs background noise) For all $k \in \gK_E, \rho_k = \Theta(\rho_E)$ with $\rho_E n = o \left( \alpha^2 \sigma_\mathrm{b}^2 d / \polylog(d) \right)$.\label{assumption:extreme}
        \item (Strength of feature noise) $\alpha = o \left(  n^{-1}\beta \sigma_\mathrm{d}^2 d  / \polylog(d) \right)$.\label{assumption:feature_noise}
        \item $\sigma_0 \sigma_\mathrm{d}^2 d, r = o\left (\alpha/\polylog(d)\right), \eta = o \left(r\sigma_\mathrm{d}^{-2} d^{-1}/\polylog(d) \right)$\label{assumption:etc}
    \end{enumerate}
\end{assumption}
We now present some properties derived from Assumption~\ref{assumption:parameters}, which are frequently used throughout our proof.

From \ref{assumption:rho}, for all $k \in[K]$, we have the following inequality:
\begin{equation}\label{property:n}
    n \geq \rho_1 n \geq \rho_k^2n = \omega \left(\log^2 d\right)
\end{equation}
From \ref{assumption:P} and \ref{assumption:n}, and given that $\beta<1, \sigma_\mathrm{b} < \sigma_\mathrm{d}$, we have
\begin{equation}\label{property:high_dim}
    d > (\beta \sigma_\mathrm{d}^{-1} \sigma_\mathrm{b} d^{\frac 1 2} )^2 > n^2P > nP.
\end{equation}
From \ref{assumption:n}, \ref{assumption:rho}, and \ref{assumption:extreme}, and given that $\alpha, \beta <1$, we have
\begin{equation}\label{property:noise}
    \sigma_\mathrm{d}^2 d > \sigma_\mathrm{b}^2 d = \omega(\rho_E n) = \omega(1).
\end{equation}
From \ref{assumption:P}, \ref{assumption:n} and the fact that $0<\alpha<1$, we have
\begin{equation}\label{property:noise_sum}
    nP \beta^{-1} \sigma_\mathrm{d} \sigma_\mathrm{b}^{-1} d^{-\frac 1 2}  = o \left( \frac {\alpha}{\polylog (d)}\right) = o \left( \frac{1}{\polylog (d)}\right)
\end{equation}
From \ref{assumption:feature_noise} and \ref{assumption:dominant}, we have
\begin{equation}\label{property:feature_noise}
    \alpha \beta^{-1} < \alpha \beta^{-2} = o \left( \frac{n^{-1} \beta^{-1} \sigma_\mathrm{d}^2 d}{\polylog (d)}\right) = o \left( \frac{1}{\polylog (d)}\right)
\end{equation}

From \eqref{property:noise} and \ref{assumption:etc}, $\eta = o(1)$ and then we have
\begin{equation}\label{property:eta}
    \eta \leq \frac{\log(\eta T^*)}{2}.
\end{equation}
From \ref{assumption:n}, \ref{assumption:rho}, \ref{assumption:dominant}, and \ref{assumption:rare} we have
\begin{equation}\label{property:alpha_lb}
    \alpha^{-2}  = o \left( \frac{\sigma_\mathrm{d}^2 d}{\rho_R n}\right) = o \left(\rho_R^{-1}\right) = o\left(n^{\frac{1}{2}}\right) = o\left(d^{\frac{1}{4}}\right).
\end{equation}
\subsection{Quantities at the Beginning}
We characterize some quantities at the beginning of training. 
\begin{lemma}\label{lemma:initial}
    Let $E_\mathrm{init}$ the event such that all the following holds:
    \begin{itemize}[leftmargin = *]
        \item $\frac{25}{52}n \leq |\gV_1|, |\gV_{-1}| \leq \frac{27}{52}n$
        \item For each $s \in \{\pm 1\}$ and $k \in [K]$, $\frac{\rho_k n}{4}\leq |\gV_{s,k}| \leq \frac{3\rho_k n}{4}$
        \item $\cup_{i \in \gV_{1,1}}\{p_i^*\} = [P]$
        \item  For any $s,s' \in \{\pm 1\}$ and $k \in [K]$, $\left| \left \langle \vw_s^{(0)} , \vv_{s',k} \right \rangle  \right| \leq \sigma_0 \log d$.
        \item For any $s \in \{\pm 1\}$ and $i \in [n]$, $\left| \left \langle \vw_s^{(0)}, \xi_i^{(\tilde p_i)}\right \rangle \right| \leq  \sigma_0 \sigma_\mathrm{d}d^{\frac{1}{2}} \log d$.
        \item  For any $s \in \{\pm 1\}, i \in [n]$ and $p \in [P]\setminus \{p_i^*, \tilde p _i\}$, $\left| \left \langle \vw_s^{(0)} , \xi_i^{(p)} \right \rangle \right| \leq \sigma_0 \sigma_\mathrm{b} d^{\frac{1}{2}} \log d$.
        \item For any $i,j \in [n]$ with $i \neq j$, $ \frac{1}{2}\sigma_\mathrm{d}^2 d \leq \left \lVert \xi_i^{(\tilde{p}_i)}\right\rVert^2 \leq \frac{3}{2}\sigma_\mathrm{d}^2 d$ and $ \left | \left \langle \xi_i^{(\tilde{p}_i)} , \xi_j^{(\tilde{p}_j)} \right \rangle \right| \leq \sigma_\mathrm{d}^2 d^{\frac{1}{2}} \log d$.
        \item For any $i,j \in [n]$ and $p \in [P] \setminus \{p_j^*, \tilde{p}_j\}$, $\left| \left \langle \xi_i^{(\tilde{p}_i)} , \xi_j^{(p)} \right \rangle \right| \leq \sigma_\mathrm{d} \sigma_\mathrm{b} d^{\frac{1}{2}} \log d$.
        \item For any $i,j \in [n]$ and $p \in [P] \setminus\{p_i^*, \tilde{p}_i\}, q \in [P] \setminus\{p_j^*, \tilde{p}_j\}$ with $(i,p) \neq (j,q)$, \\
        $\frac{1}{2} \sigma_\mathrm{b}^2 d\leq \left \lVert \xi_i^{(p)}\right\rVert^2 \leq \frac{3}{2} \sigma_\mathrm{b}^2 d$ and $ \left | \left \langle \xi_i^{(p)} , \xi_j^{(q)} \right \rangle \right| \leq \sigma_\mathrm{b}^2 d^{\frac{1}{2}} \log d$.
        \item $\{\vv_{s,k}\}_{s \in \{\pm 1\}, k \in [K]} \cup \{\vx_i^{(p)}\}_{i \in [n], p \in [P] \setminus \{p_i^*\}}$ is linearly independent.
    \end{itemize}
    Then, the event $E_\mathrm{init}$ occurs with probability at least $1- o\left( \frac{1}{\poly(d)}\right)$.
    Also, if $\xi~\sim N(\vzero, \sigma^2 \Lambda)$ is independent of $\vw_1^{(0)}, \vw_{-1}^{(0)}$ and $\{(\mX_i,y_i)\}_{i \in [n]}$, we have
    \begin{equation*}
        \left| \left \langle \vw_1^{(0)}, \xi  \right \rangle\right|, \left| \left \langle \vw_{-1}^{(0)}, \xi  \right \rangle \right| 
        \leq \sigma_0 \sigma d^{\frac 1 2} \log d, \text{ and } \left| \left \langle \xi, \xi_i^{(p)} \right \rangle \right| \leq \sigma \sigma_\mathrm{d} d^{\frac 1 2 } \log d,
    \end{equation*}\
    for all $i \in [n]$ and $p \in [P] \setminus \{p_i^*\}$, with probability at least $1- o \left( \frac 1 {\poly (d)} \right)$.
\end{lemma}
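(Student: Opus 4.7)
The lemma is a catalogue of standard concentration statements on the random initialization and data sampling. The plan is to show each bullet fails with probability $o(1/\poly(d))$ and combine via a single union bound over the $O(n^2 P^2)$ atomic events. Because the relevant tails will all be at most $\exp(-\Omega(n))$ or $\exp(-\Omega(\log^2 d))$, and by \eqref{property:n} we have $n = \omega(\log^2 d)$, each such tail is sub-polynomial and the union bound goes through with room to spare.

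For the combinatorial bullets, I would note that $|\gV_1| \sim \mathrm{Binomial}(n, 1/2)$, so Hoeffding gives $\mathbb{P}[||\gV_1| - n/2| > n/52] \leq 2\exp(-\Omega(n))$. Conditioning on the labels, $|\gV_{s,k}| \sim \mathrm{Binomial}(|\gV_s|, \rho_k)$, and Hoeffding together with \ref{assumption:rho} yields $|\gV_{s,k}| \in [\rho_k n/4, 3\rho_k n/4]$ with probability $1 - \exp(-\Omega(\rho_k^2 n)) = 1 - \exp(-\omega(\log^2 d))$. For the patch-covering claim, the indices $p_i^*$ are i.i.d.\ uniform on $[P]$, so conditional on $|\gV_{1,1}| \geq \rho_1 n/4 = \Omega(n)$, each of the $P$ patches is missed with probability at most $(1-1/P)^{|\gV_{1,1}|} = \exp(-\Omega(n))$, and a union bound over the $P = \Theta(1)$ patches finishes.

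For the Gaussian bullets, I would exploit that the features are fixed and orthonormal, so $\langle \vw_s^{(0)}, \vv_{s',k}\rangle \sim N(0, \sigma_0^2)$ and the univariate Gaussian tail gives the bound $\sigma_0 \log d$ except on an $\exp(-\Omega(\log^2 d))$ event. For weight--noise products, I would condition on the noise: $\langle \vw_s^{(0)}, \xi\rangle \mid \xi \sim N(0, \sigma_0^2 \|\xi\|^2)$, where $\|\xi\|^2$ is tightly concentrated around $\sigma^2(d - 2K) \approx \sigma^2 d$ by chi-squared concentration (the $\mLambda$-projection removes only $2K = O(1)$ dimensions). Combining the two tails yields the claimed $\sigma_0 \sigma d^{1/2} \log d$ bound. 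The norm bullets are themselves chi-squared concentration: $\|\xi_i^{(p)}\|^2/\sigma^2 \sim \chi^2_{d - 2K}$, hence lies in $[\tfrac{1}{2}, \tfrac{3}{2}]\sigma^2 d$ with probability $1 - \exp(-\Omega(d))$. For cross inner products $\langle \xi_i^{(p)}, \xi_j^{(q)}\rangle$ with distinct $(i,p),(j,q)$, I would condition on one side (which has norm $\Theta(\sigma d^{1/2})$ by the previous step) and apply the same Gaussian tail; the total number of such events is $O(n^2 P^2) = \poly(d)$, so the union bound preserves $o(1/\poly(d))$.

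Linear independence will hold almost surely by induction: each noise patch is a nondegenerate Gaussian on the $(d-2K)$-dimensional subspace $\mathrm{range}(\mLambda)$ orthogonal to the features, and by \eqref{property:high_dim} we have $2K + nP < d$, so the conditional probability that the next sampled vector lies in the span of the previously sampled ones is zero. The final clause about a fresh test noise $\xi \sim N(\vzero, \sigma^2 \mLambda)$ is the same Gaussian-concentration argument conditioned on $\vw^{(0)}$ and $\{(\mX_i, y_i)\}$. The only ``obstacle'' is bookkeeping: ensuring every tail is below $1/\poly(d)$, which is precisely why \ref{assumption:rho} requires $\rho_k n = \omega(n^{1/2}\log d)$ so that $\rho_k^2 n = \omega(\log^2 d)$; all other tails decay as $\exp(-\Omega(d))$ or $\exp(-\Omega(\log^2 d))$ and cause no trouble.
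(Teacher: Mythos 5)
Your proposal is correct and follows the paper's high-level plan: a union bound over $\poly(d)$ atomic concentration events, using Hoeffding for the combinatorial counts, Gaussian/sub-Gaussian tails for the inner products, and chi-squared/sub-exponential concentration for the norms, with the tail rates controlled by $n = \omega(\log^2 d)$ and $\rho_k^2 n = \omega(\log^2 d)$ from \ref{assumption:rho}. The one place you diverge from the paper is in controlling the weight--noise and noise--noise inner products: the paper expands the noise vectors in an orthonormal basis of $\mathrm{range}(\mLambda)$, observes that the coordinates are i.i.d.\ standard Gaussians (so their pairwise products are mean-zero sub-exponential), and invokes Bernstein's inequality (Theorem 2.8.1 of Vershynin) to get the $d^{1/2}\log d$ tail; you instead condition on one vector, use the conditional Gaussian law $\langle \vw_s^{(0)},\xi\rangle\mid\xi\sim N(0,\sigma_0^2\lVert\xi\rVert^2)$, and combine a one-dimensional Gaussian tail with chi-squared concentration on $\lVert\xi\rVert^2$. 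The two routes give the same $\exp(-\Omega(\log^2 d))$ failure probability; your conditioning route is slightly more elementary and avoids sub-exponential machinery, while the paper's is marginally more uniform across the bullets. You also supply the argument for the linear-independence bullet (Gaussian nondegeneracy on the $(d-2K)$-dimensional range of $\mLambda$, plus $2K+nP<d$) that the paper only asserts "holds almost surely due to \eqref{property:high_dim}," which is a welcome bit of rigor.
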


\begin{proof}[Proof of Lemma~\ref{lemma:initial}]
Let us prove the first three points hold with probability at least $1- o \left( \frac 1 {\poly(d)} \right)$.     By H\"{o}effding's inequality,
    \begin{align*}
        \mathbb{P}\left[\left||\gV_1| - \frac{n}{2}\right| > \frac{n}{52} \right] &= \mathbb{P}\left[ \left|\sum_{i \in [n]}\left(  \mathbbm{1}_{y_i = 1}  -\mathbb{E} [\mathbbm{1}_{y_i = 1}]\right) \right|> \frac{n}{52}\right]\\
        &\leq 2 \exp \left( -\frac{2}{52^2}n \right) = o \left(\frac{1}{\poly(d)} \right),
    \end{align*}
    where the last equality is due to \eqref{property:n}.
    In addition, for each $s \in \{\pm 1\}, k \in [K]$, by H\"{o}effding's inequality
    \begin{align*}
        \mathbb{P}\left[ \left||\gV_{s,k}| - \frac{\rho_k}{2}n\right | > \frac{\rho_k}{4}n\right] &= \mathbb{P}\left[\left|\sum_{i \in [n]} \left(\mathbbm{1}_{i \in \gV_{s,k}} - \mathbb{E}[\mathbbm{1}_{i \in \gV_{s,k}}] \right)\right| > \frac{\rho_k}{4}n\right]\\
        &\leq 2 \exp \left(- \frac{\rho_k^2 }{8}n \right) = o \left( \frac 1 {\poly (d) } \right),
    \end{align*}
    where the last equality is due to \eqref{property:n}.
    Also, for each $i \in [n]$ and $p \in [P]$,
    \begin{equation*}
        \mathbb{P}[ \{ i \in \gV_{1,1}\} \cap \{p_i^* = p\}] = \frac{\rho_1}{P}.
    \end{equation*}
    Hence,
    \begin{align*}
        \mathbb{P}\left[\cup_{i \in \gV_{1,1}}\{p_i^*\} \neq [P]\right] &\leq \sum_{p \in [P]} \mathbb{P}\left[\cap_{i \in [n]}\left(\left( \{ i \in \gV_{1,1}\} \cap \{p_i^* = p\}\right)^\complement\right)\right]\\
        &= P \left(1 - \frac{\rho_1}{P}\right)^n \leq P \exp \left(-\frac{\rho_1}{P}n \right) \\
        &=o \left( \frac{1}{\poly(d)}\right).
    \end{align*}

Next, we will prove the remaining. Let us refer to the standard deviation of the Gaussian noise vector in $p$-th patch of $i$-th data as $\sigma_{i,p}$. In other words, for each $i \in [n]$ and $p \in [P] \setminus \{p_i^*\}$, 
    \begin{align*}
        \sigma_{i,p} = \begin{cases}
        \sigma_{\mathrm{d}} & \text{if $p = \tilde{p}_i$,}\\
        \sigma_{\mathrm{b}} & \text{otherwise.}
        \end{cases}
    \end{align*}

    For each $s, s' \in \{\pm 1\}$ and $k \in [K]$, $ \left \langle \vw_s^{(0)}, \vv_{s',k} \right \rangle \sim N(0, \sigma_0)$. Hence, by H\"{o}effding's inequality, we have
    \begin{equation*}
        \mathbb{P}\left[ \left| \left \langle \vw_s^{(0)} , \vv_{s',k} \right \rangle \right| >\sigma_0 \log d \right] \leq 2 \exp \left(-\frac{\left(\sigma_0 \log d \right )^2}{2 \sigma_0^2}\right) = o \left(\frac{1}{\poly (d)} \right).
    \end{equation*}
    Let $\{\vu_l \}_{l \in [d-2K]}$ be an orthonormal basis of the orthogonal complement of $\mathrm{Span}(\{\vv_{s,k}\}_{s \in \{\pm 1\}, k \in [K]})$. 
    Note that for each $s \in \{\pm 1\}, i \in [n]$ and $p \in [P] \setminus\{p_i^*\}$, we can write $\xi_i^{(p)}$ and $\xi$ as
    \begin{equation*}
        \vw_s^{}(0) = \sigma_0 \sum_{l \in [d-2K]} \rvz_{s,l} \vu_l, \quad \xi_i^{(p)} = \sigma_{i,p} \sum_{l \in [d-2K]} \rvz_{i,l}^{(p)} \vu_l, \quad \xi = \sigma \sum_{l \in [d-2K]} \rvz_l \vu_l
    \end{equation*} 
    where $\rvz_{s,l}, \rvz_{i,l}^{(p)}, \rvz_l \overset{\iid}{\sim} N(0,1)$. The sub-gaussian norm of standard normal distribution $N(0,1)$ is $\sqrt{\frac{8}{3}}$. Then $\left(\rvz_{i,l}^{(p)}\right) ^2-1$'s are mean zero sub-exponential with sub-exponential norm $\frac{8}{3}$ (Lemma 2.7.6 in \citet{vershynin2018high}). 
    In addition, $\rvz_{s,l}\rvz_{i,l}^{(p)}$'s, $\rvz_{i,l}^{(p)} \rvz_{j,l}^{(q)}$'s and $\rvz_{i,l}^{(p)} \rvz_l$'s are mean zero sub-exponential with sub-exponential norm less than or equal to $\frac{8}{3}$ (Lemma 2.7.7 in \citet{vershynin2018high}). 
    We use Bernstein's inequality (Theorem 2.8.1 in \citet{vershynin2018high}), with $c$ being the absolute constant stated therein. We then have the following:
    \begin{align*}
        1- \mathbb{P}\left[ \frac{1}{2} \sigma_{i,p}^2 d \leq \left \lVert \xi_i^{(p)} \right \rVert^2 \leq \frac{3}{2}\sigma_{i,p}^2d\right]
        &\leq \mathbb{P} \left[ \left| \left \lVert\xi_i^{(p)}\right \rVert^2 - \sigma_{i,p}^2  (d-2
K) \right| \geq \sigma_{i,p}^2 d^{\frac{1}{2}}\log d \right]\\
        &= \mathbb{P} \left[ \left| \sum_{l \in [d-2K]}\left( \left( \rvz_{i,l}^{(p)}\right)^2 -1 \right) \right| \geq d^{\frac{1}{2}} \log d \right]\\
        &\leq 2 \exp \left( -\frac{9 c d \log^2 d}{64(d-2K)} \right)\\
        &\leq 2\exp \left(-\frac{9 c \log^2 d}{64} \right) = o\left( \frac{1}{\poly(d)}\right),
    \end{align*}
    in addition, 
    \begin{align*}
        \mathbb{P} \left[ \left| \left \langle \xi_i^{(p)},  \xi_j^{(q)} \right \rangle \right| \geq \sigma_{i,p}\sigma_{j,q} d^{\frac{1}{2}} \log d\right]
        &= \mathbb{P} \left[ \left| \sum_{l \in [d-2K]} \rvz_{i,l}^{(p)} \rvz_{j,l}^{(q)} \right| \geq d^{\frac{1}{2}} \log d\right]\\
        &\leq 2 \exp \left( -\frac{9 c d \log^2 d}{64(d-2K)} \right)\\
        &\leq 2\exp \left(-\frac{9 c \log^2 d}{64} \right) = o \left(\frac{1}{\poly (d)} \right).
    \end{align*}
    Similarly, we have
    \begin{equation*}
       \mathbb{P} \left[ \left|\left \langle \vw_s^{(0)},  \xi_i^{(p)} \right \rangle \right|\geq \sigma_0 \sigma_{i,p}d^{\frac{1}{2}} \log d \right] \leq  2 \exp \left(-\frac{9  c \log^2 d}{64} \right) = o \left( \frac{1}{\poly(d)}\right).
    \end{equation*}
        Lastly, the last result holds almost surely due to \eqref{property:high_dim}.
        Applying the union bound to all events, each of which is at most $\poly(d)$ due to \eqref{property:high_dim}, leads us to our first conclusion.

        In addition, for each $s \in \{\pm 1\}, i \in [n]$ and $p \in [P]\setminus \{p_i^*\}$, 
    \begin{equation*}
       \mathbb{P} \left[ \left| \left \langle \vw_s^{(0)},  \xi \right \rangle \right|\geq \sigma_0 \sigma d^{\frac{1}{2}} \log d \right] \leq  2 \exp \left(-\frac{9 c \log^2 d}{64} \right) = o \left( \frac{1}{\poly(d)}\right),
    \end{equation*}
    and
    \begin{equation*}
        \mathbb{P} \left[ \left| \left \langle \xi_i^{(p)},  \xi \right \rangle \right| \geq \sigma_{i,p}\sigma d ^{\frac 1 2} \log d\right] \leq 2\exp \left(-\frac{9 c \log ^2 d}{64} \right) =o \left( \frac{1}{\poly(d)}\right).
    \end{equation*}    
    Applying the union bound to all events, each of which is at most $\poly(d)$ due to \eqref{property:high_dim}, leads us to our second conclusion.
\end{proof}

\subsection{Feature Noise Decomposition}\label{sec:decompose}
In our analysis, we use a technique that analyzes the coefficients of linear combinations of feature and noise vectors. A similar technique in a different data and network setting is introduced by \citet{cao2022benign}.
\begin{lemma}\label{lemma:decomposition}
    If we run one of \textsf{ERM}, \textsf{Cutout}, and \textsf{CutMix} training to update parameters $\mW^{(t)}$ of a model $f_{\mW^{(t)}}$, then there exist coefficients (corresponding to each method) $\gamma_s^{(t)}(s',k)$'s and $\rho_s^{(t)}(i, p)$'s so that we can write $\mW^{(t)} = \{ \vw_1^{(t)}, \vw_{-1}^{(t)}\}$ as 
    \begin{align*}
        \vw_s^{(t)} &= \vw_s^{(0)} + \sum_{k \in [K]} \gamma_s^{(t)}(s,k) \vv_{s,k} - \sum_{k \in [K]} \gamma_s^{(t)}(-s,k) \vv_{-s,k} \\
        &\quad + \sum_{\substack{i \in \gV_s , p \in [P]\setminus\{p^*_i\}}} \rho_s^{(t)}(i,p) \frac{\xi_i^{(p)}}{\left\lVert \xi_i^{(p)}\right\rVert^2} - \sum_{\substack{i \in \gV_{-s}, p \in [P]\setminus\{p^*_i\}}} \rho_s^{(t)}(i,p) \frac{\xi_i^{(p)}}{\left\lVert \xi_i^{(p)}\right\rVert^2} \\
        &\quad  + \alpha \left( \sum_{i \in \gF_s} s y_i\rho_s^{(t)}(i,\tilde{p}_i) \frac{\vv_{s,1}}{\left \lVert  \xi_i^{(\tilde{p}_i)}\right\rVert^2 } + \sum_{i \in \gF_{-s}} s y_i\rho_{s}^{(t)}(i,\tilde{p}_i) \frac{\vv_{-s,1}}{\left \lVert  \xi_i^{(\tilde{p}_i)}\right\rVert^2 }\right) \,
    \end{align*}
    where $\gF_s$ denotes the set of indices of data with feature noise $\vv_{s,1}$. Furthermore, if we run one of \textsf{ERM} and \textsf{Cutout}, the coefficients $\gamma_s^{(t)}(s',k)$'s and $\rho_s^{(t)}(i, p)$'s are monotone increasing.
\end{lemma}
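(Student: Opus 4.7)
The plan is to prove the decomposition by induction on $t$. At $t=0$, setting all coefficients $\gamma_s^{(0)}(s',k) = 0$ and $\rho_s^{(0)}(i,p) = 0$ reduces the stated identity to $\vw_s^{(0)} = \vw_s^{(0)}$, so the base case is immediate. For the inductive step, I would compute the gradient explicitly using $\frac{\partial f_\mW(\mX)}{\partial \vw_s} = s \sum_p \phi'(\langle \vw_s, \vx^{(p)}\rangle) \vx^{(p)}$, so that the ERM gradient reads
\begin{equation*}
\nabla_{\vw_s} \gL_\ERM(\mW) = \frac{s}{n} \sum_{i \in [n]} y_i \ell'\bigl(y_i f_\mW(\mX_i)\bigr) \sum_{p \in [P]} \phi'\bigl(\langle \vw_s, \vx_i^{(p)}\rangle\bigr) \vx_i^{(p)}.
\end{equation*}
I would then split each $\vx_i^{(p)}$ by patch type: (i) if $p = p_i^*$, then $\vx_i^{(p)} = \vv_{y_i, k}$, and the term is absorbed into $\gamma_s^{(t+1)}(y_i, k)$; (ii) if $p = \tilde p_i$, then $\vx_i^{(p)} = \alpha \vu_i + \xi_i^{(\tilde p_i)}$, with both pieces sharing the same factor $\phi'(\langle \vw_s, \vx_i^{(\tilde p_i)}\rangle)$, so a single scalar $\rho_s^{(t+1)}(i, \tilde p_i)$ carries the Gaussian contribution (through the $\xi$-block) and the feature-noise contribution (through the $\alpha$-block); (iii) for the remaining patches $\vx_i^{(p)} = \xi_i^{(p)}$ and the term goes into $\rho_s^{(t+1)}(i,p)$. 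Matching scalars against $\|\xi_i^{(p)}\|^{-2}$ in the ansatz gives the closed-form one-step updates
\begin{align*}
\Delta \gamma_s^{(t)}(s',k) &= -\tfrac{s s' \eta}{n} \sum_{i \in \gV_{s',k}} \ell'\bigl(y_i f_\mW(\mX_i)\bigr) \phi'\bigl(\langle \vw_s^{(t)}, \vv_{s',k}\rangle\bigr), \\
\Delta \rho_s^{(t)}(i, p) &= -\tfrac{\eta}{n} \ell'\bigl(y_i f_\mW(\mX_i)\bigr) \phi'\bigl(\langle \vw_s^{(t)}, \xi_i^{(p)} \rangle\bigr) \bigl\| \xi_i^{(p)} \bigr\|^2,
\end{align*}
which are consistent across the $\xi$- and $\alpha$-blocks because the sign prefactor $s y_i$ in the $\alpha$-block flips correctly between $\gV_s$ and $\gV_{-s}$.

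For \textsf{Cutout}, the same derivation applies after replacing $\sum_{p \in [P]}$ by $\mathbb{E}_\gC[\sum_{p \notin \gC}]$, which just multiplies each per-patch update by the constant $\Pr[p \notin \gC] \in [0,1]$ and preserves both the decomposition form and the sign of every update. For \textsf{CutMix}, the gradient involves a double sum $\frac{1}{n^2}\sum_{i',j'} \mathbb{E}_\gS$, and the contribution to any fixed patch $\vx_i^{(p)}$ has two sources: pairs with $i'=i,\, p \in \gS$ (weighted by $\tfrac{|\gS|}{P} y_i \ell'(y_i f_\mW(\mX_{i,j',\gS}))$) and pairs with $j'=i,\, p \notin \gS$ (weighted by $(1-\tfrac{|\gS|}{P}) y_i \ell'(y_i f_\mW(\mX_{i',i,\gS}))$). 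Collecting these still produces well-defined scalars $\gamma_s^{(t+1)}, \rho_s^{(t+1)}$ of the required form, but the two sources couple $y_i$ with losses evaluated at mixed inputs whose other constituent can carry either label, so the consistent sign structure is lost—this is exactly why monotonicity is not claimed for \textsf{CutMix}. For \textsf{ERM} and \textsf{Cutout}, monotonicity follows at once from $-\ell'(\cdot) = \sigma(-y_i f_\mW(\mX_i)) > 0$, $\phi'(\cdot) \geq \beta > 0$, and the fact that $i \in \gV_{s',k}$ forces $y_i = s'$ in the $\gamma$-update so that $s s' y_i = s$ lines up with the leading $s$ to give a uniformly positive increment.

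The main obstacle is the bookkeeping at the dominant noise patch: the gradient contribution there lives simultaneously in the feature subspace (through $\alpha \vv_{\pm 1,1}$) and in the noise subspace (through $\xi_i^{(\tilde p_i)}$), and one must verify that both are absorbed by a single shared scalar $\rho_s^{(t+1)}(i, \tilde p_i)$ without polluting the $\gamma_s^{(t+1)}(\cdot, 1)$ updates. This comes down to carefully aligning the $\alpha$-prefactor in the decomposition with the $s y_i$ sign choice across the partitions $\gV_{\pm 1} \cap \gF_{\pm 1}$; uniqueness of the identification of coefficients at each step, needed to make the induction well-posed, is provided by the linear independence of $\{\vv_{s,k}\}_{s,k} \cup \{\xi_i^{(p)}\}_{i,p}$ guaranteed by Lemma~\ref{lemma:initial}.
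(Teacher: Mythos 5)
Your overall approach matches the paper's exactly: compute the gradient, split by patch type, define the coefficients recursively, and read off monotonicity from $-\ell' > 0$ and $\phi' > 0$. The dominant-noise-patch bookkeeping (one shared scalar $\rho_s^{(t)}(i,\tilde p_i)$ serving both the $\xi$-block and the $\alpha$-block) and the CutMix diagnosis (per-patch weights like $g_{i,j,\gS} = -\tfrac{|\gS|}{P}y_i\ell' - (1-\tfrac{|\gS|}{P})y_j\ell'$ are not sign-definite when $y_i \neq y_j$, so monotonicity genuinely fails) are both right.

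However, your displayed $\gamma$-update has a sign error that would, if taken literally, contradict the claimed monotonicity. The one-step increment of the \emph{coefficient of $\vv_{s',k}$ inside $\vw_s$} is $-\tfrac{ss'\eta}{n}\sum_{i\in\gV_{s',k}}\ell'(y_if)\phi'(\cdot)$, because the gradient carries a factor $sy_i = ss'$ for data in $\gV_{s',k}$. But in the stated decomposition the coefficient of $\vv_{s',k}$ in $\vw_s$ is \emph{not} $\gamma_s^{(t)}(s',k)$; it is $ss'\,\gamma_s^{(t)}(s',k)$, since the ansatz writes $+\gamma_s(s,k)\vv_{s,k} - \gamma_s(-s,k)\vv_{-s,k}$. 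Peeling off the $ss'$ gives the correct rule
\[
\Delta\gamma_s^{(t)}(s',k) \;=\; -\tfrac{\eta}{n}\sum_{i\in\gV_{s',k}}\ell'\bigl(y_if_{\mW^{(t)}}(\mX_i)\bigr)\phi'\bigl(\langle\vw_s^{(t)},\vv_{s',k}\rangle\bigr) \;\geq\; 0,
\]
with no $ss'$ factor (this is exactly the paper's recursion). Your formula has the extraneous $ss'$, so it would make $\gamma_s^{(t)}(-s,k)$ \emph{decrease}, and the accompanying sentence ``$ss'y_i = s$ lines up with the leading $s$'' is not a coherent sign argument (there is no ``leading $s$'' in the $\gamma$-block; the $sy_i$ prefactors you are thinking of belong to the $\rho$- and $\alpha$-blocks). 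A smaller imprecision: for \textsf{Cutout} the per-patch factor is $\mathbb{E}_{\gC}[g_{i,\gC}^{(t)}\mathbbm{1}_{p\notin\gC}]$, not $\Pr[p\notin\gC]\cdot g_i$, since $g_{i,\gC}$ itself depends on $\gC$; this does not affect nonnegativity, so monotonicity still holds.
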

We provide proof of Lemma~\ref{lemma:decomposition} for \textsf{ERM} in Appendix~\ref{sec:ERM_decompose}, for \textsf{Cutout} in Appendix~\ref{sec:Cutout_decompose} and for \textsf{CutMix} in Appendix~\ref{sec:CutMix_decompose}.

Since Gaussian vectors in a high-dimensional regime are nearly orthogonal, we can use the coefficients to approximate the inner products or outputs of neurons. The following lemma quantifies the approximation error.
\begin{lemma}\label{lemma:approx}
Suppose the event $E_\mathrm{init}$ occurs and $0 \leq \gamma_s^{(t)}(s',k), \rho_s^{(t)}(i,p) \leq \bigOtilde(\beta^{-1})$ for all $s,s' \in \{\pm 1\}, k \in [K], i \in [n]$ and $p \in [P] \setminus \{p_i^*\}$ at iteration $t$. Then,  for each $s \in \{\pm 1\}, k \in [K], i \in [n],$ and $p \in [P]\setminus\{p_i^*\}$, the following holds:
\begin{itemize}[leftmargin = *]
    \item $\left| \left \langle \vw_s^{(t)} , \vv_{s,k} \right \rangle - \gamma_s^{(t)}(s,k)\right|, \left| \phi\left ( \left \langle \vw_s^{(t)} , \vv_{s,k}\right \rangle \right) - \gamma_s^{(t)}(s,k)\right| =o\left(\frac{1}{\polylog(d)}\right)$
    
    \item $\left| \left \langle \vw_s^{(t)} , \vv_{-s,k} \right \rangle  + \gamma_s^{(t)}(-s,k)\right|, 
    \left| \phi\left ( \left \langle \vw_s^{(t)} , \vv_{-s,k}\right \rangle \right ) + \beta \gamma_s^{(t)}(-s,k)\right| 
    = o\left(\frac{1}{\polylog(d)}\right)$ 
    
    \item $\left| \left \langle \vw_{y_i}^{(t)} , \xi_i^{(p)} \right \rangle - \rho_{y_i}^{(t)}(i,p)\right|,  \left| \phi\left ( \left \langle \vw_{y_i}^{(t)}, \xi_i^{(p)} \right \rangle \right ) - \rho_{y_i}^{(t)}(i,p)\right| = o\left(\frac{1}{\polylog(d)}\right)$
    
    \item  $\left| \left \langle \vw_{-y_i}^{(t)} , \xi_i^{(p)} \right \rangle + \rho_{-y_i}^{(t)}(i,p)\right|,  \left| \phi\left( \left \langle \vw_{-y_i}^{(t)} , \xi_i^{(p)} \right \rangle \right) + \beta \rho_{-y_i}^{(t)}(i,p)\right| =o\left(\frac{1}{\polylog(d)}\right)$
    
    \item $\left| \phi \left( \left \langle \vw_{y_i}^{(t)} , \vx_i^{(\tilde p_i)}\right \rangle \right)  - \rho_{y_i}^{(t)}(i,\tilde{p}_i)\right| ,  \left| \phi \left( \left \langle \vw_{-y_i}^{(t)} , \vx_i^{(\tilde p_i)}\right \rangle \right)  +\beta \rho_{-y_i}^{(t)}(i,\tilde{p}_i)\right|  =o\left(\frac{1}{\polylog(d)}\right)$
\end{itemize}
\end{lemma}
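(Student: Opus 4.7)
The plan is to substitute the decomposition from Lemma~\ref{lemma:decomposition} into each inner product, isolate a single ``diagonal'' term that matches the claimed approximand, and bound every residual piece by $o(1/\polylog(d))$ using the near-orthogonality guarantees of $E_\mathrm{init}$ together with the parameter assumptions in Assumption~\ref{assumption:parameters}. Once the inner-product approximations are in hand, the corresponding activation statements will follow from the $1$-Lipschitzness of $\phi$ combined with the smallness of the transition width $r$.

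For $\left\langle \vw_s^{(t)}, \vv_{s',k}\right\rangle$, orthonormality of the feature system and the identity $\left\langle \vv_{s,k}, \xi_i^{(p)}\right\rangle = 0$ (built into $\mLambda$) collapse the expansion to $\pm\gamma_s^{(t)}(s',k)$ plus two residuals: (i) the initialization piece $\left\langle \vw_s^{(0)}, \vv_{s',k}\right\rangle$, which is bounded by $\sigma_0 \log d = o(1/\polylog(d))$ via \ref{assumption:etc}; and (ii) the feature-noise correction, which lives only along $\vv_{\pm s, 1}$ and has magnitude at most $\alpha \cdot n \cdot \bigOtilde(\beta^{-1}) / (\sigma_\mathrm{d}^2 d/2) = o(1/\polylog(d))$ via \ref{assumption:feature_noise}. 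For $\left\langle \vw_s^{(t)}, \xi_i^{(p)}\right\rangle$, the feature and feature-noise pieces vanish by $\mLambda$-orthogonality, the diagonal term $(j,q)=(i,p)$ contributes exactly $\pm\rho_s^{(t)}(i,p)$ thanks to the $\xi_i^{(p)}/\|\xi_i^{(p)}\|^2$ normalization, and each of the remaining $\Theta(nP)$ cross terms is at most $\bigOtilde(\beta^{-1})\cdot |\langle \xi_i^{(p)}, \xi_j^{(q)}\rangle|/\|\xi_j^{(q)}\|^2$, which $E_\mathrm{init}$ controls by $\bigOtilde(\beta^{-1}\,\sigma_\mathrm{d}\sigma_\mathrm{b}^{-1} d^{-1/2})$ in the worst case. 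Summing and invoking \ref{assumption:n} in the form of \eqref{property:noise_sum} bounds the total by $o(1/\polylog(d))$; the initialization piece $|\langle \vw_s^{(0)}, \xi_i^{(p)}\rangle| \leq \sigma_0 \sigma d^{1/2}\log d$ is absorbed similarly using \ref{assumption:etc} together with $\sigma_\mathrm{d}^2 d = \omega(1)$ from \eqref{property:noise}.

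To pass from each inner-product estimate to its $\phi$-counterpart, I use that $\phi$ is $1$-Lipschitz with $|\phi(z)-z| \leq (1-\beta)r/2$ for $z \geq 0$ and $\phi(z)=\beta z$ for $z \leq 0$; since $r = o(1/\polylog(d))$ via \ref{assumption:etc}, the only care needed is the narrow regime where the $o(1/\polylog(d))$ approximation error might push $z$ across $0$, where one still only loses an extra $O(r)=o(1/\polylog(d))$. For the last bullet, linearity gives $\left\langle \vw_{y_i}^{(t)}, \vx_i^{(\tilde p_i)}\right\rangle = \alpha \left\langle \vw_{y_i}^{(t)}, \vu\right\rangle + \left\langle \vw_{y_i}^{(t)}, \xi_i^{(\tilde p_i)}\right\rangle$, and since $\vu \in \{\vv_{\pm 1,1}\}$, bullets 1--2 yield $|\alpha\langle \vw_{y_i}^{(t)},\vu\rangle| = \bigOtilde(\alpha\beta^{-1})$, which is $o(1/\polylog(d))$ by \eqref{property:feature_noise}; the remaining term is already handled by bullet 3. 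The step I expect to require the most care is the noise--noise cross-correlation bookkeeping: the ratio $|\langle \xi_i^{(p)}, \xi_j^{(q)}\rangle|/\|\xi_j^{(q)}\|^2$ scales differently depending on whether each of the two patches is dominant or background, and one has to check that the worst case (dominant source, background normalizer, producing precisely the factor $\sigma_\mathrm{d}\sigma_\mathrm{b}^{-1}d^{-1/2}$) is exactly what \ref{assumption:n} is tailored to absorb via \eqref{property:noise_sum}.
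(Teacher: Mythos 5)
Your proposal is correct and follows essentially the same route as the paper's proof: substitute the decomposition from Lemma~\ref{lemma:decomposition}, isolate the diagonal coefficient, bound the initialization piece, the feature-noise correction (via \ref{assumption:feature_noise} and the coefficient bound $\bigOtilde(\beta^{-1})$), and the $\bigOtilde(nP\beta^{-1}\sigma_\mathrm{d}\sigma_\mathrm{b}^{-1}d^{-1/2})$ noise--noise cross-terms (via \eqref{property:noise_sum}), then pass to $\phi$ using $1$-Lipschitzness, $|\phi(z)-z|\le(1-\beta)r/2$ for $z\ge 0$, and $\phi(-\gamma)=-\beta\gamma$ for $\gamma\ge 0$. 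The one cosmetic difference is your extra worry about the sign of the argument crossing $0$, which is unnecessary since $1$-Lipschitzness handles that case without special treatment, but it does not affect correctness.
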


\begin{proof}[Proof of Lemma~\ref{lemma:approx}]\label{proof:approx}
    For each $s \in \{\pm 1\}, k \in [K]\setminus\{1\}$, by \ref{assumption:etc} and \eqref{property:noise}, we have
    \begin{equation*}
        \left|\left \langle \vw_s^{(t)} , \vv_{s,k} \right \rangle - \gamma_s^{(t)}(s,k) \right| = \left| \left \langle \vw_s^{(0)} , \vv_{s,k} \right \rangle \right| = \bigOtilde(\sigma_0) = o \left( \frac{1}{\polylog (d)}\right).
    \end{equation*}
    Similarly, by \ref{assumption:etc} and \eqref{property:noise}, 
    \begin{equation*}
        \left| \left \langle \vw_s^{(t)} , \vv_{-s,k}\right \rangle + \gamma_s^{(t)}(-s,k) \right| = \left|\left \langle \vw_s^{(0)} , \vv_{-s,k}\right \rangle \right| = \bigOtilde(\sigma_0) =  o \left( \frac{1}{\polylog (d)}\right),
    \end{equation*}
    Next, we will consider the case of $\vv_{1,1}$ and $\vv_{-1,1}$. For each $s \in \{\pm1\}$, we have
    \begin{align*}
        &\quad \left| \left \langle \vw_s^{(t)}, \vv_{s,1} \right \rangle - \gamma_s^{(t)}(s,1)\right|\\
        & \leq \left| \left \langle \vw_s^{(0)},\vv_{s,1} \right \rangle \right| + \alpha \sum_{i \in [n]} \rho_s^{(t)}(i, \tilde{p}_i) \left \lVert \xi_i^{(\tilde{p}_i)} \right\rVert^{-2} \\
        &\leq \bigOtilde (\sigma_0) + \bigOtilde \left( \alpha n \beta^{-1} \sigma_\mathrm{d}^{-2} d^{-1} \right)\\
        &= o \left( \frac{1}{\polylog(d)}\right),
    \end{align*}
    where the last equality is due to \eqref{property:noise} and \ref{assumption:feature_noise}.  Similarly, we have
    \begin{align*}
        &\quad \left| \left \langle \vw_s^{(t)}, \vv_{-s,1} \right \rangle + \gamma_s^{(t)}(-s,1)\right|\\
        & \leq \left| \left \langle \vw_s^{(0)},\vv_{-s,1} \right \rangle \right| +   \alpha  \sum_{i \in [n]} \rho_{s}^{(t)}(i, \tilde{p}_i) \left \lVert \xi_i^{(\tilde{p}_i)} \right\rVert^{-2} \\
        &\leq \bigOtilde (\sigma_0) + \bigOtilde \left( \alpha n \beta^{-1} \sigma_\mathrm{d}^{-2} d^{-1} \right)\\
        &= o \left( \frac{1}{\polylog(d)} \right).
    \end{align*}
    Hence, from \ref{assumption:etc} and the fact that $|\phi(z) - z|\leq \frac{(1-\beta)r}{2}$ for any $z\geq 0$, we have
    \begin{align*}
        &\quad \left| \phi\left ( \left \langle \vw_s^{(t)} , \vv_{s,k}\right \rangle \right ) - \gamma_s^{(t)}(s,k)\right|\\
        &\leq \left| \phi \left  (\left \langle \vw_s^{(t)}, \vv_{s,k} \right \rangle \right ) - \phi\left (\gamma_s^{(t)}(s,k)\right )\right| + \left| \phi\left (\gamma_s^{(t)} (s,k) \right) - \gamma_s^{(t)}(s,k)\right|\\
        &\leq \left| \left \langle \vw_s^{(t)} , \vv_{s,k} \right \rangle - \gamma_s^{(t)}(s,k) \right| + \frac{(1-\beta)r}{2}\\
        &=o\left( \frac{1}{\polylog (d)}\right).
    \end{align*}
    and
    \begin{align*}
        \left| \phi\left (\left \langle \vw_s^{(t)}, \vv_{-s,k}\right \rangle \right ) + \beta \gamma_s^{(t)}(-s,k)\right| &= \left| \phi\left ( \left \langle \vw_s^{(t)}, \vv_{-s,k}\right \rangle \right ) - \phi\left (-\gamma_s^{(t)}(-s,k) \right)\right|\\
        &\leq \left| \left \langle \vw_s^{(t)} , \vv_{-s,k} \right \rangle + \gamma_s^{(t)}(-s,k) \right|\\
        &= o \left( \frac{1}{\polylog (d)}\right).
    \end{align*}
    
    For each $i \in [n],$ and $p \in [P] \setminus\{p_i^*\}$, we have
    \begin{align*}
        \left | \left \langle \vw_{y_i}^{(t)} , \xi_i^{(p)} \right \rangle - \rho_{y_i}^{(t)}(i,p)\right| &\leq \left| \left \langle \vw_{y_i}^{(0)} , \xi_i^{(p)} \right \rangle \right| +  \sum_{\substack{j \in [n], q \in [P]\setminus\{p_i^*\}\\(j,q)\neq (i,p)} } \rho_{y_i}^{(t)}(j,q) \frac{\left| \left \langle \xi_i^{(p)} , \xi_j ^{(q)}\right \rangle \right|}{\left \lVert \xi_j^{(q)}\right\rVert^2}\\
        &\leq \bigOtilde \left (\sigma_0 \sigma_\mathrm{d} d^{\frac 1 2} \right) +  \bigOtilde \left(nP\beta^{-1} \sigma_\mathrm{d} \sigma_\mathrm{b}^{-1} d^{-\frac{1}{2}}\right)\\
        &= o \left( \frac{1}{\polylog(d)}\right),
    \end{align*}
    where the last equality is due to \ref{assumption:etc} and \eqref{property:noise_sum}. By triangular inequality, \ref{assumption:etc}, and the fact that $\phi' \leq 1$ and $|\phi(z) - z| \leq \frac{(1-\beta)r}{2}$ for any $z \geq 0$, we have
    \begin{align*}
        &\quad \left| \phi \left( \left \langle \vw_{y_i}^{(t)} ,\xi_i^{(p)} \right \rangle \right) - \rho_{y_i}^{(t)} (i,p) \right| \\
        &\leq  \left| \phi \left( \left \langle \vw_{y_i}^{(t)}, \xi_i^{(p)} \right \rangle \right) - \phi\left(\rho_{y_i}^{(t)} (i,p)\right) \right| + \left| \phi\left(\rho_{y_i}^{(t)} (i,p)\right) - \rho_{y_i}^{(t)}(i,p) \right|\\
        &\leq \left | \left \langle \vw_{y_i}^{(t)},\xi_i^{(p)} \right \rangle - \rho_{y_i}^{(t)}(i,p)\right| + \frac{(1-\beta)r}{2} \\
        &= o \left( \frac{1}{\polylog(d)}\right).
    \end{align*}
    Also, if $i \in \gF_s$ for some $s \in \{\pm 1\}$,
    \begin{align*}
        &\quad \left|\phi \left( \left \langle \vw_{y_i}^{(t)}, \vx_i^{(\tilde{p}_i)} \right \rangle \right)  - \rho_{y_i}^{(t)}(i,\tilde{p}_i)\right|\\
        &\leq \left|\phi \left( \left \langle \vw_{y_i}^{(t)}, \xi_i^{(\tilde{p}_i)} \right \rangle \right)  - \rho_{y_i}^{(t)}(i,\tilde{p}_i)\right| + \left|\phi \left( \left \langle \vw_{y_i}^{(t)} , \vx_i^{(\tilde{p}_i)} \right \rangle \right) - \phi \left( \left \langle \vw_{y_i}^{(t)} , \xi_i^{(\tilde{p}_i)} \right \rangle \right) \right|\\
        &\leq   \left|\phi \left( \left \langle \vw_{y_i}^{(t)}, \xi_i^{(\tilde{p}_i)} \right \rangle \right)  - \rho_{y_i}^{(t)}(i,\tilde{p}_i)\right| + \alpha \left| \left \langle \vw_{y_i}^{(t)} , \vv_{s,1} \right \rangle \right|\\
        &\leq \left|\phi \left( \left \langle \vw_{y_i}^{(t)}, \xi_i^{(\tilde{p}_i)} \right \rangle \right)  - \rho_{y_i}^{(t)}(i,\tilde{p}_i)\right| + \alpha \gamma_{y_i}^{(t)}(s,1) + \alpha \cdot o \left(\frac {1}{\polylog (d)} \right) \\
        &\leq  \bigOtilde\left(\alpha \beta^{-1} \right) + o \left( \frac{1}{\polylog (d)}\right)  \\
        &= o \left( \frac{1}{\polylog (d)}\right),
    \end{align*}   
    where we apply the triangular inequality, the fact that $\phi' \leq 1$, the triangular inequality again, $\rho_{y_i}^{(t)}(s,1) = \bigOtilde(\beta^{-1})$ and \eqref{property:feature_noise} sequentially.

    Similarly, 
    \begin{align*}
        \left | \left \langle \vw_{-y_i} ^{(t)}, \xi_i^{(p)} \right \rangle + \rho_{-y_i} ^{(t)}(i,p)\right| &\leq \left| \left \langle \vw_{-y_i} ^{(0)} , \xi_i^{(p)} \right \rangle \right| +  \sum_{\substack{j \in [n], q \in [P]\setminus\{p_i^*\}\\(j,q)\neq (i,p)} } \rho_{-y_i} ^{(t)}(j,q) \frac{\left| \left \langle \xi_i^{(p)} , \xi_j ^{(q)} \right \rangle \right|}{\left \lVert \xi_j^{(q)}\right\rVert^2}\\
        &\leq \bigOtilde(\sigma_0 \sigma_\mathrm{d} d^{\frac{1}{2}}) +  \bigOtilde \left (nP\beta^{-1} \sigma_\mathrm{d} \sigma_\mathrm{b}^{-1} d^{-\frac{1}{2}} \right )\\
        &= o \left( \frac{1}{\polylog (d)}\right),
    \end{align*}
    and
    \begin{align*}
        \left| \phi \left( \left \langle \vw_{-y_i} ^{(t)} , \xi_i^{(p)} \right \rangle \right) + \beta\rho_{-y_i} ^{(t)} (i,p) \right| &=  \left| \phi \left( \left \langle \vw_{-y_i} ^{(t)} , \xi_i^{(p)} \right \rangle \right) - \phi\left(-\rho_{-y_i} ^{(t)} (i,p)\right) \right|\\
        &\leq \left | \left \langle \vw_{-y_i} ^{(t)} , \xi_i^{(p)} \right \rangle + \rho_{-y_i} ^{(t)}(i,p)\right| \\
        &= o \left( \frac{1}{\polylog (d)}\right),
    \end{align*}    
    Also, if $i \in \gF_{s}$ for some $s \in \{\pm 1\}$,
    \begin{align*}
        &\quad \left|\phi \left( \left \langle \vw_{-y_i} ^{(t)}, \vx_i^{(\tilde{p}_i)} \right \rangle \right)  +\beta \rho_{-y_i} ^{(t)}(i,\tilde{p}_i)\right|\\
        &= \left|\phi \left( \left \langle \vw_{-y_i} ^{(t)} , \xi_i^{(\tilde{p}_i)} \right \rangle \right)  +\beta \rho_{-y_i} ^{(t)}(i,\tilde{p}_i)\right| + \left|\phi \left( \left \langle \vw_{-y_i} ^{(t)} , \vx_i^{(\tilde{p}_i)}\right \rangle \right) - \phi \left( \left \langle \vw_{-y_i} ^{(t)}, \xi_i^{(\tilde{p}_i)} \right \rangle \right) \right|\\
        &\leq   \left|\phi \left( \left \langle \vw_{-y_i} ^{(t)} , \xi_i^{(\tilde{p}_i)} \right \rangle \right)  +\beta \rho_{-y_i} ^{(t)}(i,\tilde{p}_i)\right| + \alpha \left| \left \langle \vw_{-y_i} ^{(t)} , \vv_{s,1} \right \rangle \right|\\
        &\leq  \left|\phi \left( \left \langle \vw_{-y_i} ^{(t)} , \xi_i^{(\tilde{p}_i)} \right \rangle \right)  +\beta \rho_{-y_i} ^{(t)}(i,\tilde{p}_i)\right|
        + \alpha \gamma_{-y_i}^{(t)}(s,1) + \alpha \cdot o \left( \frac {1}{\polylog (d)}\right)\\
        &\leq  \bigOtilde\left(\alpha \beta^{-1}\right) + o \left (\frac{1}{\polylog(d)} \right)\\
        &= o \left (\frac{1}{\polylog(d)} \right).
    \end{align*}
\end{proof}

We define the set $\gW$ as the collection of $\mW = \{\vw_1, \vw_{-1}\}$, where $\vw_1 - \vw_1^{(0)},\vw_{-1}-\vw_{-1}^{(0)}$ are elements of the subspace spanned by $\{\vv_{s,k}\}_{s \in \{\pm 1\}, k \in [K]} \cup \left \{ \vx_i^{(p)}\right\}_{i \in [n], p \in [P] \setminus \{p_i^*\} }$. The following lemma guarantees the unique expression of any $\mW \in \gW$ in the form of the feature noise decomposition.
\begin{lemma}\label{lemma:unique}
    Suppose the event $E_\mathrm{init}$ occurs. For each element $\mW = \{\vw_1, \vw_{-1}\} \in \gW$, there exist unique coefficients $\gamma_s(s',k)$'s and $\rho_s(i, p)$'s such that
    \begin{align*}
        \vw_{s} &= \vw_{s}^{(0)} + \sum_{k \in [K]} \gamma_{s}(s,k) \vv_{s,k} - \sum_{k \in [K]} \gamma_{s}(-s,k) \vv_{-s,k} \\
        &\quad + \sum_{\substack{i \in \gV_{s}\\ p \in [P]\setminus\{p^*_i\}}} \rho_{s}(i,p) \frac{\xi_i^{(p)}}{\left\lVert \xi_i^{(p)}\right\rVert^2} - \sum_{\substack{i \in\gV_{-s}\\ p \in [P]\setminus\{p^*_i\}}} \rho_{s}(i,p) \frac{\xi_i^{(p)}}{\left\lVert \xi_i^{(p)}\right\rVert^2} \\
        &\quad  + \alpha \left( \sum_{i \in \gF_{s}} sy_i \rho_{s}(i,\tilde{p}_i) \frac{\vv_{s,1}}{\left \lVert  \xi_i^{(\tilde{p}_i)}\right\rVert^2 }+ \sum_{i \in \gF_{-s}} sy_i\rho_{s}(i,\tilde{p}_i) \frac{\vv_{-s,1}}{\left \lVert  \xi_i^{(\tilde{p}_i)}\right\rVert^2 }\right) 
    \end{align*}
    for each $s \in \{\pm 1\}$. Using this fact, for each $s^* \in \{\pm 1\}$ and $k^* \in [K]$, we can introduce a function $Q^{(s^*,k^*)}:\gW \rightarrow \R^{d\times 2}$ such that for each $\mW = \{\vw_1, \vw_{-1}\} \in \gW$, 
    \begin{equation*}
        Q^{(s^*,k^*)}(\mW) = \left \{Q_1^{(s^*,k^*)}(\vw_1), Q_{-1}^{(s^*,k^*)}(\vw_{-1})\right\}
    \end{equation*}
    is given by:
    \begin{align*}
       Q_{s}^{(s^*,k^*)}(\vw_{s}) &= ss^*\gamma_{s}(s^*,k^*) \vv_{s^*,k^*} + ss^*\sum_{i \in \gV_{s^*,k^*}, p \in [P] \setminus\{p_i^*\} } \rho_{s}(i,p) \frac{\xi_i^{(p)}}{\left \lVert \xi_i^{(p)}\right \rVert^2}\\
       &\quad  + \alpha \left(\sum_{i \in \gF_{s} \cap \gV_{s^*,k^*}} ss^* \rho_{s}(i,\tilde{p}_i) \frac{\vv_{s,1}}{\left \lVert  \xi_i^{(\tilde{p}_i)}\right\rVert^2 } + \sum_{i \in \gF_{-s} \cap \gV_{s^*,k^*} } ss^* \rho_{s}(i,\tilde{p}_i) \frac{\vv_{-s,1}}{\left \lVert  \xi_i^{(\tilde{p}_i)}\right\rVert^2 }\right).
    \end{align*}
\end{lemma}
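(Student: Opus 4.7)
My plan is to reduce the claim to a change-of-basis argument using the linear independence provided by $E_\mathrm{init}$. Let $V = \mathrm{Span}\bigl(\{\vv_{s,k}\}_{s,k} \cup \{\vx_i^{(p)}\}_{i \in [n], p \in [P]\setminus\{p_i^*\}}\bigr)$; then $\mW = \{\vw_1, \vw_{-1}\} \in \gW$ means exactly that $\vw_s - \vw_s^{(0)} \in V$ for $s \in \{\pm 1\}$. Under $E_\mathrm{init}$, the generating set of $V$ is linearly independent, hence a basis. Since $\vx_i^{(p)} = \xi_i^{(p)}$ for background patches and $\vx_i^{(\tilde p_i)} - \xi_i^{(\tilde p_i)} = \alpha \vu_i$ lies in $\mathrm{Span}(\vv_{1,1}, \vv_{-1,1})$ for dominant patches, the alternative family $\gB := \{\vv_{s,k}\}_{s,k} \cup \{\xi_i^{(p)}\}_{i \in [n], p \in [P]\setminus\{p_i^*\}}$ spans the same subspace $V$ and has the same cardinality, so it is also a basis of $V$. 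I would carry out the entire argument in this basis.

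\textbf{Existence.} Expand $\vw_s - \vw_s^{(0)} = \sum_{s',k} c_s(s',k)\, \vv_{s',k} + \sum_{i,p\neq p_i^*} e_s(i,p)\, \xi_i^{(p)}$ in the basis $\gB$ and read off candidate coefficients: set $\rho_s(i,p) = s y_i \bignorm{\xi_i^{(p)}}^2 e_s(i,p)$ (so the sign is $+1$ for $i \in \gV_s$ and $-1$ for $i \in \gV_{-s}$), and $\gamma_s(s,k) = c_s(s,k)$, $\gamma_s(-s,k) = -c_s(-s,k)$ for $k \geq 2$. For $k = 1$, the feature-noise contributions of the dominant-noise patches of data in $\gF_{\pm s}$ must be absorbed explicitly:
\begin{align*}
\gamma_s(s,1) &:= c_s(s,1) - \alpha \sum_{i \in \gF_s} s y_i\, \rho_s(i, \tilde p_i)\, \bignorm{\xi_i^{(\tilde p_i)}}^{-2}, \\
\gamma_s(-s,1) &:= -c_s(-s,1) + \alpha \sum_{i \in \gF_{-s}} s y_i\, \rho_s(i, \tilde p_i)\, \bignorm{\xi_i^{(\tilde p_i)}}^{-2}.
\end{align*}
Plugging these back into the right-hand side of the asserted decomposition and collecting the coefficient of each element of $\gB$ reproduces $c_s(\cdot)$ and $e_s(\cdot)$ exactly, so the decomposition holds.

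\textbf{Uniqueness and well-definedness of $Q^{(s^*,k^*)}$.} Conversely, suppose two tuples of coefficients both satisfy the decomposition. Rewriting the right-hand side in the basis $\gB$ and invoking uniqueness of basis expansions, the coefficient of each $\xi_i^{(p)}$ forces $\rho_s(i,p) = s y_i \bignorm{\xi_i^{(p)}}^2 e_s(i,p)$, which pins down every $\rho_s(i,p)$. With the $\rho_s$'s fixed, the coefficients of $\vv_{s',k}$ for $k \geq 2$ yield $\gamma_s(s',k)$ directly, and for $k = 1$ the two relations above become scalar equations in a single unknown each and are therefore uniquely solvable. Hence the entire tuple of coefficients is a function of $\mW$ alone, so the explicit formula defining $Q^{(s^*, k^*)}$ produces a well-defined map on $\gW$.

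\textbf{Main obstacle.} The only subtlety is the coupling between $\gamma_s(\pm s, 1)$ and the coefficients $\rho_s(i, \tilde p_i)$ for $i \in \gF_{\pm s}$, induced by the feature-noise term $\alpha \vu_i$ embedded in dominant-noise patches. Once one identifies the ordering ``determine $\rho_s$'s first from the $\xi$-coefficients, then $\gamma_s$'s from the $\vv$-coefficients,'' the argument collapses to routine linear algebra in the basis $\gB$.
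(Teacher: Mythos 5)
Your proof is correct and follows essentially the same route as the paper: both reduce to a change-of-basis argument using the linear independence guaranteed by $E_\mathrm{init}$, determine the $\rho_s(i,p)$'s from the $\xi$-coefficients, and then solve for $\gamma_s(\pm s,1)$ after absorbing the $\alpha$-scaled feature-noise contributions from dominant patches. The one place you are slightly more careful than the paper is in explicitly noting that $\gB = \{\vv_{s,k}\} \cup \{\xi_i^{(p)}\}$ is itself a basis (equal span, equal cardinality as the linearly independent $\{\vv_{s,k}\}\cup\{\vx_i^{(p)}\}$ family); the paper invokes this implicitly when passing from the $\vx$-basis to the $\xi$-expansion, so making it explicit is a modest improvement in rigor rather than a different argument.
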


The function $Q^{(s^*,k^*)}$ plays a crucial role in Section~\ref{sec:ERM_maintenance} and Section~\ref{sec:Cutout_maintenance}. The key intuition behind our definition of $Q^{(s^*,k^*)}$ is that $Q^{(s^*,k^*)}(\mW^{(t)})$ represents the term updated by the data having the feature vector $\vv_{s^*,k^*}$, where $\mW^{(t)}$ are the iterates of either \textsf{ERM} or \textsf{Cutout}. As expected from this intuition,  if we sum all $Q_1^{(s^*,k^*)}(\vw_1)$ and $Q_{-1}^{(s^*,k^*)}(\vw_{-1})$ over all $s^* \in \{\pm 1\}$ and $k^* \in [K]$, the result will be equal to $\vw_1 - \vw_1^{(0)}$ and $\vw_{-1} - \vw_{-1}^{(0)}$, respectively.
\begin{proof}
    From linear independency of $\{\vv_{s,k}\}_{s \in \{\pm 1\}, k \in [K]} \cup \left \{ \vx_i^{(p)}\right\}_{i \in [n], p \in [P] \setminus \{p_i^*\} }$, we can express any element $\mW = \{\vw_1,\vw_{-1}\} \in \gW$ as
    \begin{align}\label{eq:linear_comb}
        \vw_s &= \vw_s^{(0)} + \sum_{k \in [K]} \tilde\gamma_s(s,k) \vv_{s,k} - \sum_{k \in [K]} \tilde \gamma_s (-s,k) \vv_{-s,k}\nonumber \\
        &\quad + \sum_{\substack{i \in \gV_s,\\ p \in [P] \setminus \{p_i^*\}}} \rho_s(i,p) \frac{\xi_i^{(p)}}{\left \lVert \xi_i^{(p)}\right \rVert} - \sum_{\substack{i \in \gV_{-s},\\ p \in [P] \setminus \{p_i^*\}}} \rho_s(i,p) \frac{\xi_i^{(p)}}{\left \lVert \xi_i^{(p)}\right \rVert}
    \end{align}
    with unique $\{\tilde \gamma_s(s,k), \tilde \gamma_s(-s,k)\}_{s\in \{\pm 1\}, k \in [K]}$ and $\{\rho_s{(i,p)}\}_{s \in \{\pm 1\}, i \in [n], p \in [P] \setminus \{i^*\}}$. If we define $\gamma_s(s,k)$ and $\gamma_s(-s,k)$ as $\gamma_s(s,k) = \tilde\gamma_s(s,k), \gamma_s(-s,k) = \tilde \gamma_s(-s,k)$ for $k \neq 1$, and 
    \begin{align*}
        \gamma_s(s,1) &=  \tilde \gamma_s(s,1) - \alpha \sum_{i \in \gF_s} sy_i \rho_s(i, \tilde p_i) \left \lVert \xi_i^{(\tilde p_i)} \right \rVert^{-2},\\
        \gamma_s(-s,1) &=  \tilde \gamma_s(-s,1) + \alpha \sum_{i \in \gF_{-s}} sy_i \rho_s(i, \tilde p_i) \left \lVert \xi_i^{(\tilde p_i)} \right \rVert^{-2},
    \end{align*}
    then we have
    \begin{align*}
        \vw_{s} &= \vw_{s}^{(0)} + \sum_{k \in [K]} \gamma_{s}(s,k) \vv_{s,k} - \sum_{k \in [K]} \gamma_{s}(-s,k) \vv_{-s,k} \\
        &\quad + \sum_{\substack{i \in \gV_{s}\\ p \in [P]\setminus\{p^*_i\}}} \rho_{s}(i,p) \frac{\xi_i^{(p)}}{\left\lVert \xi_i^{(p)}\right\rVert^2} - \sum_{\substack{i \in\gV_{-s}\\ p \in [P]\setminus\{p^*_i\}}} \rho_{s}(i,p) \frac{\xi_i^{(p)}}{\left\lVert \xi_i^{(p)}\right\rVert^2} \\
        &\quad  + \alpha \left( \sum_{i \in \gF_{s}} sy_i \rho_{s}(i,\tilde{p}_i) \frac{\vv_{s,1}}{\left \lVert  \xi_i^{(\tilde{p}_i)}\right\rVert^2 }+ \sum_{i \in \gF_{-s}} sy_i\rho_{s}(i,\tilde{p}_i) \frac{\vv_{-s,1}}{\left \lVert  \xi_i^{(\tilde{p}_i)}\right\rVert^2 }\right) .
    \end{align*}
    Next, we want to show the uniqueness part. Suppose $\{ \hat \gamma_s(s,k), \hat\gamma_s(-s,k)\}_{s\in \{\pm 1\}, k \in [K]}$ and $\{\hat \rho_s{(i,p)}\}_{s \in \{\pm 1\}, i \in [n], p \in [P] \setminus \{i^*\}}$ satisfies
    \begin{align*}
        \vw_{s} &= \vw_{s}^{(0)} + \sum_{k \in [K]} \hat\gamma_{s}(s,k) \vv_{s,k} - \sum_{k \in [K]} \hat\gamma_{s}(-s,k) \vv_{-s,k} \\
        &\quad + \sum_{\substack{i \in \gV_{s}\\ p \in [P]\setminus\{p^*_i\}}} \hat\rho_{s}(i,p) \frac{\xi_i^{(p)}}{\left\lVert \xi_i^{(p)}\right\rVert^2} - \sum_{\substack{i \in\gV_{-s}\\ p \in [P]\setminus\{p^*_i\}}} \hat\rho_{s}(i,p) \frac{\xi_i^{(p)}}{\left\lVert \xi_i^{(p)}\right\rVert^2} \\
        &\quad  + \alpha \left( \sum_{i \in \gF_{s}} sy_i \hat\rho_{s}(i,\tilde{p}_i) \frac{\vv_{s,1}}{\left \lVert  \xi_i^{(\tilde{p}_i)}\right\rVert^2 }+ \sum_{i \in \gF_{-s}} sy_i\hat\rho_{s}(i,\tilde{p}_i) \frac{\vv_{-s,1}}{\left \lVert  \xi_i^{(\tilde{p}_i)}\right\rVert^2 }\right) .
    \end{align*} 
    We have
    \begin{align*}
        \vw_{s} &= \vw_{s}^{(0)} + \sum_{k \in [K]\setminus \{1\}} \hat\gamma_{s}(s,k) \vv_{s,k} - \sum_{k \in [K]\setminus \{1\}} \hat\gamma_{s}(-s,k) \vv_{-s,k}\\
        &\quad + \left( \hat\gamma_{s}(s,1) + \alpha \sum_{i \in \gF_{s}} sy_i \hat\rho_{s}(i,\tilde{p}_i) \left \lVert  \xi_i^{(\tilde{p}_i)}\right\rVert^{-2}  \right) \vv_{s,1} \\
        &\quad - \left( \hat\gamma_{s}(-s,1) 
        - \alpha \sum_{i \in \gF_{-s}} sy_i\hat\rho_{s}(i,\tilde{p}_i) \left \lVert  \xi_i^{(\tilde{p}_i)}\right\rVert^{-2}\right) \vv_{-s,1} \\
        &\quad + \sum_{\substack{i \in \gV_{s}\\ p \in [P]\setminus\{p^*_i\}}} \hat\rho_{s}(i,p) \frac{\xi_i^{(p)}}{\left\lVert \xi_i^{(p)}\right\rVert^2} - \sum_{\substack{i \in\gV_{-s}\\ p \in [P]\setminus\{p^*_i\}}} \hat\rho_{s}(i,p) \frac{\xi_i^{(p)}}{\left\lVert \xi_i^{(p)}\right\rVert^2}.
    \end{align*}
    From the uniqueness of \eqref{eq:linear_comb}, we have 
    \begin{equation*}
        \hat \gamma_s(s,k) = \tilde \gamma_s(s,k) = \gamma_s(s,k), 
        \quad
        \hat \gamma_s(-s,k) = \tilde \gamma_s(-s,k) = \gamma_s(-s,k) ,
    \end{equation*}
    for each $s \in \{\pm 1\}, k \in [K] \setminus \{1\}$, and $\hat \rho_s(i,p) = \rho_s(i,p)$ for each $i \in [n], p \in [P] \setminus\{p_i^*\}$. Furthermore,
    \begin{equation*}
        \hat\gamma_{s}(s,1) + \alpha \sum_{i \in \gF_{s}} sy_i \hat\rho_{s}(i,\tilde{p}_i) \left \lVert  \xi_i^{(\tilde{p}_i)}\right\rVert^{-2} = \tilde \gamma_s (s,1) = \gamma_{s}(s,1) + \alpha \sum_{i \in \gF_{s}} sy_i \rho_{s}(i,\tilde{p}_i) \left \lVert  \xi_i^{(\tilde{p}_i)}\right\rVert^{-2},
    \end{equation*}
    and 
    \begin{equation*}
        \hat\gamma_{s}(-s,1) - \alpha \sum_{i \in \gF_{-s}} sy_i \hat\rho_{s}(i,\tilde{p}_i) \left \lVert  \xi_i^{(\tilde{p}_i)}\right\rVert^{-2} = \tilde \gamma_s (-s,1) = \gamma_{s}(-s,1) - \alpha \sum_{i \in \gF_{-s}} sy_i \rho_{s}(i,\tilde{p}_i) \left \lVert  \xi_i^{(\tilde{p}_i)}\right\rVert^{-2}.
    \end{equation*}
    Hence, we obtain the uniqueness of the expression and $Q^{(s^*,k^*)}$ is well defined for each $s^* \in \{\pm 1\}$ and $k^* \in [K]$.
\end{proof}

\clearpage
\section{Proof for \textsf{ERM}}
In this section, we use $g_i^{(t)}:= \frac{1}{1+ \exp \left(y_i f_{\mW^{(t)}}(\mX_i) \right)}$ for each data $i$ and iteration $t$, for simplicity. 

\subsection{Proof of Lemma~\ref{lemma:decomposition} for \textsf{ERM}}\label{sec:ERM_decompose}
For $s \in \{\pm 1\}$ and iterate $t$, 
\begin{align*}
    &\quad \vw_s^{(t+1)}-\vw_s^{(t)} \\
    &= -\eta \nabla_{\vw_s} \gL_\ERM\left (\mW^{(t)} \right) \\
    & = \frac{\eta}{n} \sum_{i \in [n]} sy_i g_i^{(t)} \sum_{p \in [P]} \phi'\left( \left \langle \vw_s^{(t)} , \vx_i^{(p)} \right \rangle \right) \vx_i^{(p)}\\
    &= \frac{\eta}{n} \left( \sum_{i \in \gV_s} g_i^{(t)} \sum_{p \in [P]} \phi' \left( \left \langle \vw_s^{(t)} , \vx_i^{(p)} \right \rangle \right) \vx_i^{(p)}  - \sum_{i \in \gV_{-s}} g_i^{(t)} \sum_{p \in [P]} \phi' \left( \left \langle \vw_s^{(t)} , \vx_i^{(p)} \right \rangle \right)\vx_i^{(p)}\right),
\end{align*}
and we have
\begin{align*}
    &\quad \sum_{i \in \gV_s} g_i^{(t)}  \sum_{p \in [P]} \phi'\left( \left \langle \vw_s^{(t)} , \vx_i^{(p)} \right \rangle \right)\vx_i^{(p)}\\
    &= \sum_{k \in [K]}\sum_{i \in \gV_{s,k}}g_i^{(t)} \phi'\left( \left \langle \vw_s^{(t)} , \vv_{s,k} \right \rangle \right) \vv_{s,k} +  \sum_{i \in \gV_{s}} g_i^{(t)} \sum_{p \in [P] \setminus\{p_i^*, \tilde{p}_i\}}\phi' \left ( \left \langle \vw_s^{(t)} , \xi_i^{(p)}\right \rangle \right) \xi_i^{(p)}\\
    &\quad + \sum_{i \in \gV_{s}\cap \gF_s} g_i^{(t)} \phi'\left( \left \langle \vw_s^{(t)} , \alpha \vv_{s,1} + \xi_i^{(\tilde{p}_i)} \right \rangle \right) \left(\alpha \vv_{s,1} + \xi_i^{(\tilde{p}_i)}\right) \\
    &\quad + \sum_{i \in \gV_s \cap \gF_{-s}} g_i^{(t)} \phi' \left( \left \langle \vw_s^{(t)} ,\alpha \vv_{-s,1}+ \xi_i^{(\tilde{p}_i)} \right \rangle \right)\left(\alpha \vv_{-s,1} + \xi_i^{(\tilde{p}_i)} \right),
\end{align*}
and 
\begin{align*}
    &\quad \sum_{i \in \gV_{-s}} g_i^{(t)}  \sum_{p \in [P]} \phi'\left(\left \langle \vw_s^{(t)} , \vx_i^{(p)} \right \rangle \right)\vx_i^{(p)}\\
    &= \sum_{k \in [K]}\sum_{i \in \gV_{-s,k}}g_i^{(t)} \phi'\left( \left \langle \vw_s^{(t)} , \vv_{-s,k} \right \rangle \right) \vv_{-s,k} +  \sum_{i \in \gV_{-s}} g_i^{(t)} \sum_{p \in [P] \setminus\{p_i^*, \tilde{p}_i\}}\phi'\left (\left \langle \vw_s^{(t)}, \xi_i^{(p)} \right \rangle \right ) \xi_i^{(p)}\\
    &\quad + \sum_{i \in \gV_{-s}\cap \gF_s} g_i^{(t)} \phi'\left( \left \langle \vw_s^{(t)} , \alpha \vv_{s,1} + \xi_i^{(\tilde{p}_i)}\right \rangle \right) \left(\alpha \vv_{s,1} + \xi_i^{(\tilde{p}_i)}\right) \\
    &\quad + \sum_{i \in \gV_{-s} \cap \gF_{-s}} g_i^{(t)} \phi' \left( \left \langle \vw_s^{(t)}, \alpha \vv_{-s,1}+ \xi_i^{(\tilde{p}_i)} \right \rangle  \right)\left(\alpha \vv_{-s,1} + \xi_i^{(\tilde{p}_i)} \right).
\end{align*}
Hence, if we define $\gamma^{(t)}_s(s',k)$'s and $\rho_s^{(t)}(i,p)$'s recursively by using the rule
\begin{align}
    \gamma_s^{(t+1)}(s',k) &= \gamma_s^{(t)}(s',k) + \frac{\eta}{n} \sum_{i \in \gV_{s',k}} g_i^{(t)} \phi' \left( \left \langle \vw_s^{(t)} , \vv_{s',k} \right \rangle \right)\label{eq:ERM_feature},\\
    \rho_s^{(t+1)}(i,p) &= \rho_s^{(t)}(i,p) + \frac{\eta}{n} g_i^{(t)} \phi'\left( \left \langle \vw_s^{(t)} , \vx_i^{(p)} \right \rangle  \right) \left \lVert \xi_i^{(p)} \right \rVert^2, \label{eq:ERM_noise}
\end{align}
starting from $\gamma^{(0)}_s(s',k)= \rho_s^{(0)}(i,p) = 0$ for each $s,s' \in \{\pm 1\}, k \in [K], i \in [n]$ and $p \in [P] \setminus\{p_i^*\}$, then we have
    \begin{align*}
        \vw_s^{(t)} &= \vw_s^{(0)} + \sum_{k \in [K]} \gamma_s^{(t)}(s,k) \vv_{s,k} - \sum_{k \in [K]} \gamma_s^{(t)}(-s,k) \vv_{-s,k} \\
        &\quad + \sum_{\substack{i \in \gV_s,p \in [P]\setminus\{p^*_i\}}} \rho_s^{(t)}(i,p) \frac{\xi_i^{(p)}}{\left\lVert \xi_i^{(p)}\right\rVert^2} - \sum_{\substack{i \in \gV_{-s}, p \in [P]\setminus\{p^*_i\}}} \rho_s^{(t)}(i,p) \frac{\xi_i^{(p)}}{\left\lVert \xi_i^{(p)}\right\rVert^2} \\
        &\quad  + \alpha \left( \sum_{i \in \gF_s} s y_i \rho_s^{(t)}(i,\tilde{p}_i) \frac{\vv_{s,1}}{\left \lVert  \xi_i^{(\tilde{p}_i)}\right\rVert^2 } + \sum_{i \in \gF_{-s}} s y_i \rho_{s}^{(t)}(i,\tilde{p}_i) \frac{\vv_{-s,1}}{\left \lVert  \xi_i^{(\tilde{p}_i)}\right\rVert^2 }\right),
    \end{align*}
    for each $s \in \{\pm 1\}$. Furthermore, $\gamma^{(t)}_s(s',k)$'s and $\rho_s^{(t)}(i,p)$'s are monotone increasing. \hfill $\square$

\subsection{Proof of Theorem~\ref{thm:ERM}}\label{sec:proof_ERM}

To show Theorem~\ref{thm:ERM}, we present a structured proof comprising the following five steps:

\begin{enumerate}[leftmargin = 4mm]
\item Establish upper bounds on $\gamma^{(t)}_s(s',k)$'s and $\rho^{(t)}_s(i,p)$'s to apply Lemma~\ref{lemma:approx} (Section~\ref{sec:ERM_coeff}).
\item Demonstrate that the model learns common features quickly (Section~\ref{sec:ERM_learn_common}).
\item Show that the model overfits dominant noise in (extremely) rare data instead of learning its feature (Section~\ref{sec:ERM_overfit}).
\item Confirm the persistence of this tendency until $T^*$ iterates (Section~\ref{sec:ERM_maintenance}).
\item Characterize train accuracy and test accuracy (Section~\ref{sec:ERM_acc}).
\end{enumerate}

\subsubsection{Bounds on the Coefficients in Feature Noise Decomposition}\label{sec:ERM_coeff}

The following lemma provides upper bounds on Lemma~\ref{lemma:decomposition} during $T^*$ iterations.
\begin{lemma}\label{lemma:ERM_polytime}
    Suppose the event $E_\mathrm{init}$ occurs. For any $t \in [0, T^*]$, we have 
    \begin{equation*}
        0 \leq \gamma_s^{(t)} (s,k) + \beta \gamma_{-s}^{(t)}(s,k) \leq 4 \log(\eta T^*), \quad 0 \leq \rho_{y_i}^{(t)}(i,p) + \beta \rho_{-y_i}^{(t)} (i,p) \leq 4 \log \left(\eta T^*\right),
    \end{equation*}
    for all $s \in \{\pm 1\}, k \in [K], i \in [n]$ and $p \in [P]\setminus\{p_i^*\}$. Consequently,$\gamma^{(t)}_s(s',k), \rho_s^{(t)}(i,p) = \bigOtilde(\beta^{-1})$ for all $s,s' \in \{\pm 1\}, k \in [K], i \in [n]$ and $p \in [P] \setminus\{p^*_i\}$.
\end{lemma}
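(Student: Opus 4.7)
My plan is to prove both upper bounds simultaneously by induction on $t \in \{0,1,\dots,T^*\}$, using the abbreviations $A_{s,k}^{(t)} := \gamma_s^{(t)}(s,k) + \beta\gamma_{-s}^{(t)}(s,k)$ and $B_i^{(t,p)} := \rho_{y_i}^{(t)}(i,p) + \beta\rho_{-y_i}^{(t)}(i,p)$. The non-negativity assertions $A,B \geq 0$ follow immediately from the monotonicity part of Lemma~\ref{lemma:decomposition}, so only the upper bounds require work. The base case $t=0$ is immediate because every coefficient vanishes at initialization. The induction hypothesis that $A_{s,k}^{(\tau)}, B_i^{(\tau,p)} \leq 4\log(\eta T^*)$ for every $\tau \leq t$ yields $\gamma_s^{(t)}(s',k), \rho_s^{(t)}(i,p) = \bigOtilde(\beta^{-1})$ at iteration $t$ (since e.g.\ $\gamma_{-s}^{(t)}(s,k) \leq A_{s,k}^{(t)}/\beta$), which is exactly the hypothesis needed to invoke Lemma~\ref{lemma:approx}.

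The crux is a two-phase argument with threshold $2\log(\eta T^*)$. Applying Lemma~\ref{lemma:approx} to $i \in \gV_{s,k}$ gives
\begin{equation*}
    y_i f_{\mW^{(t)}}(\mX_i) = A_{s,k}^{(t)} + \sum_{p\in[P]\setminus\{p_i^*\}} B_i^{(t,p)} \pm o(1),
\end{equation*}
with every summand on the right non-negative. Therefore once either $A_{s,k}^{(t)}$ or $B_i^{(t,p)}$ crosses $2\log(\eta T^*)$, the logistic weight collapses: $g_i^{(t)} \leq \exp(-y_i f_{\mW^{(t)}}(\mX_i)) \leq \bigOtilde(1/(\eta T^*)^2)$. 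From the update rules~\eqref{eq:ERM_feature} and~\eqref{eq:ERM_noise} together with $\phi' \in [\beta,1]$ and $\|\xi_i^{(p)}\|^2 \leq \tfrac{3}{2}\sigma_\mathrm{d}^2 d$, the unconditional single-step increments are at most $2\eta$ for $A$ and $\tfrac{3\eta\sigma_\mathrm{d}^2 d}{n}g_i^{(t)}$ for $B$. By~\eqref{property:eta} and Assumption~\ref{assumption:etc}, both unconditional increments are $o(1)$, so the one step that carries the sum across the threshold overshoots by only $o(1)$. After crossing, the collapsed $g_i^{(t)}$ shrinks every subsequent increment to $\bigOtilde(1/(\eta T^{*2}))$ for $A$ and $\bigOtilde(\sigma_\mathrm{d}^2 d/(n\eta T^{*2}))$ for $B$; summed over the remaining $\leq T^*$ iterations these accumulate to $o(1)$, invoking $\sigma_\mathrm{d}^2 d = o(n)$ from Assumption~\ref{assumption:dominant} to handle the $B$ case. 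Combining the two phases gives $A_{s,k}^{(t+1)}, B_i^{(t+1,p)} \leq 2\log(\eta T^*) + o(1) \leq 4\log(\eta T^*)$, closing the induction.

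The main obstacle I anticipate is the circular dependence between Lemma~\ref{lemma:approx} and the bound we want to prove: the collapse of $g_i^{(t)}$ that prevents further growth of the coefficients is itself justified through Lemma~\ref{lemma:approx}, which in turn requires the coefficient bounds as its hypothesis. Induction on $t$ is precisely what breaks this loop, and the factor-of-two gap between the activation threshold $2\log(\eta T^*)$ and the ceiling $4\log(\eta T^*)$ is what leaves room to absorb the $o(1/\polylog(d))$ approximation error from Lemma~\ref{lemma:approx}, the crossing-step overshoot, and the small post-threshold accumulation. The `Consequently' clause of the lemma then follows at once: non-negativity of both summands in $A$ and $B$ yields $\gamma_{-s}^{(t)}(s,k) \leq A_{s,k}^{(t)}/\beta = \bigOtilde(\beta^{-1})$ and $\rho_{-y_i}^{(t)}(i,p) \leq B_i^{(t,p)}/\beta = \bigOtilde(\beta^{-1})$, while the remaining coefficients are bounded by the relevant sum itself, hence even by $\bigOtilde(1)$.
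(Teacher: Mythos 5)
Your proposal is correct and follows the same strategy as the paper's proof: induction on $t$ with threshold $2\log(\eta T^*)$, decomposing into a pre-threshold phase, a single overshoot step bounded by the unconditional per-step increment, and a post-threshold phase where the collapsed logistic weight (justified via Lemma~\ref{lemma:approx} under the inductive hypothesis) shrinks the accumulated increments to negligible size. The only cosmetic difference is that you bound the post-threshold accumulation by $o(1)$ using $g_i^{(t)} = \bigOtilde((\eta T^*)^{-2})$, whereas the paper uses the slightly looser $g_i^{(t)} \leq (\eta T^*)^{-3/2}$ and a budget of $\log(\eta T^*)$ for each of the three pieces; both arguments close the induction with the same $4\log(\eta T^*)$ ceiling.
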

\begin{proof}[Proof of Lemma~\ref{lemma:ERM_polytime}]
    The first argument implies the second argument since $\log(\eta T^*) = \polylog (d)$ and 
    \begin{equation*}
        \gamma^{(t)}_s(s',k) \leq \beta^{-1} \left( \gamma_{s'}^{(t)}(s',k) + \beta \gamma_{s'}^{(t)}(s',k)\right), \quad  \rho_s^{(t)} (i,p) \leq \beta^{-1} \left( \rho_{y_i}^{(t)}(i,p) + \beta \rho_{-y_i}^{(t)}(i,p) \right),
    \end{equation*}
    for all $s,s' \in \{\pm 1\}, k \in [K], i \in [n]$ and $p \in [P] \setminus\{p^*_i\}$.

    We will prove this by using induction on $t$. The initial case $t=0$ is trivial. Suppose the given statement holds at $t=T$ and consider the case $t = T+1$. 
    
    Let $\tilde{T}_{s,k}\leq T$ denote the smallest iteration where $\gamma^{(\tilde{T}_{s,k}+1)}_s(s,k) + \beta \gamma^{(\tilde{T}_{s,k}+1)}_{-s}(s,k) > 2\log (\eta T^*)$. We assume the existence of $\tilde{T}_{s,k}$, as its absence would directly lead to our desired conclusion; to see why, note that the following holds, due to \eqref{eq:ERM_feature} and 
    \eqref{property:eta}:
    \begin{align*}
        &\quad\gamma_s^{(T+1)}(s,k) + \beta \gamma_{-s}^{(T+1)}(s,k)\\
        &= \gamma_s^{(T)}(s,k) + \beta \gamma_{-s}^{(T)}(s,k)+ \frac{\eta}{n}  \sum_{i \in \gV_{s,k}}g_i^{(T)}\bigg( \phi' \left( \left \langle \vw_s^{(T)} , \vv_{s,k} \right \rangle \right) + \beta \phi' \left( \left \langle \vw_{-s}^{(T)} , \vv_{s,k} \right \rangle \right)\bigg)\\
        & \leq 2 \log (\eta T^*) + 2\eta \leq 4 \log (\eta T^*)
    \end{align*}

    Now suppose there exists such $\tilde{T}_{s,k}\leq T$.
    By \eqref{eq:ERM_feature}, we have
    \begin{align*}
        &\quad \gamma_s^{(T+1)}(s,k) + \beta \gamma_{-s}^{(T+1)}(s,k)\\
        &= \gamma_s^{(\tilde{T}_{s,k})}(s,k) + \beta \gamma_{-s}^{(\tilde{T}_{s,k})}(s,k) \\
        & \quad +  \sum_{t=\tilde{T}_{s,k}}^T  \left( \gamma_s^{(t+1)}(s,k) + \beta \gamma_{-s}^{(t+1)}(s,k) -  \gamma_s^{(t)}(s,k) - \beta \gamma_{-s}^{(t)}(s,k) \right)\\ 
        &\leq 2 \log (\eta T^*) + \log (\eta T^*) +  \frac{\eta}{n} \sum_{t=\tilde{T}_{s,k}+1}^{T}\sum_{i \in \gV_{s,k}} g_i^{(t)} \bigg( \phi' \left( \left \langle \vw_s^{(t)} , \vv_{s,k} \right \rangle \right) + \beta \phi' \left( \left \langle \vw_{-s}^{(t)} , \vv_{s,k} \right \rangle \right)\bigg).
    \end{align*}
    The inequality is due to $\gamma_s^{(\tilde{T}_{s,k})}(s,k) + \beta \gamma_{-s}^{(\tilde{T}_{s,k})}(s,k) \leq 2 \log (\eta T^*)$ from our choice of $\tilde T _{s,k}$ and 
    \begin{equation*}
        \frac{\eta}{n}\sum_{i \in \gV_{s,k}} g_i^{(\tilde{T}_{s,k})} \bigg(\phi' \left( \left \langle \vw_s^{(\tilde{T}_{s,k})} , \vv_{s,k} \right \rangle \right)  + \beta \phi' \left( \left \langle \vw_{-s}^{(\tilde{T}_{s,k})} , \vv_{s,k} \right \rangle \right)\bigg) \leq 2 \eta \leq \log (\eta T^*),
    \end{equation*}
    from \eqref{property:eta}.

    For each $t = \tilde{T}_{s,k}+1, \dots T$, and $i \in \gV_{s,k}$, we have
    \begin{align*}
        &\quad y_i f_{\mW^{(t)}}(\mX_i) \\
        &= \phi \left( \left \langle \vw_{s}^{(t)} , \vv_{s,k} \right \rangle \right) - \phi \left( \left \langle \vw_{-s}^{(t)} , \vv_{s,k} \right \rangle \right) + \sum_{p \in [P] \setminus \{p_i^*\}} \bigg( \phi \left( \left \langle \vw_{s}^{(t)} , \vx_i^{(p)} \right \rangle  \right) - \phi \left( \left \langle \vw_{-s}^{(t)}, \vx_i^{(p)}  \right \rangle \right)\bigg)\\
        &\geq \gamma_s^{(t)}(s,k) + \beta \gamma_{-s}^{(t)}(s,k) + \sum_{p \in [P] \setminus \{p_i^*\}} \left(\rho_s^{(t)}(i,p) + \beta \rho_{-s}^{(t)}(i,p)\right)- 2P \cdot o \left( \frac{1}{\polylog(d)}\right)\\
        &\geq \frac{3}{2} \log (\eta T^*)
    \end{align*}
    The first inequality is due to Lemma~\ref{lemma:approx} and the second inequality holds due to \ref{assumption:feature_noise}, \eqref{property:noise}, and our choice of $t$, $\gamma_s^{(t)}(s,k) + \beta \gamma_{-s}^{(t)}(s,k) \geq 2 \log (\eta T^*)$.

    Hence, we obtain
    \begin{align*}
        &\quad \frac{\eta}{n} \sum_{t = \tilde{T}_{s,k}}^T \sum_{i \in \gV_{s,k}}g_i^{(t)} \bigg( \phi' \left( \left \langle \vw_s^{(t)} , \vv_{s,k} \right \rangle \right) + \beta \phi' \left( \left \langle \vw_{-s}^{(t)} , \vv_{s,k} \right \rangle \right) \bigg)\\
        &\leq \frac{2\eta}{n} \sum_{t = \tilde{T}_{s,k}}^T \sum_{i \in \gV_{s,k}} \exp \left(- y_i f_{\mW^{(t)}}(\mX_i) \right)\\
        &\leq \frac{2 |\gV_{s,k}|}{n} (\eta T^*) \exp \left(-\frac{3}{2} \log (\eta T^*) \right)\\
        &\leq \frac{2}{\sqrt{\eta T^*}} \leq \log (\eta T^*),
    \end{align*}
    where the last inequality holds for any reasonably large $T^*$.
    Merging all inequalities together, we have $\gamma_s^{(T+1)}(s,k) + \beta \gamma_{-s}^{(T+1)}(s,k) \leq 4 \log (\eta T^*)$.

    Next, we will follow similar arguments to show that 
    \begin{equation*}
        \rho_{y_i}^{(T+1)}(i,p) + \beta \rho_{-y_i}^{(T+1)}(i,p) \leq 4 \log (\eta T^*)
    \end{equation*}
    for each $i \in [n]$ and $p \in [P]\setminus\{p_i^*\}$.
    
    Let $\tilde{T}_i^{(p)}\leq T$ be the smallest iteration such that $\rho_{y_i}^{(\tilde T _i ^{(p)} +1)}(i,p) + \beta \rho_{-y_i}^{(\tilde T _i ^{(p)} +1)}(i,p) > 2 \log (\eta T^*)$. 
    We assume the existence of $\tilde{T}_i^{(p)}$, as its absence would directly lead to our desired conclusion; to see why, note that the following holds, due to \eqref{eq:ERM_noise} and \eqref{property:eta}:
    \begin{align*}
        &\quad \rho_{y_i}^{(T+1)}(i,p) + \beta \rho_{-y_i}^{(T+1)}(i,p)\\
        &= \rho_{y_i}^{(T)}(i,p) + \beta \rho_{-y_i}^{(T)}(i,p) + \frac{\eta}{n} g_i^{(T)} \bigg( \phi' \left(\left \langle \vw_s^{(t)}, \vx_i^{(p)}\right \rangle \right) + \beta \phi' \left( \left \langle \vw_s^{(t)}, \vx_i^{(p)}\right \rangle \right) \bigg) \left \lVert \xi_i^{(p)}\right \rVert^2 \\
        &\leq 2 \log (\eta T^*) + 2\eta \leq 4 \log (\eta T^*),
    \end{align*}
    where the first inequality is due to $\left \lVert \xi_i^{(p)} \right \rVert \leq \frac{3}{2} \sigma_\mathrm{d}^2 d$ and \ref{assumption:dominant}, and the last inequality is due to \eqref{property:eta}.

    Now suppose there exists such $\tilde{T}_i^{(p)} \leq T$.
    By \eqref{eq:ERM_noise}, we have
    \begin{align*}
        &\quad \rho_{y_i}^{(T+1)}(i,p) + \beta \rho_{-y_i}^{(T+1)}(i,p)\\
        &=\rho_{y_i}^{(\tilde T_i^{(p)})}(i,p) + \beta \rho_{-y_i}^{(\tilde T_i^{(p)})}(i,p)\\
        & \quad + \sum_{t= \tilde T_i^{(p)}}^T \left(\rho_{y_i}^{(t+1)}(i,p) + \beta \rho_{-y_i}^{(t+1)}(i,p) -\rho_{y_i}^{(t)}(i,p) - \beta \rho_{-y_i}^{(t)}(i,p) \right)\\
        &\leq 2\log (\eta T^*)  + \log (\eta T^*) \\
        &\quad + \frac{\eta}{n} \sum_{t = \tilde T_i^{(p)} +1} ^T g_i^{(t)} \bigg( \phi' \left( \left \langle \vw_s^{(t)} , \vx_i^{(p)} \right \rangle \right) + \beta \phi' \left( \left \langle \vw_{-s}^{(t)} , \vx_i^{(p)} \right \rangle \right)\bigg) \left \lVert \xi_i^{(p)} \right \rVert^2
    \end{align*}
    The inequality is due to $\rho_{y_i}^{(\tilde{T}_i^{(p)})}(i,p) + \beta \rho_{- y_i}^{(\tilde{T}_i^{(p)})}(i,p) \leq 2 \log (\eta T^*)$ from our choice of $\tilde T_i^{(p)}$ and
    \begin{equation*}
        \frac{\eta}{n} g_i^{(\tilde T_i^{(p)})} \left[ \phi' \left( \left \langle \vw_s^{(\tilde T _i^{(p)})} , \vx_i^{(p)} \right \rangle \right) + \beta \phi' \left( \left \langle \vw_{-s}^{( \tilde T_i^{(p)} )} , \vx_i^{(p)} \right \rangle \right) \right]\left \lVert \xi_i^{(p)}\right \rVert^2 \leq 2 \eta \leq \log (\eta T^*),
    \end{equation*}
    from $\left \lVert \xi_i^{(p)} \right \rVert ^2 \leq \frac{3}{2} \sigma_\mathrm{d}^2 d$, \ref{assumption:dominant}, and \eqref{property:eta}.

    For each $t = \tilde T _i^{(p)}+1, \dots, T$, if $i \in \gV_{s,k}$, then we have
    \begin{align*}
        &\quad y_i f_{\mW^{(t)}} (\mX_i)\\
        &= \phi \left( \left \langle \vw_{y_i}^{(t)} , \vx_i^{(p)} \right \rangle \right) -  \phi \left( \left \langle \vw_{-y_i}^{(t)} , \vx_i^{(p)} \right \rangle \right) + \sum_{q \in [P] \setminus \{p\}}\left( \phi \left ( \left \langle \vw_{y_i}^{(t)} , \vx_i^{(p)} \right \rangle \right) - \phi \left( \left \langle \vw_{-
        y_i}^{(t)} , \vx_i^{(p)}\right \rangle \right) \right)\\
        & \geq \rho_{y_i}^{(t)}(i,p) + \beta \rho_{-y_i}^{(t)}(i,p) +\gamma_{y_i}^{(t)}(s,k) + \beta \gamma_{-y_i}^{(t)}(s,k) \\
        &\quad +\sum_{q \in [P] \setminus \{p, p_i^*\}} \left(  \rho_{y_i}^{(t)}(i,q) + \beta \rho_{-y_i}^{(t)}(i,q) \right) - 2 P \cdot o \left (\frac{1}{\polylog (d)} \right)\\
        &\geq \frac{3}{2} \log (\eta T^*).
    \end{align*}
    The first inequality is due to Lemma~\ref{lemma:approx} and the second inequality holds because from our choice of $t$, $\rho_{y_i}^{(t)}(i,p) + \beta \rho_{-y_i}^{(t)}(i,p) \geq 2 \log (\eta T^*)$.

    Therefore, we have
    \begin{align*}
        &\quad \frac{\eta}{n} \sum_{t= \tilde T_i^{(p)}+1}^T g_i^{(t)} \bigg( \phi' \left(\left \langle \vw_{y_i}^{(t)} , \vx_i^{(p)} \right \rangle \right) + \beta \phi' \left( \left \langle \vw_{-y_i}^{(t)} , \vx_i^{(p)} \right \rangle \right)\bigg) \left \lVert \xi_i^{(p)} \right \rVert^2 \\
        &\leq \eta \sum_{t=\tilde T_i^{(p)}+1}^T \exp \left( -y_i f_{\mW^{(t)}}(\mX_i)\right)\leq  (\eta T^*) \exp \left( -\frac{3}{2}\log (\eta T^*)\right)\\
        &\leq \frac{1}{\sqrt{\eta T^*}} \leq \log (\eta T^*),
    \end{align*}
    where the first inequality is due to $\left \lVert \xi_i^{(p)}\right \rVert^2\leq \frac{3}{2}\sigma_\mathrm{d}^2$, \ref{assumption:dominant} and the last inequality holds for any reasonably large $T^*$.
    Merging all inequalities together, we conclude $\rho_{y_i}^{(T+1)}(i,p) + \beta \rho_{-y_i}^{(T+1)}(i,p) \leq 4 \log (\eta T^*)$.
\end{proof}

\subsubsection{Learning Common Features}\label{sec:ERM_learn_common}
In the initial stages of training, the model quickly learns common features while exhibiting minimal overfitting to Gaussian noise.

First, we establish lower bounds on the number of iterations ensuring that noise coefficients $\rho_s^{(t)}(i,p)$ remain small, up to the order of $\frac{1}{P}$.
\begin{lemma}\label{lemma:ERM_noise_small}
    Suppose the event $E_\mathrm{init}$ occurs. There exists $\tilde{T} > \frac{n}{6\eta P \sigma_\mathrm{d}^2 d}$ such that $\rho_s^{(t)}(i,p) \leq\frac{1}{4P}$ for all $0 \leq t < \tilde{T}, s\in \{\pm 1\}, i \in [n]$ and $p \in [P] \setminus \{p_i^*\}$.
\end{lemma}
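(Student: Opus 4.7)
The plan is a direct monotone accumulation argument. By Lemma~\ref{lemma:decomposition} applied to \textsf{ERM}, each $\rho_s^{(t)}(i,p)$ is nonnegative and monotonically nondecreasing, with $\rho_s^{(0)}(i,p)=0$ and per-step increment given by \eqref{eq:ERM_noise}. So to prove the claim it suffices to upper bound the per-step increment uniformly in $t$ and then show that the accumulated growth stays below $\frac{1}{4P}$ for at least $\frac{n}{6\eta P \sigma_\mathrm{d}^2 d}$ iterations.

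The first step is to bound each of the three factors in
\begin{equation*}
\rho_s^{(t+1)}(i,p)-\rho_s^{(t)}(i,p)=\frac{\eta}{n}\, g_i^{(t)}\, \phi'\!\left(\left\langle \vw_s^{(t)},\vx_i^{(p)}\right\rangle\right)\left\lVert \xi_i^{(p)}\right\rVert^2.
\end{equation*}
We have $g_i^{(t)}\le 1$ since $g_i^{(t)}$ is a sigmoid value in $(0,1)$; we have $\phi'\le 1$ directly from the piecewise definition of the smoothed leaky ReLU (the slope equals $1$ on $[r,\infty)$, lies in $[\beta,1]$ on $[0,r]$, and equals $\beta\le 1$ on $(-\infty,0]$); and under $E_\mathrm{init}$ we have $\lVert\xi_i^{(p)}\rVert^2\le \tfrac{3}{2}\sigma_\mathrm{d}^2 d$ uniformly over all noise patches, since dominant noise patches satisfy this bound explicitly and background patches satisfy $\lVert\xi_i^{(p)}\rVert^2\le \tfrac{3}{2}\sigma_\mathrm{b}^2 d\le \tfrac{3}{2}\sigma_\mathrm{d}^2 d$ because $\sigma_\mathrm{b}<\sigma_\mathrm{d}$. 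Note that the feature-noise contribution $\alpha\vu$ in the dominant-noise patches enters only through the $\phi'$ term via $\vx_i^{(p)}$ rather than through the norm, so the $\phi'\le 1$ bound already absorbs it.

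Combining these bounds, each increment is at most $\frac{3\eta \sigma_\mathrm{d}^2 d}{2n}$. Telescoping from $t=0$ yields
\begin{equation*}
\rho_s^{(t)}(i,p)\le t\cdot \frac{3\eta \sigma_\mathrm{d}^2 d}{2n}
\end{equation*}
for all $s,i,p$. Requiring the right-hand side to be at most $\frac{1}{4P}$ gives $t\le \frac{n}{6\eta P \sigma_\mathrm{d}^2 d}$, so taking $\tilde T$ to be the largest such integer (or simply $\lceil \frac{n}{6\eta P \sigma_\mathrm{d}^2 d}\rceil$) yields the claimed $\tilde T>\frac{n}{6\eta P \sigma_\mathrm{d}^2 d}$ with the desired uniform bound holding on $[0,\tilde T)$.

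There is no real obstacle here: the lemma is essentially a ``short-horizon'' control statement whose role is to seed the later three-phase analysis. The only minor subtlety is to verify that the universal bound $\tfrac{3}{2}\sigma_\mathrm{d}^2 d$ applies to background patches too, and that the $\phi'\le 1$ bound lets us ignore the distinction between dominant noise patches (which also carry the feature noise $\alpha\vu$) and background noise patches; both are handled in the step above.
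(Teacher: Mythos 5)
Your proof is correct and follows essentially the same route as the paper's: bound the per-step increment of $\rho_s^{(t)}(i,p)$ by $\frac{3\eta\sigma_{\mathrm{d}}^2 d}{2n}$ using $g_i^{(t)}\le 1$, $\phi'\le 1$, and $\lVert\xi_i^{(p)}\rVert^2\le\frac{3}{2}\sigma_{\mathrm{d}}^2 d$ from $E_{\mathrm{init}}$, then telescope. The only cosmetic difference is that the paper defines $\tilde T$ as the first crossing time of $\frac{1}{4P}$ and derives the lower bound on $\tilde T$ a posteriori, whereas you derive an a priori upper bound $\rho_s^{(t)}(i,p)\le t\cdot\frac{3\eta\sigma_{\mathrm{d}}^2 d}{2n}$ and pick $\tilde T$ accordingly; these are equivalent, and your remark that the $\alpha\vu$ feature-noise term only enters through $\phi'(\langle\vw_s^{(t)},\vx_i^{(p)}\rangle)$ and not through $\lVert\xi_i^{(p)}\rVert^2$ is a correct and worthwhile observation.
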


\begin{proof}[Proof of Lemma~\ref{lemma:ERM_noise_small}]
    Let $\tilde{T}$ be the smallest iteration such that $\rho_s^{(\tilde{T})}(i,p)\geq \frac{1}{4P}$ for some $s \in \{\pm1\}, i \in [n]$ and $p \in [P] \setminus\{p_i^*\}$. We assume the existence of $\tilde{T}$, as its absence would directly lead to our conclusion. Then, for any $0 \leq t <\tilde{T}$, we have
    \begin{equation*}
        \rho_s^{(t+1)}(i,p) = \rho_s^{(t)}(i,p) + \frac{\eta}{n} g_i^{(t)} \phi' \left( \left \langle \vw_s^{(t)} , \vx_i^{(p)} \right \rangle \right) \left \lVert \xi_i^{(p)} \right \rVert^2 \leq \rho_s^{(t)}(i,p) +  \frac{3 \eta \sigma_\mathrm{d}^2 d}{2n},
    \end{equation*}
    where the inequality is due to $g_i^{(t)} < 1$, $\phi' \leq 1$, and $\left \lVert \xi_i^{(p)} \right\rVert^2 \leq \frac{3}{2}\sigma_\mathrm{d}^2 d$. Hence, we have
    \begin{equation*}
        \frac{1}{4P} \leq \rho_s^{(\tilde T)}(i,p) = \sum_{t=0}^{\tilde{T}-1} \left(\rho_s^{(t+1)}(i,p)-\rho_s^{(t)}(i,p)\right) < \frac{3\eta \sigma_\mathrm{d}^2 d}{2n} \tilde{T},
    \end{equation*}
    and we conclude $\tilde{T} >  \frac{n}{6\eta P \sigma_\mathrm{d}^2 d}$ which is the desired result.
\end{proof}

Next, we will show that the model learns common features in at least constant order within $\tilde{T}$ iterates. 

\begin{lemma}\label{lemma:ERM_learn_common}
    Suppose the event $E_\mathrm{init}$ occurs and $ \rho_k = \omega \left(\frac{\sigma_\mathrm{d}^2 d}{\beta n} \right) $ for some $k \in [K]$. Then, for each $s \in \{\pm 1\}$, there exists $T_{s,k} \leq \frac{9n}{\eta \beta |\gV_{s,k}|}$ such that $\gamma_{s}^{(t)}(s,k) + \beta \gamma_{-s}^{(t)}(s,k) \geq 1$ for any $t> T_{s,k}$.
\end{lemma}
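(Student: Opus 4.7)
The plan is to analyze the early-phase dynamics during the window $[0,\tilde T)$ supplied by Lemma~\ref{lemma:ERM_noise_small}, where every noise coefficient stays below $\tfrac{1}{4P}$. Inside this window, for any $i\in\gV_{s,k}$, I will show that the logit $y_i f_{\mW^{(t)}}(\mX_i)$ is bounded by a universal constant as long as $\gamma_s^{(t)}(s,k)+\beta\gamma_{-s}^{(t)}(s,k)\le 1$, so the sigmoid derivative $g_i^{(t)}$ stays bounded below by a constant. Combined with $\phi'(\cdot)\ge\beta$, this forces the quantity $\gamma_s^{(t)}(s,k)+\beta\gamma_{-s}^{(t)}(s,k)$ to grow by at least $\eta\beta|\gV_{s,k}|/(9n)$ per step, so it crosses $1$ within $9n/(\eta\beta|\gV_{s,k}|)$ iterations, and the monotonicity from Lemma~\ref{lemma:decomposition} keeps it above $1$ thereafter.

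Concretely, I would define $T_{s,k}$ to be the first iteration at which $\gamma_s^{(t)}(s,k)+\beta\gamma_{-s}^{(t)}(s,k)\ge 1$. For $0\le t<T_{s,k}$ and $i\in\gV_{s,k}$ (so $y_i=s$), decompose
\begin{equation*}
y_i f_{\mW^{(t)}}(\mX_i)=\bigl[\phi(\langle\vw_s^{(t)},\vv_{s,k}\rangle)-\phi(\langle\vw_{-s}^{(t)},\vv_{s,k}\rangle)\bigr]+\sum_{p\neq p_i^*}\bigl[\phi(\langle\vw_s^{(t)},\vx_i^{(p)}\rangle)-\phi(\langle\vw_{-s}^{(t)},\vx_i^{(p)}\rangle)\bigr].
\end{equation*}
By Lemma~\ref{lemma:approx}, the feature term equals $\gamma_s^{(t)}(s,k)+\beta\gamma_{-s}^{(t)}(s,k)+o(1)\le 1+o(1)$; each ordinary noise patch contributes $\rho_s^{(t)}(i,p)+\beta\rho_{-s}^{(t)}(i,p)+o(1)\le\tfrac{1+\beta}{4P}+o(1)$ via Lemma~\ref{lemma:ERM_noise_small}; and the feature-noise piece inside the dominant noise patch contributes $O(\alpha\cdot\polylog(d)/\beta)=o(1)$ by \ref{assumption:feature_noise}. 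Summing, $y_i f_{\mW^{(t)}}(\mX_i)\le 2$ for large $d$, so $g_i^{(t)}\ge 1/(1+e^2)\ge 1/9$. Plugging this into \eqref{eq:ERM_feature} and using $\phi'(\cdot)\ge\beta$,
\begin{equation*}
\bigl(\gamma_s^{(t+1)}(s,k)+\beta\gamma_{-s}^{(t+1)}(s,k)\bigr)-\bigl(\gamma_s^{(t)}(s,k)+\beta\gamma_{-s}^{(t)}(s,k)\bigr)\ge\frac{\eta\beta|\gV_{s,k}|}{9n},
\end{equation*}
so telescoping forces $T_{s,k}\le 9n/(\eta\beta|\gV_{s,k}|)$, and monotonicity of $\gamma_s^{(t)}(s,k)$ and $\gamma_{-s}^{(t)}(s,k)$ from Lemma~\ref{lemma:decomposition} delivers the conclusion for all $t\ge T_{s,k}$.

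Finally, I must verify the a~priori use of Lemma~\ref{lemma:ERM_noise_small}, namely $T_{s,k}<\tilde T$. Using $|\gV_{s,k}|\ge\rho_k n/4$ and $\tilde T>n/(6\eta P\sigma_\mathrm{d}^2 d)$, this reduces to $\rho_k>\Theta(P\sigma_\mathrm{d}^2 d/(\beta n))$, which follows from $\rho_k=\omega(\sigma_\mathrm{d}^2 d/(\beta n))$ and $P=\Theta(1)$ via \ref{assumption:P}. The main obstacle is the bookkeeping in the second step: every contribution to $y_i f_{\mW^{(t)}}(\mX_i)$ must be simultaneously controlled through Lemmas~\ref{lemma:ERM_polytime}, \ref{lemma:approx}, and \ref{lemma:ERM_noise_small}, along with the smallness of $\alpha$ from \ref{assumption:feature_noise}, in order to lower-bound $g_i^{(t)}$ uniformly; once this is done, the growth estimate and the consistency check are routine.
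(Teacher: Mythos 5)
Your proposal is correct and follows essentially the same route as the paper's proof: bound $y_if_{\mW^{(t)}}(\mX_i)\le 2$ before the crossing time via Lemma~\ref{lemma:approx} and Lemma~\ref{lemma:ERM_noise_small}, conclude $g_i^{(t)}\ge 1/9$, use $\phi'\ge\beta$ in \eqref{eq:ERM_feature} to obtain a per-step increase of $\eta\beta|\gV_{s,k}|/(9n)$, then read off the bound on $T_{s,k}$ and invoke monotonicity of the $\gamma$'s. The only cosmetic difference is that the paper phrases the argument by contradiction (assume no crossing before $n/(6\eta P\sigma_\mathrm{d}^2 d)$, derive a contradiction, then extract and bound $T_{s,k}$), whereas you define $T_{s,k}$ directly as the first crossing and check $T_{s,k}<\tilde T$ afterward; also, your separate treatment of the feature-noise component inside the dominant noise patch is already absorbed into the $o(1/\polylog(d))$ term of Lemma~\ref{lemma:approx}, so it is redundant but harmless.
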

\begin{proof}[Proof of Lemma~\ref{lemma:ERM_learn_common}]
    Suppose $\gamma_s^{(t)} (s,k) +  \beta \gamma_{-s}^{(t)} (s,k)<1$ for all $0 \leq t \leq \frac{n}{6\eta P \sigma_\mathrm{d}^2 d}$.  For each $i \in \gV_{s,k}$, we have
    \begin{align*}
        &\quad y_i f_{\mW^{(t)}} (\mX_i)\\
        &= \phi \left( \left \langle \vw_s^{(t)} , \vv_{s,k} \right \rangle \right) - \phi \left( \left \langle \vw_{-s}^{(t)} , \vv_{s,k} \right \rangle \right) + \sum_{p \in [P] \setminus \{p_i^*\}} \bigg( \phi \left( \left \langle \vw_s^{(t)} , \vx_i^{(p)} \right \rangle \right) - \phi \left( \left \langle \vw_{-s}^{(t)} , \vx_i^{(p)} \right \rangle \right)\bigg) \\
        &\leq \gamma_s^{(t)}(s,k) + \beta \gamma_{-s}^{(t)}(s,k) + \sum_{p \in [P] \setminus \{p_i^*\}} \left( \rho_s^{(t)}(i,p) + \beta\rho_{-s}^{(t)}(i,p)\right)  + 2P\cdot o \left( \frac{1}{\polylog (d)}\right)\\
        &\leq 1  + 2P \cdot \frac{1}{4P}+ 2P\cdot o \left( \frac{1}{\polylog (d)}\right)\\
        &\leq 2.
    \end{align*}
    The first inequality is due to Lemma~\ref{lemma:approx}, the second inequality holds since we can apply Lemma~\ref{lemma:ERM_noise_small}, and the last inequality is due to \ref{assumption:P}. Thus, $g_i^{(t)} = \frac{1}{1+ \exp \left( y_i f_{\mW^{(t)}}(\mX_i) \right)}> \frac{1}{9}$ and we have
    \begin{align*}
        &\quad \gamma_s^{(t+1)}(s,k) + \beta \gamma_{-s}^{(t+1)}(s,k)\\
        &= \gamma_s^{(t)}(s,k) + \beta \gamma_{-s}^{(t)}(s,k) + \frac{\eta}{n}\sum_{i \in \gV_{s,k}} g_i^{(t)} \bigg( \phi' \left( \left \langle \vw_s^{(t)}, \vv_{s,k} \right \rangle \right) + \beta \phi' \left( \left \langle \vw_{-s}^{(t)} , \vv_{s,k} \right \rangle  \right)\bigg)\\ 
        &\geq \gamma_s^{(t)}(s,k) + \beta \gamma_{-s}^{(t)}(s,k) + \frac{\eta \beta |\gV_{s,k}|}{9n}.
    \end{align*}
    Notice that $|\gV_{s,k}| = \rho_k n$. From the condition in the lemma statement, we have $ \frac{9n}{\eta \beta |\gV_{s,k}|} =  o \left( \frac{n}{6\eta P \sigma_\mathrm{d}^2 d} \right)$. If we choose $t_0 \in \left[ \frac{9n}{\eta \beta |\gV_{s,k}|} , \frac{n}{6\eta P \sigma_\mathrm{d}^2 d}\right ]$, then 
    \begin{equation*}
        1> \gamma_{s}^{(t_0)}(s,k) + \beta \gamma_{-s}^{(t_0)}(s,k) \geq \frac{\eta \beta |\gV_{s,k}|}{9n} t_0 \geq 1,
    \end{equation*}
    and this is contradictory; therefore, it cannot hold that $\gamma_s^{(t)} (s,k) +  \beta \gamma_{-s}^{(t)} (s,k)<1$ for all $0 \leq t \leq \frac{n}{6\eta P \sigma_\mathrm{d}^2 d}$. Hence, there exists $0 \leq T_{s,k} < \frac{n}{6\eta P \sigma_\mathrm{d}^2 d}$ such that $\gamma_s^{(T_{s,k}+1)} (s,k) + \beta \gamma_{-s}^{(T_{s,k}+1)} (s,k) \geq 1$ and choose the smallest one.
    Then we obtain 
    \begin{equation*}
        1 > \gamma_{s}^{(T_{s,k})}(s,k) + \beta \gamma_{-s}^{(T_{s,k})}(s,k) \geq \frac{\eta \beta |\gV_{s,k}|}{9n} T_{s,k}. 
    \end{equation*}
    Therefore, $T_{s,k} < \frac{9n}{\eta \beta|\gV_{s,k}|}$ and this is what we desired.
\end{proof}

\paragraph{What We Have So Far.} For any common feature $\vv_{s,k}$ with $s \in \{\pm 1\}$ and $k \in \gK_C$, it satisfies $ \rho_k = w \left(\frac{ \sigma_\mathrm{d}^2 d}{ \beta n } \right) $  due to \ref{assumption:dominant}. By Lemma~\ref{lemma:ERM_learn_common}, at any iterate $t\in \left[ \bar T_1 , T^* \right]$ with  $\bar{T}_1:= \max_{s \in \{\pm1\}, k \in \gK_C} T_{s,k}$, the following properties hold if the event $E_\mathrm{init}$ occurs:
\begin{itemize}[leftmargin = 4mm]
    \item (Learn common features): For any $s \in \{\pm 1\}$ and $k \in \gK_C$, 
    \begin{equation*}
        \gamma_s^{(t)}(s,k) + \beta \gamma_{-s}^{(t)}(s,k) = \Omega(1).
    \end{equation*}
    \item For any $s \in \{\pm 1\}, i \in [n],$ and $p \in [P] \setminus \{p_i^*\}, \rho_{s}^{(t)}(i,p) = \bigOtilde\left( \beta^{-1} \right)$.
\end{itemize}

\subsubsection{Overfitting (extremely) Rare Data}\label{sec:ERM_overfit}
In the previous step, we have shown that common data can be well-classified by learning common features. In this step, we will show that the model correctly classifies (extremely) rare data by overfitting dominant noise instead of learning its features. 

We first introduce lower bounds on the number of iterates such that feature coefficients $\gamma_s^{(t)}(s',k)$ remain small, up to the order of $\alpha^2 \beta^{-1}$. 
This lemma holds for any kind of features, but we will focus on (extremely) rare features. This does not contradict the results from Section~\ref{sec:ERM_learn_common} for common features since the upper bound on the number of iterations in Lemma~\ref{lemma:ERM_learn_common} is larger than the lower bound on the number of iterations in this lemma.
\begin{lemma}\label{lemma:ERM_feature_small}
    Suppose the event $E_\mathrm{init}$ occurs. For each $s \in \{\pm 1\}$ and $k \in [K]$, there exists $\tilde{T}_{s,k} > \frac{n \alpha^2}{\eta \beta |\gV_{s,k}|}$ such that $\gamma_{s'}^{(t)}(s,k) \leq \alpha^2 \beta^{-1}$ for any $0 \leq t < \tilde{T}_{s,k}$ and $s' \in \{\pm 1\}$.
\end{lemma}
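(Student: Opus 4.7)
The plan is to mirror the argument of Lemma~\ref{lemma:ERM_noise_small}, but now applied to the feature-coefficient update instead of the noise-coefficient update. The key observation is that the recursion \eqref{eq:ERM_feature} gives us an a priori per-step upper bound on how fast $\gamma_{s'}^{(t)}(s,k)$ can grow, regardless of the training dynamics, and this bound is tight enough to force $\gamma_{s'}^{(t)}(s,k)$ to stay below $\alpha^2\beta^{-1}$ for a long time whenever $\alpha$ is small.

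First, I would fix $s \in \{\pm 1\}, k\in [K]$ and define $\tilde T_{s,k}$ as the smallest iteration at which $\gamma_{s'}^{(\tilde T_{s,k})}(s,k) > \alpha^2\beta^{-1}$ for some $s' \in \{\pm 1\}$. If no such iteration exists within $[0, T^*]$, the conclusion is trivial (one can take $\tilde T_{s,k} = T^*$, which is much larger than $\frac{n\alpha^2}{\eta\beta|\gV_{s,k}|}$ by $T^* = \poly(d)/\eta$ and \ref{assumption:rho}). Otherwise, from \eqref{eq:ERM_feature} specialized to the $\gamma_{s'}(s,k)$-coefficient, for any $0 \leq t < \tilde T_{s,k}$,
\begin{equation*}
    \gamma_{s'}^{(t+1)}(s,k) - \gamma_{s'}^{(t)}(s,k) = \frac{\eta}{n}\sum_{i \in \gV_{s,k}} g_i^{(t)}\, \phi'\!\left(\left\langle \vw_{s'}^{(t)},\vv_{s,k}\right\rangle\right) \leq \frac{\eta\,|\gV_{s,k}|}{n},
\end{equation*}
where I used the trivial bounds $g_i^{(t)} = (1+\exp(y_i f_{\mW^{(t)}}(\mX_i)))^{-1} \leq 1$ and $\phi' \leq 1$ (both of which hold pointwise by the definition of the logistic loss and the smoothed leaky ReLU, respectively).

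Telescoping this inequality from $0$ up to $\tilde T_{s,k}$, and using $\gamma_{s'}^{(0)}(s,k) = 0$ from Lemma~\ref{lemma:decomposition}, yields
\begin{equation*}
    \alpha^2\beta^{-1} < \gamma_{s'}^{(\tilde T_{s,k})}(s,k) \leq \frac{\eta\,|\gV_{s,k}|}{n}\,\tilde T_{s,k},
\end{equation*}
which rearranges to $\tilde T_{s,k} > \frac{n\alpha^2}{\eta\beta|\gV_{s,k}|}$, exactly as claimed. Applying this argument separately for each $s' \in \{\pm1\}$ and taking the minimum of the two resulting thresholds gives the stated uniform bound over $s'$.

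The argument is essentially a one-step counting bound, so there is no real obstacle; the only thing one needs to check is that the trivial $\phi' \leq 1$ and $g_i \leq 1$ estimates are simultaneously available, and that the initial condition $\gamma_{s'}^{(0)}(s,k) = 0$ is legitimate, both of which follow directly from the definitions. Note that no appeal to Lemma~\ref{lemma:approx} or to any control on the noise coefficients is needed here, which is what allows the bound to be stated unconditionally on $t$.
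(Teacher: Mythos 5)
Your proof is correct and follows essentially the same argument as the paper: define $\tilde T_{s,k}$ as the first iteration at which the bound fails for some $s'$, bound the per-step increment by $\frac{\eta|\gV_{s,k}|}{n}$ via $g_i^{(t)} \leq 1$ and $\phi' \leq 1$, and telescope from $\gamma_{s'}^{(0)}(s,k)=0$. The concluding remark about taking the minimum over $s'$ is redundant given your definition of $\tilde T_{s,k}$ already quantifies over both values of $s'$, but it is not an error.
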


\begin{proof}[Proof of Lemma~\ref{lemma:ERM_feature_small}]
    Let $\tilde T _{s,k}$ be the smallest iterate such that $\gamma_{s'}^{(\tilde T_{s,k})}(s,k) > \alpha^2 \beta^{-1}$ for some $s' \in \{\pm 1\}$. 
    We assume the existence of $\tilde T_{s,k}$, as its absence would directly lead to our conclusion.

    For any $0 \leq t < \tilde T _{s,k}$,
    \begin{equation*}
        \gamma_{s'}^{(t+1)}(s,k) = \gamma^{(t)}_{s'}(s,k) + \frac{\eta}{n}\sum_{i \in \gV_{s,k}} g_i^{(t)} \phi' \left( \left \langle \vw_{s'}^{(t)} , \vv_{s,k} \right \rangle \right) \leq \gamma^{(t)}_{s'}(s,k) +\frac{\eta |\gV_{s,k}|}{n},
    \end{equation*}
    and we have 
    \begin{equation*}
        \alpha^2 \beta^{-1} < \gamma_{s'}^{(\tilde{T}_{s,k})}(s,k) = \sum_{t=0}^{\tilde{T}_{s,k}-1} \left(\gamma_{s'}^{(t+1)}(s,k) -\gamma_{s'}^{(t)}(s,k)\right) \leq \frac{\eta |\gV_{s,k}| }{n} \tilde{T}_{s,k}.
    \end{equation*}
    We conclude $\tilde{T}_{s,k} > \frac{n \alpha^2}{\eta \beta |\gV_{s,k}|}$ which is the desired result.
\end{proof}

Next, we will show that the model overfits (extremely) rare data by memorizing dominant noise patches in at least constant order within $\tilde{T}_{s,k}$ iterates.
\begin{lemma}\label{lemma:ERM_overfit}
    Suppose the event $E_\mathrm{init}$ occurs and $\rho_k  = o 
    \left(\frac{ \alpha^2 \sigma_\mathrm{d}^2 d}{n} \right) $. Then, for each $i \in \gV_{s,k}$, there exists $T_i \in \left[\bar T_1, \frac{18n}{\eta \beta \sigma_\mathrm{d}^2 d}\right]$ such that
    \begin{equation*}
        \sum_{p \in [P] \setminus \{p_i^*\}} \left( \rho_s^{(t)}(i,p) + \beta \rho_{-s}^{(t)}(i,p)\right) \geq 1,
    \end{equation*}
    for any $t > T_i$.
\end{lemma}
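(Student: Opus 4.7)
The plan is to mirror the proof of Lemma~\ref{lemma:ERM_learn_common}, but with the roles of features and dominant noise exchanged: instead of letting each data point with a common feature accumulate gradient onto $\vv_{s,k}$, I let the large-norm dominant noise patch $\xi_i^{(\tilde p_i)}$ of each (extremely) rare sample $i$ drive fast growth of $\rho_s^{(t)}(i,\tilde p_i)$. Write $S^{(t)} := \sum_{p \in [P]\setminus\{p_i^*\}} \bigl(\rho_s^{(t)}(i,p) + \beta \rho_{-s}^{(t)}(i,p)\bigr)$ for the quantity we want to push above $1$, and set $T := \frac{18n}{\eta \beta \sigma_\mathrm{d}^2 d}$. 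I proceed by contradiction, assuming $S^{(t)} < 1$ for every $0 \leq t \leq T$.

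The first step is to neutralize the feature contribution to the prediction. By Lemma~\ref{lemma:ERM_feature_small}, $\gamma_{s'}^{(t)}(s,k) \leq \alpha^2/\beta$ holds for all $t < \tilde T_{s,k}$ with $\tilde T_{s,k} > \frac{n\alpha^2}{\eta \beta |\gV_{s,k}|}$, and the hypothesis $\rho_k = o(\alpha^2 \sigma_\mathrm{d}^2 d / n)$ combined with $|\gV_{s,k}| \leq \tfrac34 \rho_k n$ from $E_\mathrm{init}$ gives $\tilde T_{s,k} > T$. So throughout the window $\gamma_s^{(t)}(s,k) + \beta \gamma_{-s}^{(t)}(s,k) = O(\alpha^2/\beta) = o(1)$ by \eqref{property:feature_noise}. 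Since Lemma~\ref{lemma:ERM_polytime} verifies the $\bigOtilde(\beta^{-1})$ precondition of Lemma~\ref{lemma:approx}, the same sandwich argument as in Lemma~\ref{lemma:ERM_learn_common} yields
\begin{equation*}
 y_i f_{\mW^{(t)}}(\mX_i) \;\leq\; \bigl(\gamma_s^{(t)}(s,k) + \beta \gamma_{-s}^{(t)}(s,k)\bigr) + S^{(t)} + 2P\cdot o\!\left(\frac{1}{\polylog(d)}\right) \;\leq\; 2,
\end{equation*}
so $g_i^{(t)} > 1/9$. Combining the uniform leaky-ReLU lower bound $\phi'(\cdot) \geq \beta$ with $\|\xi_i^{(\tilde p_i)}\|^2 \geq \tfrac12 \sigma_\mathrm{d}^2 d$ and keeping only the $p=\tilde p_i$ summand of \eqref{eq:ERM_noise} gives
\begin{equation*}
 S^{(t+1)} - S^{(t)} \;\geq\; \frac{\eta}{9n}(\beta + \beta^2)\cdot \tfrac12 \sigma_\mathrm{d}^2 d \;\geq\; \frac{\eta \beta \sigma_\mathrm{d}^2 d}{18n}.
\end{equation*}
Summing over $t = 0,\ldots,T-1$ yields $S^{(T)} \geq 1$, contradicting our assumption.

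The contradiction produces a smallest iterate $T_i^\star \leq T$ with $S^{(T_i^\star)} \geq 1$. Because the coefficients $\rho_s^{(t)}(i,p)$ are monotone nondecreasing for \textsf{ERM} (Lemma~\ref{lemma:decomposition}), we have $S^{(t)} \geq 1$ for all $t \geq T_i^\star$, so setting $T_i := \max(T_i^\star, \bar T_1)$ suffices; the range requirement $T_i \leq T$ is met since $\bar T_1 = O(1/(\eta \beta))$ while $T = \omega(1/(\eta \beta^2))$ by \ref{assumption:dominant}. The main delicate step is the feature-suppression one: the quantitative gap $\rho_k n \ll \alpha^2 \sigma_\mathrm{d}^2 d$ in the hypothesis is exactly what is needed to guarantee $\tilde T_{s,k} > T$, so that the feature coefficients cannot have already grown to order $1$ during the memorization window; if that failed, the upper bound on $y_i f_{\mW^{(t)}}(\mX_i)$ would diverge and the constant lower bound on $g_i^{(t)}$, hence the linear growth rate of $S^{(t)}$, would break down.
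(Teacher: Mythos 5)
Your proof is correct and follows the paper's argument closely: the same contradiction over a window of length $\Theta(n/(\eta\beta\sigma_\mathrm{d}^2 d))$, using Lemma~\ref{lemma:ERM_feature_small} (extended past this window via the hypothesis $\rho_k = o(\alpha^2\sigma_\mathrm{d}^2 d/n)$), Lemma~\ref{lemma:ERM_polytime}, and Lemma~\ref{lemma:approx} to bound $y_i f_{\mW^{(t)}}(\mX_i) \leq 2$ and hence $g_i^{(t)} \geq 1/9$, then the $\phi' \geq \beta$ and $\|\xi_i^{(\tilde p_i)}\|^2 \geq \tfrac12\sigma_\mathrm{d}^2 d$ lower bounds on the dominant-noise update. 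The only deviation is in securing $T_i \geq \bar T_1$: the paper shows the true crossing time exceeds $\bar T_1$ by combining Lemmas~\ref{lemma:ERM_noise_small} and~\ref{lemma:ERM_learn_common} to bound the sum below $1/2$ at $\bar T_1$, whereas you set $T_i := \max(T_i^\star, \bar T_1)$ and check $\bar T_1 = O(1/(\eta\beta)) \leq T$ via \ref{assumption:dominant} and monotonicity of the coefficients---a valid and slightly leaner route, since the lemma asserts only existence.
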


\begin{proof}[Proof of Lemma~\ref{lemma:ERM_overfit}]
    Suppose $\sum_{p \in [P] \setminus \{p_i^*\}} \left( \rho_s^{(t)}(i, p) + \beta \rho_{-s}^{(t)}(i, p) \right) <1$ for $0 \leq t \leq \frac{n \alpha^2}{\eta \beta|\gV_{s,k}|}$. 
    
From Lemma~\ref{lemma:approx} and Lemma~\ref{lemma:ERM_feature_small}, we have
    \begin{align*}
        &\quad y_i f_{\mW^{(t)}}(\mX_i) \\
        &= \phi \left( \left \langle \vw_s^{(t)} , \vv_{s,k} \right \rangle \right) - \phi \left( \left \langle \vw_{-s}^{(t)} , \vv_{s,k} \right \rangle  \right) + \sum_{p \in [P] \setminus\{p_i^*\}} \bigg( \phi \left( \left \langle \vw_{s}^{(t)} ,\vx_i^{(p)} \right \rangle \right) - \phi \left( \left \langle \vw_{-s}^{(t)} ,\vx_i^{(p)} \right \rangle \right)\bigg)\\
        &\leq \gamma_{s}^{(t)}(s,k) + \beta \gamma_{-s}^{(t)}(s,k) + \sum_{p \in [P] \setminus \{p_i^*\}} \left( \rho_s^{(t)}(i,p) + \beta \rho_{-s}^{(t)}(i,p)\right) + 2P \cdot o \left(\frac{1}{\polylog (d)} \right)\\
        &\leq (1+\beta) \alpha^2 \beta^{-1} +1 + 2P \cdot o \left(\frac{1}{\polylog (d)} \right)\\
        &\leq 2,
    \end{align*}
    where the last inequality is due to \eqref{property:feature_noise}. 
    Thus, we have $g_i^{(t)} = \frac{1}{1+ \exp \left( y_i f_{\mW^{(t)}}(\mX_i)\right)} \geq \frac{1}{9}$. Also,
    \begin{align*}
        &\quad \rho_s^{(t+1)}(i,\tilde p_i) + \beta \rho_{-s} ^{(t+1)}(i, \tilde p_i)\\
        & = \rho_s^{(t)}(i,\tilde p_i) + \beta \rho_{-s} ^{(t)}(i,\tilde p_i) + \frac{\eta}{n} g_i^{(t)} \bigg(\phi' \left( \left \langle \vw_s^{(t)} , \vx_i^{(\tilde{p}_i)} \right \rangle \right) + \beta \phi' \left( \left \langle \vw_{-s}^{(t)} ,\vx_i^{(\tilde{p}_i)} \right \rangle \right) \bigg) \left \lVert \xi_i^{(\tilde{p}_i)}\right \rVert^2\\
        &\geq \rho_s^{(t)}(i,\tilde p_i) + \beta \rho_{-s} ^{(t)}(i,\tilde p_i)+ \frac{\eta \beta \sigma_\mathrm{d}^2 d}{18n},
    \end{align*}
    where  the last inequality is due to $\left \lVert \xi_i^{(\tilde{p}_i)}\right \rVert^2 \geq \frac{1}{2} \sigma_\mathrm{d}^2 d$ and $\phi' \geq \beta$.

    Notice that $|\gV_{s,k}| = \rho_k n$. From the given condition in the lemma statement, we have $ \frac{18n}{\eta \beta \sigma_\mathrm{d}^2 d} = o \left( \frac{n \alpha^2}{\eta \beta |\gV_{s,k}|}\right)$. If we choose $t_0 \in \left[\frac{18n}{\eta \beta \sigma_\mathrm{d}^2 d}, \frac{n \alpha^2}{\eta \beta |\gV_{s,k}|}\right]$, then we have
    \begin{equation*}
        1 > \sum_{p \in [P] \setminus \{p_i^*\}} \left(\rho_s^{(t_0)}(i,p) + \beta \rho_{-s}^{(t_0)}(i,p) \right) \geq \rho_s^{(t_0)}(i,\tilde p_i) + \beta \rho_{-s}^{(t_0)}(i,\tilde p_i) \geq \frac{\eta \beta \sigma_\mathrm{d}^2 d}{18n} t_0 \geq 1.
    \end{equation*}
    This is a contradiction; therefore it cannot hold that $\sum_{p \in [P] \setminus \{p_i^*\}} \left(\rho_s^{(t)}(i, p) + \beta \rho_{-s}^{(t)}(i,p) \right)<1$ for all $0 \leq t \leq \frac{n\alpha^2}{\eta \beta |\gV_{s,k}|}$. Hence, we can choose the smallest $0 \leq T_i < \frac{n\alpha^2}{\eta \beta |\gV_{s,k}|}$ such that  $\sum_{p \in [P] \setminus \{p_i^*\}} \left( \rho_s^{(T_i +1)}(i, p) + \beta \rho_{-s}^{(T_i+1)}(i, p) \right) \geq 1$.

    For any $0 \leq t < T_i$,
    \begin{equation*}
        1 \geq \sum_{p \in [P] \setminus \{p_i^*\}} \left( \rho_s^{(T_i)}(i, p) + \beta \rho_{-s}^{(T_i)}(i, p) \right) \geq \rho_s^{(T_i)}(i,\tilde{p}_i) + \beta \rho_{-s}^{(T_i)}(i,\tilde{p}_i)\geq \frac{\eta \beta \sigma_\mathrm{d}^2 d}{18n} T_i,
    \end{equation*}
    and we conclude that $T_i \leq \frac{18n}{\eta \beta\sigma_\mathrm{d}^2 d}$.
    
    Lastly, we move on to prove $T_i > \bar T_1$. 
    Combining Lemma~\ref{lemma:ERM_noise_small} and Lemma~\ref{lemma:ERM_learn_common} leads to 
    \begin{equation*}
        \sum_{p \in [P] \setminus\{p_i^*\}} \left( \rho_{s}^{(\bar T_1)}(i,p) + \beta \rho_{-s}^{(\bar T_1)}(i,p)\right) \leq \frac{1}{2}.
    \end{equation*}
    Thus, we have $T_i > \bar T_1$ and this is what we desired.
\end{proof}

\paragraph{What We Have So Far.} For any $k \in \gK_R \cup \gK_E$, it satisfies $\rho_k = o \left( \frac{\alpha^2 \sigma_\mathrm{d}^2 d}{n}\right)$ due to \ref{assumption:rare}. By Lemma~\ref{lemma:ERM_overfit} at iterate $t \in [T_\ERM, T^*]$ with 
    \begin{equation*}
        T_\ERM := \max_{ \substack{s \in \{\pm1\} \\ k \in \gK_R \cup \gK_E}}  \max_{i \in \gV_{s,k}} T_i \quad \in \left [\bar T_1,T^*\right ] 
    \end{equation*}
    the following properties hold if the event $E_\mathrm{init}$ occurs:
\begin{itemize}[leftmargin=4mm]
    \item (Learn common features): For $s \in \{\pm 1\}$ and $k \in \gK_C$, 
    \begin{equation*}
        \gamma_s^{(t)}(s,k) + \beta \gamma_{-s}^{(t)}(s,k) = \Omega(1),
    \end{equation*}
    \item (Overfit (extremely) rare data): For any $s \in \{\pm 1\}$, $k \in \gK_R \cup \gK_E$, and $i \in \gV_{s,k}$, 
    \begin{equation*}
        \sum_{p \in [P] \setminus \{p_i^*\}}\left( \rho_s^{(t)}(i,p) + \beta \rho_{-s} ^{(t)}(i,p)\right) = \Omega(1),
    \end{equation*} 
    \item (Do not learn (extremely) rare features at $T_\ERM$): For any $s,s' \in \{\pm1\}$ and $k \in \gK_R \cup \gK_E$, $\gamma_{s'}^{(T_\ERM)} (s,k) \leq \alpha^2 \beta^{-1}$.
    \item For any $s \in \{\pm 1\}, i \in [n],$ and $p \in [P] \setminus \{p_i^*\}$, $\rho_{s}^{(t)}(i,p) = \bigOtilde\left( \beta^{-1} \right)$.
\end{itemize}

\subsubsection{\textsf{ERM} cannot Learn (extremely) Rare Features Within Polynomial Times} \label{sec:ERM_maintenance}
In this step, we will show that \textsf{ERM} cannot learn (extremely) rare features within the maximum admissible iterations $T^* = \frac{\poly(d)} \eta$.

From now on, we fix any $s^* \in \{\pm 1\}$ and $k^* \in \gK_R \cup \gK_E$.
Recall that we defined the set $\gW$ and the function $Q^{(s^*,k^*)}: \gW \rightarrow \R^{d \times 2}$ in Lemma~\ref{lemma:unique}. Let us omit superscripts for simplicity. For each iteration $t$, $Q(\mW^{(t)})$ represents the cumulative updates contributed by data points with feature vector $\vv_{s^*,k^*}$ until $t$-th iteration. We will sequentially introduce several technical lemmas and by combining these lemmas, quantify update by data with feature vector $\vv_{s^*,k^*}$ after $T_\ERM$ and derive our conclusion.

Let us define $\mW^* = \{\vw_1^*, \vw_{-1}^*\}$, where
\begin{equation*}
    \vw_{s}^* = \vw_{s}^{(T_\ERM)} + M \sum_{i \in \gV_{s^*,k^*}} \frac{\xi_i^{(\tilde{p}_i)}}{\left \lVert  \xi_i^{(\tilde{p}_i)}\right\rVert^2 },
\end{equation*}
for each $s \in \{\pm 1\}$ with $M = 4\beta^{-1} \log \left( \frac{2 \eta \beta^2 T^*}{\alpha^2}\right)$. Note that \eqref{property:alpha_lb}, $\beta < 1$, and $T^* = \frac{\poly(d)} \eta$ together imply $M = \bigOtilde \left (\beta^{-1}\right )$.
Note that $\mW^{(t)}, \mW^* \in \gW$ for any $t\geq 0$.
\begin{lemma}\label{lemma:ERM_distance}
Suppose the event $E_\mathrm{init}$ occurs. Then,
    \begin{equation*}
        \left \lVert Q\left(\mW^{\left( T _\ERM \right)}\right) - Q(\mW^*)\right\rVert^2 \leq  12 M^2 |\gV_{s^*,k^*}| \sigma_\mathrm{d}^{-2} d^{-1},
    \end{equation*}
    where $\lVert \cdot \rVert$ denotes the Frobenius norm.
\end{lemma}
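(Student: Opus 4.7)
The plan is to use the uniqueness of the feature-noise decomposition (\lmmref{lemma:unique}) to read off the coefficients of $\mW^*$ from those of $\mW^{(T_\ERM)}$, write $Q(\mW^*) - Q(\mW^{(T_\ERM)})$ in closed form, and then bound its Frobenius norm using the near-orthogonality estimates in \lmmref{lemma:initial}.

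First, I would identify the new coefficients. By construction, $\vw_s^* - \vw_s^{(T_\ERM)} = M\sum_{i\in \gV_{s^*,k^*}} \xi_i^{(\tilde p_i)}/\lVert \xi_i^{(\tilde p_i)}\rVert^2$, and the added vector lies purely in the noise directions. Since $\gV_{s^*,k^*}\subset \gV_{s^*}$, this added mass enters the positive-sign sum of the $s=s^*$ decomposition and the negative-sign sum of the $s=-s^*$ decomposition, which by uniqueness forces $\rho_s^*(i,\tilde p_i) - \rho_s^{(T_\ERM)}(i,\tilde p_i) = ss^*M$ for $i\in \gV_{s^*,k^*}$, leaving all other $\rho_s$ unchanged. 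The only subtle point is that because $\rho_s(i,\tilde p_i)$ also appears inside the $\alpha$-feature-noise term of the decomposition, shifting it forces a cancelling $O(\alpha M)$ adjustment in $\gamma_s^*(\pm s, 1)$ (we added $\xi_i^{(\tilde p_i)}$, not $\vx_i^{(\tilde p_i)}$); this adjustment is irrelevant to $Q^{(s^*,k^*)}$ precisely because $k^*\in\gK_R\cup \gK_E$ implies $k^*\neq 1$. Substituting into the definition of $Q^{(s^*,k^*)}$ gives, for each $s\in\{\pm 1\}$,
\begin{equation*}
    Q_s^{(s^*,k^*)}(\vw_s^*) - Q_s^{(s^*,k^*)}(\vw_s^{(T_\ERM)}) = M\!\!\!\!\sum_{i\in \gV_{s^*,k^*}}\!\!\!\! \frac{\xi_i^{(\tilde p_i)}}{\lVert \xi_i^{(\tilde p_i)}\rVert^2} + \alpha M\!\!\!\!\sum_{i\in \gF_s \cap \gV_{s^*,k^*}}\!\!\!\! \frac{\vv_{s,1}}{\lVert \xi_i^{(\tilde p_i)}\rVert^2} + \alpha M\!\!\!\!\sum_{i\in \gF_{-s} \cap \gV_{s^*,k^*}}\!\!\!\! \frac{\vv_{-s,1}}{\lVert \xi_i^{(\tilde p_i)}\rVert^2}.
\end{equation*}

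Next, I would bound $\lVert Q(\mW^*) - Q(\mW^{(T_\ERM)})\rVert^2 = \lVert \Delta Q_1\rVert^2 + \lVert \Delta Q_{-1}\rVert^2$. The three summands above are pairwise orthogonal thanks to the covariance $\mLambda$ (features are orthogonal to $\xi_i^{(p)}$, and $\vv_{s,1}\perp \vv_{-s,1}$), so the squared norm splits cleanly. For the leading noise sum, expanding the square yields the diagonal contribution $\sum_i \lVert \xi_i^{(\tilde p_i)}\rVert^{-2} \leq 2|\gV_{s^*,k^*}|/(\sigma_\mathrm{d}^2 d)$ from \lmmref{lemma:initial}, plus cross-terms of order $|\gV_{s^*,k^*}|^2 \log d/(\sigma_\mathrm{d}^2 d^{3/2})$ that are negligible since $|\gV_{s^*,k^*}|\leq n = o(d^{1/2}/\polylog(d))$ by \ref{assumption:n}. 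The $\alpha$-pieces contribute $O(\alpha^2 M^2 |\gV_{s^*,k^*}|^2/(\sigma_\mathrm{d}^4 d^2))$, which is dominated by the leading $M^2 |\gV_{s^*,k^*}|/(\sigma_\mathrm{d}^2 d)$ as soon as $\alpha^2 |\gV_{s^*,k^*}|/(\sigma_\mathrm{d}^2 d) = o(1)$; this follows from $|\gV_{s^*,k^*}| = \Theta(\rho_k n)$ together with $\rho_R n, \rho_E n = o(\alpha^2 \sigma_\mathrm{d}^2 d)$ in \ref{assumption:rare} and \ref{assumption:extreme}. Summing the resulting bound $\lVert \Delta Q_s\rVert^2 \leq 6M^2 |\gV_{s^*,k^*}|/(\sigma_\mathrm{d}^2 d)$ over $s$ yields the claim.

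The main obstacle is the bookkeeping in the first step: one has to separate the pure-$\rho$ change from its forced $\alpha$-compensation in $\gamma_s^*(\pm s, 1)$, and observe that the latter silently drops out of $Q^{(s^*,k^*)}$ because $k^*\ne 1$. Once the coefficients of $\mW^*$ are pinned down, the rest is a routine application of the high-dimensional orthogonality bounds.
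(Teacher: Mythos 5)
Your proposal is correct and follows essentially the same route as the paper: compute the closed-form increment of $Q^{(s^*,k^*)}$ from the shift $\Delta\rho_s(i,\tilde p_i)=ss^*M$ for $i\in\gV_{s^*,k^*}$, use orthogonality between the $\xi$-block and the two $\vv_{\pm 1,1}$-blocks to split the squared norm, bound the noise cross-terms via the near-orthogonality estimates in Lemma~\ref{lemma:initial} together with \ref{assumption:n}, and absorb the $\alpha$-pieces into the leading diagonal term. The only cosmetic difference is in how the $\alpha$-pieces are dismissed: you show $\alpha^2|\gV_{s^*,k^*}|/(\sigma_\mathrm{d}^2 d)=o(1)$ directly from \ref{assumption:rare} and \ref{assumption:extreme}, whereas the paper uses $\alpha<1$ together with $\sum_i\lVert\xi_i^{(\tilde p_i)}\rVert^{-2}<1$ to get $\alpha^2(\sum\cdot)^2\le\sum\cdot$; both routes are valid, and your observation that the forced $O(\alpha M)$ compensation in $\gamma_s(\pm s,1)$ drops out of $Q^{(s^*,k^*)}$ because $k^*\ne 1$ is the correct bookkeeping that the paper leaves implicit.
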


\begin{proof}[Proof of Lemma~\ref{lemma:ERM_distance}]
    For each $s \in \{\pm 1\}$, 
    \begin{align*}
         &\quad ss^* \left( Q_{s}\left(\vw_{s}^*\right) - Q_{s}\left(\vw_{s}^{(T_\ERM)}\right) \right) \\
         &= Q_s \left( ss^* M\sum_{i \in \gV_{s^*, k^*}} \frac{\xi_i^{(\tilde p_i)}}{\left \lVert \xi_i^{(\tilde p_i)}\right \rVert}\right)\\
         &= M   \sum_{i \in \gV_{s^*,k^*}} \frac{\xi_i^{(\tilde{p}_i)}}{\left \lVert \xi_i^{(\tilde{p}_i)}\right \rVert^2} + \alpha M  \left( \sum_{i \in \gF_{s}\cap \gV_{s^*,k^*}} \frac{\vv_{s,1}}{\left \lVert \xi_i^{(\tilde{p}_i)}\right \rVert^2}  + \sum_{i \in \gF_{-s} \cap \gV_{s^*,k^*}} \frac{\vv_{-s,1}}{\left \lVert  \xi_i^{(\tilde{p}_i)}\right \rVert^2}\right),
    \end{align*}
    and we have
    \begin{align*}
         &\quad \left \lVert Q\left(\mW^{\left(T_\ERM\right)}\right) - Q(\mW^*)\right\rVert^2\\
         \\&= \left \lVert Q_{1}(\vw_{1}^*) - Q_{1}\left (\vw_{1}^{(T_\ERM)}\right) \right \rVert^2 + \left \lVert Q_{-1}(\vw_{-1}^*) - Q_{-1}\left (\vw_{-1}^{(T_\ERM)}\right) \right \rVert^2\\
         &\leq  2M^2 \left( \sum_{i \in \gV_{s^*,k^*}} \left \lVert \xi_i^{(\tilde{p}_i)}\right \rVert^{-2} + \sum_{i,j \in \gV_{s^*,k^*}, i \neq j} \frac{\left| \left \langle \xi_i^{(\tilde{p}_i)} , \xi_j^{(\tilde{p}_j)} \right \rangle \right|}{\left \lVert \xi_i^{(\tilde{p}_i)} \right \rVert^2 \left \lVert \xi_j^{(\tilde{p}_j)}\right \rVert^2}\right)\\
         &\quad + 2M^2 \left(  \alpha^2 \left( \sum_{i \in \gF_{s}\cap \gV_{s^*,k^*}}\left \lVert  \xi_i^{(\tilde{p}_i)}\right\rVert^{-2}\right)^2 + \alpha^2 \left( \sum_{i \in \gF_{-s}\cap \gV_{s^*,k^*}}\left \lVert  \xi_i^{(\tilde{p}_i)}\right\rVert^{-2}\right)^2 \right).
    \end{align*}
    From the event $E_\mathrm{init}$ defined in Lemma~\ref{lemma:initial} and \ref{assumption:n}, we have
    \begin{align*}
        \sum_{i,j \in \gV_{s^*,k^*}, i \neq j} \frac{\left | \left \langle \xi_i^{(\tilde p _i)}, \xi_j^{(\tilde p_j)} \right \rangle \right|}{\left \lVert  \xi_i^{(\tilde p _i)} \right \rVert^2 \left \lVert \xi_j^{(\tilde p _j)}\right \rVert^2} 
        &\leq \sum_{i \in \gV_{s^*,k^*}} \sum_{j \in \gV_{s^*,k^*}}\left \lVert \xi_i^{(\tilde p_i)} \right \rVert^{-2} \bigOtilde\left (d^{-\frac{1}{2}} \right) \\
        &\leq \sum_{i \in \gV_{s^*,k^*}} \left \lVert \xi_i^{(\tilde p_i)} \right \rVert^{-2} \bigOtilde\left (nd^{-\frac{1}{2}} \right)\\
        &\leq \sum_{i \in \gV_{s^*,k^*} } \left \lVert \xi_i^{(\tilde p_i)}\right \rVert^{-2}
    \end{align*}
    In addition, we have
    \begin{align*}
        &\quad \alpha^2 \left( \sum_{i \in \gF_{s}\cap \gV_{s^*,k^*}}\left \lVert  \xi_i^{(\tilde{p}_i)}\right\rVert^{-2}\right)^2 + \alpha^2 \left( \sum_{i \in \gF_{-s}\cap \gV_{s^*,k^*}}\left \lVert  \xi_i^{(\tilde{p}_i)}\right\rVert^{-2}\right)^2 \\
        &\leq \left( \sum_{i \in \gF_{s}\cap \gV_{s^*,k^*}}
        \left \lVert  \xi_i^{(\tilde{p}_i)}\right\rVert^{-2} \right)^2
        +\left( \sum_{i \in \gF_{-s}\cap \gV_{s^*,k^*}}\left \lVert  \xi_i^{(\tilde{p}_i)}\right\rVert^{-2}\right)^2\\
        &\leq \sum_{i \in \gF_{s}\cap \gV_{s^*,k^*}}
        \left \lVert  \xi_i^{(\tilde{p}_i)}\right\rVert^{-2}
        +\sum_{i \in \gF_{-s}\cap \gV_{s^*,k^*}}\left \lVert  \xi_i^{(\tilde{p}_i)}\right\rVert^{-2}\\
        &= \sum_{i \in \gV_{s^*,k^*} } \left \lVert \xi_i^{(\tilde p_i)}\right \rVert^{-2},
    \end{align*}
    where the first inequality is due to $\alpha<1$ and the second inequality is due to $\sum_{i \in \gV_{s^*,k^*}} \left \lVert \xi_i^{(\tilde p_i )} \right \rVert^{-2} \leq 2 |\gV_{s^*,k^*}|\sigma_\mathrm{d}^{-2} d^{-1}  < 1$ from \ref{assumption:rare}.
    Hence, from $E_\mathrm{init}$, we obtain
    \begin{equation*}
        \left \lVert 
        Q\left(\mW^{\left(T_\ERM\right)}\right) - Q(\mW^*)\right\rVert^2 \leq 6 M^2  \sum_{i \in \gV_{s^*,k^*} } \left \lVert \xi_i^{(\tilde p_i)}\right \rVert^{-2} \leq 12 M^2 |\gV_{s^*,k^*}| \sigma_\mathrm{d}^{-2}d^{-1}.
    \end{equation*}
\end{proof}
\begin{lemma}\label{lemma:ERM_inner}
    Suppose the $E_\mathrm{init}$ occurs. For any $t \geq T_\ERM$ and $i \in \gV_{s^*,k^*}$, it holds that
    \begin{equation*}
        \left \langle y_i \nabla_\mW f_{\mW^{(t)}}(\mX_i), Q(\mW^*) \right \rangle \geq \frac{M \beta}{2}.
    \end{equation*}
\end{lemma}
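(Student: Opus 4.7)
The plan is to split $Q(\mW^*) = Q(\mW^{(T_\ERM)}) + \Delta Q$ and control each piece separately. To characterize $\Delta Q$, I apply the uniqueness of the decomposition in Lemma~\ref{lemma:unique} to the construction $\vw_s^* = \vw_s^{(T_\ERM)} + M\vz$ (for both $s \in \{\pm 1\}$) with $\vz := \sum_{j \in \gV_{s^*,k^*}} \xi_j^{(\tilde p_j)}/\|\xi_j^{(\tilde p_j)}\|^2$. Because $k^* \in \gK_R \cup \gK_E$ implies $k^* \neq 1$, the feature coefficients of $\mW^*$ coincide with those of $\mW^{(T_\ERM)}$, while the noise coefficients shift as $\rho_{s^*}^*(j,\tilde p_j) = \rho_{s^*}^{(T_\ERM)}(j,\tilde p_j) + M$ and $\rho_{-s^*}^*(j,\tilde p_j) = \rho_{-s^*}^{(T_\ERM)}(j,\tilde p_j) - M$ for $j \in \gV_{s^*,k^*}$. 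Unpacking the $\pm$ sign conventions inside the definition of $Q_s^{(s^*,k^*)}$, both $Q_{s^*}^{(s^*,k^*)}(\vw_{s^*}^*)$ and $Q_{-s^*}^{(s^*,k^*)}(\vw_{-s^*}^*)$ gain the same offset $+M\vz$, so $\Delta Q$ equals $(M\vz, M\vz)$ up to $\alpha$-order corrections living in the $\vv_{\pm 1,1}$ directions.

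Next, I expand the inner product via $y_i\nabla_{\vw_s}f_{\mW^{(t)}}(\mX_i) = sy_i\sum_p \phi'(\langle\vw_s^{(t)},\vx_i^{(p)}\rangle)\vx_i^{(p)}$. The baseline piece $\langle y_i\nabla_\mW f,Q(\mW^{(T_\ERM)})\rangle$ reduces to a sum of terms of the form $\phi'(\cdot)\gamma_s^{(T_\ERM)}(s^*,k^*)$ and $\phi'(\cdot)\rho_s^{(T_\ERM)}(i,p)$ (the overall $-$ prefactor inside $Q_{-s^*}^{(s^*,k^*)}$ cancels against $sy_i=-1$), all non-negative by the monotonicity clause of Lemma~\ref{lemma:decomposition}, plus cross-terms of size $o(M/\polylog(d))$ from the near-orthogonality in Lemma~\ref{lemma:initial}. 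The $\Delta Q$ piece collapses to $\langle y_i\nabla_\mW f,\Delta Q\rangle \approx M[\phi'(A_+) - \phi'(A_-)]$ with $A_\pm := \langle\vw_{\pm s^*}^{(t)},\vx_i^{(\tilde p_i)}\rangle$, since $\langle\vx_i^{(p)},\vz\rangle$ is $\approx 1$ at $p = \tilde p_i$ and negligible elsewhere.

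The hard part will be proving $\phi'(A_+) - \phi'(A_-) \geq 1/2$. By Lemma~\ref{lemma:approx}, $A_+ \approx \rho_{s^*}^{(t)}(i,\tilde p_i)$ and $A_- \approx -\rho_{-s^*}^{(t)}(i,\tilde p_i)$ up to $o(1/\polylog(d))$. Both coefficients grow monotonically by at least $\frac{\eta g_i^{(t)}\beta \sigma_\mathrm{d}^2 d}{4n}$ per step; inspecting the proof of Lemma~\ref{lemma:ERM_overfit} yields $g_i^{(t)} \geq 1/9$ for $t \leq T_i$, while the per-step upper bound $O(\eta\sigma_\mathrm{d}^2 d/n)$ on the growth of $\sum_p(\rho + \beta\rho')$ combined with the defining property of $T_i$ forces $T_\ERM \geq T_i = \Omega(n/(\eta\sigma_\mathrm{d}^2 d))$. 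Therefore $\rho_{\pm s^*}^{(T_\ERM)}(i,\tilde p_i) \geq \Omega(\beta)$, which polynomially dominates both $r$ (by Assumption~\ref{assumption:etc}) and the $o(1/\polylog(d))$ approximation error. This forces $A_+ \geq r$ and $A_- \leq 0$, so $\phi'(A_+) = 1$, $\phi'(A_-) = \beta$, and $\langle y_i\nabla_\mW f, Q(\mW^*)\rangle \geq M(1-\beta) - o(M/\polylog(d)) \geq M/2 \geq M\beta/2$. The remaining bookkeeping, verifying that all $\alpha$-corrections and cross-terms are truly $o(M/\polylog(d))$, follows from Lemma~\ref{lemma:ERM_polytime} certifying the coefficient bounds $\bigOtilde(\beta^{-1})$ required for Lemma~\ref{lemma:approx} to apply at both $\mW^{(t)}$ and $\mW^*$ throughout $t \in [T_\ERM, T^*]$.
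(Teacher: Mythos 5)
Your $\Delta Q$ computation is algebraically correct under the paper's stated definition of $\mW^*$: both $\rho_{s^*}^*(j,\tilde p_j)$ shifts by $+M$ and $\rho_{-s^*}^*(j,\tilde p_j)$ by $-M$, and unwinding the $ss^*$ prefactor in the formula for $Q_s^{(s^*,k^*)}$ indeed gives $\Delta Q_s = M\vz + \alpha(\cdots)$ for \emph{both} $s$. The problem is what this forces downstream: after hitting $\Delta Q$ with $\nabla_{\vw_{-s^*}}(y_i f) = -\sum_p \phi'(\langle\vw_{-s^*}^{(t)},\vx_i^{(p)}\rangle)\vx_i^{(p)}$, the dominant-noise contribution from $\Delta Q$ is
\begin{equation*}
    M\bigl[\phi'(A_+) - \phi'(A_-)\bigr] \;\leq\; M(1-\beta),
\end{equation*}
a \emph{difference} of two $\phi'$ values, not a sum. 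The chain $M(1-\beta) - o(M/\polylog d) \geq M/2 \geq M\beta/2$ that closes your argument requires $\beta \leq 1/2 - o(1)$. Assumption~\ref{assumption:parameters} does not upper-bound $\beta$ (the definition of $\phi$ permits any $\beta \in (0,1]$), and the lemma must hold for all admissible $\beta$. So there is a genuine gap: for $\beta$ close to $1$ your final inequality fails.

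The source of the trouble is that you took the displayed definition $\vw_s^* = \vw_s^{(T_\ERM)} + M\vz$ literally for both $s$, but the paper's own downstream computations — the identity $ss^*\bigl(Q_s(\vw_s^*) - Q_s(\vw_s^{(T_\ERM)})\bigr) = M\vz + \alpha(\cdots)$ inside the proof of Lemma~\ref{lemma:ERM_distance}, and the claim $ss^*\langle Q_s(\vw_s^*),\xi_i^{(\tilde p_i)}\rangle = M + \rho_s^{(T_\ERM)}(i,\tilde p_i) + (\text{cross})$ in \eqref{eq:ERM_dominant_inner} — are only consistent with the anchor $\vw_s^* = \vw_s^{(T_\ERM)} + ss^* M\vz$, i.e.\ with $\vw_{-s^*}$ being nudged in the $-M\vz$ direction. (Geometrically that is the correct anchor: the ``memorizing'' point should push $\vw_{-s^*}$ \emph{away} from the dominant noise of $\gV_{s^*}$ data.) With that reading, $\Delta Q_{-s^*} = -M\vz$, the dominant-noise contribution becomes the \emph{sum} $M\bigl[\phi'(A_+) + \phi'(A_-)\bigr] \geq 2M\beta$ by the uniform bound $\phi' \geq \beta$, and one gets $\geq M\beta - o(1/\polylog d) \geq M\beta/2$ with no need to locate $A_\pm$ relative to $r$ or $0$. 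That is exactly the paper's route, and it is more robust: it sidesteps the extra work you invest in showing $\rho_{\pm s^*}^{(T_\ERM)}(i,\tilde p_i) = \Omega(\beta)$ (which itself is only sketched — the lower bound $T_i = \Omega(n/(\eta\sigma_\mathrm{d}^2 d))$ is not established in Section~\ref{sec:ERM_overfit}, only the upper bound $T_i \leq 18n/(\eta\beta\sigma_\mathrm{d}^2 d)$ and the trivial $T_i > \bar T_1$), and it removes the dependence on $\beta$ being bounded away from $1$.
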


\begin{proof}[Proof of Lemma~\ref{lemma:ERM_inner}]
We have
\begin{align*}
    &\quad \langle y_i \nabla_\mW f_{\mW^{(t)}}(\mX_i), Q(\mW^*) \rangle\\
    &=  \sum_{p \in [P]} 
       \bigg( \phi'\left( \left \langle \vw_{s^*}^{(t)} , \vx_i^{(p)} \right \rangle  \right)  \left \langle Q_{s^*}(\vw_{s^*}^*) , \vx_i^{(p)} \right \rangle  - \phi'\left( \left \langle \vw_{-s^*}^{(t)} , \vx_i^{(p)} \right \rangle  \right) \left \langle Q_{-s^*}(\vw_{-s^*}^*) , \vx_i^{(p)} \right \rangle \bigg).
\end{align*}
For any $ s \in \{\pm 1\}$ and $p \in [P] \setminus \{p_i^*, \tilde{p}_i\}$,
\begin{align}\label{eq:ERM_background_inner}
    &\quad s s^* \left \langle Q_{s}(\vw_{s}^*) , \xi_i^{(p)} \right \rangle \nonumber \\
    &= \rho_{s}^{(T_\ERM)}(i,p) + \sum_{\substack{j \in \gV_{s^*,k^*} ,q \in [P] \setminus \{p_j^*\}\\(j,q) \neq (i,p)}} \rho_{s}^{(T_\ERM)} (j,q)\frac{ \left \langle \xi_i^{(p)} , \xi_j^{(q)} \right \rangle }{\left \lVert \xi_j^{(q)} \right\rVert^2} +  \sum_{j \in \gV_{s^*,k^*} }  M \frac{ \left \langle \xi_i^{(p)} , \xi_j^{(\tilde{p}_j)} \right \rangle }{\left \lVert \xi_j^{(\tilde p_j)} \right\rVert^2} \nonumber \\
    &\geq - \bigOtilde \left (n P \beta^{-1} \sigma_\mathrm{d} \sigma_\mathrm{b}^{-1} d^{-\frac{1}{2}} \right) - \bigOtilde\left(n M  \sigma_\mathrm{b} \sigma_\mathrm{d}^{-1} d^{-\frac{1}{2}} \right) \nonumber \\
    &= - o \left( \frac{1}{\polylog(d)}\right),
\end{align}
where the last equality is due to \eqref{property:noise_sum} and $M = \bigOtilde \left (\beta^{-1}\right )$.
Also, for any $s \in \{\pm 1\}$, $ s s^* \left \langle Q_{s} (\vw_{s}^*) , \vv_{s^*,k^*} \right \rangle = \gamma_{s}^{(T_\ERM)} (s^*,k^*) \geq 0$. In addition,
\begin{align} \label{eq:ERM_dominant_inner}
    &\quad s s^* \left \langle Q_{s}(\vw_{s}^*) , \vx_i^{(\tilde p_i)} \right \rangle \nonumber \\
    &= s s^* \left \langle Q_{s}(\vw_{s}^*) , \xi_i^{(\tilde p_i)}\right \rangle + s s^* \left \langle Q_{s}(\vw_{s}^*) , \vx_i^{(\tilde p_i)}-\xi_i^{(\tilde p_i)}\right \rangle \nonumber\\
    &\geq s s^* \left \langle Q_{s}(\vw_{s}^*) , \xi_i^{(\tilde p_i)}\right \rangle  - \bigOtilde \left ( \alpha^2 \beta^{-1} \rho_{k^*} n \sigma_\mathrm{d}^{-2} d^{-1} \right)\nonumber \\
    &=  M+ \rho_{s}^{(T_\ERM )}(i,\tilde{p}_i)  + \sum_{\substack{j \in \gV_{s^*,k^*},q \in [P] \setminus \{p_i^*\}\\(j, q)\neq (i,\tilde p_i) }} \rho_{s}^{(T_\ERM)} (j,q)\frac{ \left \langle \xi_i^{(\tilde p_i)} , \xi_j^{(q)} \right \rangle }{\left \lVert \xi_j^{(q)} \right\rVert^2}\nonumber \\
    &\quad+  \sum_{j \in \gV_{s^*,k^*}\setminus \{i\}} M \frac{\left \langle \xi_i^{(\tilde{p}_i)} ,\xi_j^{(\tilde p_j)} \right \rangle }{\left \lVert \xi_j^{(\tilde p _j)} \right\rVert^2} - \bigOtilde \left ( \alpha^2 \beta^{-1} \rho_{k^*} n \sigma_\mathrm{d}^{-2} d^{-1} \right)\nonumber \\
    &\geq M- \bigOtilde \left (n P\beta^{-1} \sigma_\mathrm{d} \sigma_\mathrm{b}^{-1} d^{-\frac{1}{2}}\right ) - \bigOtilde \left ( \alpha^2 \beta^{-1} \rho_{k^*} n \sigma_\mathrm{d}^{-2} d^{-1} \right)\nonumber \\
    &= M - o \left( \frac{1}{\polylog (d)}\right) \nonumber \\
    &\geq \frac{M}{2},
\end{align}
where the first inequality is due to the definition of $Q$ and the second-to-last line is due to \eqref{property:noise_sum} and \ref{assumption:feature_noise}.

Hence, applying \eqref{eq:ERM_background_inner} and \eqref{eq:ERM_dominant_inner} for $s = s^*,-s^*$ and combining with $\phi' \geq \beta$, we have 
\begin{equation*}
    \left \langle y_i \nabla_\mW f_{\mW^{(t)}}(\mX_i), Q(\mW^*) \right \rangle \geq M\beta - o \left(\frac{1}{\polylog(d)} \right) \geq \frac{M \beta}{2}.
\end{equation*}
\end{proof}

By combining Lemma~\ref{lemma:ERM_distance} and Lemma~\ref{lemma:ERM_inner}, we can obtain the following result.
\begin{lemma}\label{lemma:ERM_sum}
Suppose the event $E_\mathrm{init}$ occurs.
    \begin{equation*}
        \frac{\eta}{n} \sum_{t = T_\ERM }^{T^*} \sum_{i \in \gV_{s^*,k^*}}\ell \left(y_i f_{\mW^{(t)}}(\mX_i) \right) \leq \left \lVert Q\left( \mW^{(T_\ERM )}\right) - Q(\mW^*) \right \rVert^2 + 2 \eta T^* e^{-\frac{M\beta}{4}},
    \end{equation*}
    where $\lVert \cdot \rVert$ denotes the Frobenius norm.
\end{lemma}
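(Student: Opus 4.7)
The plan is to treat the iterates as a descent-like trajectory on the squared-distance potential $\Phi_t := \left\|Q\!\left(\mW^{(t)}\right) - Q(\mW^*)\right\|^2$ and telescope. Since $k^* \in \gK_R \cup \gK_E$ forces $k^* \neq 1$, only training points in $\gV_{s^*,k^*}$ contribute to $Q(\mW^{(t+1)}) - Q(\mW^{(t)})$; denoting this change by $\Delta_t := \frac{\eta}{n}\sum_{i \in \gV_{s^*,k^*}} g_i^{(t)}\, y_i \nabla f_{\mW^{(t)}}(\mX_i)$ (regarded as an element of the range of $Q$, cf.\ Lemma~\ref{lemma:unique}), the identity $\Phi_{t+1} - \Phi_t = 2\langle \Delta_t, Q(\mW^{(t)}) - Q(\mW^*)\rangle + \|\Delta_t\|^2$ together with Lemma~\ref{lemma:ERM_inner} gives
\begin{equation*}
\Phi_t - \Phi_{t+1} \;\geq\; \frac{2\eta}{n}\sum_{i \in \gV_{s^*,k^*}} g_i^{(t)}\!\left(\frac{M\beta}{2} - \left\langle y_i \nabla f_{\mW^{(t)}}(\mX_i),\, Q(\mW^{(t)})\right\rangle\right) \;-\; \|\Delta_t\|^2.
\end{equation*}

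Next, I will show the ``self inner product'' $\langle y_i \nabla f_{\mW^{(t)}}(\mX_i), Q(\mW^{(t)})\rangle$ equals $y_i f_{\mW^{(t)}}(\mX_i) + o(1/\polylog(d))$ for $i \in \gV_{s^*,k^*}$: expanding patch by patch and using $y_i = s^*$ so the $ss^*$ signs align, each patch contributes $\phi'(\langle \vw_{y_i}^{(t)}, \vx_i^{(p)}\rangle)\,\rho_{y_i}^{(t)}(i,p) + \phi'(\langle \vw_{-y_i}^{(t)}, \vx_i^{(p)}\rangle)\,\rho_{-y_i}^{(t)}(i,p)$ (with $\gamma^{(t)}(s^*,k^*)$ in place of $\rho^{(t)}(i,p)$ for $p = p_i^*$), and Lemma~\ref{lemma:approx} identifies this with the corresponding term of $y_i f_{\mW^{(t)}}(\mX_i)$; the cross-noise residuals and feature-noise contributions are $o(1/\polylog(d))$ by \eqref{property:noise_sum}, \ref{assumption:feature_noise}, and Lemma~\ref{lemma:ERM_polytime}. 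Substituting this and then applying convexity of $\ell$ at the anchor $z' = M\beta/4$, namely $\ell(z) \leq \ell(M\beta/4) + g(z)(M\beta/4 - z)$ with $\ell(M\beta/4) \leq e^{-M\beta/4}$, combined with the algebraic split $M\beta/4 - z = (M\beta/2 - z) - M\beta/4$, yields
\begin{equation*}
\frac{\eta}{n}\sum_{i \in \gV_{s^*,k^*}} \ell\!\left(y_i f_{\mW^{(t)}}(\mX_i)\right) \;\leq\; \eta\, e^{-M\beta/4} \;+\; \tfrac{1}{2}\bigl(\Phi_t - \Phi_{t+1}\bigr),
\end{equation*}
provided the slack $\frac{\eta M\beta}{4n}\sum g_i^{(t)}$ absorbs both $\tfrac{1}{2}\|\Delta_t\|^2$ and the $o(1/\polylog(d))\cdot \frac{\eta}{n}\sum g_i^{(t)}$ residual. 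The crude bound $\|\Delta_t\|^2 = \bigOtilde\!\left(\eta^2 P\sigma_\mathrm{d}^2 d \cdot \tfrac{1}{n}\sum g_i^{(t)}\right)$ reduces this absorption requirement to $\eta P\sigma_\mathrm{d}^2 d = o(M\beta)$, which holds by \ref{assumption:etc} and $M\beta = \Omegatilde(1)$. Telescoping over $t \in [T_\ERM, T^*]$ and using $\tfrac{1}{2}\Phi_{T_\ERM} \leq \Phi_{T_\ERM}$ then concludes the proof with slack in both constants on the right-hand side.

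The main technical obstacle is the patch-wise identification $\langle y_i\nabla f_{\mW^{(t)}}(\mX_i), Q(\mW^{(t)})\rangle = y_i f_{\mW^{(t)}}(\mX_i) + o(1/\polylog(d))$: although this is morally the standard ``$\langle \nabla f, \vw\rangle \approx f$'' identity for the near-homogeneous convex $\phi$, the projection $Q$ retains only the $\vv_{s^*,k^*}$-related coefficients, so careful accounting of the feature-noise terms via \ref{assumption:feature_noise} and of the cross noise correlations uniformly over $t \in [T_\ERM, T^*]$ (using Lemma~\ref{lemma:ERM_polytime} to keep all coefficients $\bigOtilde(\beta^{-1})$) is essential.
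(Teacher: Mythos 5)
Your proof is correct and follows the paper's approach: telescope the potential $\Phi_t = \lVert Q(\mW^{(t)}) - Q(\mW^*)\rVert^2$, use Lemma~\ref{lemma:ERM_inner} for the $Q(\mW^*)$ cross-term, identify $\langle y_i\nabla f_{\mW^{(t)}}(\mX_i), Q(\mW^{(t)})\rangle \approx y_if_{\mW^{(t)}}(\mX_i)$ up to $o(1/\polylog(d))$ via the near-homogeneity of $\phi$ (the paper routes this through Lemma~\ref{lemma:activation} applied to the residual $\langle Q(\vw^{(t)}) - \vw^{(t)}, \vx_i^{(p)}\rangle$; your Lemma~\ref{lemma:approx} framing still implicitly needs the same $\lvert\phi(z)-z\phi'(z)\rvert = \bigO(r)$ fact), anchor convexity of $\ell$ at $M\beta/4$, and telescope. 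The one cosmetic difference is that you absorb $\lVert\Delta_t\rVert^2$ into the $M\beta/4$ convexity slack, whereas the paper instead bounds it by $\frac{\eta}{n}\sum\ell$ via Lemma~\ref{lemma:ERM_norm_bound} and subtracts it from the loss sum; both routes have room to spare and yield the stated constants.
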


\begin{proof}[Proof of Lemma~\ref{lemma:ERM_sum}]
    Note that for any $T_\ERM  \leq t < T^*$,
    \begin{equation*}
        Q\left (\mW^{(t+1)} \right) = Q\left (\mW^{(t)} \right) - \frac{\eta}{n} \nabla_\mW \sum_{i \in \gV_{s^*,k^*}} \ell \left(y_i f_{\mW^{(t)}}(\mX_i) \right),
    \end{equation*}
    and thus
    \small
    \begin{align*}
        &\quad \left \lVert Q\left (\mW^{(t)} \right) -  Q\left (\mW^* \right) \right \rVert^2 - \left \lVert Q\left (\mW^{(t+1)} \right) -  Q\left (\mW^* \right) \right \rVert^2\\
        &= \frac{2 \eta}{n} \left \langle \nabla_\mW \sum_{i \in \gV_{s^*,k^*}} \ell \left( y_i f_{\mW^{(t)}}(\mX_i)\right) , Q\left (\mW^{(t)} \right) -  Q\left (\mW^* \right) \right \rangle  - \frac{\eta^2}{n^2} \left \lVert \nabla_{\mW} \sum_{i \in \gV_{s^*,k^*}}\ell \left( y_i f_{\mW^{(t)}}(\mX_i) \right) \right \rVert^2\\
        &= \frac{2\eta}{n} \left \langle \nabla_\mW \sum_{i \in \gV_{s^*,k^*}} \ell (y_i f_{\mW^{(t)}}(\mX_i)), Q \left( \mW^{(t)}\right)\right \rangle \\
        &\quad - \frac{2\eta}{n} \sum_{i \in \gV_{s^*,k^*}} \ell' (y_i f_{\mW^{(t)}}(\mX_i)) \left \langle \nabla_\mW y_i f_{\mW^{(t)}}(\mX_i), Q \left( \mW^* \right)\right \rangle - \frac{\eta^2}{n^2} \left \lVert \nabla_{\mW} \sum_{i \in \gV_{s^*,k^*}}\ell \left( y_i f_{\mW^{(t)}}(\mX_i) \right) \right \rVert^2\\
        & \geq \frac{2\eta}{n} \left \langle \nabla_\mW \sum_{i \in \gV_{s^*,k^*}} \ell (y_i f_{\mW^{(t)}}(\mX_i)), Q \left( \mW^{(t)}\right)\right \rangle\\
        & \quad - \frac{M\beta \eta}{n} \sum_{i \in \gV_{s^*,k^*}} \ell' (y_i f_{\mW^{(t)}}(\mX_i)) - \frac{\eta^2}{n^2} \left \lVert \nabla_{\mW} \sum_{i \in \gV_{s^*,k^*}}\ell \left( y_i f_{\mW^{(t)}}(\mX_i) \right) \right \rVert^2,
    \end{align*}
    \normalsize
    where the last inequality is due to Lemma~\ref{lemma:ERM_inner}.
    By the chain rule, we have
    \small
    \begin{align*}
         &\quad \left \langle \nabla_\mW \sum_{i \in \gV_{s^*,k^*}} \ell (y_i f_{\mW^{(t)}}(\mX_i)), Q \left( \mW^{(t)}\right)\right \rangle\\
         &= \sum_{i \in \gV_{s^*,k^*}} \Bigg [ \ell' (y_i f_{\mW^{(t)}}(\mX_i))\\
         &\quad \times \sum_{p \in [P]}\bigg( \phi' \left( \left \langle \vw_{s^*}^{(t)} , \vx_i^{(p)}\right \rangle \right) \left \langle Q_{s^*}\left(\vw_{s^*}^{(t)}\right) ,\vx_i^{(p)}\right \rangle  -\phi' \left( \left \langle \vw_{-s^*}^{(t)} , \vx_i^{(p)}\right \rangle \right) \left \langle Q_{-s^*}\left(\vw_{-s^*}^{(t)}\right) , \vx_i^{(p)}\right \rangle \bigg) \Bigg].
    \end{align*}
    \normalsize
    For each $s \in \{\pm 1\}$, $i \in \gV_{s^*,k^*}$, and $p \in [P]$,
    \begin{align*}
        &\quad \left |\left \langle \vw_{s}^{(t)} , \vx_i^{(p)} \right \rangle  - \left \langle Q_{s}\left( \vw_{s}^{(t)} \right) , \vx_i^{(p)} \right \rangle \right|\\
        &= \left | \left \langle  \vw_{s}^{(t)} - Q_{s} \left( \vw_{s}^{(t)}\right), \vx_i^{(p)} \right \rangle \right| \\
        &\leq   \sum_{j \in [n] \setminus \gV_{s^*,k^*} , q \in [P] \setminus \{p_i^*\}}  \left | \left \langle\rho^{(t)}_{s}(j,q) \frac{\xi_j^{(q)}}{ \left \lVert \xi_j^{(q)}\right \rVert^2 } , \vx_i^{(p)}\right \rangle \right| \\
        &\quad + \alpha \sum_{ j \in \gF_{1} \setminus \gV_{s^*,k^*}} \rho_s^{(t)}(j,\tilde{p}_j)\left \lVert \xi_j^{(\tilde{p}_j)}\right \rVert^{-2} \left| \left \langle \vv_{1,1}  ,\vx_i^{(p)}\right \rangle \right|\\
        &\quad + \alpha \sum_{ j \in \gF_{-1} \setminus \gV_{s^*,k^*}} \rho_s^{(t)}(j,\tilde{p}_j)\left \lVert \xi_j^{(\tilde{p}_j)}\right \rVert^{-2} \left| \left \langle \vv_{-1,1}  ,\vx_i^{(p)}\right \rangle \right|\\
        &\leq \bigOtilde \left (n P \beta^{-1} \sigma_\mathrm{d} \sigma_\mathrm{b}^{-1} d^{-\frac 1 2} \right) + \bigOtilde\left(\alpha^2 \beta^{-1} n \sigma_\mathrm{d}^{-2}d^{-1}\right)\\
        &= o \left( \frac{1}{\polylog(d)}\right),
    \end{align*}
    where the last inequality is due to Lemma~\ref{lemma:ERM_polytime} and the event $E_\mathrm{init}$. By Lemma~\ref{lemma:activation}, 
    \begin{align*}
        &\quad \sum_{p \in [P]} \bigg( \phi' \left( \left \langle \vw_{s^*}^{(t)}, \vx_i^{(p)}\right \rangle \right) \left \langle Q_{s^*} \left( \vw_{s^*}^{(t) }\right), \vx_i^{(p)} \right \rangle  - \phi' \left( \left \langle \vw_{-s^*}^{(t)}, \vx_i^{(p)}\right \rangle \right) \left \langle Q_{-s^*} \left( \vw_{s^*}^{(t) }\right), \vx_i^{(p)} \right \rangle \bigg)\\
        &\leq \sum_{p \in [P]} \bigg( \phi \left( \left \langle \vw_{s^*}^{(t)}, \vx_i^{(p)}\right \rangle \right) - \phi \left( \left \langle \vw_{-s^*}^{(t)} , \vx_i^{(p)} \right \rangle  \right) \bigg) 
        + rP
        + o \left(\frac{1} {\polylog(d)}\right)\\
        &= y_i f_{\mW^{(t)}}(\mX_i) + o \left(\frac{1} {\polylog(d)}\right)
    \end{align*}
    where the last equality is due to 
    $r = o \left(\frac{1} {\polylog(d)}\right)$.
    Therefore, we have
    \begin{align*}
        &\quad \left \lVert Q \left( \mW^{(t)}\right) - Q \left( \mW^*\right)\right \rVert^2 - \left \lVert Q \left( \mW^{(t+1)}\right) - Q(\mW^*)\right \rVert^2\\
        &\geq \frac{2\eta}{n} \sum_{i \in \gV_{s^*,k^*}} \ell' \left(y_i f_{\mW^{(t)}}(\mX_i) \right) \left(y_i f_{\mW^{(t)}}(\mX_i) +o \left(\frac{1} {\polylog(d)}\right)
        - \frac{M \beta}{2} \right)\\
        &\quad - \frac{\eta^2}{n^2} \left \lVert \nabla_\mW \sum_{i \in \gV_{s^*,k^*}} \ell (y_i f_{\mW^{(t)}} (\mX_i))\right \rVert^2\\
        &\geq \frac{2 \eta}{n} \sum_{i \in \gV_{s^*,k^*}} \ell' (y_i f_{\mW^{(t)}}(\mX_i)) \left( y_i f_{\mW^{(t)}}(\mX_i) - \frac{M\beta}{4} \right) \\
        &\quad - \frac{\eta^2}{n^2} \left \lVert \nabla_\mW \sum_{i \in \gV_{s^*,k^*}} \ell (y_i f_{\mW^{(t)}}(\mX_i)) \right \rVert^2.
    \end{align*}
    
    From the convexity of $\ell(\cdot)$, 
    \begin{align*}
        \sum_{i \in \gV_{s^*,k^*}} \ell' (y_i f_{\mW^{(t)}}(\mX_i)) \left(y_i f_{\mW^{(t)}}(\mX_i) - \frac{M\beta}{4}  \right) &\geq \sum_{i \in \gV_{s^*,k^*}} \left(\ell (y_i f_{\mW^{(t)}}(\mX_i)) - \ell \left( \frac{M\beta}{4} \right)\right) \\
        &\geq \sum_{i \in \gV_{s^*,k^*}} \ell(y_i f_{\mW^{(t)}}(\mX_i)) - n e^{-\frac{M\beta}{4} } .
    \end{align*}
    In addition, by Lemma~\ref{lemma:ERM_norm_bound},
    \begin{align*}
        \frac{\eta^2 }{n^2} \left \lVert \nabla \sum_{i \in \gV_{s^*,k^*}} \ell \left(y_i f_{\mW^{(t)}}(\mX_i) \right)\right \rVert^2 &\leq \frac{8\eta^2 P^2 \sigma_\mathrm{d}^2 d |\gV_{s^*,k^*}|}{n^2} \sum_{i \in \gV_{s^*,k^*}} \ell (y_i f_{\mW^{(t)}}(\mX_i) )\\
        &\leq \frac{\eta}{n} \sum_{i \in \gV_{s^*,k^*}} \ell (y_i f_{\mW^{(t)}}(\mX_i)),
    \end{align*}
    where the last inequality is due to \ref{assumption:etc}, and we have
    \begin{align*}
        &\quad \left \lVert Q \left( \mW^{(t)}\right) - Q(\mW^*) \right \rVert^2 - \left \lVert Q \left( \mW^{(t+1)}\right) - Q(\mW^*) \right\rVert^2\\
        &\geq \frac{\eta}{n} \sum_{i \in \gV_{s^*,k^*}} \ell (y_i f_{\mW^{(t)}}(\mX_i)) - 2 \eta e^{- \frac{M\beta}{4}  }.
    \end{align*}
    From telescoping summation, we have
    \begin{equation*}
         \frac{\eta}{n} \sum_{t = T_\ERM }^{T^*} \sum_{i \in \gV_{s^*,k^*}} \ell \left(y_i f_{\mW^{(t)}}(\mX_i) \right )
        \leq \left \lVert Q \left( \mW^{(T_\ERM)}\right) -  Q \left( \mW^* \right) \right \rVert^2 + 2 \eta T^* e^{- \frac{M\beta}{4} }.
    \end{equation*}
\end{proof}
Finally, we can prove that the model cannot learn (extremely) rare features within $T^*$ iterations.
\begin{lemma}\label{lemma:ERM_not_learn}
    Suppose the event $E_\mathrm{init}$ occurs. For any $T \in [T_\ERM ,  T^*]$, we have $\gamma_{s}^{(T)}(s^*,k^*) = \bigOtilde \left( \alpha^2 \beta^{-2}\right)$ for each $s \in \{\pm 1\}$.
\end{lemma}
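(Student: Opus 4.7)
The plan is to bound the cumulative growth of $\gamma_s^{(t)}(s^*,k^*)$ over the interval $[T_\ERM, T]$ by the total logistic loss on data points with feature $\vv_{s^*,k^*}$, and then invoke the machinery built in Lemmas~\ref{lemma:ERM_distance}--\ref{lemma:ERM_sum} to control that cumulative loss. Starting from the update rule \eqref{eq:ERM_feature} and using $\phi' \leq 1$, we have
\begin{equation*}
    \gamma_s^{(T)}(s^*,k^*) - \gamma_s^{(T_\ERM)}(s^*,k^*) \leq \frac{\eta}{n}\sum_{t=T_\ERM}^{T-1}\sum_{i \in \gV_{s^*,k^*}} g_i^{(t)}.
\end{equation*}
The standard logistic-loss inequality $g_i^{(t)} = -\ell'(y_i f_{\mW^{(t)}}(\mX_i)) \leq \ell(y_i f_{\mW^{(t)}}(\mX_i))$ (which follows from $\log(1+x)\geq x/(1+x)$) then upper bounds the right-hand side by $\frac{\eta}{n}\sum_{t=T_\ERM}^{T-1}\sum_{i \in \gV_{s^*,k^*}} \ell(y_i f_{\mW^{(t)}}(\mX_i))$, which is exactly the quantity controlled by Lemma~\ref{lemma:ERM_sum}.

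Applying Lemma~\ref{lemma:ERM_sum} followed by Lemma~\ref{lemma:ERM_distance} yields
\begin{equation*}
    \gamma_s^{(T)}(s^*,k^*) - \gamma_s^{(T_\ERM)}(s^*,k^*) \leq 12 M^2 |\gV_{s^*,k^*}|\sigma_\mathrm{d}^{-2}d^{-1} + 2\eta T^* e^{-M\beta/4}.
\end{equation*}
The choice $M = 4\beta^{-1}\log(2\eta\beta^2 T^*/\alpha^2)$ is designed precisely so that the second term becomes $\alpha^2\beta^{-2}$. For the first term, $|\gV_{s^*,k^*}| = \Theta(\rho_{k^*} n)$ together with \ref{assumption:rare} (or \ref{assumption:extreme}) gives $\rho_{k^*} n \sigma_\mathrm{d}^{-2} d^{-1} = o(\alpha^2/\polylog(d))$, and since $M = \bigOtilde(\beta^{-1})$, the first term is $o(\alpha^2\beta^{-2})$. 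Combining with the bound $\gamma_s^{(T_\ERM)}(s^*,k^*) \leq \alpha^2\beta^{-1}$ established in Lemma~\ref{lemma:ERM_feature_small} (and used in the ``what we have so far'' summary at the end of Section~\ref{sec:ERM_overfit}) — which is itself $o(\alpha^2\beta^{-2})$ since $\beta \leq 1$ — we conclude $\gamma_s^{(T)}(s^*,k^*) = \bigOtilde(\alpha^2\beta^{-2})$.

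The main obstacle I anticipate is purely bookkeeping: verifying that the choice of $M$ indeed lies in the regime $M = \bigOtilde(\beta^{-1})$ required by Lemma~\ref{lemma:ERM_polytime} and Lemma~\ref{lemma:approx} (used implicitly inside the previous lemmas), and confirming that the two polylogarithmic/constant error terms from Lemma~\ref{lemma:ERM_distance} and from the $e^{-M\beta/4}$ tail genuinely balance at the order $\alpha^2\beta^{-2}$. This amounts to checking that $\log(\eta T^*) = \polylog(d)$ so that $M$ stays polylogarithmic, and that the $\alpha^2$-suppressed conditions in \ref{assumption:rare} and \ref{assumption:extreme} leave enough slack — both of which are direct consequences of Assumption~\ref{assumption:parameters} and the bound $T^* = \poly(d)/\eta$. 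Once these scalings are verified, the chain of inequalities above closes immediately.
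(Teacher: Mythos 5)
Your proposal is correct and follows essentially the same route as the paper's proof: bound the increment via \eqref{eq:ERM_feature} and $\phi' \le 1$, pass from $g_i^{(t)}$ to $\ell$ via $-\ell' \le \ell$, invoke Lemma~\ref{lemma:ERM_sum} and Lemma~\ref{lemma:ERM_distance}, use the specific choice of $M$ to kill the $e^{-M\beta/4}$ tail, and apply \ref{assumption:rare}/\ref{assumption:extreme} to the $M^2|\gV_{s^*,k^*}|\sigma_\mathrm{d}^{-2}d^{-1}$ term. The scaling checks you flag as ``bookkeeping'' are exactly what the paper verifies and they close as you expect.
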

\begin{proof}[Proof of Lemma~\ref{lemma:ERM_not_learn}]
    For any $T \in [T_\ERM, T^*]$,  we have
    \begin{align*}
        \gamma_{s}^{(T)} (s,k) &= \gamma_{s} ^{(T_\ERM)}(s^*,k^*) + \frac \eta n \sum_{t = T_\ERM }^{ T -1} \sum_{i \in \gV_{s^*,k^*}} g_i^{(t)} \phi' \left( \left \langle \vw_{s}^{(t) } , \vv_{s^*,k^*} \right \rangle \right) \\
        &\leq \gamma_{s} ^{(T_\ERM )}(s^*,k^*) + \frac \eta n \sum_{t = T_\ERM }^{ T -1} \sum_{i \in \gV_{s^*,k^*}} g_i^{(t)} \\
        &\leq \gamma_{s} ^{(T_\ERM )}(s^*,k^*) + \frac \eta n \sum_{t = T_\ERM }^{ T -1} \sum_{i \in \gV_{s^*,k^*}} \ell \left(y_i f_{\mW^{(t)}}(\mX_i) \right),
    \end{align*}
    where the first inequality is due to $\phi' \leq 1$ and the second inequality is due to $-\ell' \leq \ell$. From the result of Section~\ref{sec:ERM_overfit} we know $\gamma_{s}^{(T_\ERM)}(s^*,k^*) \leq \alpha^2\beta^{-1}$. Additionally, by Lemma~\ref{lemma:ERM_sum} and Lemma~\ref{lemma:ERM_distance}, we have
    \begin{align*}
        \frac \eta n \sum_{t = T_\ERM}^{(T-1)} \sum_{i \in \gV_{s^*,k^*} }\ell \left(y_i f_{\mW^{(t)}}(\mX_i) \right) &\leq \frac \eta n \sum_{t = T_\ERM}^{(T^*)} \sum_{i \in \gV_{s^*,k^*} }\ell \left(y_i f_{\mW^{(t)}}(\mX_i) \right)\\
        &\leq \left \lVert Q \left( \mW^{(T_\ERM)}\right) - Q(\mW^*)  \right \rVert^2 + 2\eta T^* e^{-\frac{M \beta}{4} }\\
        & \leq 12 M^2 |\gV_{s^*,k^*}| \sigma_\mathrm{d}^{-2} d^{-1} +  2 \eta T^* e^{-\frac{M \beta}{4} }\\
        &= \bigOtilde \left( \alpha^2 \beta^{-2} \right).
    \end{align*}
    The last line is due to \ref{assumption:rare} and our choice $M = 4 \beta^{-1} \log \left( \frac{2 \eta \beta^2 T^*}{\alpha^2}\right)$. Thus, we have our conclusion.
\end{proof}

\paragraph{What We Have So Far.} Suppose the event $E_\mathrm{init}$ occurs. For any $t \in [T_\ERM,  T^*]$, we have
\begin{itemize}[leftmargin = 4mm]
    \item (Learn common features): For each $s \in \{\pm1\}$ and $k \in \gK_C$,
    \begin{equation*}
        \gamma^{(t)}_{s}(s,k) + \beta \gamma^{(t)}_{-s}(s,k) = \Omega(1).
    \end{equation*}
    \item (Overfit (extremely) rare data): For each $s \in \{\pm 1\}, k \in \gK_R \cup \gK_E$ and $i \in \gV_{s,k}$,
    \begin{equation*}
        \sum_{p \in [P] \setminus \{p_i^*\}}\left(\rho^{(t)}_s(i,p) + \beta \rho^{(t)}_{-s}(i,p)\right) = \Omega(1).
    \end{equation*}
    \item (Cannot learn (extremely) rare features): For each $s \in \{\pm 1\}$ and $k \in \gK_R \cup \gK_E$,
    \begin{equation*}
        \gamma_s^{(t)}(s,k), \gamma_{-s}^{(t)}(s,k) =\bigO\left(\alpha^2 \beta^{-2}\right).
    \end{equation*}
    \item For any $s \in \{\pm 1\}, i \in [n],$ and $p \in [P] \setminus \{p_i^*\}, \rho_{s}^{(t)}(i,p) = \bigOtilde\left( \beta^{-1} \right)$,
\end{itemize}

\subsubsection{Train and Test Accuracy}\label{sec:ERM_acc}
In this step, we will prove that the model trained by \textsf{ERM} has perfect training accuracy but has near-random guesses on (extremely) rare data.

For any $i \in \gV_{s,k}$ with $s \in \{\pm 1\}$ and $k \in \gK_C$, by Lemma~\ref{lemma:approx}, we have
\begin{align*}
    &\quad y_i f_{\mW^{(t)}}(\mX_i)\\
    &=  \sum_{p \in [P]} \left( \phi \left( \left \langle \vw_s^{(t)}, \vx_i^{(p)} \right \rangle \right)  - \phi \left( \left \langle \vw_{-s}^{(t)}, \vx_i^{(p)} \right \rangle \right) \right)\\
    &\geq \gamma_s^{(t)}(s,k) + \beta \gamma_{-s}^{(t)}(s,k) + \sum_{p \in [P] \setminus \{p_i^*\}} \left(\rho_s^{(t)}(i,p)+ \beta \rho_{-s}^{(t)}(i,p) \right) - 2P \cdot o \left(\frac{1}{\polylog(d)} \right)\\
    &\geq \gamma_s^{(t)}(s,k) + \beta \gamma_{-s}^{(t)}(s,k) - o \left(\frac{1}{\polylog(d)} \right)\\
    &= \Omega(1)  - o \left(\frac{1}{\polylog(d)} \right) \\
    &>0,
\end{align*}
for any $t \in [T_\ERM, T^*]$. In addition, for any $i \in \gV_{s,k}$ with $s \in \{\pm 1\}$ and $k \in \gK_R \cup \gK_E$, we have
\begin{align*}
    &\quad y_i f_{\mW^{(t)}}(\mX_i) \\
    &=  \sum_{p \in [P]} \bigg( \phi \left( \left \langle \vw_s^{(t)}, \vx_i^{(p)} \right \rangle \right)  - \phi \left( \left \langle \vw_{-s}^{(t)}, \vx_i^{(p)} \right \rangle \right) \bigg)\\
    &= \gamma_s^{(t)}(s,k) + \beta \gamma_{-s}^{(t)}(s,k) + \sum_{p \in [P] \setminus \{p_i^*\}} \left(\rho_s^{(t)}(i,p)+ \beta \rho_{-s}^{(t)}(i,p) \right)- 2P \cdot o \left(\frac{1}{\polylog(d)} \right)\\
    &\geq \sum_{p \in [P] \setminus \{p_i^*\}} \left(\rho_s^{(t)}(i,p)+ \beta \rho_{-s}^{(t)}(i,p) \right)-  o \left(\frac{1}{\polylog(d)} \right)\\
    &= \Omega(1) -  o \left(\frac{1}{\polylog(d)} \right) \\
    &>0,
\end{align*}
for any $t \in [T_\ERM, T^*]$. We can conclude that \textsf{ERM} with $t\in [T_\ERM, T^*]$ iterates achieve perfect training accuracy. 

Next, let us move on to the test accuracy part. Let $(\mX,y) \sim \gD$ be a test data with $\mX = \left( \vx^{(1)}, \dots, \vx^{(P)}\right) \in \R^{d \times P}$ having feature patch index $p^*$, dominant noise patch index $\tilde{p}$, and feature vector $\vv_{y,k}$.  We have $\vx^{(p)} \sim N(\vzero, \sigma_\mathrm{b}^2 \mLambda)$ for each $p \in [P] \setminus \{p^*, \tilde p\}$ and $\vx^{(\tilde p )} - \alpha \vv_{s,1} \sim N(\vzero, \sigma_\mathrm{d}^2 \mLambda)$ for some $s \in \{\pm 1\}$. Therefore, for all $t \in [T_\ERM, T^*]$ and $p \in [P] \setminus \{p^*, \tilde p\}$,
\begin{align}
    &\quad  \left | \phi \left ( \left \langle \vw_1^{(t)}, \vx^{(p)} \right \rangle \right ) - \phi \left ( \left \langle \vw_{-1}^{(t)} , \vx^{(p)}\right \rangle \right) \right| \nonumber \\
    &\leq \left|  \left \langle \vw_1^{(t)} - \vw_{-1}^{(t)} ,  \vx^{(p)} \right \rangle\right| \nonumber \\
    &\leq \left| \left \langle \vw_1^{(0)} - \vw_{-1}^{(0)} , \vx^{(p)} \right \rangle \right| +  \sum_{i \in [n], q \in [P] \setminus \{p_i^*\}} \left|\rho_1^{(t)}(i,q) - \rho_{-1}^{(t)}(i,q)\right| \frac{\left| \left \langle \xi_i^{(q)} , \vx^{(p)}\right \rangle \right|}{\left \lVert \xi_i^{(q)}\right \rVert^2}\nonumber \\
    &\leq \bigOtilde \left(\sigma_0 \sigma_\mathrm{b} d^{\frac 1 2 }\right) + \bigOtilde \left( n P \beta^{-1} \sigma_\mathrm{d} \sigma_\mathrm{b}^{-1} d^{-\frac 1 2 } \right)\nonumber \\
    &= o \left( \frac{\alpha}{\polylog (d)} \right), \label{eq:ERM_background_noise}
\end{align}
with probability at least $1- o \left( \frac 1 {\poly(d)} \right)$ due to Lemma~\ref{lemma:initial}, \ref{assumption:etc}, \eqref{property:noise}, and \eqref{property:noise_sum}.
In addition, for any $s' \in \{\pm 1\}$, we have
\begin{align}\label{eq:ERM_dominant_noise}
    &\quad \left| \left \langle  \vw_{s'}^{(t)}, \vx^{(\tilde p) } - \alpha \vv_{s,1}\right \rangle\right|\nonumber \\
    &\leq \left| \left \langle \vw_{s'}^{(0)}, \vx^{(\tilde p)}-\alpha \vv_{s,1} \right \rangle \right| +  \sum_{i \in [n], q \in [P] \setminus \{p_i^*\}} \rho_{s'}^{(t)}(i,q) \frac{\left| \left \langle \xi_i^{(q)} , \vx^{(\tilde p)}-\alpha \vv_{s,1}\right \rangle \right|}{\left \lVert \xi_i^{(q)}\right \rVert^2}\nonumber\\   
    &= \bigOtilde \left(\sigma_0 \sigma_\mathrm{d} d^{\frac 1 2}\right)+ \bigOtilde \left(n P\beta^{-1} \sigma_\mathrm{d} \sigma_\mathrm{b}^{-1} d^{ - \frac 1 2 }\right) \nonumber \\
    & =o \left( \frac{\alpha}{\polylog(d) }\right) ,
\end{align}
with probability at least $1- o \left( \frac 1 {\poly (d)}\right)$ due to Lemma~\ref{lemma:initial}, \ref{assumption:etc}, \eqref{property:noise}, and \eqref{property:noise_sum}. 
\paragraph{Case 1: $k \in \gK_C$} \quad 

By Lemma~\ref{lemma:initial}, \ref{assumption:etc}, and \eqref{property:feature_noise}, 
\begin{align}
    &\quad \left| \phi \left( \left \langle \vw_1^{(t)}, \vx^{(\tilde p)} \right \rangle \right) - \phi \left( \left \langle \vw_{-1}^{(t)}, \vw^{(\tilde p)} \right \rangle \right)\right| \nonumber\\
    &\leq  \left| \left \langle \vw_1^{(t)} - \vw_{-1}^{(t)} , \vx^{ (\tilde p)} \right \rangle \right|\nonumber \\
    &\leq \alpha \left|\left \langle \vw_1^{(t)} - \vw_{-1}^{(t)}, \vv_{s,1} \right \rangle  \right| + \left| \left \langle \vw_1^{(t)} - \vw_{-1}^{(t)}, \vx^{(p)} - \alpha \vv_{s,1}\right  \rangle \right| \nonumber \\
    &\leq  \alpha \left(\gamma^{(t)}_1(s,1) + \gamma^{(t)}_{-1}(s,1)\right) + \alpha \left|\left \langle \vw_1^{(0)}, \vv_{s,1} \right \rangle  \right| + \alpha \left|\left \langle \vw_{-1}^{(0)}, \vv_{s,1} \right \rangle  \right| + o \left( \frac{1}{\polylog (d)}\right)\nonumber \\
    &\leq \bigOtilde \left(\alpha \beta^{-1}\right)+ \bigOtilde \left (\alpha \sigma_0 \right)  +   o \left( \frac{1}{\polylog (d)}\right)\nonumber \\
    &= o \left( \frac{1}{\polylog (d)}\right),\label{eq:ERM_dominant_noise_bound}
    \end{align}
with probability at least $1-o \left ( \frac 1 {\poly (d)}\right)$. Suppose \eqref{eq:ERM_background_noise} and \eqref{eq:ERM_dominant_noise_bound} holds. By Lemma~\ref{lemma:approx}, we have
\begin{align*}
    &\quad y f_{\mW^{(t)}} (\mX)\\
    &= \bigg( \phi \left( \left \langle \vw_{y}^{(t)}, \vv_{y,k}\right \rangle \right) \ - \phi \left( \left \langle \vw_{-y}^{(t)}, \vv_{y,k} \right \rangle \right) \bigg) \\
    &\quad + \sum_{p \in [P] \setminus \{p^*\}}\bigg( \phi \left( \left \langle  \vw_y^{(t) },\vx^{(p)} \right \rangle \right) - \phi \left( \left \langle  \vw_{-y}^{(t) },\vx^{(p)} \right \rangle \right) \bigg)\\
    &= \gamma_{y}^{(t)}(y,k) + \beta \gamma_{-y}^{(t)}(y,k) - o \left( \frac{1}{\polylog (d)}\right)\\
    &= \Omega(1) - o \left( \frac{1}{\polylog (d)}\right)\\
    &> 0.
\end{align*}
Therefore, we have
\begin{equation}\label{eq:ERM_learn_acc}
    \mathbb{P}_{(\mX,y)\sim \gD}\left [y f_{\mW^{(t)}}(\mX)>0 \mid \vx^{(p^*)} = \vv_{y,k}, k \in \gK_C \right] \geq 1- o \left( \frac 1 {\poly (d)}\right).
\end{equation}
\paragraph{Case 2: $k \in \gK_R \cup \gK_E$}\quad
By triangular inequality and $\phi' \leq 1$, we have
\begin{align*}
    &\quad \phi \left( \left \langle \vw_s^{(t)}, \vx^{(\tilde p)}\right \rangle \right) - \phi \left( \left \langle \vw_{-s}^{(t)}, \vx^{(\tilde p)} \right \rangle \right)\\
    & = \phi \left( \left \langle \vw_s^{(t)}, \alpha \vv_{s,1} \right \rangle\right) - \phi \left( \left \langle \vw_{-s}^{(t)}, \alpha \vv_{s,1} \right \rangle \right)\\
    &\quad + \bigg( \phi \left( \left \langle \vw_s^{(t)}, \vx^{(\tilde p)}\right \rangle \right) - \phi \left(\left \langle  \vw_s^{(t)}, \alpha \vv_{s,1}\right \rangle \right) \bigg) - \bigg( \phi \left( \left \langle \vw_{-s}^{(t)}, \vx^{(\tilde p)}\right \rangle \right) - \phi \left( \left \langle \vw_{-s}^{(t)} , \alpha \vv_{s,1} \right \rangle \right) \bigg)\\
    &\geq \phi \left( \left \langle \vw_s^{(t)}, \alpha \vv_{s,1} \right \rangle\right) - \phi \left( \left \langle \vw_{-s}^{(t)}, \alpha \vv_{s,1} \right \rangle \right)\\
    &\quad - \left| \left \langle \vw_s^{(t)}, \vx^{(\tilde p )} - \alpha \vv_{s,1} \right \rangle \right| - \left| \left \langle \vw_{-s}^{(t)}, \vx^{(\tilde p )} - \alpha \vv_{s,1} \right \rangle \right|
    .
    \end{align*}
    In addition,
    \begin{align*}
        &\quad \phi \left( \left \langle \vw_s^{(t)}, \alpha \vv_{s,1} \right \rangle\right) - \phi \left( \left \langle \vw_{-s}^{(t)}, \alpha \vv_{s,1} \right \rangle \right)\\
        &= \bigg (\phi \left( \alpha \gamma_s^{(t)}(s,1) \right)  -  \phi \left(  - \alpha \gamma_{-s}^{(t)}(s,1) \right)\bigg) \\
        &\quad + \bigg(\phi \left( \left \langle \vw_s^{(t)}, \alpha \vv_{s,1} \right \rangle\right) - \phi \left(  \alpha \gamma_s^{(t)}(s,1) \right) \bigg)\\
        &\quad - \bigg(\phi \left( \left \langle \vw_{-s}^{(t)}, \alpha \vv_{s,1} \right \rangle\right) - \phi \left( - \alpha \gamma_{-s}^{(t)}(s,1) \right) \bigg)\\
        &\geq \bigg (\phi \left( \alpha \gamma_s^{(t)}(s,1) \right)  -  \phi \left(  -\alpha \gamma_{-s}^{(t)}(s,1) \right)\bigg)\\
        &\quad - \alpha\left | \left \langle \vw_s^{(t)}, \vv_{s,1} \right \rangle - \gamma_s^{(t)}(s,1)\right|- \alpha \left| \left \langle \vw_{-s}^{(t)}, \vv_{s,1}\right \rangle + \gamma_{-s}^{(t)}(s,1) \right|\\
        &= \alpha \left( \gamma_s^{(t)}(s,1) + \beta \gamma_{-s}^{(t)}(s,1)\right) - \alpha \cdot o \left( \frac{1}{\polylog (d)}\right)\\
        &= \Omega(\alpha),
    \end{align*}
where the second equality is due to Lemma~\ref{lemma:approx} and \ref{assumption:etc}.
If \eqref{eq:ERM_dominant_noise} holds, we have
    \begin{equation}\label{eq:ERM_feature_noise_dominate}
        \phi \left( \left \langle \vw_s^{(t)}, \vx^{(\tilde p)} \right \rangle\right) - \phi \left( \left \langle \vw_{-s}^{(t)}, \vx^{(\tilde p)} \right \rangle \right) = \Omega(\alpha) - o \left( \frac{\alpha}{\polylog (d)} \right) = \Omega(\alpha).
    \end{equation}
Note that
\begin{align*}
    &\quad y f_{\mW^{(t)}}(\mX)\\
    & = \phi \left ( \left \langle \vw_y^{(t)}, \vv_{y,k} \right \rangle \right) - \phi \left( \left \langle \vw_{-y}^{(t)}, \vv_{y,k} \right \rangle \right) + \phi \left( \left \langle \vw_y^{(t)}, \vx^{(\tilde p) }\right \rangle \right) - \phi \left( \left \langle \vw_{-y}^{(t)}, \vx^{(\tilde p) }\right \rangle \right) \\
    &\quad + \sum_{p \in [P] \setminus\{p^*, \tilde p\}}  \bigg(\phi \left( \left \langle \vw_y^{(t)}, \vx^{(p)} \right \rangle \right) - \phi \left( \left \langle \vw_{-y}^{(t)}, \vx^{(p)} \right \rangle \right) \bigg),
\end{align*}
and
\begin{align*}
    & \quad \left| \phi \left ( \left \langle \vw_y^{(t)}, \vv_{y,k} \right \rangle \right) - \phi \left ( \left \langle \vw_{-y}^{(t)}, \vv_{y,k} \right \rangle \right)\right|\\
    &\quad + \left| \sum_{p \in [P] \setminus\{p^*, \tilde p\}}  \bigg(\phi \left( \left \langle \vw_y^{(t)}, \vx^{(p)} \right \rangle \right) - \phi \left( \left \langle \vw_{-y}^{(t)}, \vx^{(p)} \right \rangle \right) \bigg)\right|\\
    &\leq \left| \left \langle \vw_y^{(t)} - \vw_{-y}^{(t)}, \vv_{y,k} \right \rangle \right| + o \left( \frac{\alpha}{\polylog (d)}\right) \\
    & \leq \gamma^{(t)}_1(y,k) + \gamma_{-1}^{(t)}(y,k) + \left| \left \langle \vw_y^{(0)} - \vw_{-y}^{(0)}, \vv_{y,k} \right \rangle \right| + o \left( \frac{\alpha}{\polylog (d)}\right)\\
    & \leq \bigO(\alpha^2 \beta^{-2}) + \bigOtilde(\sigma_0) + o \left(\frac{\alpha}{\polylog (d)}  \right)\\
    &= o \left( \frac{\alpha}{\polylog (d)} \right)\\
    &< \phi \left( \left \langle \vw_s^{(t)}, \vx^{(\tilde p) }\right \rangle \right) - \phi \left( \left \langle \vw_{-s}^{(t)}, \vx^{(\tilde p) }\right \rangle \right),
\end{align*}
where the first inequality is due to \eqref{eq:ERM_background_noise}, second-to-last line is due to \ref{assumption:etc}, \eqref{property:noise} and \eqref{property:feature_noise}, and the last inequality is due to \eqref{eq:ERM_feature_noise_dominate}. 
Therefore, we have $y f_{\mW^{(t)}}(\mX)>0$ if $y=s$. Otherwise, $y f_{\mW^{(t)}}(\mX)<0$. Therefore, we have
\begin{equation}\label{eq:ERM_not_learn_acc}
    \mathbb{P}_{(\mX,y) \sim \gD} \left[ y f_{\mW^{(t)}}(\mX)>0 \mid \vx^{(p^*)}  = \vv_{y,k}, k \in \gK_R \cup \gK_E \right] =  \frac 1 2  \pm o \left( \frac 1 {\poly (d)} \right).
\end{equation}
Hence, combining \eqref{eq:ERM_learn_acc} and \eqref{eq:ERM_not_learn_acc} implies
\begin{align*}
    \mathbb{P}_{(\mX,y) \sim \gD} \left[ y f_{\mW^{(t)}}(\mX)>0  \right] &=  \sum_{k \in \gK_C} \rho_k +\frac{1}{2} \left( 1- \sum_{k \in \gK_C} \rho_k\right) \pm o \left( \frac{1}{\poly (d)}\right) \\
    &=  1 -  \frac{1}{2} \sum_{k \in \gK_R \cup \gK_E} \rho_k \pm o \left( \frac 1 {\poly (d)}\right).
\end{align*}
\hfill $\square$
\clearpage
\section{Proof for Cutout}
In this section, we use $g_{i,\gC}^{(t)}:= \frac{1}{1+ \exp \left(y_i f_{\mW^{(t)}}(\mX_{i,\gC}) \right)}$ for each data $i$, $\gC \subset [P]$ with $|\gC| = C$ and iteration $t$, for simplicity. 

\subsection{Proof of Lemma~\ref{lemma:decomposition} for \textsf{Cutout}}\label{sec:Cutout_decompose}

For $s \in \{\pm 1\}$ and iterate $t$, 
\begin{align*}
    &\quad \vw_s^{(t+1)}-\vw_s^{(t)} \\
    &= -\eta \nabla_{\vw_s} \gL_\Cutout\left (\mW^{(t)} \right) \\
    & = \frac{\eta}{n} \sum_{i \in [n]} sy_i  \mathbb{E}_{\gC \sim \gD_\gC} \left[g_{i, \gC}^{(t)} \sum_{p \notin \gC} \phi'\left( \left \langle \vw_s^{(t)} , \vx_i^{(p)} \right \rangle \right) \vx_i^{(p)}\right]\\
    &= \frac{\eta}{n} \left( \sum_{i \in \gV_s} \mathbb{E}_{\gC \sim \gD_\gC} \left[ g_{i,\gC}^{(t)} \sum_{p \notin \gC} \phi' \left( \left \langle \vw_s^{(t)} , \vx_i^{(p)} \right \rangle \right) \vx_i^{(p)} \right] \right.\\
    &\quad \quad \quad - \left. \sum_{i \in \gV_{-s}} \mathbb{E}_{\gC \sim \gD_\gC} \left[ g_{i,\gC}^{(t)} \sum_{p \notin \gC} \phi' \left( \left \langle \vw_s^{(t)}  , \vx_i^{(p)} \right \rangle \right)\vx_i^{(p)}\right] \right),
\end{align*}
and we have
\begin{align*}
    &\quad \sum_{i \in \gV_s} \mathbb{E}_{\gC \sim \gD_\gC} \left[g_{i,\gC}^{(t)}  \sum_{p \notin \gC} \phi'\left( \left \langle \vw_s^{(t)} , \vx_i^{(p)} \right \rangle \right)\vx_i^{(p)} \right]\\
    &= \sum_{k \in [K]}\sum_{i \in \gV_{s,k}} \mathbb{E}_{\gC \sim \gD_\gC} \left[ g_{i,\gC}^{(t)} \phi'\left( \left \langle \vw_s^{(t)} , \vv_{s,k} \right \rangle \right)  \cdot \mathbbm{1}_{p_i^* \notin \gC}\right] \vv_{s,k} \\
    &\quad +  \sum_{i \in \gV_{s}}  \sum_{p \in [P] \setminus \{p_i^*, \tilde{p}_i\}} \mathbb{E}_{\gC \sim \gD_\gC} \left[ g_{i,\gC}^{(t)}\phi' \left ( \left \langle \vw_s^{(t)} , \xi_i^{(p)}\right \rangle \right)  \cdot \mathbbm{1}_{p \notin \gC}\right] \xi_i^{(p)}\\
    &\quad + \sum_{i \in \gV_{s}\cap \gF_s} \mathbb{E}_{\gC \sim \gD_\gC} \left[ g_{i,\gC}^{(t)} \phi'\left( \left \langle \vw_s^{(t)} , \alpha \vv_{s,1} + \xi_i^{(\tilde{p}_i)} \right \rangle \right)  \cdot \mathbbm{1}_{\tilde p_i \notin \gC}\right] \left(\alpha \vv_{s,1} + \xi_i^{(\tilde{p}_i)}\right)\\
    &\quad + \sum_{i \in \gV_s \cap \gF_{-s}} \mathbb{E}_{\gC \sim \gD_\gC}\left[ g_{i,\gC}^{(t)} \phi' \left( \left \langle \vw_s^{(t)} ,\alpha \vv_{-s,1}+ \xi_i^{(\tilde{p}_i)} \right \rangle \right) \cdot\mathbbm{1}_{\tilde p _i \notin \gC} \right] \left(\alpha \vv_{-s,1} + \xi_i^{(\tilde{p}_i)} \right),
\end{align*}
and 
\begin{align*}
    &\quad \sum_{i \in \gV_{-s}} \mathbb{E}_{\gC \sim \gD_\gC} \left[ g_{i,\gC}^{(t)}  \sum_{p \notin \gC} \phi'\left( \left \langle \vw_s^{(t)} , \vx_i^{(p)} \right \rangle \right)\vx_i^{(p)} \right]\\
    &= \sum_{k \in [K]}\sum_{i \in \gV_{-s,k}} \mathbb{E}_{\gC \sim \gD_\gC} \left[ g_{i,\gC}^{(t)} \phi'\left( \left \langle \vw_s^{(t)} , \vv_{-s,k} \right \rangle \right)  \cdot \mathbbm{1}_{p_i^* \notin \gC}\right] \vv_{-s,k} \\
    &\quad +  \sum_{i \in \gV_{-s}}  \sum_{p \in [P] \setminus \{p_i^*, \tilde{p}_i\}} \mathbb{E}_{\gC \sim \gD_\gC} \left[ g_{i,\gC}^{(t)}\phi' \left ( \left \langle \vw_s^{(t)} , \xi_i^{(p)}\right \rangle \right)  \cdot \mathbbm{1}_{p \notin \gC}\right] \xi_i^{(p)}\\
    &\quad + \sum_{i \in \gV_{-s}\cap \gF_s} \mathbb{E}_{\gC \sim \gD_\gC} \left[ g_{i,\gC}^{(t)} \phi'\left( \left \langle \vw_s^{(t)} , \alpha \vv_{s,1} + \xi_i^{(\tilde{p}_i)} \right \rangle \right)  \cdot \mathbbm{1}_{\tilde p_i \notin \gC}\right] \left(\alpha \vv_{s,1} + \xi_i^{(\tilde{p}_i)}\right)\\
    &\quad + \sum_{i \in \gV_{-s} \cap \gF_{-s}} \mathbb{E}_{\gC \sim \gD_\gC}\left[ g_{i,\gC}^{(t)} \phi' \left( \left \langle \vw_s^{(t)} ,\alpha \vv_{-s,1}+ \xi_i^{(\tilde{p}_i)} \right \rangle \right) \cdot\mathbbm{1}_{\tilde p _i \notin \gC} \right] \left(\alpha \vv_{-s,1} + \xi_i^{(\tilde{p}_i)} \right).
\end{align*}
Hence, if we define $\gamma^{(t)}_s(s',k)$'s and $\rho_s^{(t)}(i,p)$'s recursively by using the rule
\begin{align}
    \gamma_s^{(t+1)}(s',k) &= \gamma_s^{(t)}(s',k) + \frac{\eta}{n} \sum_{i \in \gV_{s',k}} \mathbb{E}_{\gC \sim \gD_\gC} \left[ g_{i,\gC}^{(t)} \phi' \left( \left \langle \vw_s^{(t)} , \vv_{s',k} \right \rangle \right) \cdot \mathbbm{1}_{p_i^* \notin \gC} \right]\label{eq:Cutout_feature},\\
    \rho_s^{(t+1)}(i,p) &= \rho_s^{(t)}(i,p) + \frac{\eta}{n} \mathbb{E}_{\gC \sim \gD_\gC}\left[ g_{i,\gC}^{(t)} \phi'\left( \left \langle \vw_s^{(t)} , \vx_i^{(p)} \right \rangle  \right)  \cdot \mathbbm{1}_{p \notin \gC}\right]\left \lVert \xi_i^{(p)} \right \rVert^2, \label{eq:Cutout_noise}
\end{align}
starting from $\gamma^{(0)}_s(s',k)= \rho_s^{(0)}(i,p) = 0$ for each $s,s' \in \{\pm 1\}, k \in [K], i \in [n]$ and $p \in [P] \setminus\{p_i^*\}$, then we have
    \begin{align*}
        \vw_s^{(t)} &= \vw_s^{(0)} + \sum_{k \in [K]} \gamma_s^{(t)}(s,k) \vv_{s,k} - \sum_{k \in [K]} \gamma_s^{(t)}(-s,k) \vv_{-s,k} \\
        &\quad + \sum_{\substack{i \in \gV_s, p \in [P]\setminus\{p^*_i\}}} \rho_s^{(t)}(i,p) \frac{\xi_i^{(p)}}{\left\lVert \xi_i^{(p)}\right\rVert^2} - \sum_{\substack{i \in \gV_{-s}, p \in [P]\setminus\{p^*_i\}}} \rho_s^{(t)}(i,p) \frac{\xi_i^{(p)}}{\left\lVert \xi_i^{(p)}\right\rVert^2} \\
        &\quad  + \alpha \left( \sum_{i \in \gF_s} s y_i \rho_s^{(t)}(i,\tilde{p}_i) \frac{\vv_{s,1}}{\left \lVert  \xi_i^{(\tilde{p}_i)}\right\rVert^2 } + \sum_{i \in \gF_{-s}} s y_i \rho_{s}^{(t)}(i,\tilde{p}_i) \frac{\vv_{-s,1}}{\left \lVert  \xi_i^{(\tilde{p}_i)}\right\rVert^2 }\right),
    \end{align*}
    for each $s \in \{\pm 1\}$. Furthermore, $\gamma^{(t)}_s(s',k)$'s and $\rho_s^{(t)}(i,p)$'s are monotone increasing. \hfill $\square$

\subsection{Proof of Theorem~\ref{thm:Cutout}}\label{sec:proof_Cutout}

To show Theorem~\ref{thm:Cutout}, we present a structured proof comprising the following five steps:

\begin{enumerate}[leftmargin = 4mm]
\item Establish upper bounds on $\gamma^{(t)}_s(s',k)$'s and $\rho^{(t)}_s(i,p)$'s to apply Lemma~\ref{lemma:approx} (Section~\ref{sec:Cutout_coeff}).
\item  Demonstrate that the model quickly learns common and rare features (Section~\ref{sec:Cutout_learn_feature}).
\item Show that the model overfits augmented data if it does not contain common or rare features (Section~\ref{sec:Cutout_overfit}).
\item Confirm the persistence of this tendency until $T^*$ iterates (Section~\ref{sec:Cutout_maintenance}).
\item Characterize train accuracy and test accuracy (Section~\ref{sec:Cutout_acc}).
\end{enumerate}

\subsubsection{Bounds on the Coefficients in Feature Noise Decomposition}\label{sec:Cutout_coeff}

The following lemma provides upper bounds on Lemma~\ref{lemma:decomposition} during $T^*$ iterations.
\begin{lemma}\label{lemma:Cutout_polytime}
    Suppose the event $E_\mathrm{init}$ occurs. For any $0 \leq t \leq T^*$, we have 
    \begin{equation*}
        0 \leq \gamma_s^{(t)} (s,k) + \beta \gamma_{-s}^{(t)}(s,k)\leq 4 \log(\eta T^*), \quad 0 \leq \rho_{y_i}^{(t)}(i,p) + \beta \rho_{-y_i}^{(t)} (i,p) \leq 4  \log \left(\eta T^*\right),
    \end{equation*}
    for all $s \in \{\pm 1\}, k \in [K], i \in [n]$ and $p \in [P]\setminus\{p_i^*\}$. Consequently, $\gamma^{(t)}_s(s',k), \rho_s^{(t)}(i,p) = \bigOtilde(\beta^{-1})$ for all $s,s' \in \{\pm 1\}, k \in [K], i \in [n]$ and $p \in [P] \setminus\{p^*_i\}$.
\end{lemma}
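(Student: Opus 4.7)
The plan is to mirror the induction-on-$t$ argument used for Lemma~\ref{lemma:ERM_polytime}, adapting it to the cutout expectation. I would proceed by induction on $t$: the base case $t = 0$ is trivial, and in the inductive step I assume the stated bounds hold up to $t = T$ and derive them at $t = T+1$. From the update rules \eqref{eq:Cutout_feature} and \eqref{eq:Cutout_noise}, together with $\phi' \le 1$, $\|\xi_i^{(p)}\|^2 \le \tfrac{3}{2}\sigma_\mathrm{d}^2 d$, \ref{assumption:dominant}, and \eqref{property:eta}, the single-step increment of either paired sum is at most $2\eta \le \log(\eta T^*)$ (the extra factor from the cutout indicator is harmless since it lies in $[0,1]$). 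Thus the only nontrivial case is when the sum has at some point strictly exceeded $2\log(\eta T^*)$.

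For the feature bound, let $\tilde{T}_{s,k}$ be the earliest iterate at which $\gamma_s^{(\cdot)}(s,k) + \beta \gamma_{-s}^{(\cdot)}(s,k)$ first surpasses $2\log(\eta T^*)$. By monotonicity of these coefficients, this lower bound persists for all $t > \tilde{T}_{s,k}$. For such $t$, any $i \in \gV_{s,k}$, and any cutout mask $\gC$ with $p_i^* \notin \gC$, the feature patch is retained in $\mX_{i,\gC}$; the induction hypothesis gives $\gamma, \rho = \bigOtilde(\beta^{-1})$, so Lemma~\ref{lemma:approx} applies and yields
\begin{equation*}
y_i f_{\mW^{(t)}}(\mX_{i,\gC}) \ge \gamma_s^{(t)}(s,k) + \beta \gamma_{-s}^{(t)}(s,k) + \!\!\sum_{p \in [P] \setminus (\gC \cup \{p_i^*\})}\!\!\bigl(\rho_s^{(t)}(i,p) + \beta \rho_{-s}^{(t)}(i,p)\bigr) - o(1/\polylog(d)) \ge \tfrac{3}{2}\log(\eta T^*),
\end{equation*}
where the $\rho$ summands are non-negative and $P = \Theta(1)$ absorbs constants. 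Hence $g_{i,\gC}^{(t)} \le \exp(-\tfrac{3}{2}\log(\eta T^*)) = (\eta T^*)^{-3/2}$ precisely on the event $\{p_i^* \notin \gC\}$, which is exactly the event multiplying $\phi'$ in \eqref{eq:Cutout_feature}. Summing over $t \in [\tilde{T}_{s,k}+1, T]$ bounds the total post-crossing contribution to the paired sum by $2\eta T^* (\eta T^*)^{-3/2} = 2(\eta T^*)^{-1/2} \le \log(\eta T^*)$; combining with the value $\le 2\log(\eta T^*)$ at time $\tilde{T}_{s,k}$ and one transition step of size $\le \log(\eta T^*)$ closes the induction at $4\log(\eta T^*)$.

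The noise bound follows the same template with $\tilde{T}_i^{(p)}$ in place of $\tilde{T}_{s,k}$: on any $\gC$ with $p \notin \gC$, Lemma~\ref{lemma:approx} contributes $\rho_{y_i}^{(t)}(i,p) + \beta \rho_{-y_i}^{(t)}(i,p) - o(1/\polylog(d)) \ge \tfrac{3}{2}\log(\eta T^*)$ from the patch $\vx_i^{(p)}$ to $y_i f_{\mW^{(t)}}(\mX_{i,\gC})$, with remaining summands non-negative by monotonicity, and hence $g_{i,\gC}^{(t)} \le (\eta T^*)^{-3/2}$ on $\{p \notin \gC\}$; the same telescoping estimate controls the tail. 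The second claim that $\gamma_s^{(t)}(s',k), \rho_s^{(t)}(i,p) = \bigOtilde(\beta^{-1})$ is then immediate, since each individual coefficient is bounded by $\beta^{-1}$ times the paired sum. The main bookkeeping subtlety I anticipate is aligning the cutout indicators $\mathbbm{1}_{p_i^* \notin \gC}$ and $\mathbbm{1}_{p \notin \gC}$ inside the expectation with the exact event on which the large-logit estimate is valid; apart from that, the computation parallels the ERM proof term by term.
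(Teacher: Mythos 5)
Your proposal mirrors the paper's proof essentially step for step: strong induction on $t$, identifying a crossing time $\tilde T_{s,k}$ (resp.\ $\tilde T_i^{(p)}$) for the paired sum, using the single-step increment bound $\le 2\eta \le \log(\eta T^*)$ when no crossing occurs, applying Lemma~\ref{lemma:approx} under the induction hypothesis to get $y_i f_{\mW^{(t)}}(\mX_{i,\gC}) \ge \tfrac{3}{2}\log(\eta T^*)$ on exactly the event $\{p_i^* \notin \gC\}$ (resp.\ $\{p \notin \gC\}$), and telescoping the post-crossing tail to $\bigO((\eta T^*)^{-1/2})$. Your observation that the cutout indicator in the update rule coincides with the event on which the large-logit estimate holds is precisely the one technical point that makes the ERM argument transfer cleanly, and the paper handles it the same way.
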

\begin{proof}[Proof of Lemma~\ref{lemma:Cutout_polytime}]
    The first argument implies the second argument since $\log(\eta T^*) = \polylog (d)$ and 
    \begin{equation*}
        \gamma^{(t)}_s(s',k) \leq \beta^{-1} \left( \gamma_{s'}^{(t)}(s',k) + \beta \gamma_{s'}^{(t)}(s',k)\right), \quad  \rho_s^{(t)}(i,p) \leq \beta^{-1} \left( \rho_{y_i}^{(t)}(i,p) + \beta \rho_{-y_i}^{(t)}(i,p) \right),
    \end{equation*}
    for all $s,s' \in \{\pm 1\}, k \in [K], i \in [n]$ and $p \in [P] \setminus\{p^*_i\}$.
    
    We will prove the first argument by using induction on $t$. The initial case $t=0$ is trivial. Suppose the statement holds at $t=T$ and consider the case $t = T+1$. 

    Let $\tilde{T}_{s,k}\leq T$ denote the smallest iteration where $\gamma^{(\tilde{T}_{s,k}+1)}_s(s,k) + \beta \gamma^{(\tilde{T}_{s,k}+1)}_{-s}(s,k) > 2\log (\eta T^*)$. We assume the existence of $\tilde{T}_{s,k}$, as its absence would directly lead to our desired conclusion; to see why, note that the following holds, due to \eqref{eq:Cutout_feature} and 
    \eqref{property:eta}:
    \begin{align*}
        &\quad\gamma_s^{(T+1)}(s,k) + \beta \gamma_{-s}^{(T+1)}(s,k)\\
        &= \gamma_s^{(T)}(s,k) + \beta \gamma_{-s}^{(T)}(s,k)\\
        &\quad + \frac{\eta}{n}  \sum_{i \in \gV_{s,k}} \mathbb{E}_{\gC \sim \gD_\gC}\left[
        g_{i,\gC}^{(T)} \cdot \mathbbm{1}_{p_i^* \notin \gC}\right]\bigg( \phi' \left( \left \langle \vw_s^{(T)} , \vv_{s,k} \right \rangle \right) + \beta \phi' \left( \left \langle \vw_{-s}^{(T)} , \vv_{s,k} \right \rangle \right)\bigg)\\
        & \leq 2 \log (\eta T^*) + 2\eta \leq 4 \log (\eta T^*)
    \end{align*}
    By \eqref{eq:Cutout_feature}, we have
    \begin{align*}
        &\quad \gamma_s^{(T+1)}(s,k) + \beta \gamma_{-s}^{(T+1)}(s,k)\\
        &= \gamma_s^{(\tilde{T}_{s,k})}(s,k) + \beta \gamma_{-s}^{(\tilde{T}_{s,k})}(s,k) \\
        & \quad +  \sum_{t=\tilde{T}_{s,k}}^T  \left( \gamma_s^{(t+1)}(s,k) + \beta \gamma_{-s}^{(t+1)}(s,k) -  \gamma_s^{(t)}(s,k) - \beta \gamma_{-s}^{(t)}(s,k) \right)\\ 
        &\leq 2 \log (\eta T^*) + \log (\eta T^*)\\
        &\quad +  \frac{\eta}{n} \sum_{t=\tilde{T}_{s,k}+1}^{T}\sum_{i \in \gV_{s,k}} \mathbb{E}_{\gC \sim \gD_\gC} \left[ g_{i,\gC}^{(t)} \bigg( \phi' \left( \left \langle \vw_s^{(t)} , \vv_{s,k} \right \rangle \right) + \beta \phi' \left( \left \langle \vw_{-s}^{(t)} , \vv_{s,k} \right \rangle \right)\bigg) \cdot \mathbbm{1}_{p_i^* \notin \gC}\right].
    \end{align*}
    The inequality is due to $\gamma_s^{(\tilde{T}_{s,k})}(s,k) + \beta \gamma_{-s}^{(\tilde{T}_{s,k})}(s,k) \leq 2 \log (\eta T^*)$ and 
    \begin{align*}
        &\quad \frac{\eta}{n}\sum_{i \in \gV_{s,k}} \mathbb{E}_{\gC \sim \gD_\gC} \left[ g_{i,\gC}^{(\tilde{T}_{s,k})} \bigg(\phi' \left( \left \langle \vw_s^{(\tilde{T}_{s,k})} , \vv_{s,k} \right \rangle \right)  + \beta \phi' \left( \left \langle \vw_{-s}^{(\tilde{T}_{s,k})} , \vv_{s,k} \right \rangle \right)\bigg) \cdot \mathbbm{1}_{p_i^* \notin \gC}\right]\\
        &\leq 2 \eta \leq \log (\eta T^*),
    \end{align*}
    from our choice of $\tilde{T}_{s,k}$ and $\eta $.

    For each $t = \tilde{T}_{s,k}+1, \dots T$, $i \in \gV_{s,k}$, and $\gC \subset [P]$ such that $|\gC|=C$ and $p_i^* \notin \gC$, we have
    \begin{align*}
        &\quad y_i f_{\mW^{(t)}}(\mX_{i,\gC}) \\
        &= \phi \left( \left \langle \vw_{s}^{(t)} , \vv_{s,k} \right \rangle \right) - \phi \left( \left \langle \vw_{-s}^{(t)} , \vv_{s,k} \right \rangle \right) + \sum_{p \notin \gC \cup \{p_i^*\}} \bigg( \phi \left( \left \langle \vw_{s}^{(t)} , \vx_i^{(p)} \right \rangle  \right) - \phi \left( \left \langle \vw_{-s}^{(t)}, \vx_i^{(p)}  \right \rangle \right)\bigg)\\
        &\geq \gamma_s^{(t)}(s,k) + \beta \gamma_{-s}^{(t)}(s,k) + \sum_{p \notin \gC \cup \{p_i^*\}} \left( \rho_s^{(t)}(i,p) + \beta \rho_{-s}^{(t)}(i,p) \right) - 2P \cdot o \left( \frac{1}{\polylog(d)}\right)\\
        &\geq \frac{3}{2} \log (\eta T^*)
    \end{align*}
    The first inequality is due to Lemma~\ref{lemma:approx} and the second inequality holds due to \ref{assumption:feature_noise}, \eqref{property:noise}, and our choice of $t$, $\gamma_s^{(t)}(s,k) + \beta \gamma_{-s}^{(t)}(s,k) \geq 2 \log (\eta T^*)$.

    Hence, we obtain
    \begin{align*}
        &\quad \frac{\eta}{n} \sum_{t = \tilde{T}_{s,k}}^T \sum_{i \in \gV_{s,k}} \mathbb{E}_{\gC \sim \gD_\gC} \left[ g_{i,\gC}^{(t)} \bigg( \phi' \left( \left \langle \vw_s^{(t)} , \vv_{s,k} \right \rangle \right) + \beta \phi' \left( \left \langle \vw_{-s}^{(t)} , \vv_{s,k} \right \rangle \right) \bigg) \cdot \mathbbm{1}_{p_i^* \notin \gC} \right]\\
        &\leq \frac{2\eta}{n} \sum_{t = \tilde{T}_{s,k}}^T \sum_{i \in \gV_{s,k}} \mathbb{E}_{\gC \sim \gD_\gC}\left[ \exp \left(- y_i f_{\mW^{(t)}}(\mX_{i,\gC}) \right) \cdot \mathbbm{1}_{p_i^* \notin \gC }\right]\\
        &\leq \frac{2 |\gV_{s,k}|}{n} (\eta T^*) \exp \left(-\frac{3}{2} \log (\eta T^*) \right)\\
        &\leq \frac{2}{\sqrt{\eta T^*}} \leq \log (\eta T^*),
    \end{align*}
    where the last inequality holds for any reasonably large $T^*$.
    Merging all inequalities together, we have $\gamma_s^{(T+1)}(s,k) + \beta \gamma_{-s}^{(T+1)}(s,k) \leq 4 \log (\eta T^*)$.

    Next, we will follow similar arguments to show that 
    \begin{equation*}
        \rho_{y_i}^{(T+1)}(i,p) + \beta \rho_{-y_i}^{(T+1)}(i,p) \leq 4 \log (\eta T^*)
    \end{equation*}
    for each $i \in [n]$ and $p \in [P]\setminus\{p^*_i\}$.
    
    Let $\tilde{T}_i^{(p)}\leq T$ be the smallest iteration such that $\rho_{y_i}^{(\tilde T _i ^{(p)} +1)}(i,p) + \beta \rho_{-y_i}^{(\tilde T _i ^{(p)} +1 )}(i,p) > 2 \log (\eta T^*)$. 
    We assume the existence of $\tilde{T}_i^{(p)}$, , as its absence would directly lead to our desired conclusion; to see why, note that the following holds, due to \eqref{eq:Cutout_noise} and \eqref{property:eta}:
    \begin{align*}
        &\quad \rho_{y_i}^{(T+1)}(i,p) + \beta \rho_{-y_i}^{(T+1)}(i,p)\\
        &= \rho_{y_i}^{(T)}(i,p) + \beta \rho_{-y_i}^{(T)}(i,p)\\
        &\quad + \frac{\eta}{n} \mathbb{E}_{\gC \sim \gD_\gC}\left[ g_{i,\gC}^{(T)} \cdot \mathbbm{1}_{p \notin \gC}\right] \bigg( \phi' \left(\left \langle \vw_s^{(t)}, \vx_i^{(p)}\right \rangle \right) + \beta \phi' \left( \left \langle \vw_s^{(t)}, \vx_i^{(p)}\right \rangle \right) \bigg) \left \lVert \xi_i^{(p)}\right \rVert^2 \\
        &\leq 2 \log (\eta T^*) + 2\eta \leq 4 \log (\eta T^*),
    \end{align*}
    where the first inequality is due to $\left \lVert \xi_i^{(p)} \right \rVert \leq \frac{3}{2} \sigma_\mathrm{d}^2 d$ and \ref{assumption:dominant}, and the last inequality is due to \eqref{property:eta}.

    Now we suppose there exists such $\tilde T_i\leq T$. By \eqref{eq:Cutout_noise}, we have
    \begin{align*}
        &\quad \rho_{y_i}^{(T+1)}(i,p) + \beta \rho_{-y_i}^{(T+1)}(i,p)\\
        &=\rho_{y_i}^{ (\tilde T_i^{(p)})}(i,p) + \beta \rho_{-y_i}^{(\tilde T_i^{(p)})}(i,p)\\
        & \quad + \sum_{t= \tilde T_i^{(p)}}^T \left(\rho_{y_i}^{(t+1)}(i,p) + \beta \rho_{-y_i}^{(t+1)}(i,p) -\rho_{y_i}^{(t)}(i,p) - \beta \rho_{-y_i}^{(t)}(i,p) \right)\\
        &\leq 2\log (\eta T^*)  + \log (\eta T^*) \\
        &\quad + \frac{\eta}{n} \sum_{t = \tilde T_i^{(p)} +1} ^T \mathbb{E}_{\gC \sim \gD_\gC} \left[g_{i,\gC}^{(t)}   \cdot \mathbbm{1}_{p \notin \gC}\right] \bigg( \phi' \left( \left \langle \vw_s^{(t)} , \vx_i^{(p)} \right \rangle \right) + \beta \phi' \left( \left \langle \vw_{-s}^{(t)} , \vx_i^{(p)} \right \rangle \right)\bigg)\left \lVert \xi_i^{(p)} \right \rVert^2
    \end{align*}
    The inequality is due to $\rho_{y_i}^{(t)}(i,p) + \beta \rho_{- y_i}^{(t)}(i,p) \leq 2 \log (\eta T^*)$ our choice of $\tilde T_i^{(p)}$ and 
    \begin{align*}
        &\quad \frac{\eta}{n} \mathbb{E}_{\gC \sim \gD_\gC} \left[g_{i,\gC}^{(\tilde T_i^{(p)})}  \cdot \mathbbm{1}_{p \notin \gC} \right] \left( \phi' \left( \left \langle \vw_s^{(\tilde T _i^{(p)})} , \vx_i^{(p)} \right \rangle \right) + \beta \phi' \left( \left \langle \vw_{-s}^{( \tilde T_i^{(p)} )} , \vx_i^{(p)} \right \rangle \right) \right)\left \lVert \xi_i^{(p)} \right \rVert^2\\
        &\leq 2 \eta \leq \log (\eta T^*),
    \end{align*}
    from $\left\lVert \xi_i^{(p)} \right\rVert^2 \leq \frac{3}{2} \sigma_\mathrm{d}^2 d$, \ref{assumption:dominant}, and \eqref{property:eta}.

    For each $t = \tilde T _i^{(p)}+1, \dots, T$, and $\gC \subset [P]$ such that $|\gC| = C$ and $p \notin \gC$, we have
    \begin{align*}
        &\quad y_i f_{\mW^{(t)}} (\mX_{i,\gC})\\
        &= \phi \left( \left \langle \vw_{y_i}^{(t)} , \vx_i^{(p)} \right \rangle \right) -  \phi \left( \left \langle \vw_{-y_i}^{(t)} , \vx_i^{(p)} \right \rangle \right) + \sum_{q \notin \gC \cup \{p\} }\left( \phi \left ( \left \langle \vw_{y_i}^{(t)} , \vx_i^{(q)} \right \rangle \right) - \phi \left( \left \langle \vw_{-
        y_i}^{(t)} , \vx_i^{(q)}\right \rangle \right) \right)\\
        & \geq \rho_{y_i}^{(t)}(i,p) + \beta \rho_{-y_i}^{(t)}(i,p) - 2 P \cdot o \left(\frac{1}{\polylog(d)} \right)\\
        &\geq \frac{3}{2} \log (\eta T^*).
    \end{align*}
    The first inequality is due to Lemma~\ref{lemma:approx} and the second inequality holds since from our choice of $t$, $\rho_{y_i}^{(t)}(i,p) + \beta \rho_{-y_i}^{(t)}(i,p) \geq 2 \log (\eta T^*)$.

    Therefore, we have
    \begin{align*}
        &\quad \frac{\eta}{n} \sum_{t= \tilde T_i^{(p)}+1}^T \mathbb{E}_{\gC \sim \gD_\gC} \left[ g_{i,\gC}^{(t)}  \cdot \mathbbm{1}_{p \notin \gC} \right]\bigg( \phi' \left(\left \langle \vw_{y_i}^{(t)} , \vx_i^{(p)} \right \rangle \right) + \beta \phi' \left( \left \langle \vw_{-y_i}^{(t)} , \vx_i^{(p)} \right \rangle \right)\bigg) \left \lVert \xi_i^{(p)}\right \rVert^2\\
        &\leq \eta \sum_{t = \tilde T_i^{(p)}+1}^T  \mathbbm{E}_{\gC \sim \gD_\gC} \left[\exp \left( -y_i f_{\mW^{(t)}}(\mX_{i,\gC})\right) \mathbbm{1}_{p \notin \gC} \right]\leq (\eta T^*) \exp \left( -\frac{3}{2}\log (\eta T^*)\right)\\
        &\leq \frac{1}{\sqrt{\eta T^*}} \leq \log (\eta T^*),
    \end{align*}
    where the first inequality is due to $\left \lVert \xi_i^{(p)} \right \rVert^2 \leq \frac{3}{2} \sigma_\mathrm{d}^2 d$ and \ref{assumption:dominant}. Hence, we conclude $\rho_{y_i}^{(T+1)}(i,p) + \beta \rho_{-y_i}^{(T+1)}(i,p) \leq 4 \log (\eta T^*)$.
\end{proof}

\subsubsection{Learning Common Features and Rare Features}\label{sec:Cutout_learn_feature}
In the initial stages of training, the model quickly learns common features while exhibiting minimal overfitting to Gaussian noise.

First, we establish lower bounds on the number of iterations, ensuring that background noise coefficients  $\rho_s^{(t)}(i,p)$ for $p \neq p_i^*, \tilde p_i$ remain small, up to the order of $\frac{1}{P}$.
\begin{lemma}\label{lemma:Cutout_noise_small}
    Suppose the event $E_\mathrm{init}$ occurs. There exists $\tilde{T} > \frac{n}{6\eta P \sigma_\mathrm{b}^2 d}$ such that $\rho_s^{(t)}(i,p) \leq \frac{1}{4P}$ for all $0 \leq t < \tilde{T}, s\in \{\pm 1\}, i \in [n]$ and $p \in [P] \setminus \{p_i^*, \tilde p_i\}$.
\end{lemma}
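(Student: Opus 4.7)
The plan is to mirror the proof of Lemma~\ref{lemma:ERM_noise_small} essentially line by line, exploiting the fact that the lemma restricts attention to $p \in [P]\setminus\{p_i^*,\tilde p_i\}$, i.e., to \emph{background} noise patches, so we can use the tighter norm bound $\lVert \xi_i^{(p)}\rVert^2 \le \tfrac{3}{2}\sigma_\mathrm{b}^2 d$ from the event $E_\mathrm{init}$ instead of the $\sigma_\mathrm{d}^2 d$ bound that appeared for ERM. This is precisely where the improvement from $\sigma_\mathrm{d}^2 d$ to $\sigma_\mathrm{b}^2 d$ in the iteration count comes from.

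Concretely, I would define $\tilde T$ as the smallest iteration at which $\rho_s^{(\tilde T)}(i,p) > \tfrac{1}{4P}$ for some $s \in \{\pm1\}, i\in[n]$ and background patch $p$; if no such iteration exists, the conclusion follows trivially by taking $\tilde T = T^\ast$. Then for every $0 \le t < \tilde T$ I would apply the Cutout update rule \eqref{eq:Cutout_noise} and upper bound it term by term: $g_{i,\gC}^{(t)} < 1$, $\phi' \le 1$, $\mathbbm{1}_{p\notin \gC} \le 1$, together with $\lVert \xi_i^{(p)}\rVert^2 \le \tfrac{3}{2}\sigma_\mathrm{b}^2 d$. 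The expectation over $\gC \sim \gD_\gC$ is then trivially bounded by $1$, so each step satisfies
\begin{equation*}
\rho_s^{(t+1)}(i,p) \le \rho_s^{(t)}(i,p) + \frac{3\eta \sigma_\mathrm{b}^2 d}{2n}.
\end{equation*}

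Telescoping from $0$ to $\tilde T$ yields $\tfrac{1}{4P} < \rho_s^{(\tilde T)}(i,p) \le \tfrac{3\eta \sigma_\mathrm{b}^2 d}{2n}\,\tilde T$, which rearranges to $\tilde T > \tfrac{n}{6\eta P \sigma_\mathrm{b}^2 d}$, as claimed. There is essentially no obstacle here: the step is a direct transcription of the ERM argument, and the only substantive change is that restricting $p$ to background patches permits the sharper noise-norm bound. The monotonicity of $\rho_s^{(t)}(i,p)$ guaranteed by Lemma~\ref{lemma:decomposition} ensures that the bound then holds uniformly on the whole interval $[0,\tilde T)$, not just at the endpoint.
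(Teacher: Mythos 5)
Your proposal is correct and matches the paper's proof essentially line by line: same definition of $\tilde T$ as the first time any background-patch coefficient exceeds $\tfrac{1}{4P}$, same per-step bound $\rho_s^{(t+1)}(i,p)\le \rho_s^{(t)}(i,p)+\tfrac{3\eta\sigma_\mathrm{b}^2 d}{2n}$ from $g_{i,\gC}^{(t)}<1$, $\phi'\le1$, $\mathbbm{1}_{p\notin\gC}\le1$, and $\lVert\xi_i^{(p)}\rVert^2\le\tfrac{3}{2}\sigma_\mathrm{b}^2 d$, and the same telescoping to conclude. If anything you are slightly more careful than the paper, which in its statement of the definition of $\tilde T$ writes $p\in[P]\setminus\{p_i^*\}$ (apparently copied from the ERM version) even though the $\sigma_\mathrm{b}^2 d$ norm bound used immediately afterwards requires restricting to background patches $p\in[P]\setminus\{p_i^*,\tilde p_i\}$, as you do.
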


\begin{proof}[Proof of Lemma~\ref{lemma:Cutout_noise_small}]
    Let $\tilde{T}$ be the smallest iteration such that $\rho_s^{(\tilde{T})}(i,p)\geq \frac{1}{4P}$ for some $s \in \{\pm1\}, i \in [n]$ and $p \in [P] \setminus\{p_i^*\}$. We assume the existence of $\tilde{T}$, as its absence would directly lead to our conclusion. Then, for any $0 \leq t <\tilde{T}$, we have
    \begin{equation*}
        \rho_s^{(t+1)}(i,p) = \rho_s^{(t)}(i,p) + \frac{\eta}{n} \mathbb{E}_{\gC \sim \gD_\gC} \left[g_{i,\gC}^{(t)} \phi' \left( \left \langle \vw_s^{(t)} , \vx_i^{(p)} \right \rangle \right) \cdot \mathbbm{1}_{p \notin \gC}\right]\left \lVert \xi_i^{(p)} \right \rVert^2 < \rho_s^{(t)}(i,p) +  \frac{3 \eta \sigma_\mathrm{b}^2 d}{2n},
    \end{equation*}
    where the inequality is due to $g_{i,\gC}^{(t)} < 1$, $\phi' \leq 1$, and $\left \lVert \xi_i^{(p)} \right\rVert^2 \leq \frac{3}{2}\sigma_\mathrm{b}^2 d$. Hence, we have
    \begin{equation*}
        \frac{1}{4P} \leq \rho_s^{(\tilde T)}(i,p) = \sum_{t=0}^{\tilde T -1} \left(\rho_s^{(t+1)}(i,p) - \rho_s^{(t)}(i,p) \right) < \frac{3\eta \sigma_\mathrm{b}^2 d}{2n} \tilde{T},
    \end{equation*}
    and we conclude $\tilde{T} >  \frac{n}{6\eta P \sigma_\mathrm{b}^2 d}$ which is the desired result.
\end{proof}

Next, we will show that the model learns common features in at least constant order within $\tilde{T}$ iterates. 

\begin{lemma}\label{lemma:Cutout_learn_feature}
    Suppose the event $E_\mathrm{init}$ occurs and $\rho_k = \omega \left(\frac{ \sigma_\mathrm{b}^2 d}{ \beta n } \right) $ for some $k \in [K]$. Then, for each $s \in \{\pm 1\}$. there exists $T_{s,k} \leq \frac{9n P}{\eta \beta |\gV_{s,k}|}$ such that $\gamma_{s}^{(t)}(s,k) + \beta \gamma_{-s}^{(t)}(s,k) \geq 1$ for any $t> T_{s,k}$.
\end{lemma}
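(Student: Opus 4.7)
The plan is to mirror the ERM analogue (Lemma~\ref{lemma:ERM_learn_common}) but to exploit the randomness of the cutout mask $\gC$ to decouple the feature signal from the dominant noise patch. Concretely, I would argue by contradiction: suppose $\gamma_s^{(t)}(s,k) + \beta \gamma_{-s}^{(t)}(s,k) < 1$ for all $0 \leq t \leq \frac{n}{6\eta P \sigma_\mathrm{b}^2 d}$, so that Lemma~\ref{lemma:Cutout_noise_small} continues to apply and gives $\rho_{s'}^{(t)}(i,p) \leq \frac{1}{4P}$ for every background noise patch $p \in [P]\setminus\{p_i^*,\tilde p_i\}$ throughout that window. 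The condition $\rho_k = \omega\!\left(\sigma_\mathrm{b}^2 d / (\beta n)\right)$ is precisely what guarantees that the target iteration count $\frac{9nP}{\eta\beta|\gV_{s,k}|}$ lies well inside this window, so both growth and noise-size bounds can be used simultaneously.

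The key new ingredient is the choice of a ``favourable'' family of cutout masks. For each $i \in \gV_{s,k}$, consider the event $\gA_i := \{p_i^* \notin \gC,\ \tilde p_i \in \gC\}$; since $\gC$ is uniform over $C$-subsets of $[P]$ with $1 \leq C < P/2$, one checks $\mathbb{P}_{\gC \sim \gD_\gC}[\gA_i] = \frac{C(P-C)}{P(P-1)} = \Omega(1/P)$. On the event $\gA_i$, the augmented input $\mX_{i,\gC}$ contains the feature patch but no dominant noise patch, and Lemma~\ref{lemma:approx} combined with Lemma~\ref{lemma:Cutout_noise_small} gives
\begin{align*}
y_i f_{\mW^{(t)}}(\mX_{i,\gC})
&\leq \gamma_s^{(t)}(s,k)+\beta\gamma_{-s}^{(t)}(s,k) + \sum_{p \notin \gC \cup \{p_i^*,\tilde p_i\}}\!\left(\rho_s^{(t)}(i,p) + \beta\rho_{-s}^{(t)}(i,p)\right) + 2P\cdot o\!\left(\tfrac{1}{\polylog(d)}\right)\\
&\leq 1 + 2P \cdot \tfrac{1}{4P}(1+\beta) + o(1) \leq 2,
\end{align*}
so $g_{i,\gC}^{(t)} \geq 1/9$ for every $\gC$ in $\gA_i$.

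Plugging this into the update rule \eqref{eq:Cutout_feature}, using $\phi' \geq \beta$ on both terms and keeping only the $\gA_i$ portion of the cutout expectation, yields
\begin{equation*}
\gamma_s^{(t+1)}(s,k) + \beta \gamma_{-s}^{(t+1)}(s,k)
\;\geq\; \gamma_s^{(t)}(s,k) + \beta \gamma_{-s}^{(t)}(s,k) + \frac{\eta\beta|\gV_{s,k}|}{9nP} \cdot c_P,
\end{equation*}
for an absolute constant $c_P > 0$ depending only on $P$ (which I can absorb to match the stated $\frac{9nP}{\eta\beta|\gV_{s,k}|}$ bound; if needed, I would tighten $c_P$ to $1$ by choosing the worst-case $P$). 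Iterating this growth rule would drive $\gamma_s^{(t)}(s,k) + \beta \gamma_{-s}^{(t)}(s,k)$ above $1$ well before iteration $\tilde T = \frac{n}{6\eta P \sigma_\mathrm{b}^2 d}$, contradicting the standing assumption. Taking $T_{s,k}$ to be the first index at which $\gamma_s^{(t)}(s,k) + \beta\gamma_{-s}^{(t)}(s,k) \geq 1$ then produces the stated bound $T_{s,k} \leq \frac{9nP}{\eta\beta|\gV_{s,k}|}$; monotonicity of the coefficients (Lemma~\ref{lemma:decomposition}) immediately extends the inequality to all $t > T_{s,k}$.

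The main subtlety I expect is bookkeeping the lower-bound probability $\mathbb{P}[\gA_i]$ against the constants $9$ and $P$ appearing in the target bound, and verifying that under Assumption~\ref{assumption:parameters} the window controlled by Lemma~\ref{lemma:Cutout_noise_small} is indeed much longer than $\frac{9nP}{\eta\beta|\gV_{s,k}|}$; this reduces to the hypothesis $\rho_k = \omega(\sigma_\mathrm{b}^2 d/(\beta n))$ together with $P = \Theta(1)$. Beyond this, the argument is a direct adaptation of the ERM case, with dominant noise handled by cutout randomness rather than by a noise-smallness lemma.
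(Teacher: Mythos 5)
Your proposal is correct and mirrors the paper's proof essentially step for step: the contradiction hypothesis over the window $[0, \tfrac{n}{6\eta P\sigma_{\mathrm b}^2 d}]$, the favourable mask event $\gA_i = \{p_i^*\notin\gC,\ \tilde p_i\in\gC\}$ with probability $\tfrac{C(P-C)}{P(P-1)}\ge \tfrac1P$, the bound $y_i f_{\mW^{(t)}}(\mX_{i,\gC})\le 2$ (hence $g_{i,\gC}^{(t)}\ge 1/9$) via Lemma~\ref{lemma:approx} and Lemma~\ref{lemma:Cutout_noise_small}, and restricting the expectation in \eqref{eq:Cutout_feature} to $\gA_i$ together with $\phi'\ge\beta$ are exactly what the paper does. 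Your constant $c_P$ is precisely $\tfrac{C(P-C)}{P-1}$, which is $\ge 1$ for $1\le C<P/2$ (minimized at $C=1$), so absorbing it is straightforward, and the condition $\rho_k=\omega(\sigma_{\mathrm b}^2 d/(\beta n))$ together with $P=\Theta(1)$ gives $\tfrac{9nP}{\eta\beta|\gV_{s,k}|}=o(\tfrac{n}{6\eta P\sigma_{\mathrm b}^2 d})$ as you anticipate.
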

\begin{proof}[Proof of Lemma~\ref{lemma:Cutout_learn_feature}]
    Suppose $\gamma_s^{(t)} (s,k) +  \beta \gamma_{-s}^{(t)} (s,k)<1$ for all $0 \leq t \leq \frac{n}{6 \eta P \sigma_\mathrm{b}^2 d}$.  For each $i \in \gV_{s,k}$ and $\gC \subset [P]$ with $|\gC| = C$ such that $p_i^* \notin \gC$ and $\tilde p_i \in \gC$, we have
    \begin{align*}
        &\quad y_i f_{\mW^{(t)}} (\mX_{i,\gC})\\
        &= \phi \left( \left \langle \vw_s^{(t)} , \vv_{s,k} \right \rangle \right) - \phi \left( \left \langle \vw_{-s}^{(t)} , \vv_{s,k} \right \rangle \right) + \sum_{p \notin \gC \cup \{p_i^*\}} \bigg( \phi \left( \left \langle \vw_s^{(t)} , \vx_i^{(p)} \right \rangle \right) - \phi \left( \left \langle \vw_{-s}^{(t)} , \vx_i^{(p)} \right \rangle \right)\bigg) \\
        &\leq \gamma_s^{(t)}(s,k) + \beta \gamma_{-s}^{(t)}(s,k) + \sum_{p \notin \gC \cup \{p_i^*\}} \left( \rho_s^{(t)}(i,p) + \beta\rho_{-s}^{(t)}(i,p)\right)  + 2P \cdot o\left( \frac{1}{\polylog(d)} \right)\\
        &\leq 1  +2P \cdot \frac{1}{4P} +2P \cdot o\left( \frac{1}{\polylog(d)} \right)\\
        &\leq 2.
    \end{align*}
    The first inequality is due to Lemma~\ref{lemma:approx}, the second inequality holds since we can apply Lemma~\ref{lemma:Cutout_noise_small}, and the last inequality is due to \ref{assumption:P}. Thus, $g_{i,\gC}^{(t)} = \frac{1}{1+ \exp \left( y_i f_{\mW^{(t)}}(\mX_{i,\gC}) \right)}> \frac{1}{9}$ and we have
    \begin{align*}
        &\quad \gamma_s^{(t+1)}(s,k) + \beta \gamma_{-s}^{(t+1)}(s,k)\\
        &= \gamma_s^{(t)}(s,k) + \beta \gamma_{-s}^{(t)}(s,k) \\
        &\quad + \frac{\eta}{n}\sum_{i \in \gV_{s,k}}
        \mathbb{E}_{\gC \sim \gD_\gC} \left[ g_{i,\gC}^{(t)} \bigg( \phi' \left( \left \langle \vw_s^{(t)}, \vv_{s,k} \right \rangle \right) + \beta \phi' \left( \left \langle \vw_{-s}^{(t)} , \vv_{s,k} \right \rangle  \right)\bigg) \cdot \mathbbm{1}_{p_i^* \notin \gC }\right]\\ 
        &\geq \gamma_s^{(t)}(s,k) + \beta \gamma_{-s}^{(t)}(s,k) + \frac{\eta \beta }{9n} \sum_{i \in \gV_{s,k}}\mathbb{E}_{\gC \sim \gD_\gC}[\mathbbm 1_{p_i^*\notin \gC  \wedge \tilde p_i \in \gC}]\\
        &= \gamma_s^{(t)}(s,k) + \beta \gamma_{-s}^{(t)}(s,k) + \frac{\eta \beta |\gV_{s,k}|C(P-C)}{9n P(P-1)}\\
        &\geq \gamma_s^{(t)}(s,k) + \beta \gamma_{-s}^{(t)}(s,k) + \frac{\eta \beta |\gV_{s,k}| }{9n P}.
    \end{align*}
    From the given condition in the lemma statement, we have $\frac{9nP}{\eta \beta |\gV_{s,k}|} = o \left(\frac{n}{6\eta P \sigma_\mathrm{b}^2 d} \right)$. If we choose $t_0 \in \left[ \frac{9n P}{\eta \beta |\gV_{s,k}|} , \frac{n}{6\eta P \sigma_\mathrm{b}^2 d}\right ]$, then 
    \begin{equation*}
        1> \gamma_{s}^{(t_0)}(s,k) + \beta \gamma_{-s}^{(t_0)}(s,k) \geq \frac{\eta \beta |\gV_{s,k}|}{9n P} t_0 \geq 1,
    \end{equation*}
    and this is contradictory; therefore, it cannot hold that $\gamma_s^{(t)} (s,k) +  \beta \gamma_{-s}^{(t)} (s,k)<1$ for all $0 \leq t \leq \frac{n}{6\eta P \sigma_\mathrm{b}^2 d}$. Hence, there exists $0 \leq T_{s,k} < \frac{n}{6\eta P \sigma_\mathrm{b}^2 d}$ such that $\gamma_s^{(T_{s,k}+1)} (s,k) + \beta \gamma_{-s}^{(T_{s,k}+1)} (s,k) \geq 1$ and choose the smallest one.
    Then we obtain 
    \begin{equation*}
        1 \geq \gamma_{s}^{(t)}(s,k) + \beta \gamma_{-s}^{(t)}(s,k) \geq \frac{\eta \beta |\gV_{s,k}|}{9nP} T_{s,k}. 
    \end{equation*}
    Therefore, $T_{s,k}\leq \frac{9n P}{\eta \beta|\gV_{s,k}|}$ and this is what we desired.
\end{proof}

\paragraph{What We Have So Far.} For any common feature or rare feature $\vv_{s,k}$ with $s \in \{\pm 1\}$ and $k \in \gK_C\cup \gK_R$, it satisfies $ \rho_k = \omega \left(\frac{ \sigma_\mathrm{b}^2 d}{\beta n } \right) $ due to \ref{assumption:rare}. By Lemma~\ref{lemma:Cutout_learn_feature}, at any iterate $t\in \left[ \bar T_1 , T^* \right]$ with  $\bar{T}_1:= \max_{s \in \{\pm1\}, k \in \gC} T_{s,k}$, the following properties hold if the event $E_\mathrm{init}$ occurs:
\begin{itemize}[leftmargin = 4mm]
    \item (Learn common/rare features): For $s \in \{\pm 1\}$ and $k \in \gK_C \cup \gK_R$, $\gamma_s^{(t)}(s,k) + \beta \gamma_{-s}^{(t)}(s,k) = \Omega(1)$,
    \item For any $s \in \{\pm 1\}, i \in [n],$ and $p \in [P] \setminus \{p_i^*\}, \rho_{s}^{(t)}(i,p) = \bigOtilde\left( \beta^{-1} \right)$.
\end{itemize}

\subsubsection{Overfitting Augmented Data}\label{sec:Cutout_overfit}
In the previous step, we have shown that data containing common or rare features can be well-classified by learning common and rare features. In this step, we will show that the model correctly classifies the remaining training data by overfitting background noise instead of learning its features. 

We first introduce lower bounds on the number of iterates such that feature coefficients $\gamma_s^{(t)}(s',k)$ remain small, up to the order of $\alpha^2 \beta^{-1}$. 
This lemma holds to any kind of features, but we will focus on extremely rare features. This does not contradict the results from Section~\ref{sec:Cutout_learn_feature} for common features and rare features since the upper bound on the number of iterations in Lemma~\ref{lemma:Cutout_learn_feature} is larger than the lower bound on the number of iterations in this lemma.
\begin{lemma}\label{lemma:Cutout_feature_small}
    Suppose the event $E_\mathrm{init}$ occurs. For each $s \in \{\pm 1\}$ and $k \in [K]$, there exists $\tilde{T}_{s,k} \geq \frac{n \alpha^2}{\eta \beta |\gV_{s,k}|}$ such that $\gamma_{s'}^{(t)}(s,k) \leq \alpha^2 \beta^{-1}$ for any $0 \leq t < \tilde{T}_{s,k}$ and $s' \in \{\pm 1\}$.
\end{lemma}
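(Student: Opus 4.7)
The plan is to mirror the argument used for \lmmref{lemma:ERM_feature_small}, adapting only the per-step increment bound to account for the Cutout expectation. Define $\tilde T_{s,k}$ as the smallest iteration at which $\gamma_{s'}^{(\tilde T_{s,k})}(s,k) > \alpha^2 \beta^{-1}$ for some $s' \in \{\pm 1\}$; if no such iteration exists within $[0, T^*]$ the claim is trivial, so assume it does.

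For every $0 \le t < \tilde T_{s,k}$ and every $s' \in \{\pm 1\}$, I would start from the update rule \eqref{eq:Cutout_feature} and bound the expectation termwise using $g_{i,\gC}^{(t)} \le 1$ and $\phi' \le 1$. This gives
\[
\gamma_{s'}^{(t+1)}(s,k) \;\le\; \gamma_{s'}^{(t)}(s,k) \;+\; \frac{\eta}{n}\sum_{i \in \gV_{s,k}} \mathbb{E}_{\gC \sim \gD_\gC}\!\left[\mathbbm{1}_{p_i^* \notin \gC}\right] \;=\; \gamma_{s'}^{(t)}(s,k) \;+\; \frac{\eta |\gV_{s,k}|}{n}\cdot \frac{P-C}{P} \;\le\; \gamma_{s'}^{(t)}(s,k) \;+\; \frac{\eta |\gV_{s,k}|}{n},
\]
where the equality uses that a uniformly random $C$-subset of $[P]$ avoids a fixed index with probability $(P-C)/P$.

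Telescoping from $0$ up to $\tilde T_{s,k}$ and using the definition of $\tilde T_{s,k}$ yields
\[
\alpha^2 \beta^{-1} \;<\; \gamma_{s'}^{(\tilde T_{s,k})}(s,k) \;=\; \sum_{t=0}^{\tilde T_{s,k}-1}\!\Bigl(\gamma_{s'}^{(t+1)}(s,k) - \gamma_{s'}^{(t)}(s,k)\Bigr) \;\le\; \frac{\eta |\gV_{s,k}|}{n}\, \tilde T_{s,k},
\]
which rearranges to $\tilde T_{s,k} > n\alpha^2/(\eta \beta |\gV_{s,k}|)$, as required.

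There is no real obstacle here: the one place Cutout differs from \textsf{ERM} is the extra expectation over $\gC$, and bounding that expectation by the trivial probability $(P-C)/P \le 1$ absorbs the difference without loss. All other ingredients (monotonicity of $\gamma_{s'}^{(t)}(s,k)$ guaranteed by \lmmref{lemma:decomposition} and the envelope $\phi' \le 1$) are identical to the \textsf{ERM} case, so the statement follows from this one-line modification of the \textsf{ERM} proof.
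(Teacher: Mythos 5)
Your proof is correct and mirrors the paper's argument exactly: both define $\tilde T_{s,k}$ as the first crossing time, bound each increment by $\eta|\gV_{s,k}|/n$ via $g_{i,\gC}^{(t)}\le 1$, $\phi'\le 1$, and the indicator, and telescope to obtain the bound on $\tilde T_{s,k}$. The only cosmetic difference is that you explicitly evaluate $\mathbb{E}_{\gC\sim\gD_\gC}[\mathbbm{1}_{p_i^*\notin\gC}]=(P-C)/P$ before discarding it, whereas the paper bounds the expectation directly by $1$.
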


\begin{proof}[Proo of Lemma~\ref{lemma:Cutout_feature_small}]
    Let $\tilde T _{s,k}$ be the smallest iterate such that $\gamma_{s'}^{(t)}(s,k) > \alpha^2 \beta^{-1}$ for some $s' \in \{\pm 1\}$. 
    We assume the existence of $\tilde T_{s,k}$, as its absence would directly lead to our conclusion.

    For any $0 \leq t < \tilde T _{s,k}$,
    \begin{equation*}
        \gamma_{s'}^{(t+1)}(s,k) = \gamma^{(t)}_{s'}(s,k) + \frac{\eta}{n}\sum_{i \in \gV_{s,k}} \mathbb{E}_{\gC \sim \gD_\gC} \left[ g_{i,\gC}^{(t)} \phi' \left( \left \langle \vw_{s'}^{(t)} , \vv_{s,k} \right \rangle \right) \cdot \mathbbm{1}_{p_i^* \notin \gC}\right] \leq \gamma^{(t)}_{s'}(s,k) +\frac{\eta |\gV_{s,k}|}{n},
    \end{equation*}
    and we have 
    \begin{equation*}
        \alpha^2\beta^{-1} \leq \gamma_{s'}^{\left(\tilde{T}_{s,k}\right)}(s,k) = \sum_{t=0}^{\tilde T_{s,k}-1 } \left( \gamma_{s'}^{(t+1)}(s,k)- \gamma_{s'}^{(t)}(s,k) \right) \leq \frac{\eta |\gV_{s,k}| }{n} \tilde{T}_{s,k}.
    \end{equation*}
    We conclude $\tilde{T}_{s,k} \geq \frac{n \alpha^2}{\eta\beta |\gV_{s,k}|}$ which is the desired result.
\end{proof}

Next, we will show that the model overfits data augmented not containing common or rare features in at least constant order within $\tilde{T}_{s,k}$ iterates.
\begin{lemma}\label{lemma:Cutout_overfit}
    Suppose the event $E_\mathrm{init}$ occurs and  $\rho_k = o 
    \left(\frac{\alpha^2 \sigma_\mathrm{b}^2 d}{n} \right) $. For each $i \in [n]$ and $\gC \subset [P]$ with $|\gC| = C$, if (1) $i \in \gV_{y_i,k}$ and $p_i^* \notin \gC$ or (2) $i \in [n]$ and $p_i^* \in \gC$, then there exists $T_{i,\gC} \in \left[  \bar T_1, \frac{18 n \binom{P}{C}}{\eta  \beta \sigma_\mathrm{b}^2 d} \right]$ such that
    \begin{equation*}
        \sum_{p \notin \gC \cup \{p_i^*\}} \left( \rho_{y_i}^{(t)}(i,p) + \beta \rho_{-y_i}^{(t)}(i, p ) \right)  \geq 1 ,
    \end{equation*}
    for any $t > T_{i,\gC}$.
\end{lemma}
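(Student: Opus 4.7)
The plan is to adapt the contradiction scheme of Lemma~\ref{lemma:ERM_overfit} to the Cutout setting, with two key differences: the driver of overfitting is now the \emph{background} noise rather than the dominant noise (since the dominant patch may be cut out), and the per-step growth picks up a $1/\binom{P}{C}$ dilution factor from the random mask. I let $\tilde T_{i,\gC}$ be the smallest iterate at which
$S^{(t)} := \sum_{p \notin \gC \cup \{p_i^*\}}\bigl(\rho_{y_i}^{(t)}(i,p) + \beta \rho_{-y_i}^{(t)}(i,p)\bigr) \geq 1$, and assume for contradiction that $\tilde T_{i,\gC} > T'_{i,\gC} := \tfrac{18\, n \binom{P}{C}}{\eta \beta \sigma_\mathrm{b}^2 d}$.

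For every $0 \leq t \leq T'_{i,\gC}$ I upper-bound the margin $y_i f_{\mW^{(t)}}(\mX_{i,\gC})$ via Lemma~\ref{lemma:approx}. Under condition~(2) the feature patch is cut out, so only $S^{(t)} < 1$ contributes (plus a negligible $o(1/\polylog(d))$ approximation error). Under condition~(1), the hypothesis $\rho_k = o(\alpha^2 \sigma_\mathrm{b}^2 d / n)$ forces $k \in \gK_E$, and Lemma~\ref{lemma:Cutout_feature_small} applies because $T'_{i,\gC} \leq \tfrac{n\alpha^2}{\eta \beta |\gV_{y_i,k}|}$ by \ref{assumption:extreme}; hence the feature contribution is at most $(1+\beta)\alpha^2\beta^{-1} = o(1)$. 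In either case the margin is bounded by a constant, yielding $g_{i,\gC}^{(t)} \geq \tfrac{1}{9}$.

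I then isolate the growth of $S^{(t)}$. Restricting the expectation in \eqref{eq:Cutout_noise} to the single mask $\gC' = \gC$ (which is sampled with probability $1/\binom{P}{C}$) and fixing any background-noise patch $p \notin \gC \cup \{p_i^*, \tilde p_i\}$ (which exists because $C < P/2$ and $P \geq 8$ imply $P - C \geq 3$), and using $\phi' \geq \beta$ together with $\|\xi_i^{(p)}\|^2 \geq \tfrac{1}{2}\sigma_\mathrm{b}^2 d$, I obtain
\begin{equation*}
S^{(t+1)} - S^{(t)} \;\geq\; \frac{\eta \beta \sigma_\mathrm{b}^2 d}{18\, n \binom{P}{C}}.
\end{equation*}
Telescoping over $t = 0, \dots, T'_{i,\gC} - 1$ forces $S^{(T'_{i,\gC})} > 1$, contradicting the definition of $\tilde T_{i,\gC}$, so in fact $\tilde T_{i,\gC} \leq T'_{i,\gC}$. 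Setting $T_{i,\gC} := \max(\tilde T_{i,\gC}, \bar T_1)$, the inequality $S^{(t)} \geq 1$ for $t > T_{i,\gC}$ is preserved by monotonicity of the $\rho$ coefficients (Lemma~\ref{lemma:decomposition}), and the fact that $\bar T_1 \leq T'_{i,\gC}$ (so that $T_{i,\gC}$ lies in the claimed interval) follows from $\bar T_1 = O(P/(\eta \beta \rho_R))$ together with \ref{assumption:rare}.

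The main obstacle is the tight interplay between (i) the dilution $1/\binom{P}{C}$ that slows the per-step growth, (ii) the need for $T'_{i,\gC}$ to fit inside the window in which Lemma~\ref{lemma:Cutout_feature_small} keeps the feature term $o(1)$, and (iii) the replacement of $\sigma_\mathrm{d}^2$ in the ERM analog by $\sigma_\mathrm{b}^2$, reflecting that Cutout must overfit via background patches. All three are simultaneously controlled by \ref{assumption:extreme} together with \ref{assumption:rare}, which is precisely why the extremely-rare threshold in our data distribution is calibrated to $\sigma_\mathrm{b}^2$ rather than $\sigma_\mathrm{d}^2$.
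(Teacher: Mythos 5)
Your proposal is correct and follows the same contradiction scheme as the paper: bound the margin via Lemma~\ref{lemma:approx} and Lemma~\ref{lemma:Cutout_feature_small} (exploiting that the hypothesis $\rho_k = o(\alpha^2\sigma_\mathrm{b}^2 d/n)$ keeps the contradiction window inside the range where the feature coefficient stays $\leq \alpha^2\beta^{-1}$), obtain $g_{i,\gC}^{(t)} \geq 1/9$, extract the per-step growth $\eta\beta\sigma_\mathrm{b}^2 d/(18n\binom{P}{C})$ from a single retained background patch, and telescope. The only cosmetic difference is that you set $T_{i,\gC} = \max(\tilde T_{i,\gC}, \bar T_1)$ and argue $\bar T_1 \leq T'_{i,\gC}$ separately, whereas the paper instead shows $S^{(\bar T_1)} \leq 1/2$ (via Lemma~\ref{lemma:Cutout_noise_small} and Lemma~\ref{lemma:Cutout_learn_feature}) so that the crossing time automatically exceeds $\bar T_1$; both yield the claimed interval.
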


\begin{proof}[Proof of Lemma~\ref{lemma:Cutout_overfit}]
    We can address both cases in the statement simultaneously. Suppose $\sum_{p \notin \gC \cup \{p_i^*\}} \left( \rho_{y_i}^{(t)}(i,p) + \beta \rho_{-y_i}^{(t)}(i, p ) \right)  < 1 $ for all $0 \leq t \leq \frac{n \alpha^2}{\eta \beta |\gV_{y_i,k}|}$.
    
From Lemma~\ref{lemma:approx} and Lemma~\ref{lemma:Cutout_feature_small}, we have
    \begin{align*}
        &\quad y_i f_{\mW^{(t)}}(\mX_{i,\gC}) \\
        & = \sum_{p \notin \gC  } \bigg( \phi \left( \left \langle \vw_{y_i}^{(t)} ,\vx_i^{(p)} \right \rangle \right) - \phi \left( \left \langle \vw_{-y_i}^{(t)} ,\vx_i^{(p)} \right \rangle \right)\bigg)\\
        &\leq \gamma_{y_i}^{(t)}(y_i,k) + \beta \gamma_{-y_i}^{(t)}(y_i,k) + \sum_{p \notin \gC \cup \{p_i^*\}} \left( \rho_{y_i}^{(t)}(i,p) + \beta \rho_{-y_i}^{(t)}(i,p)\right) +  2P \cdot o \left( \frac{1}{\polylog(d)}\right)\\
        &\leq (1+\beta) \alpha^2 \beta^{-1} + 1 +  2P\cdot o \left( \frac{1}{\polylog(d)}\right)\\
        &\leq 2,
    \end{align*}
    and $g_{i,\gC}^{(t)} = \frac{1}{1+ \exp \left( y_i f_{\mW^{(t)}}(\mX_{i,\gC})\right)} \geq \frac{1}{9}$. Also, for each $p \notin \gC \cup \{p_i^*\}$, we have
    \begin{align*}
        &\quad \rho_s^{(t+1)}(i,p) + \beta \rho_{-s} ^{(t+1)}(i,p)\\
        & \geq \rho_s^{(t)}(i,p) + \beta \rho_{-s} ^{(t)}(i,p) \\
        &\quad + \frac{\eta}{n} \mathbb{P}_{\gC'\sim \gD_\gC}[\gC' = \gC]g_{i,\gC}^{(t)} \bigg(\phi' \left( \left \langle \vw_s^{(t)} , \vx_i^{(p)} \right \rangle \right) + \beta \phi' \left( \left \langle \vw_{-s}^{(t)} ,\vx_i^{(p)} \right \rangle \right) \bigg) \left \lVert \xi_i^{(p)}\right \rVert^2\\
        &\geq \rho_s^{(t)}(i,p) + \beta \rho_{-s} ^{(t)}(i,p)+ \frac{\eta \beta \sigma_\mathrm{b}^2 d}{18n\binom{P}{C}},
    \end{align*}
    where  the last inequality is due to $\left \lVert \xi_i^{(p)}\right \rVert^2 \geq \frac{1}{2} \sigma_\mathrm{b}^2 d$ and $\phi' \geq \beta$. We also have
    \begin{equation*}
        \sum_{p \notin \gC \cup \{p_i^*\}} \left( \rho_s^{(t+1)}(i,p) + \beta \rho_{-s}^{(t+1)}(i,p) \right) \geq \sum_{p \notin \gC \cup \{p_i^*\}} \left( \rho_s^{(t)}(i,p) + \beta \rho_{-s}^{(t)}(i,p) \right) + \frac{\eta \beta \sigma_\mathrm{b}^2 d}{18n\binom{P}{C}}
    \end{equation*}

    From the given condition in the lemma statement, we have $\frac{18n\binom{P}{C}}{\eta \beta \sigma_\mathrm{b}^2 d} = o \left( \frac{n \alpha^2}{\eta \beta |\gV_{s,k}|}\right)$. If we choose $t_0 \in \left[\frac{18n \binom{P}{C}}{\eta \beta \sigma_\mathrm{b}^2 d}, \frac{n \alpha^2 }{\eta \beta |\gV_{s,k}|}\right]$, then we have
    \begin{equation*}
       1 > \sum_{p \notin \gC \cup \{p_i^*\}} \left( \rho_s^{(t_0)}(i,p) + \beta \rho_{-s}^{(t_0)}(i,p)\right) \geq \frac{\eta \beta \sigma_\mathrm{b}^2 d}{18n \binom{P}{C}} t_0 \geq 1,
    \end{equation*}
    and this is a contradiction; therefore, it cannot hold that $\sum_{p \notin \gC \cup \{p_i^*\}} \left( \rho_{y_i}^{(t)}(i,p) + \beta \rho_{-y_i}^{(t)}(i,p)\right)<1$ for all $0 \leq t \leq \frac{n \alpha^2}{\eta \beta |\gV_{y_i,k}|}$.
    Thus, there exists $0 \leq T_{i,\gC} < \frac{n\alpha^2}{\eta \beta |\gV_{s,k}|}$ satisfying $\sum_{p \notin \gC \cup \{p_i^*\}} \left( \rho_s^{(T_{i,\gC}+1)}(i,p) + \beta \rho_{-s}^{(T_{i,\gC}+1)}(i,p) \right)\geq 1$ and let us choose the smallest one.

    For any $0 \leq t < T_{i,\gC}$, we have
    \begin{equation*}
        1 \geq \sum_{p \notin \gC \cup \{p_i^*\}} \left( \rho_s^{(T_{i,\gC})}(i,p) + \beta \rho_{-s}^{(T_{i,\gC})}(i,p)\right)\geq \frac{\eta \sigma_\mathrm{b}^2 d}{18n\binom{P}{C}} T_i,
    \end{equation*}
    and we conclude that $T_{i,\gC} \leq \frac{18n \binom{P}{C}}{\eta \beta\sigma_\mathrm{b}^2 d}$.
    
    Lastly, we move on to prove $T_{i,\gC} > \bar T_1$. 
    Combining Lemma~\ref{lemma:Cutout_noise_small} and Lemma~\ref{lemma:Cutout_learn_feature} leads to 
    \begin{equation*}
        \sum_{p \notin \gC \cup \{p_i^*\} \setminus\{p_i^*\}} \left( \rho_{s}^{(\bar T_1)}(i,p) + \beta \rho_{-s}^{(\bar T_1)}(i,p)\right) \leq \frac{1}{2}.
    \end{equation*}
    Thus, we have $T_{i,\gC} > \bar T_1$ and this is what we desired.
\end{proof}

\paragraph{What We Have So Far.} For any $k \in \gK_E $, it satisfies $\rho_k   = o \left(\frac{\alpha^2n}{ \sigma_\mathrm{b}^2 d} \right)$ due to \ref{assumption:extreme}. By Lemma~\ref{lemma:Cutout_overfit} at iterate $t \in [T_\Cutout, T^*]$ with 
    \begin{equation*}
        T_\Cutout := \max \left\{ \max_{\substack{k \in \gK_E, i \in \gV_{y_i,k},p_i^* \notin \gC}}T_{i,\gC}, \max_{i \in [n], p_i^* \in \gC}T_{i, \gC} \right\} \in \left [\bar T_1,T^*\right ] 
    \end{equation*}
    the following properties hold if the event $E_\mathrm{init}$ occurs:
\begin{itemize}[leftmargin=4mm]
    \item (Learn common/rare features): For any $s \in \{\pm 1\}$ and $k \in \gK_C \cup \gK_R$, 
    \begin{equation*}
    \gamma_s^{(t)}(s,k) + \beta \gamma_{-s}^{(t)}(s,k) = \Omega(1),
    \end{equation*}
    \item (Overfit augmented data with extremely rare features or no feature): For each $i \in [n], k \in \gK_E$, $\gC \subset [P]$ with $|\gC|=C$ such that (1) $i \in \gV_{y_i,k}$ and $p_i^* \notin \gC$ or (2) $i \in [n]$ and $p_i^* \in \gC$
    \begin{equation*}
        \sum_{p \notin \gC \cup \{p_i^*\} } \left( \rho_{y_i}^{(t)} (i,p) + \beta \rho_{-y_i} ^{(t)}(i,p) \right)= \Omega(1).
    \end{equation*}
    \item (Do not learn extremely rare features at $T_\Cutout$): For any $s,s' \in \{\pm1\}$ and $k \in \gK_E$, 
    \begin{equation*}
        \gamma_{s'}^{(T_\Cutout)} (s,k) \leq \alpha^2 \beta^{-1}.
    \end{equation*}
    \item For any $s \in \{\pm 1\}, i \in [n],$ and $p \in [P] \setminus \{p_i^*\}, \rho_{s}^{(t)}(i,p) = \bigOtilde\left( \beta^{-1} \right)$.
\end{itemize}

\subsubsection{\textsf{Cutout} cannot Learn Extremely Rare Features Within Polynomial Times}\label{sec:Cutout_maintenance}
In this step, We will show that \textsf{Cutout} cannot learn extremely rare features within the maximum admissible iterate $T^* = \frac{\poly(d)} \eta$.

we fix any $s^* \in \{\pm 1\}$ and $k^* \in \gK_E$.
Recall the function $Q^{(s^*,k^*)}: \gW \rightarrow \R^{d \times 2}$, defined in Lemma~\ref{lemma:unique} and omit superscripts for simplicity. For each iteration $t$, $Q(\mW^{(t)})$ represents quantities updates by data with feature vector $\vv_{s^*,k^*}$ until $t$-th iteration. We will sequentially introduce several technical lemmas and by combining these lemmas, quantify update by data with feature vector $\vv_{s^*,k^*}$ after $T_\Cutout$ and derive our conclusion.

Let us define $\mW^* = \{\vw_1^*, \vw_{-1}^*\}$, where
\begin{equation*}
    \vw_s^* = \vw_s^{(T_\Cutout)} + M \sum_{i \in \gV_{s^*,k^*}} \sum_{p \in [P] \setminus \{p_i^*, \tilde{p}_i\}}\frac{\xi_i^{(p)}}{\left \lVert  \xi_i^{(p)}\right\rVert^2 },
\end{equation*}
where $M = 4\beta^{-1} \log \left( \frac{2 \eta \beta^2 T^*}{\alpha^2}\right)$. Note that \eqref{property:alpha_lb}, $\beta < 1$, and $T^* = \frac{\poly(d)} \eta$ together imply $M = \bigOtilde \left (\beta^{-1}\right )$.
Note that $\mW^{(t)}, \mW^* \in \gW$ for any $t \geq 0$. 
\begin{lemma}\label{lemma:Cutout_distance}
Suppose the event $E_\mathrm{init}$ occurs. Then,
    \begin{equation*}
        \left \lVert Q\left(\mW^{\left( T _\Cutout \right)}\right) - Q(\mW^*)\right\rVert^2 \leq  8M^2 P|\gV_{s^*,k^*}| \sigma_\mathrm{b}^{-2} d^{-1}.
    \end{equation*}
    where $\lVert \cdot \rVert$ denotes the Frobenius norm.
\end{lemma}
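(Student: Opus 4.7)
The plan is to follow the same template as Lemma~\ref{lemma:ERM_distance}, but with two structural simplifications tailored to the Cutout setting: the perturbation defining $\mW^*$ only changes the \emph{background} noise coefficients $\rho_s(i,p)$ for $p\in [P]\setminus\{p_i^*,\tilde p_i\}$, so (i) the norm estimates use $\sigma_\mathrm{b}$ rather than $\sigma_\mathrm{d}$, and (ii) the $\rho_s(i,\tilde p_i)$ coefficients are untouched, so the feature-noise contributions that appeared in the ERM version drop out entirely.

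First I would compute $Q_s(\vw_s^*)-Q_s\bigl(\vw_s^{(T_\Cutout)}\bigr)$ directly from the definition of $Q^{(s^*,k^*)}$ in Lemma~\ref{lemma:unique}. Unfolding the sign convention $ss^*\rho_s(i,p)$ and noting that adding $M\,\xi_i^{(p)}/\|\xi_i^{(p)}\|^2$ to $\vw_s$ increases $ss^*\rho_s(i,p)$ by $M$ for every $i\in\gV_{s^*,k^*}$ (the factor $ss^*$ cancels with the sign in the decomposition), one obtains the clean identity
\begin{equation*}
Q_s(\vw_s^*)-Q_s\bigl(\vw_s^{(T_\Cutout)}\bigr)\;=\;M\sum_{i\in\gV_{s^*,k^*}}\sum_{p\in[P]\setminus\{p_i^*,\tilde p_i\}}\frac{\xi_i^{(p)}}{\left\lVert\xi_i^{(p)}\right\rVert^{2}},
\end{equation*}
for each $s\in\{\pm1\}$. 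Crucially, the $\alpha$-feature-noise terms present in the ERM analysis do not appear here because $\rho_s(i,\tilde p_i)$ is not perturbed.

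Next I would square the Frobenius norm and split into a diagonal and an off-diagonal piece. Using the bounds in $E_\mathrm{init}$, namely $\tfrac12\sigma_\mathrm{b}^2 d\le\|\xi_i^{(p)}\|^2\le\tfrac32\sigma_\mathrm{b}^2 d$ and $|\langle\xi_i^{(p)},\xi_j^{(q)}\rangle|\le \sigma_\mathrm{b}^2 d^{1/2}\log d$ for background-noise pairs, the diagonal piece contributes at most $M^2\cdot P|\gV_{s^*,k^*}|\cdot 2\sigma_\mathrm{b}^{-2}d^{-1}$, and the off-diagonal piece is controlled via
\begin{equation*}
\sum_{(i,p)\neq(j,q)}\frac{|\langle\xi_i^{(p)},\xi_j^{(q)}\rangle|}{\|\xi_i^{(p)}\|^{2}\|\xi_j^{(q)}\|^{2}}\;\le\;\bigOtilde\!\bigl(nP\,d^{-1/2}\bigr)\sum_{i,p}\|\xi_i^{(p)}\|^{-2},
\end{equation*}
which is $o(1)$ times the diagonal piece by \eqref{property:high_dim}, hence negligible. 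Summing over $s\in\{\pm1\}$ yields the claimed $8M^2 P|\gV_{s^*,k^*}|\sigma_\mathrm{b}^{-2}d^{-1}$.

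I expect no serious obstacle: the main thing to be careful about is bookkeeping the $ss^*$ sign convention when reading off the coefficient changes from Lemma~\ref{lemma:unique}, and verifying that the feature-noise cross terms from the decomposition genuinely vanish (they do, because the perturbation is orthogonal to $\vv_{\pm s,1}$ after using $\mLambda$). Everything else reduces to the same high-dimensional near-orthogonality estimates already used for Lemma~\ref{lemma:ERM_distance}, with $\sigma_\mathrm{d}$ replaced by $\sigma_\mathrm{b}$ and an extra factor of $P-2\le P$ coming from summing over the non-dominant, non-feature patches.
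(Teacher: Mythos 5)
Your proposal is correct and follows essentially the same route as the paper: the same explicit identity for $Q_s(\vw_s^*)-Q_s\bigl(\vw_s^{(T_\Cutout)}\bigr)$ (with the $ss^*$ signs cancelling and the $\alpha$-feature-noise terms absent since $\rho_s(i,\tilde p_i)$ is untouched), then the diagonal/off-diagonal split with the same $E_\mathrm{init}$ estimates and the constant $8M^2P|\gV_{s^*,k^*}|\sigma_\mathrm{b}^{-2}d^{-1}$. One small correction: the off-diagonal smallness $\bigOtilde(nPd^{-1/2})\le 1$ should be justified by \ref{assumption:n} (equivalently \eqref{property:noise_sum}) rather than \eqref{property:high_dim}, which only gives $nP<d$ and is not strong enough on its own.
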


\begin{proof}[Proof of Lemma~\ref{lemma:Cutout_distance}]
    For each $s \in \{\pm 1\}$, 
    \begin{align*}
         &\quad ss^* \left( Q_{s}\left(\vw_{s}^*\right) - Q_{s}\left(\vw_{s}^{(T_\Cutout)}\right) \right)\\
         &= Q_s \left( ss^* M \sum_{i \in \gV_{s^*,k^*}} \sum_{p \in [P] \setminus \{p_i^*, \tilde p_i\}}\frac{\xi_i^{(p)}}{\left \lVert \xi_i^{(p)} \right\rVert}\right)\\
         &= M \sum_{i \in \gV_{s^*,k^*}} \sum_{p \in [P] \setminus \{p_i^*, \tilde{p}_i\}}\frac{\xi_i^{(p)}}{\left \lVert \xi_i^{(p)}\right \rVert^2} ,
    \end{align*}
    and we have
    \begin{align*}
         &\quad \left \lVert Q\left(\mW^{\left(T_\Cutout\right)}\right) - Q(\mW^*)\right\rVert^2\\
          &= \left \lVert Q_1(\vw^*_{1}) - Q_1 \left( \vw_{1}^{(T_\Cutout)}\right)\right \rVert^2 + \left \lVert Q_{-1}(\vw^*_{-1}) - Q_{-1} \left( \vw_{-1}^{(T_\Cutout)}\right)\right \rVert^2\\
         &\leq  2M^2 \left( \sum_{i \in \gV_{s^*,k^*}, p \in [P] \setminus \{p_i^*, \tilde p_i\}} \left \lVert \xi_i^{(p)}\right \rVert^{-2} + \sum_{\substack{i,j \in  \gV_{s^*,k^*}\\ p \in [P] \setminus \{p_i^*, \tilde p_i\}, q \in [P] \setminus \{p_j^*, \tilde p_j\}\\  (i,p) \neq (j,q)}} \frac{\left| \left \langle \xi_i^{(p)} , \xi_j^{(q)} \right \rangle \right|}{\left \lVert \xi_i^{(p)} \right \rVert^2 \left \lVert \xi_j^{(q)}\right \rVert^2}\right).
    \end{align*}
    From $E_\mathrm{init}$ and \ref{assumption:n}, we have
    \begin{align*}
        \sum_{\substack{i,j \in \gV_{s^*,k^*}\\ p \in [P] \setminus \{p_i^*, \tilde p_i\}, q \in [P] \setminus \{p_j^*, \tilde p_j\}\\(i,p)\neq (j,q)}} \frac{\left | \left \langle \xi_i^{(p)}, \xi_j^{(q)} \right \rangle \right|}{\left \lVert  \xi_i^{(p)} \right \rVert^2 \left \lVert \xi_j^{(q)}\right \rVert^2} 
        &\leq \sum_{\substack{i \in \gV_{s^*,k^*}\\p \in [P] \setminus \{p_i^*, \tilde p_i\}}} \sum_{\substack{j \in \gV_{s^*,k^*}\\p \in [P] \setminus \{p_j^*, \tilde p_j\}}}\left \lVert \xi_i^{(\tilde p)} \right \rVert^{-2} \bigOtilde\left (d^{-\frac{1}{2}} \right) \\
        &\leq \sum_{\substack{i \in \gV_{s^*,k^*}\\p \in [P] \setminus \{p_i^*, \tilde p_i\}}} \left \lVert \xi_i^{(\tilde p)} \right \rVert^{-2} \bigOtilde\left (nPd^{-\frac{1}{2}} \right)\\
        &\leq \sum_{\substack{i \in \gV_{s^*,k^*}\\p \in [P] \setminus \{p_i^*, \tilde p_i\}}} \left \lVert \xi_i^{(p)}\right \rVert^{-2}
    \end{align*}
    From the event $E_\mathrm{init}$ defined in Lemma~\ref{lemma:initial}, we have
    \begin{equation*}
         \sum_{\substack{i \in \gV_{s^*,k^*}\\p \in [P] \setminus \{p_i^*, \tilde p_i\}}} \left \lVert \xi_i^{(p)}\right \rVert^{-2} \leq 2 P |\gV_{s^*,k^*}| \sigma_\mathrm{d}^{-2}d^{-1},
    \end{equation*}
    and we obtain
    \begin{equation*}
        \left \lVert 
        Q\left(\mW^{\left(T_\Cutout\right)}\right) - Q(\mW^*)\right\rVert^2 \leq 4M^2  \sum_{i \in \gV_{s^*, k^*}, p \in [P] \setminus \{p_i^*\}} \left \lVert \xi_i^{(p)} \right \rVert^{-2} \leq 8  M^2 P|\gV_{s^*,k^*}| \sigma_\mathrm{b}^{-2}d^{-1}.
    \end{equation*}
\end{proof}
\begin{lemma}\label{lemma:Cutout_inner}
    Suppose the $E_\mathrm{init}$ occurs. For any $t \geq T_\Cutout$, $i \in \gV_{s^*,k^*}$ and any $\gC \subset [P]$ with $|\gC| = C$, it holds that
    \begin{equation*}
        \left \langle y_i \nabla_\mW f_{\mW^{(t)}}(\mX_{i,\gC}), Q(\mW^*) \right \rangle \geq \frac{M \beta}{2}.
    \end{equation*}
\end{lemma}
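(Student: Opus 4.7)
The plan is to mirror the proof of Lemma~\ref{lemma:ERM_inner}, with the dominant-noise patch $\tilde p_i$ in ERM replaced by any background-noise patch $p \in [P] \setminus (\gC \cup \{p_i^*, \tilde p_i\})$ that survives the Cutout mask. First I would expand
\begin{align*}
    &\langle y_i \nabla_\mW f_{\mW^{(t)}}(\mX_{i,\gC}), Q(\mW^*)\rangle \\
    &\quad = \sum_{p \notin \gC} \Bigl[\phi'(\langle \vw_{s^*}^{(t)}, \vx_i^{(p)}\rangle)\langle Q_{s^*}(\vw_{s^*}^*), \vx_i^{(p)}\rangle - \phi'(\langle \vw_{-s^*}^{(t)}, \vx_i^{(p)}\rangle)\langle Q_{-s^*}(\vw_{-s^*}^*), \vx_i^{(p)}\rangle\Bigr],
\end{align*}
and show that every summand is nonnegative up to $o(1/\polylog(d))$ error, while each surviving background-noise patch contributes at least $M\beta$.

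Next I would analyze $ss^*\langle Q_s(\vw_s^*), \vx_i^{(p)}\rangle$ by patch type. For $p = p_i^*$ (if $p_i^* \notin \gC$), the $Q_s(\vw_s^*)$ coefficient on $\vv_{s^*,k^*}$ is $ss^*\gamma_s^{(T_\Cutout)}(s^*,k^*) \geq 0$, so the summand is $\geq 0$. For $p = \tilde p_i$ (if $\tilde p_i \notin \gC$), the $M$-shift defining $\mW^*$ only adds background-noise directions, so $\langle Q_s(\vw_s^*), \xi_i^{(\tilde p_i)}\rangle$ is (i) the baseline $\rho_s^{(T_\Cutout)}(i,\tilde p_i)$ plus (ii) dominant--background cross products which, via Lemma~\ref{lemma:initial}, Lemma~\ref{lemma:Cutout_polytime}, and \eqref{property:noise_sum}, are bounded by $o(1/\polylog(d))$; thus with $\phi' \geq \beta$ the contribution is again nonnegative up to negligible error.

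The main gain comes from any $p \in [P] \setminus (\gC \cup \{p_i^*, \tilde p_i\})$: the diagonal term $(j,q)=(i,p)$ in the $M$-sum contributes $+M$ to $ss^*\langle Q_s(\vw_s^*), \xi_i^{(p)}\rangle$, while (a) all background--background off-diagonal cross products sum to $\bigOtilde(nP\beta^{-1}d^{-\frac12}) = o(1/\polylog(d))$ by Lemma~\ref{lemma:initial} and \eqref{property:noise_sum}; (b) the $\rho^{(T_\Cutout)}(j,q)$-cross terms are $o(1/\polylog(d))$ by Lemma~\ref{lemma:Cutout_polytime} and \eqref{property:noise_sum}; and (c) the $\alpha$-feature-noise terms are $o(1/\polylog(d))$ by \ref{assumption:feature_noise}. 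Hence $ss^*\langle Q_s(\vw_s^*), \xi_i^{(p)}\rangle \geq M/2$ for both $s = \pm s^*$, and multiplying by $\phi' \geq \beta$ yields a contribution of at least $M\beta$ from each surviving background-noise patch. Since $P \geq 8$ and $C < P/2$, we have $|[P] \setminus (\gC \cup \{p_i^*, \tilde p_i\})| \geq P - C - 2 \geq 2$, so at least one such patch exists for every admissible mask $\gC$, yielding $\langle y_i \nabla_\mW f_{\mW^{(t)}}(\mX_{i,\gC}), Q(\mW^*)\rangle \geq M\beta - o(1/\polylog(d)) \geq M\beta/2$.

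The main technical obstacle is the sign bookkeeping in the canonical decomposition of $\vw_{-s^*}^*$: because $i \in \gV_{s^*} \subset \gV_{-(-s^*)}$ carries a minus sign in the $\vw_{-s^*}$-expansion, one must verify that the uniform $+M$ additions produce an additive (not cancelling) contribution across the two $s$-branches in the sum above, and then check uniformly in $\gC$ that every background--background and background--feature cross-noise sum remains $o(1/\polylog(d))$; this last part hinges on the high-dimensional assumption \ref{assumption:n} via \eqref{property:noise_sum} and on the polynomial-time coefficient bound $\bigOtilde(\beta^{-1})$ from Lemma~\ref{lemma:Cutout_polytime}.
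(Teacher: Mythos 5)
Your proposal mirrors the paper's proof of this lemma essentially verbatim: both expand the gradient inner product over the surviving patches $p \notin \gC$, show that the feature-patch term $ss^*\langle Q_s(\vw_s^*), \vv_{s^*,k^*}\rangle = \gamma_s^{(T_\Cutout)}(s^*,k^*) \geq 0$, that the dominant-noise-patch term is only $-o(1/\polylog(d))$, and that any surviving background-noise patch picks up the full $M\beta$ contribution from the $M$-shift in $\mW^*$, after which $C < P/2$ guarantees at least one such patch survives the mask. The sign bookkeeping you flag is handled implicitly in the paper's definition of $Q_s$ (where the leading $ss^*$ factor ensures the $+M$ contributions add rather than cancel across $s = \pm s^*$), and the cross-term estimates you cite are exactly those the paper invokes via Lemma~\ref{lemma:Cutout_polytime}, \eqref{property:noise_sum}, and \ref{assumption:feature_noise}.
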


\begin{proof}[Proof of Lemma~\ref{lemma:Cutout_inner}]
We have
\begin{align*}
    &\quad \langle y_i \nabla_\mW f_{\mW^{(t)}}(\mX_{i,\gC}), Q(\mW^*) \rangle\\
    &=  \sum_{p \notin \gC } 
       \bigg( \phi'\left( \left \langle \vw_{s^*}^{(t)} , \vx_i^{(p)} \right \rangle  \right)  \left \langle Q_{s^*}(\vw_{s^*}^*) , \vx_i^{(p)} \right \rangle  - \phi'\left( \left \langle \vw_{-s^*}^{(t)} , \vx_i^{(p)} \right \rangle  \right) \left \langle Q_{-s^*}(\vw_{-s^*}^*) , \vx_i^{(p)} \right \rangle \bigg).
\end{align*}
For any $s \in \{\pm 1\}$ and $p \in [P] \setminus \{p_i^*, \tilde{p}_i\}$,
\begin{align}\label{eq:Cutout_background_inner}
    &\quad s s^* \left \langle Q_{s}(\vw_{s}^*) , \xi_i^{(p)} \right \rangle \nonumber \\
    &\geq M+\rho_{s}^{(T_\Cutout)}(i,p) - \sum_{\substack{j \in [n],q \in [P] \setminus \{p_j^*\}\\(j, q) \neq (i,p)}}  \rho_{s}^{(T_\Cutout)} (j,q) \frac{ \left| \left \langle \xi_i^{(p)} , \xi_j^{(q)} \right \rangle \right|}{\left \lVert \xi_j^{(q)} \right\rVert^2} \nonumber \\
    &\quad - M  \sum_{\substack{j \in \gV_{s^*,k^*},q \in [P] \setminus \{p_j^*, \tilde p_j\}\\(j,q) \neq (i, p)}} \frac{ \left| \left \langle \xi_i^{(p)} , \xi_j^{(q)} \right \rangle \right|}{\left \lVert \xi_j^{(q)} \right\rVert^2}\nonumber \\
    &\geq M - \bigOtilde \left (n P \beta^{-1} \sigma_\mathrm{d} \sigma_\mathrm{b}^{-1} d^{-\frac 1 2}\right)= M- o \left( \frac{1}{\polylog (d)}\right)\nonumber \\
    &\geq \frac{M}{2},
\end{align}
where the last equality is due to \eqref{property:noise_sum}.
Also, for any $s \in \{\pm 1\}$, $ s s^* \left \langle Q_{s} (\vw_{s}^*) , \vv_{s^*,k^*} \right \rangle = \gamma_{s}^{(T_\Cutout)} (s^*,k^*) \geq 0$. In addition,
\begin{align}\label{eq:cutout_dominant_inner}
    &\quad s s^* \left \langle Q_{s}(\vw_{s}^*) , \vx_i^{(\tilde p_i)} \right \rangle \nonumber  \\
    &= s s^* \left \langle Q_{s}(\vw_{s}^*) , \xi_i^{(\tilde p_i)}\right \rangle + ss^* \left \langle Q_s(\vw_s^*), \vx_i^{(\tilde p_i) }- \xi_i^{(\tilde p_i)} \right \rangle \nonumber \\
    &= s s^* \left \langle Q_{s}(\vw_{s}^*) , \xi_i^{(\tilde p_i)}\right \rangle - \bigOtilde\left(\alpha^2 \beta^{-1} \rho_{k^*}n\sigma_\mathrm{d}^{-2}d^{-1}\right) \nonumber \\
    &=  \rho_{s}^{(T_\Cutout )}(i,\tilde{p}_i)  + \sum_{\substack{j \in [n], q \in [P] \setminus \{p_i^*\}\\(j,q) \neq (i,\tilde p_i) }}  \rho_{s}^{(T_\Cutout)} (j,q) \frac{ \left \langle \xi_i^{(\tilde p_i)} , \xi_j^{(q)} \right \rangle }{\left \lVert \xi_j^{(q)} \right\rVert^2} - \bigOtilde\left(\alpha^2 \beta^{-1} \rho_{k^*}n\sigma_\mathrm{d}^{-2}d^{-1}\right)\nonumber \\
    &\geq - \bigOtilde \left (n P \beta^{-1} \sigma_\mathrm{d} \sigma_\mathrm{b}^{-1} d^{-\frac{1}{2}}\right ) -\bigOtilde\left(\alpha^2 \beta^{-1} \rho_{k^*}n\sigma_\mathrm{d}^{-2}d^{-1}\right)\nonumber \\
    &= - o \left(\frac{1}{\polylog (d)} \right),
\end{align}
where the last equality is due to \eqref{property:noise_sum} and \ref{assumption:feature_noise}.

For any $\gC \subset [P]$ with $|\gC| = C$, there exists $p \in [P]\setminus\{p_i^*, \tilde p_i\}$ such that $p \neq \gC$ since $C<\frac{P}{2}$. By applying \eqref{eq:Cutout_background_inner} and \eqref{eq:cutout_dominant_inner} for $s = s^*,-s^*$ and combining with $\phi' \geq \beta$, we have
\begin{align*}
     &\quad \langle y_i \nabla_\mW f_{\mW^{(t)}}(\mX_{i,\gC}), Q(\mW^*) \rangle \\
     &\geq 
       \bigg( \phi'\left( \left \langle \vw_{s^*}^{(t)} , \vx_i^{(p)} \right \rangle  \right)  \left \langle Q_{s^*}(\vw_{s^*}^*) , \vx_i^{(p)} \right \rangle  - \phi'\left( \left \langle \vw_{-s^*}^{(t)} , \vx_i^{(p)} \right \rangle  \right) \left \langle Q_{-s^*}(\vw_{-s^*}^*) , \vx_i^{(p)} \right \rangle \bigg) \\
       &\quad +  \sum_{q \notin \gC \cup\{p\} } 
       \bigg( \phi'\left( \left \langle \vw_{s^*}^{(t)} , \vx_i^{(q)} \right \rangle  \right)  \left \langle Q_{s^*}(\vw_{s^*}^*) , \vx_i^{(q)} \right \rangle  - \phi'\left( \left \langle \vw_{-s^*}^{(t)} , \vx_i^{(q)} \right \rangle  \right) \left \langle Q_{-s^*}(\vw_{-s^*}^*) , \vx_i^{(q)} \right \rangle \bigg)\\
       &\geq M\beta - o \left( \frac{1}{\polylog (d)}\right) \\
       &\geq \frac{M\beta}{2}.
\end{align*}
\end{proof}

By combining Lemma~\ref{lemma:Cutout_distance} and Lemma~\ref{lemma:Cutout_inner}, we can obtain the following result.
\begin{lemma}\label{lemma:Cutout_sum}
Suppose the event $E_\mathrm{init}$ occurs.
    \begin{equation*}
        \frac{\eta}{n} \sum_{t = T_\Cutout }^{T^*} \sum_{i \in \gV_{s^*,k^*}} \mathbb{E}_{\gC \sim \gD_\gC} [\ell \left(y_i f_{\mW^{(t)}}(\mX_{i,\gC}) \right)] \leq \left \lVert Q\left( \mW^{(T_\Cutout )}\right) - Q(\mW^*) \right \rVert^2 + 2 \eta T^* e^{-\frac{M \beta}{4}},
    \end{equation*}
    where $\lVert \cdot \rVert$ denotes the Frobenius norm.
\end{lemma}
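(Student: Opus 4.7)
The plan is to mirror the argument of Lemma~\ref{lemma:ERM_sum}, replacing per-sample quantities by their expectation over the cutout mask $\gC \sim \gD_\gC$. First, I would note that for any $t$ the update rule together with the definition of $Q^{(s^*,k^*)}$ yields
\begin{equation*}
    Q\bigl(\mW^{(t+1)}\bigr) = Q\bigl(\mW^{(t)}\bigr) - \frac{\eta}{n}\,\nabla_\mW \!\!\sum_{i \in \gV_{s^*,k^*}} \mathbb{E}_{\gC \sim \gD_\gC}\bigl[\ell\bigl(y_i f_{\mW^{(t)}}(\mX_{i,\gC})\bigr)\bigr],
\end{equation*}
since masks $\gC$ that hide the feature patch $p_i^*$ of an $i \in \gV_{s^*,k^*}$ still contribute only through noise patches belonging to $\gV_{s^*,k^*}$, which are exactly the coordinates carved out by $Q^{(s^*,k^*)}$. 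Expanding the squared Frobenius-norm difference and writing $\nabla_\mW y_i f_{\mW^{(t)}}(\mX_{i,\gC}) = \langle \nabla, \cdot \rangle$ decomposed against $Q(\mW^{(t)})$ and $Q(\mW^*)$, I obtain the identity
\begin{align*}
    &\bigl\lVert Q(\mW^{(t)}) - Q(\mW^*)\bigr\rVert^2 - \bigl\lVert Q(\mW^{(t+1)}) - Q(\mW^*)\bigr\rVert^2 \\
    &\quad = \frac{2\eta}{n}\Big\langle \nabla_\mW \!\!\sum_{i} \mathbb{E}_{\gC}[\ell(y_i f_{\mW^{(t)}}(\mX_{i,\gC}))],\ Q(\mW^{(t)})\Big\rangle \\
    &\qquad - \frac{2\eta}{n}\sum_{i}\mathbb{E}_{\gC}\Bigl[\ell'(y_i f_{\mW^{(t)}}(\mX_{i,\gC}))\,\bigl\langle \nabla_\mW y_i f_{\mW^{(t)}}(\mX_{i,\gC}), Q(\mW^*)\bigr\rangle\Bigr] \\
    &\qquad - \frac{\eta^2}{n^2}\Bigl\lVert \nabla_\mW \!\!\sum_{i}\mathbb{E}_{\gC}[\ell(y_i f_{\mW^{(t)}}(\mX_{i,\gC}))]\Bigr\rVert^2.
\end{align*}

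For the middle term, Lemma~\ref{lemma:Cutout_inner} bounds $\langle y_i \nabla_\mW f_{\mW^{(t)}}(\mX_{i,\gC}), Q(\mW^*)\rangle \geq M\beta/2$ for every admissible $\gC$, so this piece is at least $(M\beta \eta /n)\sum_i \mathbb{E}_\gC[-\ell'(\cdot)]$. For the first term, I would argue that $\langle \nabla_\mW y_i f_{\mW^{(t)}}(\mX_{i,\gC}), Q(\mW^{(t)})\rangle \geq y_i f_{\mW^{(t)}}(\mX_{i,\gC}) - o(1/\polylog(d))$ by the same chain-rule plus Lemma~\ref{lemma:activation} argument used in the ERM case: the smoothed-leaky-ReLU inequality $\phi'(z)\,a \geq \phi(z) - \phi(z-a) - r$ combined with $|\langle \vw_s^{(t)},\vx_i^{(p)}\rangle - \langle Q_s(\vw_s^{(t)}),\vx_i^{(p)}\rangle| = o(1/\polylog(d))$ (obtained by expanding $\vw_s^{(t)} - Q_s(\vw_s^{(t)})$ against the coefficient bounds from Lemma~\ref{lemma:Cutout_polytime} and the Gaussian near-orthogonality from $E_{\mathrm{init}}$) gives the desired telescoping over patches $p \notin \gC$. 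Combining the two estimates and using convexity $-\ell'(z)(z - M\beta/4) \geq \ell(z) - e^{-M\beta/4}$ produces
\begin{equation*}
    \bigl\lVert Q(\mW^{(t)}) - Q(\mW^*)\bigr\rVert^2 - \bigl\lVert Q(\mW^{(t+1)}) - Q(\mW^*)\bigr\rVert^2 \geq \frac{2\eta}{n}\sum_{i}\mathbb{E}_\gC[\ell(\cdot)] - 2\eta e^{-M\beta/4} - (\text{grad sq.\ term}).
\end{equation*}

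The main obstacle I anticipate is controlling the squared-norm term $\tfrac{\eta^2}{n^2}\lVert\nabla_\mW \sum_i \mathbb{E}_\gC[\ell(\cdot)]\rVert^2$: I would need a Cutout analogue of the ERM gradient-norm bound (playing the role of \texttt{Lemma~ERM\_norm\_bound}) showing it is at most $\tfrac{C \eta^2 P^2 \sigma_\mathrm{d}^2 d |\gV_{s^*,k^*}|}{n^2}\sum_i \mathbb{E}_\gC[\ell(\cdot)]$, which then absorbs into $\tfrac{\eta}{n}\sum_i\mathbb{E}_\gC[\ell(\cdot)]$ via the smallness of $\eta$ from \ref{assumption:etc}. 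Pulling the expectation outside via Jensen and using that each $\lVert\vx_i^{(p)}\rVert \leq \bigO(\sigma_\mathrm{d} d^{1/2})$ with only $P - C$ summands surviving per $\gC$ should yield the same form as in ERM, and the argument closes by telescoping the per-step inequality from $t = T_\Cutout$ to $T^*$, discarding the nonnegative final distance $\lVert Q(\mW^{(T^*+1)}) - Q(\mW^*)\rVert^2$.
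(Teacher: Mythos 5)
Your proposal follows essentially the same route as the paper's proof: expanding $\lVert Q(\mW^{(t)}) - Q(\mW^*)\rVert^2 - \lVert Q(\mW^{(t+1)}) - Q(\mW^*)\rVert^2$, applying Lemma~\ref{lemma:Cutout_inner} to the $Q(\mW^*)$ cross-term, controlling the $Q(\mW^{(t)})$ cross-term via the chain rule plus Lemma~\ref{lemma:activation} and the $o(1/\polylog(d))$ estimate of $\langle \vw_s^{(t)} - Q_s(\vw_s^{(t)}),\vx_i^{(p)}\rangle$, absorbing the squared-gradient term via a norm bound (the paper's Lemma~\ref{lemma:Cutout_norm_bound} is exactly the Cutout analogue you anticipate), and telescoping. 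So there is no conceptual gap.

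However, two of your inequalities are written in the wrong direction, and this is worth fixing carefully because the signs are load-bearing here. You claim $\langle \nabla_\mW y_i f_{\mW^{(t)}}(\mX_{i,\gC}), Q(\mW^{(t)})\rangle \geq y_i f_{\mW^{(t)}}(\mX_{i,\gC}) - o(1/\polylog(d))$, but because this inner product is multiplied by $\ell'(y_i f_{\mW^{(t)}}(\mX_{i,\gC})) < 0$ inside the first cross-term, the bound you actually need is the \emph{upper} bound $\langle \nabla_\mW y_i f_{\mW^{(t)}}(\mX_{i,\gC}), Q(\mW^{(t)})\rangle \leq y_i f_{\mW^{(t)}}(\mX_{i,\gC}) + o(1/\polylog(d))$, which is what Lemma~\ref{lemma:activation} delivers via $|\phi(z) - (z+\delta)\phi'(z)| \leq r + |\delta|$. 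Relatedly, your stated convexity inequality $-\ell'(z)(z - M\beta/4) \geq \ell(z) - e^{-M\beta/4}$ has a spurious leading minus sign; the correct tangent-line inequality for the convex, decreasing $\ell$ is $\ell'(z)(z - M\beta/4) \geq \ell(z) - \ell(M\beta/4) \geq \ell(z) - e^{-M\beta/4}$. These two sign slips do not cancel and, taken literally, would flip the direction of the per-step inequality; but since you reference the correct lemmas and the ERM template, the fix is purely mechanical.
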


\begin{proof}[Proof of Lemma~\ref{lemma:Cutout_sum}]
    Note that for any $T_\Cutout  \leq t < T^*$,
    \begin{equation*}
        Q\left (\mW^{(t+1)} \right) = Q\left (\mW^{(t)} \right) - \frac{\eta}{n} \nabla_\mW \sum_{i \in \gV_{s^*,k^*}} \mathbb{E}_{\gC \sim \gD_\gC} \left[ \ell \left(y_i f_{\mW^{(t)}}(\mX_{i,\gC}) \right) \right].
    \end{equation*}
    Therefore, we have
    \begin{align*}
        &\quad \left \lVert Q\left (\mW^{(t)} \right) -  Q\left (\mW^* \right) \right \rVert^2 - \left \lVert Q\left (\mW^{(t+1)} \right) -  Q\left (\mW^* \right) \right \rVert^2\\
        &= \frac{2 \eta}{n} \left \langle \nabla_\mW \sum_{i \in \gV_{s^*,k^*}} \mathbb{E}_{\gC \sim \gD_\gC}\left[ \ell \left( y_i f_{\mW^{(t)}}(\mX_{i,\gC})\right)\right] , Q\left (\mW^{(t)} \right) -  Q\left (\mW^* \right) \right \rangle \\
        &\quad - \frac{\eta^2}{n^2} \left \lVert \nabla_{\mW} \sum_{i \in \gV_{s^*,k^*}} \mathbb{E}_{\gC \sim \gD_\gC}\left [\ell \left( y_i f_{\mW^{(t)}}(\mX_{i,\gC}) \right) \right]\right \rVert^2\\
        &= \frac{2\eta}{n} \left \langle \nabla_\mW \sum_{i \in \gV_{s^*,k^*}}  \mathbb{E}_{\gC \sim \gD_\gC} \left[ \ell (y_i f_{\mW^{(t)}}(\mX_{i,\gC}))\right] , Q \left( \mW^{(t)}\right)\right \rangle \\
        &\quad - \frac{2\eta}{n} \sum_{i \in \gV_{s^*,k^*}} \left \langle\mathbb{E}_{\gC \sim \gD_\gC} \left[ \ell' (y_i f_{\mW^{(t)}}(\mX_{i,\gC}))  \nabla_\mW y_i f_{\mW^{(t)}}(\mX_{i,\gC})\right], Q \left( \mW^* \right)\right \rangle \\
        &\quad - \frac{\eta^2}{n^2} \left \lVert \nabla_{\mW} \sum_{i \in \gV_{s^*,k^*}} \mathbb{E}_{\gC \sim \gD_\gC} \left[ \ell \left( y_i f_{\mW^{(t)}}(\mX_{i,\gC}) \right) \right] \right \rVert^2\\
        & \geq \frac{2\eta}{n} \left \langle \nabla_\mW \sum_{i \in \gV_{s^*,k^*}} \mathbb{E}_{\gC \sim \gD_\gC} \left[ \ell (y_i f_{\mW^{(t)}}(\mX_{i,\gC})) \right], Q \left( \mW^{(t)}\right)  \right \rangle\\
        & \quad - \frac{M\beta \eta}{n} \sum_{i \in \gV_{s^*,k^*}} \mathbb{E}_{\gC \sim \gD_\gC} \left[\ell' (y_i f_{\mW^{(t)}}(\mX_{i,\gC}))\right] - \frac{\eta^2}{n^2} \left \lVert \nabla_{\mW} \sum_{i \in \gV_{s^*,k^*}} \mathbb{E}_{\gC \sim \gD_\gC} \left[ \ell \left( y_i f_{\mW^{(t)}}(\mX_{i,\gC}) \right) \right]\right \rVert^2,
    \end{align*}
    where the last inequality is due to Lemma~\ref{lemma:Cutout_inner}.
    By the chain rule, for each $\gC \subset [P]$ with $|\gC| = C$, we have
    \small
    \begin{align*}
         &\quad \left \langle \nabla_\mW \sum_{i \in \gV_{s^*,k^*}} \ell (y_i f_{\mW^{(t)}}(\mX_{i,\gC})), Q \left( \mW^{(t)}\right)\right \rangle\\
         &= \sum_{i \in \gV_{s^*,k^*}} \Bigg [ \ell' (y_i f_{\mW^{(t)}}(\mX_{i,\gC}))\\
         &\quad \times \sum_{p \notin \gC}\bigg( \phi' \left( \left \langle \vw_{s^*}^{(t)} , \vx_i^{(p)}\right \rangle \right) \left \langle Q_{s^*}\left(\vw_{s^*}^{(t)}\right) ,\vx_i^{(p)}\right \rangle  -\phi' \left( \left \langle \vw_{-s^*}^{(t)} , \vx_i^{(p)}\right \rangle \right) \left \langle Q_{-s^*}\left(\vw_{-s^*}^{(t)}\right) , \vx_i^{(p)}\right \rangle \bigg) \Bigg].
    \end{align*}
    \normalsize
    For each $s \in \{\pm 1\}$, $i \in \gV_{s^*,k^*}$, and $p \in [P]$,
    \begin{align*}
                &\quad \left |\left \langle \vw_{s}^{(t)} , \vx_i^{(p)} \right \rangle  - \left \langle Q_{s}\left( \vw_{s}^{(t)} \right) , \vx_i^{(p)} \right \rangle \right|\\
        &= \left | \left \langle  \vw_{s}^{(t)} - Q_{s} \left( \vw_{s}^{(t)}\right), \vx_i^{(p)} \right \rangle \right| \\
        &\leq   \sum_{j \in [n] \setminus \gV_{s^*,k^*} , q \in [P] \setminus \{p_i^*\}}  \left | \left \langle\rho^{(t)}_{s}(j,q) \frac{\xi_j^{(q)}}{ \left \lVert \xi_j^{(q)}\right \rVert^2 } , \vx_i^{(p)}\right \rangle \right| \\
        &\quad + \alpha \sum_{ j \in \gF_{1} \setminus \gV_{s^*,k^*}} \rho_s^{(t)}(j,\tilde{p}_j)\left \lVert \xi_j^{(\tilde{p}_j)}\right \rVert^{-2} \left| \left \langle \vv_{1,1}  ,\vx_i^{(p)}\right \rangle \right|\\
        &\quad + \alpha \sum_{ j \in \gF_{-1} \setminus \gV_{s^*,k^*}} \rho_s^{(t)}(j,\tilde{p}_j)\left \lVert \xi_j^{(\tilde{p}_j)}\right \rVert^{-2} \left| \left \langle \vv_{-1,1}  ,\vx_i^{(p)}\right \rangle \right|\\
        &\leq \bigOtilde \left (n P \beta^{-1} \sigma_\mathrm{d} \sigma_\mathrm{b}^{-1} d^{-\frac 1 2} \right) + \bigOtilde\left(\alpha^2 \beta^{-1} n \sigma_\mathrm{d}^{-2}d^{-1}\right)\\
        &= o \left( \frac{1}{\polylog(d)}\right),
    \end{align*}
    where the last inequality is due to Lemma~\ref{lemma:Cutout_polytime} and the event $E_\mathrm{init}$. By Lemma~\ref{lemma:activation}, 
    \begin{align*}
        &\quad \sum_{p \notin \gC} \bigg( \phi' \left( \left \langle \vw_{s^*}^{(t)}, \vx_i^{(p)}\right \rangle \right) \left \langle Q_{s^*} \left( \vw_{s^*}^{(t) }\right), \vx_i^{(p)} \right \rangle  - \phi' \left( \left \langle \vw_{-s^*}^{(t)}, \vx_i^{(p)}\right \rangle \right) \left \langle Q_{-s^*} \left( \vw_{s^*}^{(t) }\right), \vx_i^{(p)} \right \rangle \bigg)\\
        &\leq \sum_{p \notin \gC} \bigg( \phi \left( \left \langle \vw_{s^*}^{(t)}, \vx_i^{(p)}\right \rangle \right) - \phi \left( \left \langle \vw_{-s^*}^{(t)} , \vx_i^{(p)} \right \rangle  \right) \bigg) + rP +  o\left( \frac{1}{\polylog (d)}\right)\\
        &= y_i f_{\mW^{(t)}}(\mX_{i,\gC}) + o \left( \frac{1}{\polylog (d)}\right),
    \end{align*}
    where the last equality is due to $r = o \left (\frac{1}{\polylog (d)}\right)$.
    Therefore, we have
    \begin{align*}
        &\quad \left \lVert Q \left( \mW^{(t)}\right) - Q \left( \mW^*\right)\right \rVert^2 - \left \lVert Q \left( \mW^{(t+1)}\right) - Q(\mW^*)\right \rVert^2\\
        &\geq \frac{2\eta}{n} \sum_{i \in \gV_{s^*,k^*}} \mathbb{E}_{\gC \sim \gD_\gC} \left[ \ell' \left(y_i f_{\mW^{(t)}}(\mX_{i,\gC}) \right) \left(y_i f_{\mW^{(t)}}(\mX_{i,\gC}) +  o\left( \frac{1}{\polylog (d)}\right) - \frac{M \beta}{2} \right) \right]\\
        &\quad - \frac{\eta^2}{n^2} \left \lVert \nabla_\mW \sum_{i \in \gV_{s^*,k^*}} \mathbb{E}_{\gC \sim \gD_\gC} \left[ \ell (y_i f_{\mW^{(t)}} (\mX_{i,\gC})) \right] \right \rVert^2\\
        &\geq \frac{2 \eta}{n} \sum_{i \in \gV_{s^*,k^*}} \mathbb{E}_{\gC \sim \gD_\gC} \left[ \ell' (y_i f_{\mW^{(t)}}(\mX_{i,\gC})) \left( y_i f_{\mW^{(t)}}(\mX_{i,\gC}) - \frac{M \beta}{4} \right) \right]\\
        &\quad - \frac{\eta^2}{n^2} \left \lVert \nabla_\mW \sum_{i \in \gV_{s^*,k^*}} \mathbb{E}_{\gC \sim \gD_\gC} \left[ \ell (y_i f_{\mW^{(t)}}(\mX_{i,\gC}))\right] \right \rVert^2.
    \end{align*}
    
    From the convexity of $\ell(\cdot)$, 
    \begin{align*}
        &\quad \sum_{i \in \gV_{s^*,k^*}} \mathbb{E}_{\gC \sim \gD_\gC} \left[ \ell' (y_i f_{\mW^{(t)}}(\mX_{i,\gC})) \left(y_i f_{\mW^{(t)}}(\mX_{i,\gC}) - \frac{M \beta}{4}  \right) \right]\\
        &\geq \sum_{i \in \gV_{s^*,k^*}} \mathbb{E}_{\gC \sim \gD_\gC} \left[ \left(\ell (y_i f_{\mW^{(t)}}(\mX_{i,\gC})) - \ell \left( \frac{M \beta}{4} \right)\right) \right]\\
        &\geq  \sum_{i \in \gV_{s^*,k^*}} \mathbb{E}_{\gC \sim \gD_\gC} \left[ \ell(y_i f_{\mW^{(t)}}(\mX_{i,\gC}))\right] - n e^{-\frac{M \beta}{4} } .
    \end{align*}
    In addition, by Lemma~\ref{lemma:Cutout_norm_bound},
    \begin{align*}
        &\quad \frac{\eta^2 }{n^2} \left \lVert \nabla \sum_{i \in \gV_{s^*,k^*}} \mathbb{E}_{\gC \sim \gD_\gC}[ \ell \left(y_i f_{\mW^{(t)}}(\mX_{i,\gC}) \right)]\right \rVert^2 \\
        &\leq \frac{8\eta^2 P^2 \sigma_\mathrm{d}^2 d |\gV_{s^*,k^*}|}{n^2} \sum_{i \in \gV_{s^*,k^*}} \mathbb{E}_{\gC \sim \gD_\gC} [ \ell (y_i f_{\mW^{(t)}}(\mX_{i,\gC}) )]\\
        &\leq \frac{\eta}{n} \sum_{i \in \gV_{s^*,k^*}} \mathbb{E}_{\gC \sim \gD_\gC}[\ell (y_i f_{\mW^{(t)}}(\mX_{i,\gC}))],
    \end{align*}
    where the last inequality is due to \ref{assumption:etc}, and we have
    \begin{align*}
        &\quad \left \lVert Q \left( \mW^{(t)}\right) - Q(\mW^*) \right \rVert^2 - \left \lVert Q \left( \mW^{(t+1)}\right) - Q(\mW^*) \right\rVert^2\\
        &\geq \frac{\eta}{n} \sum_{i \in \gV_{s^*,k^*}} \mathbb{E}_{\gC \sim \gD_\gC}[\ell (y_i f_{\mW^{(t)}}(\mX_{i,\gC}))] - 2 \eta e^{-\frac{M \beta}{4}  }.
    \end{align*}
    From telescoping summation, we have
    \begin{equation*}
         \frac{\eta}{n} \sum_{t = T_\Cutout }^{T^*} \sum_{i \in \gV_{s^*,k^*}} \mathbb{E}_{\gC \sim \gD_\gC} [\ell \left(y_i f_{\mW^{(t)}}(\mX_{i,\gC}) \right )]
        \leq \left \lVert Q \left( \mW^{(T_\Cutout)}\right) -  Q \left( \mW^* \right) \right \rVert^2 + 2 \eta T^* e^{-\frac{M \beta}{4} }.
    \end{equation*}
\end{proof}
Finally, we can prove that the model cannot learn extremely rare features within $T^*$ iterations.
\begin{lemma}\label{lemma:Cutout_not_learn}
    Suppose the event $E_\mathrm{init}$ occurs. For any $T \in [T_\Cutout ,  T^*]$, we have $\gamma_{s}^{(T)}(s^*,k^*) = \bigOtilde (\alpha^2 \beta^{-2})$ for each $s \in \{\pm 1\}$.
\end{lemma}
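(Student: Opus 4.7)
The plan is to mirror the strategy used for \textsf{ERM} in Lemma~\ref{lemma:ERM_not_learn}, replacing the pointwise losses with expectations over the masking distribution $\gD_\gC$ and invoking the Cutout-specific technical lemmas already established in Section~\ref{sec:Cutout_maintenance}. The core observation is that each increment of $\gamma_s^{(t)}(s^*, k^*)$ is governed by the update rule \eqref{eq:Cutout_feature}, so every factor $g_{i,\gC}^{(t)}$ appearing there can be bounded by $\ell(y_i f_{\mW^{(t)}}(\mX_{i,\gC}))$, reducing the question to the cumulative loss summed over $t \in [T_\Cutout, T^*]$ and $i \in \gV_{s^*,k^*}$.

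First, I will fix $T \in [T_\Cutout, T^*]$ and unroll $\gamma_s^{(T)}(s^*, k^*)$ via \eqref{eq:Cutout_feature} starting from $t = T_\Cutout$. Using $\phi' \leq 1$ and the elementary inequality $-\ell' \leq \ell$ (which here translates to $g_{i,\gC}^{(t)} \leq \ell(y_i f_{\mW^{(t)}}(\mX_{i,\gC}))$ since $g_{i,\gC}^{(t)} = -\ell'(y_i f_{\mW^{(t)}}(\mX_{i,\gC}))$), this yields
\begin{equation*}
    \gamma_s^{(T)}(s^*, k^*) \leq \gamma_s^{(T_\Cutout)}(s^*, k^*) + \frac{\eta}{n} \sum_{t = T_\Cutout}^{T-1} \sum_{i \in \gV_{s^*, k^*}} \mathbb{E}_{\gC \sim \gD_\gC}\left[\ell\!\left(y_i f_{\mW^{(t)}}(\mX_{i,\gC})\right)\right].
\end{equation*}

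Next, I would bound each of the two terms on the right-hand side. The first term, $\gamma_s^{(T_\Cutout)}(s^*, k^*)$, is at most $\alpha^2 \beta^{-1}$ by the output of Section~\ref{sec:Cutout_overfit} (the ``do not learn extremely rare features at $T_\Cutout$'' bullet, which holds because $k^* \in \gK_E$). For the second term, I apply Lemma~\ref{lemma:Cutout_sum} to pass from the running cumulative loss up to $T - 1$ to the larger sum up to $T^*$, and then Lemma~\ref{lemma:Cutout_distance} to control $\lVert Q(\mW^{(T_\Cutout)}) - Q(\mW^*) \rVert^2$ by $8M^2 P |\gV_{s^*,k^*}| \sigma_\mathrm{b}^{-2} d^{-1}$.

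The routine but crucial final step is to verify that both resulting bounds are $\bigOtilde(\alpha^2 \beta^{-2})$. For the geometric term, using $|\gV_{s^*,k^*}| \leq \tfrac{3}{4}\rho_E n$ together with assumption~\ref{assumption:extreme} ($\rho_E n = o(\alpha^2 \sigma_\mathrm{b}^2 d / \polylog(d))$) and $M = \bigOtilde(\beta^{-1})$ gives $M^2 P |\gV_{s^*,k^*}| \sigma_\mathrm{b}^{-2} d^{-1} = \bigOtilde(\alpha^2 \beta^{-2})$. For the exponential term, the choice $M = 4\beta^{-1} \log(2\eta \beta^2 T^*/\alpha^2)$ is precisely calibrated so that $2\eta T^* e^{-M\beta/4} = \alpha^2/\beta^2$. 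Adding these to the $\alpha^2 \beta^{-1}$ initial term delivers the desired $\bigOtilde(\alpha^2 \beta^{-2})$ bound on $\gamma_s^{(T)}(s^*, k^*)$.

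The main conceptual obstacle has already been handled by Lemmas~\ref{lemma:Cutout_distance}, \ref{lemma:Cutout_inner}, and \ref{lemma:Cutout_sum}; what remains in the present lemma is just the bookkeeping step that converts a bound on aggregated losses over augmented data into a bound on the feature coefficient, exactly paralleling the \textsf{ERM} case. The only subtlety to watch for is that the indicator $\mathbbm{1}_{p_i^* \notin \gC}$ in \eqref{eq:Cutout_feature} only shrinks the update, so dropping it (by upper-bounding with $1$) is harmless.
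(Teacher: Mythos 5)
Your proposal is correct and follows essentially the same route as the paper's proof: unroll $\gamma_s^{(T)}(s^*,k^*)$ via \eqref{eq:Cutout_feature}, bound the increments using $\phi' \leq 1$, the dropped indicator, and $-\ell' \leq \ell$, then invoke Lemma~\ref{lemma:Cutout_sum} and Lemma~\ref{lemma:Cutout_distance} together with $\gamma_s^{(T_\Cutout)}(s^*,k^*) \leq \alpha^2\beta^{-1}$, Assumption~\ref{assumption:extreme}, and the calibrated choice of $M$. Your derivation of the calibration $2\eta T^* e^{-M\beta/4} = \alpha^2/\beta^2$ and the $\bigOtilde(\alpha^2\beta^{-2})$ accounting for the geometric term also match the paper.
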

\begin{proof}[Proof of Lemma~\ref{lemma:Cutout_not_learn}]
    For any $T \in [T_\Cutout, T^*]$,  we have
    \begin{align*}
        \gamma_{s}^{(T)} (s^*,k^*) &= \gamma_{s} ^{(T_\Cutout)}(s^*,k^*) + \frac \eta n \sum_{t = T_\Cutout }^{ T -1} \sum_{i \in \gV_{s^*,k^*}} \mathbb{E}_{\gC \sim \gD_\gC}\left[ g_{i,\gC}^{(t)}  \cdot \mathbbm{1}_{p \notin \gC}\right] \phi' \left( \left \langle \vw_{s}^{(t) } , \vv_{s^*,k^*} \right \rangle \right) \\
        &\leq \gamma_{s} ^{(T_\Cutout )}(s^*,k^*) + \frac \eta n \sum_{t = T_\Cutout }^{ T -1} \sum_{i \in \gV_{s^*,k^*}} \mathbb{E}_{\gC \sim \gD_\gC} \left[g_{i,\gC}^{(t)} \right]\\
        &\leq \gamma_{s} ^{(T_\Cutout )}(s^*,k^*) + \frac \eta n \sum_{t = T_\Cutout }^{ T -1} \sum_{i \in \gV_{s^*,k^*}} \mathbb{E}_{\gC \sim \gD_\gC} \left[ \ell \left(y_i f_{\mW^{(t)}}(\mX_{i,\gC}) \right) \right],
    \end{align*}
    where the first inequality is due to $\phi' \leq 1$ and the second inequality is due to $-\ell' \leq \ell$. From the result of Section~\ref{sec:Cutout_overfit}, $\gamma_{s}^{(T_\Cutout)}(s^*,k^*) \leq \alpha^2 \beta^{-1}$ and by Lemma~\ref{lemma:Cutout_sum} and Lemma~\ref{lemma:Cutout_distance}, we have
    \begin{align*}
        \frac \eta n \sum_{t = T_\Cutout}^{(T-1)} \sum_{i \in \gV_{s^*,k^*} } \mathbb{E}_{\gC \sim \gD_\gC} [\ell \left(y_i f_{\mW^{(t)}}(\mX_{i,\gC}) \right)] &\leq \frac \eta n \sum_{t = T_\Cutout}^{(T^*)} \sum_{i \in \gV_{s^*,k^*} } \mathbb{E}_{\gC \sim \gD_\gC} [\ell \left(y_i f_{\mW^{(t)}}(\mX_{i,\gC}) \right)]\\
        &\leq \left \lVert Q \left( \mW^{(T_\Cutout)}\right) - Q(\mW^*)  \right \rVert^2 + 2\eta T^* e^{-\frac{M \beta}{2} }\\
        & \leq 8 M^2 P|\gV_{s^*,k^*}| \sigma_\mathrm{b}^{-2} d^{-1} +  2 \eta T^* e^{-\frac{M \beta}{4} }\\
        & = \bigOtilde \left(\alpha^2 \beta^{-2}\right).
    \end{align*}
    The last line is due to \ref{assumption:extreme} and $M = 4\beta^{-1} \log \left( \frac{2 \eta \beta^2 T^*}{\alpha^2}\right)$. This finishes the proof.
\end{proof}

\paragraph{What We Have So Far.} Suppose the event $E_\mathrm{init}$ occurs. For any $t \in [T_\Cutout,  T^*]$, we have
\begin{itemize}[leftmargin = 4mm]
    \item (Learn common/rare features): $\gamma^{(t)}_{s}(s,k) + \beta \gamma^{(t)}_{-s}(s,k) = \Omega(1)$ for each $s \in \{\pm1\}$ and $k \in \gK_C \cup \gK_R$
    \item (Overfit augmented data with extremely rare features or no feature): For each $i \in [n], k \in \gK_E, \gC \subset [P]$ with $|\gC| = C$ such that (1) $i \in \gV_{y_i,k}$ and $p_i^* \notin \gC$ or (2) $i \in [n]$ and $p_i^* \in \gC$
    \begin{equation*}
        \sum_{p \notin \gC \cup \{p_i^*\} } \left( \rho_{y_i}^{(t)} (i,p) + \beta \rho_{-y_i} ^{(t)}(i,p) \right)= \Omega(1).
    \end{equation*}
    \item (Cannot learn extreme features): $\gamma_s^{(t)}(s,k), \gamma_{-s}^{(t)}(s,k) = \bigO\left(\alpha^2 \beta^{-2}\right)$ for each $s \in \{\pm 1\}$ and $k \in \gK_E$.
    \item For any $s \in \{\pm 1\}, i \in [n],$ and $p \in [P] \setminus \{p_i^*\}, \rho_{s}^{(t)}(i,p) = \bigOtilde\left( \beta^{-1} \right)$,
\end{itemize}

\subsubsection{Train and Test Accuracy}\label{sec:Cutout_acc}
In this step, we will prove that the model trained by \textsf{Cutout} has perfect training accuracy on both augmented data and original data but has near-random guesses on test data with extremely rare data.

For any $i \in \gV_{s,k}$ with $s \in \{\pm 1\}$, $k \in \gK_C \cup \gK_\gR$ and $\gC \subset[P]$ with $|\gC| = C$ and $p_i^* \notin \gC$, 
\begin{align*}
    &\quad y_i f_{\mW^{(t)}}(\mX_{i,\gC})\\
    &=  \sum_{p \notin \gC} \left( \phi \left( \left \langle \vw_s^{(t)}, \vx_i^{(p)} \right \rangle \right)  - \phi \left( \left \langle \vw_{-s}^{(t)}, \vx_i^{(p)} \right \rangle \right) \right)\\
    &= \gamma_s^{(t)}(s,k) + \beta \gamma_{-s}^{(t)}(s,k) + \sum_{p \notin \gC \cup \{p_i^*\}} \left(\rho_s^{(t)}(i,p)+ \beta \rho_{-s}^{(t)}(i,p) \right)-2(P-C) \cdot o \left(\frac{1}{\polylog (d)} \right)\\
    &\geq \gamma_s^{(t)}(s,k) + \beta \gamma_{-s}^{(t)}(s,k) -2 (P-C)\cdot o \left(\frac{1}{\polylog (d)} \right)\\
    &= \Omega(1) -  o \left(\frac{1}{\polylog (d)} \right)\\
    &= \Omega(1),
\end{align*}
for any $t \in [T_\Cutout, T^*]$. In addition, for any $i \in [n]$ and $\gC \subset [P]$ with $|\gC| = C$ that does not correspond to the case above, by Lemma~\ref{lemma:Cutout_overfit} and Lemma~\ref{lemma:approx}, we have
\begin{align*}
    &\quad y_i f_{\mW^{(t)}}(\mX_{i,\gC}) \\
    &=  \sum_{p \notin \gC} \bigg( \phi \left( \left \langle \vw_{y_i}^{(t)}, \vx_i^{(p)} \right \rangle \right)  - \phi \left( \left \langle \vw_{-y_i}^{(t)}, \vx_i^{(p)} \right \rangle \right) \bigg)\\
    &\geq \sum_{p \notin \gC \cup \{p_i^*\} } \left(\rho_{y_i}^{(t)}(i,p)+ \beta \rho_{-y_i}^{(t)}(i,p) \right) -2(P-C) \cdot o \left(\frac{1}{\polylog (d)} \right)\\
    &= \Omega(1)  - o \left(\frac{1}{\polylog (d)} \right)\\
    & = \Omega(1),
\end{align*}
for any $t \in [T_\Cutout, T^*]$. We can conclude that \textsf{Cutout} with $t\in [T_\Cutout, T^*]$ iterates achieve perfect training accuracy on augmented data.

Next, we will show that \textsf{Cutout} achieves perfect training accuracy on the original data. For any $i \in [n]$, let us choose $\gC \subset [P]$ with $|\gC| = C$ such that $p_i^* \in \gC$. Then, from the result above, we have
\begin{align*}
     y_i f_{\mW^{(t)}}(\mX_{i}) &= y_i f_{\mW^{(t)}} (\mX_{i,\gC}) + \sum_{p \in \gC} \bigg( \phi \left( \left \langle \vw_{y_i}^{(t)}, \vx_i^{(p)}\right \rangle \right ) -  \phi \left( \left \langle \vw_{-y_i}^{(t)}, \vx_i^{(p)}\right \rangle \right )\bigg)\\
     &\geq  y_i f_{\mW^{(t)}} (\mX_{i,\gC}) + \sum_{p \in \gC\setminus\{p_i^*\}} \left(\rho^{(t)}_{y_i}(i,p) + \beta \rho^{(t)}_{-y_i}(i,p)\right) - C \cdot  o \left(\frac{1}{\polylog (d)} \right)\\
     &\geq \Omega(1),
\end{align*}
for any $t \in [T_\Cutout, T^*]$ and we conclude that \textsf{Cutout} with $t \in [T_\Cutout,T^*]$ iterates achieve perfect training accuracy on original data.

Lastly, let us move on to the test accuracy part. Let $(\mX,y) \sim \gD$ be a test data with $\mX = \left( \vx^{(1)}, \dots, \vx^{(P)}\right) \in \R^{d \times P}$ having feature patch $p^*$, dominant noise patch $\tilde{p}$, and feature vector $\vv_{y,k}$.  We have $\vx^{(p)} \sim N(\vzero, \sigma_\mathrm{b}^2 \mLambda)$ for each $p \in [P] \setminus \{p^*, \tilde p\}$ and $\vx^{(\tilde p )} - \alpha \vv_{s,1} \sim N(\vzero, \sigma_\mathrm{d}^2 \mLambda)$ for some $s \in \{\pm 1\}$. Therefore, for all $t \in [T_\Cutout, T^*]$ and $p \in [P] \setminus \{p^*, \tilde p\}$,
\begin{align}
    &\quad  \left | \phi \left ( \left \langle \vw_1^{(t)}, \vx^{(p)} \right \rangle \right ) - \phi \left ( \left \langle \vw_{-1}^{(t)} , \vx^{(p)}\right \rangle \right) \right| \nonumber \\
    &\leq \left|  \left \langle \vw_1^{(t)} - \vw_{-1}^{(t)} ,  \vx^{(p)} \right \rangle\right| \nonumber \\
    &\leq \left| \left \langle \vw_1^{(0)} - \vw_{-1}^{(0)} , \vx^{(p)} \right \rangle \right| +  \sum_{i \in [n], q \in [P] \setminus \{p_i^*\}} \left |\rho_1^{(t)}(i,q)  - \rho_{-1}^{(t)}(i,q)\right|\frac{\left| \left \langle \xi_i^{(q)} , \vx^{(p)}\right \rangle \right|}{\left \lVert \xi_i^{(q)}\right \rVert^2}\nonumber \\
    &\leq \bigOtilde \left(\sigma_0 \sigma_\mathrm{b} d^{\frac 1 2 }\right) + \bigOtilde \left(n P \beta^{-1} \sigma_\mathrm{d} \sigma_\mathrm{b}^{-1} d^{-\frac 1 2 } \right)\nonumber \\
    &= o \left( \frac{\alpha}{\polylog (d)}\right), \label{eq:Cutout_background_noise}
\end{align}
with probability at least $1- o \left( \frac 1 {\poly(d)} \right)$ due to Lemma~\ref{lemma:initial}, \ref{assumption:etc}, \eqref{property:noise}, and \eqref{property:noise_sum}.. In addition, for any $s' \in \{\pm 1\}$, we have
\begin{align}\label{eq:Cutout_dominant_noise}
    &\quad \left| \left \langle  \vw_{s'}^{(t)}, \vx^{(\tilde p) } - \alpha \vv_{s,1}\right \rangle\right|\nonumber \\
    &\leq \left| \left \langle \vw_{s'}^{(0)}, \vx^{(\tilde p)}-\alpha \vv_{s,1} \right \rangle \right| +  \sum_{i \in [n], q \in [P] \setminus \{p_i^*\}} \rho_{s'}^{(t)}(i,q) \frac{\left| \left \langle \xi_i^{(q)} , \vx^{(\tilde p)}-\alpha \vv_{s,1}\right \rangle \right|}{\left \lVert \xi_i^{(q)}\right \rVert^2}\nonumber\\   
    &= \bigOtilde \left(\sigma_0 \sigma_\mathrm{d} d^{\frac 1 2}\right)+ \bigOtilde\left( nP\beta^{-1}\sigma_\mathrm{d}\sigma_\mathrm{b}^{-1} d^{-\frac 1 2 }\right) \nonumber \\
    & =o \left( \frac{\alpha}{\polylog(d) }\right) ,
\end{align}
with probability at least $1- o \left( \frac 1 {\poly (d)}\right)$ due to Lemma~\ref{lemma:initial}, \ref{assumption:etc}, \eqref{property:noise}, and \eqref{property:noise_sum}.
\paragraph{Case 1: $k \in \gK_C \cup \gK_R$} \quad 

By Lemma~\ref{lemma:initial}, \ref{assumption:feature_noise}, and \eqref{property:feature_noise}, 
\begin{align}
    &\quad \left| \phi \left( \left \langle \vw_1^{(t)}, \vx^{(\tilde p)} \right \rangle \right) - \phi \left( \left \langle \vw_{-1}^{(t)}, \vx^{(\tilde p)} \right \rangle \right) \right| \nonumber \\
    &\leq \left| \left \langle \vw_1^{(t)} - \vw_{-1}^{(t)} , \vx^{ (\tilde p)} \right \rangle \right|\nonumber \\
    &\leq \alpha \left|\left \langle \vw_1^{(t)} - \vw_{-1}^{(t)}, \vv_{s,1} \right \rangle  \right| + \left| \left \langle \vw_1^{(t)} - \vw_{-1}^{(t)}, \vx^{(p)} - \alpha \vv_{s,1}\right  \rangle \right| \nonumber \\
    &\leq  \alpha \left(\gamma^{(t)}_1(s,1) + \gamma^{(t)}_{-1}(s,1)\right) + \alpha \left|\left \langle \vw_1^{(0)}, \vv_{s,1} \right \rangle  \right| + \alpha \left|\left \langle \vw_{-1}^{(0)}, \vv_{s,1} \right \rangle  \right| + o \left( \frac{1}{\polylog (d)}\right)\nonumber\\
    &\leq \bigOtilde \left(\alpha \beta^{-1}\right)+ \bigOtilde \left (\alpha \sigma_0 \right) + o \left( \frac{1}{\polylog (d)}\right) \nonumber \\
    &= o \left( \frac{1}{\polylog (d)}\right),\label{eq:Cutout_dominant_noise_bound}
    \end{align}
with probability at least $1-o \left ( \frac 1 {\poly (d)}\right)$. Suppose \eqref{eq:Cutout_background_noise} and \eqref{eq:Cutout_dominant_noise_bound} holds. By Lemma~\ref{lemma:approx}, we have
\begin{align*}
    &\quad y f_{\mW^{(t)}} (\mX)\\
    &= \bigg( \phi \left( \left \langle \vw_{y}^{(t)}, \vv_{y,k}\right \rangle \right) \ - \phi \left( \left \langle \vw_{-y}^{(t)}, \vv_{y,k} \right \rangle \right) \bigg) \\
    &\quad + \sum_{p \in [P] \setminus \{p^*\}}\bigg( \phi \left( \left \langle  \vw_y^{(t) },\vx^{(p)} \right \rangle \right) - \phi \left( \left \langle  \vw_{-y}^{(t) },\vx^{(p)} \right \rangle \right) \bigg)\\
    &= \gamma_{y}^{(t)}(y,k) + \beta \gamma_{-y}^{(t)}(y,k) -o \left( \frac{1}{\polylog(d)}\right)\\
    &= \Omega(1) - o \left( \frac{1}{\polylog (d)}\right)\\
    &> 0.
\end{align*}
Therefore, we have
\begin{equation}\label{eq:Cutout_learn_acc}
    \mathbb{P}_{(\mX,y)\sim \gD}\left [y f_{\mW^{(t)}}(\mX)>0 \mid \vx^{(p^*)} = \vv_{y,k}, k \in \gK_C \cup \gK_R \right] \geq 1- o \left( \frac 1 {\poly (d)}\right).
\end{equation}
\paragraph{Case 2: $k \in \gK_E$}\quad

By triangular inequality and $\phi' \leq 1$, we have
\begin{align*}
    &\quad \phi \left( \left \langle \vw_s^{(t)}, \vx^{(\tilde p)}\right \rangle \right) - \phi \left( \left \langle \vw_{-s}^{(t)}, \vx^{(\tilde p)} \right \rangle \right)\\
    & = \phi \left( \left \langle \vw_s^{(t)}, \alpha \vv_{s,1} \right \rangle\right) - \phi \left( \left \langle \vw_{-s}^{(t)}, \alpha \vv_{s,1} \right \rangle \right)\\
    &\quad + \bigg( \phi \left( \left \langle \vw_s^{(t)}, \vx^{(\tilde p)}\right \rangle \right) - \phi \left(\left \langle  \vw_s^{(t)}, \alpha \vv_{s,1}\right \rangle \right) \bigg) - \bigg( \phi \left( \left \langle \vw_{-s}^{(t)}, \vx^{(\tilde p)}\right \rangle \right) - \phi \left( \left \langle \vw_{-s}^{(t)} , \alpha \vv_{s,1} \right \rangle \right) \bigg)\\
    &\geq \phi \left( \left \langle \vw_s^{(t)}, \alpha \vv_{s,1} \right \rangle\right) - \phi \left( \left \langle \vw_{-s}^{(t)}, \alpha \vv_{s,1} \right \rangle \right)\\
    &\quad - \left| \left \langle \vw_s^{(t)}, \vx^{(\tilde p )} - \alpha \vv_{s,1} \right \rangle \right| - \left| \left \langle \vw_{-s}^{(t)}, \vx^{(\tilde p )} - \alpha \vv_{s,1} \right \rangle \right|
    .
    \end{align*}
    In addition,
    \begin{align*}
        &\quad \phi \left( \left \langle \vw_s^{(t)}, \alpha \vv_{s,1} \right \rangle\right) - \phi \left( \left \langle \vw_{-s}^{(t)}, \alpha \vv_{s,1} \right \rangle \right)\\
        &= \left (\phi \left( \alpha \gamma_s^{(t)}(s,1) \right)  -  \phi \left(  - \alpha \gamma_{-s}^{(t)}(s,1) \right)\right) \\
        &\quad + \left(\phi \left( \left \langle \vw_s^{(t)}, \alpha \vv_{s,1} \right \rangle\right) - \phi \left(  \alpha \gamma_s^{(t)}(s,1)\right) \right)\\
        &\quad - \left(\phi \left( \left \langle \vw_{-s}^{(t)}, \alpha \vv_{s,1} \right \rangle\right) - \phi \left(  -\alpha \gamma_{-s}^{(t)}(s,1) \right) \right)\\
        &\geq \left (\phi \left( \alpha \gamma_s^{(t)}(s,1) \right)  -  \phi \left(  -\alpha \gamma_{-s}^{(t)}(s,1) \right)\right)\\
        &\quad - \alpha\left | \left \langle \vw_s^{(t)}, \vv_{s,1} \right \rangle - \gamma_s^{(t)}(s,1)\right|- \alpha \left| \left \langle \vw_{-s}^{(t)}, \vv_{s,1}\right \rangle + \gamma_{-s}^{(t)}(s,1) \right|\\
        &= \alpha \left( \gamma_s^{(t)}(s,1) + \beta \gamma_{-s}^{(t)}(s,1)\right) - \alpha \cdot o \left( \frac{1}{\polylog (d)}\right)\\
        &= \Omega(\alpha),
    \end{align*}
    where the second equality is due to Lemma~\ref{lemma:approx} and \ref{assumption:etc}.
If \eqref{eq:Cutout_dominant_noise} holds, we have
    \begin{equation}\label{eq:Cutout_feature_noise_dominate}
        \phi \left( \left \langle \vw_s^{(t)}, \vx^{(\tilde p)} \right \rangle\right) - \phi \left( \left \langle \vw_{-s}^{(t)}, \vx^{(\tilde p)} \right \rangle \right) = \Omega(\alpha) - o \left( \frac{\alpha }{\polylog (d)}\right) = \Omega(\alpha).
    \end{equation}
Note that
\begin{align*}
    &\quad y f_{\mW^{(t)}}(\mX)\\
    & = \phi \left ( \left \langle \vw_y^{(t)}, \vv_{y,k} \right \rangle \right) - \phi \left( \left \langle \vw_{-y}^{(t)}, \vv_{y,k} \right \rangle \right) + \phi \left( \left \langle \vw_y^{(t)}, \vx^{(\tilde p) }\right \rangle \right) - \phi \left( \left \langle \vw_{-y}^{(t)}, \vx^{(\tilde p) }\right \rangle \right) \\
    &\quad + \sum_{p \in [P] \setminus\{p^*, \tilde p\}}  \bigg(\phi \left( \left \langle \vw_y^{(t)}, \vx^{(p)} \right \rangle \right) - \phi \left( \left \langle \vw_{-y}^{(t)}, \vx^{(p)} \right \rangle \right) \bigg),
\end{align*}
and
\begin{align*}
    & \quad \left| \phi \left ( \left \langle \vw_y^{(t)}, \vv_{y,k} \right \rangle \right) - \phi \left ( \left \langle \vw_{-y}^{(t)}, \vv_{y,k} \right \rangle \right)\right|\\
    &\quad + \left| \sum_{p \in [P] \setminus\{p^*, \tilde p\}}  \bigg(\phi \left( \left \langle \vw_y^{(t)}, \vx^{(p)} \right \rangle \right) - \phi \left( \left \langle \vw_{-y}^{(t)}, \vx^{(p)} \right \rangle \right) \bigg)\right|\\
    &\leq \left| \left \langle \vw_y^{(t)} - \vw_{-y}^{(t)}, \vv_{y,k} \right \rangle \right| + o \left( \frac{\alpha}{\polylog (d)}\right) \\
    & \leq \gamma^{(t)}_1(y,k) + \gamma_{-1}^{(t)}(y,k) + \left| \left \langle \vw_y^{(0)} - \vw_{-y}^{(0)}, \vv_{y,k} \right \rangle \right| + o \left( \frac{\alpha}{\polylog (d)}\right)\\
    & \leq \bigO(\alpha^2 \beta^{-2}) + \bigOtilde(\sigma_0) + o \left(\frac{\alpha}{\polylog (d)}  \right)\\
    &= o \left( \frac{\alpha}{\polylog (d)} \right)\\
    &< \phi \left( \left \langle \vw_s^{(t)}, \vx^{(\tilde p) }\right \rangle \right) - \phi \left( \left \langle \vw_{-s}^{(t)}, \vx^{(\tilde p) }\right \rangle \right),
\end{align*}
where the first inequality is due to \eqref{eq:Cutout_background_noise}, the second-to-last line is due to \ref{assumption:etc}, \eqref{property:noise}, and \eqref{property:feature_noise} , and the last inequality is due to \eqref{eq:Cutout_feature_noise_dominate}. 
Therefore, we have $y f_{\mW^{(t)}}(\mX)>0$ if $y=s$. Otherwise, $y f_{\mW^{(t)}}(\mX)<0$.
\begin{equation}\label{eq:Cutout_not_learn_acc}
    \mathbb{P}_{(\mX,y) \sim \gD} \left[ y f_{\mW^{(t)}}(\mX)>0 \mid \vx^{(p^*)}  = \vv_{y,k}, k \in \gK_E \right] =  \frac 1 2  \pm o \left( \frac 1 {\poly (d)} \right).
\end{equation}
Hence, combining \eqref{eq:Cutout_learn_acc} and \eqref{eq:Cutout_not_learn_acc} implies
\begin{align*}
    \mathbb{P}_{(\mX,y) \sim \gD} \left[ y f_{\mW^{(t)}}(\mX)>0  \right] &=  \sum_{k \in \gK_C \cup \gK_R} \rho_k +\frac{1}{2} \left( 1- \sum_{k \in \gK_C \cup \gK_R} \rho_k\right) \pm o \left( \frac{1}{\poly (d)}\right) \\
    &=  1 -  \frac{1}{2} \sum_{k \in \gK_E } \rho_k \pm o \left( \frac 1 {\poly (d)}\right).
\end{align*}
\hfill $\square$
\clearpage
\section{Proof for \textsf{CutMix}}

\subsection{Proof of Lemma~\ref{lemma:decomposition} for \textsf{CutMix}}\label{sec:CutMix_decompose}
For each $i,j \in [n]$ and $\gS \subset [P]$, let
\begin{equation*}
    g_{i,j,\gS}^{(t)} :=  -\frac{|\gS|}{P} y_i\ell' \big(y_i f_{\mW^{(t)}}(\mX_{i,j,\gS}) \big) - \left( 1- \frac{|\gS|}{P}\right)y_j \ell' \big(y_j f_{\mW^{(t)}}(\mX_{i,j,\gS})\big).
\end{equation*}

For $s \in \{\pm 1\}$ and iterate $t$,
\begin{align*}
    & \quad \vw_s^{(t+1)} - \vw_s^{(t)}\\
    &= -\eta \nabla_{\vw_s} \gL_\CutMix \left( \mW^{(t)}\right)\\
    &= \frac{\eta}{n^2} \sum_{i,j \in [n]} \mathbb{E}_{\gS \sim \gD_\gS} \left[ s g_{i,j,\gS}^{(t)} \left( \sum_{p \in \gS} \phi' \left( \left \langle \vw_s^{(t)}, \vx_i^{(p)}\right \rangle\right) \vx_i^{(p)} + \sum_{p \notin \gS } \phi' \left(\left \langle  \vw_s^{(t)}, \vx_i^{(p)}\right \rangle\right)\vx_j^{(p)}\right)\right]\\
    &= \frac{s \eta}{n^2} \sum_{s' \in \{\pm 1\}, k \in [K]} \sum_{i \in \gV_{s',k}, j \in [n]} \mathbb{E}_{\gS \sim \gD_\gS}\left[g_{i,j,\gS}^{(t)} \mathbbm{1}_{p_i^* \in \gS} + g_{j,i,\gS}^{(t)} \mathbbm{1}_{p_i^* \notin \gS}\right] \phi' \left( \left \langle \vw_s^{(t)}, \vv_{s',k}\right \rangle\right) \vv_{s',k}\\
    & \quad + \frac{s\eta}{n^2} \sum_{i,j \in [n], p \in [P] \setminus\{p_i^*\}} \mathbb{E}_{\gS \sim \gD_\gS} \left[ g_{i,j,\gS}^{(t)}\mathbbm{1}_{p \in \gS} + g_{j,i,\gS}^{(t)} \mathbbm{1}_{p \notin \gS}\right] \phi' \left( \left \langle \vw_s^{(t)}, \vx_i^{(p)}\right \rangle \right) \vx_i^{(p)}.
\end{align*}
Hence, if we define $\gamma_s^{(t)}(s',k)$'s and $\rho_s^{(t)}(i,p)$'s recursively by using the rule
\small
\begin{align*}
    \gamma_s^{(t+1)}(s',k) = \gamma_s^{(t)}(s',k) + \frac{ss'\eta}{n^2} \sum_{i \in \gV_{s',k},j \in [n]} \mathbb{E}_{\gS \sim \gD_\gS}\left[ g_{i,j,\gS}^{(t)} \mathbbm{1}_{p_i^* \in \gS} + g_{j,i,\gS}^{(t)} \mathbbm{1}_{p_i^* \notin \gS}\right] \phi' \left( \left \langle \vw_s^{(t)}, \vv_{s',k}\right \rangle\right),\\
    \rho_s^{(t+1)}(i,p) = \rho_s^{(t)}(i,p) + \frac{sy_i \eta }{n^2} \sum_{j \in [n]} \mathbb{E}_{\gS \sim \gD_\gS} \left[ g_{i,j,\gS}^{(t)}\mathbbm{1}_{p \in \gS} + g_{j,i,\gS}^{(t)}\mathbbm{1}_{p \notin \gS} \right] \phi' \left( \left \langle \vw_s^{(t)} ,\vx_i^{(p)} \right \rangle \right) \left \lVert \xi_i^{(p) }\right \rVert^2,
\end{align*}
\normalsize
starting from $\gamma_s^{(0)}(s'k) = \rho^{(0)}_s(i,p) = 0$ for each $s,s'\in \{\pm 1\}, k \in [K], i \in [n]$ and $p \in [P] \setminus \{p_i^*\}$,  then we have
    \begin{align*}
        \vw_s^{(t)} &= \vw_s^{(0)} + \sum_{k \in [K]} \gamma_s^{(t)}(s,k) \vv_{s,k} - \sum_{k \in [K]} \gamma_s^{(t)}(-s,k) \vv_{-s,k} \\
        &\quad + \sum_{\substack{i \in \gV_s\\ p \in [P]\setminus\{\tilde{p}_i\}}} \rho_s^{(t)}(i,p) \frac{\xi_i^{(p)}}{\left\lVert \xi_i^{(p)}\right\rVert^2} - \sum_{\substack{i \in \gV_{-s}\\ p \in [P]\setminus\{\tilde{p}_i\}}} \rho_s^{(t)}(i,p) \frac{\xi_i^{(p)}}{\left\lVert \xi_i^{(p)}\right\rVert^2} \\
        &\quad  + \alpha \left( \sum_{i \in \gF_s} s y_i \rho_s^{(t)}(i,\tilde{p}_i) \frac{\vv_{s,1}}{\left \lVert  \xi_i^{(\tilde{p}_i)}\right\rVert^2 } + \sum_{i \in \gF_{-s}} s y_i \rho_{s}^{(t)}(i,\tilde{p}_i) \frac{\vv_{-s,1}}{\left \lVert  \xi_i^{(\tilde{p}_i)}\right\rVert^2 }\right) ,
    \end{align*}
    for each $s \in \{\pm 1\}$. \hfill $\square$

\subsection{Proof of Theorem~\ref{thm:CutMix}}\label{sec:proof_CutMix}
We will prove that the conclusion of Theorem~\ref{thm:CutMix} holds when the event $E_\mathrm{init}$ occurs.  The proof of Theorem~\ref{thm:CutMix} is structured into the following six steps:
\begin{enumerate}[leftmargin = 4mm]
    \item Introduce a reparametrization of the \textsf{CutMix} loss $\gL_\CutMix(\mW)$ to a convex function $h(\mZ)$ for ease of analysis (Section~\ref{sec:CutMix_reparam}).
    \item Characterize a global minimum of $h(\mZ)$ (Section~\ref{sec:CutMix_min}).
    \item Evaluate strong convexity constant in the region near the global minimum of $h(\mZ)$ (Section~\ref{sec:CutMix_SC}).
    \item Show that near stationary point of $h(\mZ)$ is close to a global minimum (Section~\ref{sec:CutMix_near_stationary}).
    \item Prove that gradient descent on the \textsf{CutMix} loss $\gL_\CutMix(\mW)$ achieves a near-stationary point of the reparametrized function $h(\mZ)$ and perfect accuracy on original training data  (Section~\ref{sec:CutMix_converge}).
    \item Evaluate the test accuracy of a model in near-stationary point (Section~\ref{sec:CutMix_acc}).
\end{enumerate}

\subsubsection{Reparametrization of CutMix Loss}\label{sec:CutMix_reparam}
It is complicated to characterize the stationary points of \textsf{CutMix} loss $\gL_\CutMix (\mW)$ due to its non-convexity. We will overcome this problem by introducing reparameterization of the objective function. Let us define
\begin{equation*}
     z_i^{(p)} := \phi\left (\left \langle \vw_1 , \vx_i^{(p)}\right \rangle\right ) - \phi\left ( \left \langle \vw_{-1} , \vx_i^{(p)} \right \rangle \right ),
\end{equation*}
for $i \in [n], p \in [P]$ and
\begin{equation*}
    z_{s,k} := \phi(\langle \vw_{1} , \vv_{s,k} \rangle) - \phi(\langle \vw_{-1} , \vv_{s,k} \rangle),
\end{equation*}
for each $s\in \{\pm 1\}, k \in [K]$.  We can rewrite \textsf{CutMix} loss $\gL_\CutMix(\mW)$ as a function $h(\mZ)$ of the defined variables $\mZ:=\{z_{s,k}\}_{s \in \{\pm 1\}, k \in [K]} \cup \{z_i^{(p)}\}_{i \in [n], p \in [P] \setminus \{p_i^*\}}$ as follows.
\begin{align*}
    h(\mZ):= \frac{1}{n^2} \sum_{i,j \in [n]}\mathbb{E}_{\gS \sim \gD_\gS} \left[\rule{0cm}{0.8cm}\right.&\frac{|\gS|}{P} \ell\left (y_i \left( \sum_{p \in \gS}z_i^{(p)}  + \sum_{p \notin \gS} z_j^{(p)} \right)\right )\\ &+ \left( 1- \frac{|\gS|}{P}\right) \ell\left (y_j \left( \sum_{p \in \gS}z_i^{(p)}  + \sum_{p \notin \gS} z_j^{(p)} \right)\right ) \left]\rule{0cm}{0.8cm}\right.,
\end{align*}
where we write $z_i^{(p_i^*)} = z_{s,k}$  if $i \in \gV_{s,k}$. For notational simplicity, let us consider $\mZ$ as vectors in $\R^{2K + n(P-1)}$ with the standard orthonormal basis $\left \{\ve_{s,k}\right\}_{s \in \{\pm 1\}, k \in [K]} \cup \left \{\ve_i^{(p)} \right\}_{i \in [n], p \in [P] \setminus \{p_i^*\}}$ which means
\begin{align*}
    \mZ &= \{z_{s,k}\}_{s \in \{\pm 1\}, k \in [K]} \cup \left \{z_i^{(p)}\right\}_{i \in [n], p \in [P] \setminus \{p_i^*\}}\\
    &= \sum_{s \in \{\pm 1\}, k \in [K]} z_{s,k} \ve_{s,k} + \sum_{i \in [n], p \in [P] \setminus \{p_i^*\}} z_i^{(p)} \ve_i^{(p)}.
\end{align*}
If there is no confusion,  we will use $\ve_i^{(p_i^*)}$ to represent $\ve_{s,k}$, for $i \in \gV_{s,k}$.

By the chain rule, 
\begin{equation*}
    \nabla_\mW \gL_\CutMix(\mW) = \mJ (\mW) \nabla_{\mZ}h(\mZ),
\end{equation*}
where each column of Jacobian matrix $\mJ(\mW) \in \R^{2d \times (n(P-1) + 2K)}$ is 
\begin{equation*}
    \nabla_\mW z_{s,k} = \begin{pmatrix}
        \phi'(\langle \vw_{1} , \vv_{s,k}\rangle) \vv_{s,k}\\
        -\phi'(\langle \vw_{-1} , \vv_{s,k}\rangle) \vv_{s,k}
    \end{pmatrix} \in \R^{2d}, 
    \nabla_\mW z_i^{(p)} = \begin{pmatrix}
        \phi'\left(\left \langle \vw_{1} , \vx_i^{(p)} \right\rangle\right) \vx_i^{(p)}\\
        -\phi'\left(\left \langle \vw_{-1}, \vx_i^{(p)} \right \rangle\right) \vx_i^{(p)}
    \end{pmatrix} \in \R^{2d}.
\end{equation*}
Let us characterize the smallest singular value $\sigma_{\mathrm{min}}(\mJ(\mW))$ of the Jacobian matrix $\mJ(\mW)$. For any unit vector $\vc =\{c_{s,k}\}_{s \in \{\pm 1\}, k \in [K]} \cup \left\{c_i^{(p)}\right\}_{i \in [n], p \in [P] \setminus \{p_i^*\}} \in \R^{2K + n(P-1)}$, we have
\begin{align*}
    \lVert \mJ(\mW) \vc \rVert^2
    &= \sum_{s \in \{\pm 1\}, k \in [K]} c_{s,k}^2 \left \lVert \nabla_\mW z_{s,k} \right \rVert^2 + \sum_{ i \in [n], p \in [P] \setminus \{p_i^*\}} \left(c_i^{(p)}\right)^2 \left\lVert \nabla_{\mW}z_i^{(p)} \right\rVert^2\\
    & \quad + \sum_{\substack{s_1,s_2 \in \{\pm 1\}, k_1, k_2 \in [K]\\(s_1,k_1) \neq (s_2, k_2)}} c_{s_1,k_1} c_{s_2,k_2} \langle \nabla_{\mW}  z_{s_1, k_1} ,\nabla_{\mW} z_{s_2, k_2} \rangle\\
    &\quad + 2\sum_{\substack{s\in \{\pm 1\}, k \in [K]\\ i \in [n], p \in [P] \setminus \{p_i^*\}}} c_{s,k} c_i^{(p)}   \left \langle \nabla_{\mW} z_{s, k} , \nabla_{\mW} z_i^{(p)} \right \rangle\\
    &\quad + \sum_{\substack{i \in [n], p\in [P] \setminus \{p_i^*\}\\j \in [n], q\in [P] \setminus \{p_j^*\}\\(i,p)\neq (j,q)}} c_i^{(p)} c_j^{(q)} \left\langle \nabla_\mW z_i^{(p)} , \nabla_\mW z_j^{(q)} \right\rangle.
\end{align*}
For each $s_1,s_2 \in \{\pm 1\}, k_1,k_2 \in [K]$ such that $(s_1,k_1) \neq (s_2,k_2)$, and $i \in [n],p \in [P] \setminus \{p_i^*, \tilde{p}_i\}$,
\begin{equation*}
   \langle \nabla_\mW z_{s_1,k_1}, \nabla_\mW z_{s_2,k_2}\rangle = \left \langle \nabla_\mW z_{s_1,k_1}, \nabla_\mW z_i^{(p)} \right \rangle = 0,
\end{equation*}
and if $k_1 >1$
\begin{equation*}
    \left \langle \nabla_\mW z_{s_1,k_1}, \nabla_\mW z_i^{(p)} \right \rangle = \left \langle \nabla_\mW z_{s_1,k_1}, \nabla_\mW z_i^{(\tilde p _i)}\right \rangle =0,
\end{equation*}
since $\langle \vv_{s_1,k_1}, \vv_{s_2,k_2} \rangle = \left \langle \vv_{s_1,k_1} , \xi_i^{(p)} \right \rangle = \left \langle \vv_{s_1,k_1},\xi_i^{(\tilde p_i)}\right \rangle= 0$. 
Also, for each $s \in \{\pm1\}$ and $i \in \gF_s$, then
\begin{align*}
     &\quad 2 \left | c_{s,1} c_i^{(\tilde p _i) } \left \langle \nabla_\mW z_{s,1}  , \nabla_\mW z_i^{(\tilde{p}_i)}\right \rangle \right |\\
     &= 2 \left |c_{s,1} c_i^{(\tilde p _i)} \right| \bigg( \phi' (\langle \vw_1, \vv_{s,1}\rangle) \phi' \left( \left\langle \vw_1,  \vx_i^{(\tilde p_i)}\right \rangle \right ) +  \phi' (\langle \vw_{-1}, \vv_{s,1}\rangle) \phi' \left( \left\langle \vw_{-1}, \vx_i^{(\tilde p_i)}\right \rangle \right )\bigg) \alpha  \\
     &\leq   4 c_{s,1}^2 \left ( \phi' (\langle \vw_1, \vv_{s,1} \rangle) ^2 + \phi' (\langle \vw_{-1}, \vv_{s,1} \rangle )^2 \right)\frac{\alpha^2}{\left \lVert \vx_i^{(\tilde p_i)}\right \rVert^2} \\
     &\quad + \frac{1}{4}\left( c_i^{(\tilde p _i)}\right)^2  \left( \phi'\left( \left \langle \vw_1, \vx_i^{(\tilde p _i)}\right \rangle \right)^2 + \phi' \left( \left \langle \vw_{-1}, \vx_i^{(\tilde p_i)} \right \rangle \right)^2\right) \left \lVert \vx_i^{(\tilde p _i)}\right \rVert^2\\
     &< \frac 1 {2n} c_{s,1}^2 \left ( \phi' (\langle \vw_1, \vv_{s,1} \rangle) ^2 + \phi' (\langle \vw_{-1}, \vv_{s,1} \rangle )^2 \right) \\
     &\quad + \frac{1}{4} \left( c_i^{(\tilde p _i)}\right)^2  \left( \phi'\left( \left \langle \vw_1, \vx_i^{(\tilde p _i)}\right \rangle \right)^2 + \phi' \left( \left \langle \vw_{-1}, \vx_i^{(\tilde p_i)} \right \rangle \right)^2\right) \left \lVert \vx_i^{(\tilde p _i)}\right \rVert^2,
\end{align*}
where the last inequality holds since
\begin{equation*}
    \left \lVert \vx_i^{(\tilde p_i)}\right \rVert^2 = \alpha^2 + \left \lVert \xi_i^{(\tilde p_i)}\right \rVert ^2 \geq \frac{1}{2} \sigma_\mathrm{d}^2 d  = \omega (n\alpha^2),
\end{equation*}
where we apply the fact from the event $E_\mathrm{init}$ defined in Lemma~\ref{lemma:initial} and \ref{assumption:feature_noise}.
Also,
\begin{equation*}
    \left \langle \nabla_\mW z_{-s,1} , \nabla_\mW z_i^{(\tilde p _i)}\right \rangle = 0.
\end{equation*}
Furthermore, for each $i,j \in [n], p \in [P]\setminus \{p_i^*\},q \in [P] \setminus \{p_j^*\}$ with $(i,p) \neq (j,q)$ satisfies
\scriptsize
\begin{align*}
     &\quad \left|c_i^{(p)} c_j^{(q)}\left \langle \nabla_\mW z_i^{(p)} , \nabla_\mW z_j^{(q)} \right \rangle \right|\\
     &= \left| c_i^{(p)} c_j^{(q)}\right|\bigg( \phi' \left( \left \langle \vw_1, \vx_i^{(p)} \right \rangle \right) \phi' \left( \left \langle \vw_1, \vx_j^{(q)} \right \rangle \right) + \phi' \left( \left \langle \vw_{-1}, \vx_i^{(p)} \right \rangle \right) \phi' \left( \left \langle \vw_{-1}, \vx_j^{(q)} \right \rangle \right)\bigg) \left | \left \langle \vx_i^{(p)}, \vx_j^{(q)}\right \rangle \right|\\
     &\leq  \frac{1}{4Pn} \left( c_i^{(p)}\right)^2 \bigg( \phi' \left( \left \langle \vw_1, \vx_i^{(p)}\right \rangle \right)^2 +  \phi' \left( \left \langle \vw_{-1}, \vx_i^{(p)}\right \rangle \right)^2\bigg) \left \lVert \vx_i^{(p)}\right \rVert^2 \\
     &\quad +  \frac{1}{4Pn} \left( c_j^{(q)}\right)^2 \bigg( \phi' \left( \left \langle \vw_1, \vx_j^{(q)}\right \rangle \right)^2 +  \phi' \left( \left \langle \vw_{-1}, \vx_j^{(q)}\right \rangle \right)^2\bigg) \left \lVert \vx_j^{(q)}\right \rVert^2\\
     & = \frac{1}{4Pn} \left( \left(c_i^{(p)} \right)^2 \left \lVert \nabla_{\mW}z_i^{(p)}\right \rVert^2 + \left( c_j^{(q)}\right)^2 \left \lVert \nabla_\mW z_j^{(q)}\right \rVert^2 \right)
\end{align*}
\normalsize
where the last inequality is due to AM-GM inequality and 
\begin{equation*}
    \left \lVert \vx_i^{(p)}\right \rVert \cdot \left \lVert \vx_j^{(q)}\right \rVert 
    \geq 2 nP \left| \left \langle \vx_i^{(p)}, \vx_j^{(q)} \right \rangle \right|,
\end{equation*}
which we show through a case analysis. 
For the case $p = \tilde p_i$ and $q = \tilde p _j$, this inequality holds since
\begin{equation*}
    \left \lVert \vx_i^{(p)}\right \rVert \cdot \left \lVert \vx_j^{(q)}\right \rVert \geq \left \lVert \xi_i^{(p)} \right \rVert \cdot \left \lVert \xi_j^{(q)} \right \rVert \geq 2nP \left( \left| \left \langle \xi_i^{(p)}, \xi_j^{(q)}\right \rangle \right| +  \alpha^2\right) 
    \geq 2 nP \left| \left \langle \vx_i^{(p)}, \vx_j^{(q)} \right \rangle \right|,
\end{equation*}
where the second inequality is due to 
\begin{equation*}
    \frac{1}{2}\left \lVert \xi_i^{(p)}\right \rVert \cdot \left \lVert \xi_j^{(q)} \right \rVert \geq 2nP \left| \left \langle \xi_i^{(p)}, \xi_j^{(q)}\right \rangle\right|, \quad \frac{1}{2}\left \lVert \xi_i^{(p)}\right \rVert \cdot \left \lVert \xi_j^{(q)} \right \rVert \geq 2nP \alpha^2.
\end{equation*}
which is implied by the fact from the event $E_\mathrm{init}$ defined in Lemma~\ref{lemma:initial}, \ref{assumption:P}, \ref{assumption:n}, and \ref{assumption:feature_noise}. 
In the remaining case, 
\begin{equation*}
    \left \lVert \vx_i^{(p)}\right \rVert \cdot \left \lVert \vx_j^{(q)}\right \rVert \geq \left \lVert \xi_i^{(p)} \right \rVert \cdot \left \lVert \xi_j^{(q)} \right \rVert \geq 2nP \left| \left \langle \xi_i^{(p)}, \xi_j^{(q)}\right \rangle \right| 
    = 2 nP \left| \left \langle \vx_i^{(p)}, \vx_j^{(q)} \right \rangle \right|,
\end{equation*}
where the second inequality is due to the fact from event $E_\mathrm{init}$ defined in Lemma~\ref{lemma:initial}, \ref{assumption:P}, and \ref{assumption:n}. 
For $s\in \{\pm 1\}, k \in [K]$ and $i \in [n], p \in [P] \setminus \{p_i^*\}$,
\begin{equation*}
    \lVert \nabla_\mW z_{s,k} \rVert^2 = \phi'(\langle \vw_1, \vv_{s,k}\rangle)^2 + \phi'( \langle \vw_{-1} , \vv_{s,k} \rangle)^2 \geq 2\beta^2,
\end{equation*}
and
\begin{align*}
    \left \lVert \nabla_\mW z_i^{(p)} \right\rVert^2  &= \left( \phi'\left(\left \langle \vw_{1} , \vx_i^{(p)} \right \rangle\right)^2 + \phi'\left( \left \langle \vw_{-1} , \vx_i^{(p)}\right \rangle \right)^2\right) \left\lVert \vx_i^{(p)} \right\rVert^2\\
    &\geq \beta^2 \sigma_{i,p}^2 d\\
    &\geq \beta^2 ,
\end{align*}
where the last inequality is due to \eqref{property:noise}.
By merging all inequalities together, we have
\begin{align*}
    &\quad \lVert \mJ (\mW)\vc\rVert ^2\\
    &= \sum_{s \in \{\pm 1\}, k \in [K]} c_{s,k}^2 \lVert \nabla_\mW z_{s,k}\rVert^2 + \sum_{i \in [n], p \in [P] \setminus \{p_i^*\}} \left( c_i^{(p)}\right)^2 \left \lVert \nabla_\mW z_i^{(p)} \right \rVert^2\\
    &\quad + \sum_{s \in \{\pm 1\}, i \in \gF_s } c_{s,1} c_i^{(\tilde p_i)} \left \langle \nabla_{\mW} z_{s,1}, \nabla_\mW z_i^{(\tilde p_i)} \right \rangle + \sum_{\substack{i \in [n], p \in [P] \setminus \{p_i^*\} \\ j \in [n], q \in [P] \setminus \{p_j^*\}\\(i,p )\neq (j,q)}} c_i^{(p)} c_j^{(q)} \left \langle \nabla_\mW z_i^{(p)}, \nabla_\mW z_j^{(q)}\right \rangle \\
    &\geq \sum_{s \in \{\pm 1\}, k \in [K]} c_{s,k}^2 \lVert \nabla_\mW z_{s,k}\rVert^2 + \sum_{i \in [n], p \in [P] \setminus \{p_i^*\}} \left( c_i^{(p)}\right)^2 \left \lVert \nabla_\mW z_i^{(p)} \right \rVert^2\\
    &\quad - \sum_{s \in \{\pm 1\}, i \in \gF_s } \left( \frac{1}{2n} c_{s,1}^2 \lVert \nabla_\mW z_{s,1}\rVert^2 + \frac{1}{4} \left( c_i^{(\tilde{p}_i)}\right)^2 \left \lVert \nabla_\mW z_i^{(\tilde p_i)} \right\rVert^2 \right)\\
    &\quad - \frac{1}{4Pn} \sum_{\substack{i \in [n], p \in [P] \setminus \{p_i^*\} \\ j \in [n], q \in [P] \setminus \{p_j^*\}\\(i,p) \neq (j,q)}} \left( \left( c_i^{(p)} \right)^2 \left \lVert \nabla_\mW z_i^{(p)} \right \rVert^2+  \left(c_j^{(q)}\right)^2 \left \lVert  \nabla_\mW z_j^{(q)}\right \rVert^2 \right ) \\
    &> \frac 1 4 \sum_{s \in\{\pm 1\}, k \in [K]} c_{s,k}^2 \lVert \nabla_\mW z_{s,k}\rVert^2 + \frac 1 4 \sum_{i \in [n], p \in [P] \setminus \{p_i^*\}} \left( c_i^{(p)}\right)^2 \left \lVert \nabla_\mW z_i^{(p)}\right \rVert^2 \geq \frac {\beta^2} 4,
\end{align*}
and we conclude $\sigma_{\min}(\mJ(\mW)) \geq \frac \beta 2$ for any $\mW$.

\subsubsection{Characterization of a Global Minimum of CutMix Loss}\label{sec:CutMix_min}
In this section, we will check that $h(\mZ)$ is strictly convex and it has a global minimum. 

For each $i,j \in [n]$ and $\gS \subset [P]$ let us define $\va_{i,j,\gS} \in \R^{2K + n(P-1)}$ as
\begin{equation*}
    \va_{i,j,\gS} = \sum_{p \in \gS} \ve_i^{(p)}  + \sum_{p \notin \gS} \ve_j^{(p)},
\end{equation*}
and then
\begin{equation*}
    h(\mZ) = \frac{1}{n^2} \sum_{i,j \in [n]} \mathbb{E}_{\gS \sim \gD_\gS} \left[ \frac{|\gS|}{P} \ell \left(y_i \langle \va_{i,j,\gS}, \mZ \rangle \right) + \left(1- \frac{|\gS|}{P} \right) \ell \left(y_j \langle \va_{i,j,\gS} , \mZ \rangle \right) \right].
\end{equation*}
Since $\ell(\cdot)$ is convex, $h(\mZ)$ is also convex. Note that
\begin{equation*}
    \nabla h(\mZ) = \frac{1}{n^2} \sum_{i,j\in [n]} \mathbb{E}_{\gS \sim \gD_\gS}\left[\left( \frac{|\gS|}{P} y_i \ell' (y_i \langle \va_{i,j,\gS} , \mZ\rangle ) + \left(1- \frac{|\gS|}{P}\right) y_j  \ell' (y_j \langle \va_{i,j,\gS} , \mZ \rangle )\right) \va_{i,j,\gS}\right],
\end{equation*}
and
\begin{align*}
    &\quad \nabla^2 h(\mZ) \\
    &= \frac{1}{n^2} \sum_{i,j\in [n]} \mathbb{E}_{\gS \sim \gD_\gS}\left[\left( \frac{|\gS|}{P} \ell'' (y_i \langle \va_{i,j,\gS}, \mZ \rangle ) + \left(1- \frac{|\gS|}{P}\right)  \ell'' (y_j \langle \va_{i,j,\gS} , \mZ \rangle )\right) \va_{i,j,\gS} \va_{i,j,\gS}^\top \right]\\
    &= \frac{1}{n^2} \sum_{i,j \in [n]} \mathbb{E}_{\gS \sim \gD_\gS} \left [\ell''( \langle \va_{i,j,\gS}, \mZ \rangle)\va_{i,j,\gS} \va_{i,j,\gS}^\top \right],    
\end{align*}
where the last equality holds since $\ell''(z) = \ell''(-z)$ for any $z \in \R$. 
From the equation above, it suffices to show that $\{\va_{i,j,\gS}\}_{i,j\in [n], \gS \subset[P]}$ spans $\R^{2K + n(P-1)}$  to show strict convexity of $h(\mZ)$. 

We define a function $I: [P] \rightarrow [n]$ such that for each $p \in [P]$, $p_{I(p)}^* = p$ with $ \vx_{I{(p)}}^{(p)} =  \vv_{1,1}$, where the existence is guaranteed by Lemma~\ref{lemma:initial} (but not necessarily unique). Then for any $i \in [n]$ and $p \in [p]$, we have
\begin{align}\label{eq:basis}
    & \quad \va_{i,i, \emptyset} + \sum_{q \in [P] \setminus \{p\}} \va_{I(q), i, \{q\}} - (P-1) \va_{I(p),i,\{p\}} \nonumber \\
    &= \sum_{p' \in [P]} \ve_i^{(p')} + \sum_{q \in [P] \setminus \{p \}}\left( \ve_{1,1} + \sum_{p' \in [P] \setminus \{q\} } \ve_i^{(p')} \right) - (P-1) \left( \ve_{1,1} + \sum_{p' \in [P] \setminus \{p\}} \ve_i^{(p')} \right) \nonumber \\
    &= \sum_{p' \in [P]} \ve_i^{(p')} + \left((P-1) \ve_i^{(p)} + (P-2) \sum_{p' \in [P] \setminus \{p\}} \ve_i^{(p')}\right) - (P-1)\sum_{p' \in [P] \setminus \{p\}} \ve_i^{(p')} \nonumber \\
    &= P \ve_i^{(p)}.
\end{align}
Hence, $\{\va_{i,j,\gS}\}_{i,j\in [n], \gS \subset[P]}$ spans $\R^{2K + n(P-1)}$ and $h(\mZ)$ is strictly convex. Thus, it can have at most one global minimum.
We want to show the existence of the global minimum and characterize it. 
\begin{align*}
    &\quad n^2\nabla h(\mZ) \\
    &= \sum_{i,j \in [n]} \mathbb{E}_{\gS \sim \gD_\gS} \left[ \left(\frac{|\gS|}{P} y_i \ell' (y_i \langle \va_{i,j,\gS}, \mZ \rangle) + \left(1- \frac{|\gS|}{P} \right) y_j \ell'(y_j \langle \va_{i,j,\gS}, \mZ \rangle) \right) \va_{i,j,\gS}\right]\\
    &= 2 \sum_{\substack{i,j \in [n]\\  p\in [P]}}  \mathbb{E}_{\gS \sim \gD_\gS} \left[ \left( \frac{|\gS|}{P}y_i \ell'(y_i \langle \va_{i,j,\gS} , \mZ \rangle) + \left(1- \frac{|\gS|}{P} \right)y_j \ell'(y_j \langle \va_{i,j,\gS} , \mZ \rangle)\right) \mathbbm{1}_{p \in \gS}\right]  \ve_i^{(p)}.
\end{align*}
We can simplify terms as
\begin{align*}
    &\quad \sum_{j \in [n]}\mathbb{E}_{\gS \sim \gD_\gS}\left[ \left( \frac{|\gS|}{P} y_i \ell' (y_i \langle \va_{i,j,\gS}, \mZ \rangle) + \left(1- \frac{|\gS|}{P} \right) y_j \ell'(y_j \langle \va_{i,j,\gS} , \mZ \rangle)\right) \mathbbm{1}_{p \in \gS}\right]\\
    &= \sum_{j \in \gV_{y_i}} \mathbb{E}_{\gS \sim \gD_\gS} \left[ \left( \frac{|\gS|}{P} y_i \ell' (y_i \langle \va_{i,j,\gS}, \mZ \rangle) + \left(1- \frac{|\gS|}{P} \right) y_j \ell'(y_j \langle \va_{i,j,\gS} , \mZ\rangle)\right) \mathbbm{1}_{p \in \gS}\right] \\
    &\quad + \sum_{j \in \gV_{-y_i}} \mathbb{E}_{\gS \sim \gD_\gS}\left[ \left( \frac{|\gS|}{P} y_i \ell' (y_i \langle \va_{i,j,\gS}, \mZ \rangle) + \left(1- \frac{|\gS|}{P} \right) y_j \ell'(y_j \langle \va_{i,j,\gS} , \mZ\rangle )\right) \mathbbm{1}_{p \in \gS}\right]\\
    &= y_i \sum_{j \in \gV_{y_i}} \mathbb{E}_{\gS \sim \gD_\gS}[\ell'(y_i \langle \va_{i,j,\gS} , \mZ \rangle)\mathbbm{1}_{p \in \gS}]\\
    &\quad + y_i \sum_{j \in \gV_{-y_i}} \mathbb{E}_{\gS \sim \gD_\gS}\left[ \left(\ell'(y_i \langle \va_{i,j,\gS}, \mZ\rangle) + \left( 1-\frac{|\gS|}{P}\right) \right) \mathbbm{1}_{p \in \gS}\right]\\
    &= y_i|\gV_{-y_i}| \mathbb{E}_{\gS \sim \gD_\gS}\left[ \left(1- \frac{|\gS|}{P} \right) \mathbbm{1}_{p \in \gS}\right] + y_i \sum_{j \in [n]} \mathbb{E}_{\gS \sim \gD_\gS} [\ell'(y_i \langle \va_{i,j,\gS} , \mZ\rangle) \mathbbm{1}_{p \in \gS}],
\end{align*}
where the second equality holds since $\ell'(z) + \ell'(-z) = -1$.
Also, for any $p \in [P]$,
\begin{align*}
    \mathbb{E}_{\gS \sim \gD_{\gS}}\left[ \left(1- \frac{|\gS|}{P} \right) \mathbbm{1}_{p \in \gS}\right] &= \frac{1}{P} \sum_{q \in [P]} \mathbb{E}_{\gS \sim \gD_\gS}\left[\left(1- \frac{|\gS|}{P} \right) \mathbbm{1}_{q \in \gS} \right]\\
    &= \frac 1 P \mathbb{E}_{\gS \sim \gD_\gS} \left[ \left( 1- \frac {|\gS|} P \right) \sum_{q \in \gS}\mathbbm{1}_{q \in \gS} \right]\\
    &= \frac{1}{P} \mathbb{E}_{\gS \sim \gD_\gS} \left[ \left(1- \frac{|\gS|}{P} \right) |\gS| \right] = \frac{P-1}{6P}.
\end{align*}
Hence, if 
\begin{equation*}
    \sum_{j \in [n]} \mathbb{E}_{\gS \sim \gD_\gS} [\ell'(y_i \langle \va_{i,j,\gS}, \mZ \rangle )\mathbbm{1}_{p \in \gS}] + \frac{P-1}{6P} |\gV_{-y_i}|=0,
\end{equation*}
for all $i \in [n]$ and $ p \in [P]$, then we have $\nabla h(\mZ) = 0$. Let us consider a specific $\mZ$ parameterized by $z_1, z_{-1}$, of the form $z_i^{(p)} = y_i z_{y_i}$ for all $i \in [n]$ and $p \in [P]$. We will find a stationary point with this specific form and then it should be the unique global minimum in the entire domain. Then for each $i \in [n]$ and $p \in [P]$, we have
\begin{align*}
    &\quad \sum_{j \in [n]} \mathbb{E}_{\gS \sim \gD_\gS}[\ell'(y_i \langle \va_{i,j,\gS}, \mZ \rangle) \mathbbm{1}_{p \in \gS}] \\
    &= \sum_{j \in \gV_{y_i}} \mathbb{E}_{\gS \sim \gD_\gS}[\ell'(y_i \left \langle \va_{i,j,\gS} , \mZ \right \rangle) \mathbbm{1}_{p \in \gS}] +  \sum_{j \in \gV_{-y_i}} \mathbb{E}_{\gS \sim \gD_\gS}[\ell'(y_i \langle \va_{i,j,\gS} , \mZ \rangle ) \mathbbm{1}_{p \in \gS}]\\
    &= |\gV_{y_i}| \cdot \mathbb{E}_{\gS \sim \gD_\gS}[\ell' (Pz_{y_i}) \mathbbm{1}_{p \in \gS}] + |\gV_{-y_i}| \cdot  \mathbb{E}_{\gS \sim \gD_\gS}[\ell'( |\gS|z_{y_i} - (P-|\gS|)z_{-y_i}) \mathbbm{1}_{p \in \gS}]\\
    &= \frac{1}{P} \sum_{q \in [P]} \Big( |\gV_{y_i}| \cdot \mathbb{E}_{\gS \sim \gD_\gS}[\ell' (Pz_{y_i}) \mathbbm{1}_{q \in \gS}] + |\gV_{-y_i}| \cdot \mathbb{E}_{\gS \sim \gD_\gS}[\ell'( |\gS|z_{y_i} - (P-|\gS|)z_{-y_i}) \mathbbm{1}_{q \in \gS}] \Big)\\
    &= \frac{1}{P} \left( |\gV_{y_i}| \cdot \mathbb{E}_{\gS \sim \gD_\gS}\left [\ell' (Pz_{y_i}) \sum_{q \in \gS}\mathbbm{1}_{q \in \gS}\right] \right . \\
    &\qquad \quad \left. + |\gV_{-y_i}| \cdot \mathbb{E}_{\gS \sim \gD_\gS}\left[\ell'( |\gS|z_{y_i} - (P-|\gS|)z_{-y_i}) \sum_{q \in \gS }\mathbbm{1}_{q \in \gS}\right] \right) \\
    &= \frac{1}{P} \Big(  |\gV_{y_i}| \cdot \mathbb{E}_{\gS \sim \gD_\gS} [|\gS| \ell'(Pz_{y_i})] + |\gV_{-y_i}|\cdot \mathbb{E}_{\gS \sim \gD_\gS} [|\gS|\ell'(|\gS|z_{y_i} - (P - |\gS|)z_{-y_i})]\Big)\\
    &= \frac{|\gV_{y_i}|}{2} \ell'(Pz_{y_i}) + \frac{|\gV_{-y_i}|}{P} \mathbb{E}_{\gS \sim \gD_\gS}[|\gS|\ell'(|\gS|z_{y_i} - (P-|\gS|)z_{-y_i})].
\end{align*}
From Lemma~\ref{lemma:solution}, there exists a unique minimizer $\hat{\mZ} = \left\{\hat{z}_{s,k}\right\}_{s \in \{\pm 1\}, k \in [K]} \cup \left \{\hat{z}_i^{(p)} \right\}_{i \in [n], p \in [P]\setminus\{p_i^*\}}$ of $h(\mZ)$ and it satisfies $s\hat{z}_{s,k} = z_s^* = \Theta(1)$ for all $k \in [K]$ and $y_i\hat{z}_i^{(p)} = z_{y_i}^* = \Theta(1)$ for all $i \in [n]$ and $p \in [P] \setminus\{p_i^*\}$ due to \ref{assumption:P}.

\subsubsection{Strong Convexity Near Global Minimum}\label{sec:CutMix_SC}
We will show that $h(\mZ)$ is strongly convex in a set $\gG$ containing a global minimum $\hat{\mZ}$ where $\gG$ is defined as follows.
\begin{equation*}
    \gG:=\left\{ \mZ \in \R^{2K + n(P-1)} : \lVert \mZ - \hat{\mZ }\rVert_\infty < \lVert \hat{\mZ}\rVert_\infty \right\},
\end{equation*}
here $\lVert \cdot \rVert_\infty$ is $\ell_\infty$ norm.
For any $\mZ \in \gG$ and a unit vector $\vc \in \R^{2K+n(P-1)}$ with $\vc = \sum_{s \in \{\pm 1\}, k \in [K]} c_{s,k} \ve_{s,k} + \sum_{i \in [n], p \in [P] \setminus \{p_i^*\}} c_i^{(p)} \ve_i^{(p)}$, we have

\begin{align*}
    \vc^\top \nabla^2 h(\mZ) \vc  &= \frac{1}{n^2} \sum_{i,j \in [n]} \mathbb{E}_{\gS \sim \gD_\gS}\left[ \ell''( \langle \va_{i,j,\gS},  \mZ \rangle) \langle \va_{i,j,\gS} , \vc\rangle ^2 \right]\\
    &\geq \frac{\ell''( 2P \lVert \hat \mZ \rVert_\infty)}{n^2} \sum_{i,j\in [n]}\mathbb{E}_{\gS \sim \gD_\gS} [\langle \va_{i,j,\gS} , \vc\rangle^2 ].
\end{align*}
Note that for each $i \in [n], p \in [P]$, from \eqref{eq:basis}, we have 
\begin{equation*}
    c_i^{(p)} = \left \langle \vc , \ve_i^{(p)} \right \rangle = \frac{1}{P} \langle \vc , \va_{i,i,\emptyset} \rangle + \frac{1}{P}\sum_{q \in [P]\setminus \{p\}} \langle \vc ,\va_{I(q),i,\{q\}} \rangle - \frac{P-1}{P}  \left \langle \vc , \va_{I(p), i, \{p\}} \right \rangle,
\end{equation*}
where we use the notational convention $c_i^{(p_i^*)} = c_{s,k}$ for $s\in \{\pm 1\},k \in [K]$ and $i \in \gV_{s,k}$.
By Cauchy-Schwartz inequality and the fact that $\mathbb{P}_{\gS \sim \gD_\gS}[\gS = \emptyset], \mathbb{P}_{\gS \sim \gD_\gS}[\gS = \{q\}] \geq \frac{1}{P(P+1)}$ for all $q \in [P]$,
\small
\begin{align*}
    &\quad \left( c_i^{(p)}\right)^2 \\
    &= \left( \frac{1}{P} \left \langle \vc , \va_{i,i,\emptyset} \right \rangle + \frac{1}{P}\sum_{q \in [P]\setminus \{p\}} \left \langle \vc , \va_{I(q),i,\{q\}} \right \rangle - \frac{P-1}{P}  \left \langle \vc , \va_{I(p), i, \{p\}} \right \rangle \right)^2\\
    &\leq \left(\frac{1}{P^2} + \frac{P-1}{P^2} + \left(-\frac{P-1}{P} \right)^2  \right) \left( \left \langle \vc, \va_{i,i,\emptyset}\right\rangle ^2  + \sum_{q \in [P]\setminus \{p\}} \left \langle \vc, \va_{I(q),i,\{q\}}\right\rangle^2  + \left \langle  \vc, \va_{I(p), i, \{p\}}\right\rangle^2 \right)\\
    &\leq \left(\frac{1}{P^2} + \frac{P-1}{P^2} + \left(-\frac{P-1}{P} \right)^2  \right) P(P+1) \sum_{i,j \in [n]} \mathbb{E}_{\gS \sim \gD_\gS}[ \left \langle \vc , \va_{i,j,\gS} \right \rangle^2]\\
    &\leq 2 P^2 \sum_{i,j, \in [n]}\mathbb{E}_{\gS \sim \gD_\gS}\left [\left \langle \vc , \va_{i,j,\gS}\right \rangle ^2 \right ].
\end{align*}
\normalsize
Hence, we have
\begin{align*}
    \vc^\top \nabla^2 h(\mZ) \vc &\geq \frac{\ell''(2P \lVert \hat{Z}\rVert_\infty)}{(4K + 2n(P-1)) P^2n^2} (4K+2n(P-1))P^2 \sum_{i,j \in [n]} \mathbb{E}_{\gS \sim \gD_\gS}\left [\left \langle \vc , \va_{i,j,\gS} \right \rangle ^2\right]\\
    &\geq \frac{\ell''(2P \lVert \hat{Z}\rVert_\infty)}{(4K + 2n(P-1)) P^2n^2}  \left( \sum_{s \in \{\pm 1\}, k \in [K]} c_{s,k}^2+ \sum_{i \in [n],q \in [P] \setminus \{p_i^*\}} \left( c_i^{(q)} \right)^2 \right) \\
    &= \frac{\ell''(2P \lVert \hat{Z}\rVert_\infty)}{(4K + 2n(P-1)) P^2n^2},
\end{align*}
and we conclude $h(\mZ)$ is $\mu$-strongly convex in $\gG$ where $\mu := \frac{\ell''(2P \lVert \hat{Z}\rVert_\infty)}{(4K + 2n(P-1)) P^2n^2}$. Due to \ref{assumption:P}, \ref{assumption:n}, and the fact that $\lVert \hat Z \rVert_\infty = \Theta(1)$, we have $\mu \geq \frac{1}{\poly(d)}$.

\subsubsection{Near Stationary Points are Close to Global Minimum}\label{sec:CutMix_near_stationary}
In this step, we want to show that near stationary points of $h(\mZ)$ are close to a global minimum $\hat{\mZ}$.

\begin{lemma}\label{lemma:grad_to_point}
    Suppose $\mZ\in \R^{2K + n(P-1)}$ satisfies  $\lVert \nabla h(\mZ) \rVert < \mu \epsilon$ with some $0< \epsilon < \frac{\left \lVert \hat{\mZ} \right \rVert_\infty}{2}$. Then, we have $\left \lVert \mZ - \hat{\mZ} \right \rVert < \epsilon$. 
\end{lemma}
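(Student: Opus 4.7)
\textbf{Proof proposal for Lemma~\ref{lemma:grad_to_point}.}
The plan is to exploit two facts established earlier in Section~\ref{sec:CutMix_reparam} and Section~\ref{sec:CutMix_SC}: (i) $h$ is convex globally on $\R^{2K+n(P-1)}$, so $\nabla h$ is monotone and $\nabla h(\hat{\mZ}) = 0$; and (ii) $h$ is $\mu$-strongly convex on the convex open $\ell_\infty$-ball $\gG$ centered at $\hat{\mZ}$ with radius $\lVert \hat{\mZ}\rVert_\infty$. The target conclusion follows trivially when $\mZ \in \gG$: by strong convexity along the segment $\hat{\mZ} \rightarrow \mZ$ (which lies in $\gG$), we get $\langle \nabla h(\mZ) - \nabla h(\hat{\mZ}), \mZ - \hat{\mZ}\rangle \geq \mu \lVert \mZ - \hat{\mZ}\rVert^2$; since $\nabla h(\hat{\mZ}) = 0$, Cauchy--Schwarz yields $\lVert \nabla h(\mZ)\rVert \geq \mu \lVert \mZ - \hat{\mZ}\rVert$, which combined with $\lVert \nabla h(\mZ)\rVert < \mu \epsilon$ gives the desired bound.

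The real work is ruling out $\mZ \notin \gG$. Here I would parametrize the segment $\mZ_t := \hat{\mZ} + t(\mZ - \hat{\mZ})$ and, in the bad case, pick $t^\ast \in (0,1)$ with $\lVert \mZ_{t^\ast} - \hat{\mZ}\rVert_\infty = \lVert \hat{\mZ}\rVert_\infty$, so $\mZ' := \mZ_{t^\ast}$ lies on $\partial \gG$ and $\mZ' - \hat{\mZ} = t^\ast (\mZ - \hat{\mZ})$. Applying the strong-convexity inequality on the segment $\hat{\mZ} \rightarrow \mZ'$, which is entirely contained in $\bar{\gG}$, gives
\begin{equation*}
\langle \nabla h(\mZ'), \mZ' - \hat{\mZ}\rangle \geq \mu \lVert \mZ' - \hat{\mZ}\rVert^2,
\end{equation*}
or equivalently $\langle \nabla h(\mZ'), \mZ - \hat{\mZ}\rangle \geq \mu t^\ast \lVert \mZ - \hat{\mZ}\rVert^2$ after dividing by $t^\ast$.

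To transfer the gradient bound from $\mZ'$ to $\mZ$, I would invoke monotonicity of $\nabla h$ from global convexity, i.e., $\langle \nabla h(\mZ) - \nabla h(\mZ'), \mZ - \mZ'\rangle \geq 0$. Since $\mZ - \mZ' = (1 - t^\ast)(\mZ - \hat{\mZ})$ is a positive multiple of $\mZ - \hat{\mZ}$, this yields $\langle \nabla h(\mZ), \mZ - \hat{\mZ}\rangle \geq \langle \nabla h(\mZ'), \mZ - \hat{\mZ}\rangle \geq \mu t^\ast \lVert \mZ - \hat{\mZ}\rVert^2$. Cauchy--Schwarz then gives $\lVert \nabla h(\mZ)\rVert \geq \mu t^\ast \lVert \mZ - \hat{\mZ}\rVert$. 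Finally, the boundary condition forces $t^\ast \lVert \mZ - \hat{\mZ}\rVert \geq t^\ast \lVert \mZ - \hat{\mZ}\rVert_\infty = \lVert \hat{\mZ}\rVert_\infty > 2\epsilon$, producing $\lVert \nabla h(\mZ)\rVert > 2\mu \epsilon$, which contradicts the hypothesis $\lVert \nabla h(\mZ)\rVert < \mu \epsilon$. Thus $\mZ$ must lie in $\gG$, reducing to the easy case above.

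The main obstacle is precisely this monotonicity trick: one must not be tempted to apply strong convexity directly on the segment $\hat{\mZ}\rightarrow \mZ$ when $\mZ \notin \gG$ (since strong convexity is only known on $\gG$), nor to bound $\nabla h(\mZ')$ in terms of $\nabla h(\mZ)$ naively. Combining global convexity (for monotonicity) with local strong convexity (for the quadratic lower bound on the boundary piece) is the key conceptual step; once in place, the remaining estimates are routine Cauchy--Schwarz manipulations.
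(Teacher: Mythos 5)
Your proposal is correct and follows essentially the same route as the paper's proof: restrict to a line through $\hat{\mZ}$ and $\mZ$, apply local $\mu$-strong convexity on the initial portion of the segment that stays in $\gG$ to get a quadratic lower bound on the directional derivative, then use global convexity (monotonicity of the directional derivative / $\nabla h$) to push the lower bound out to $\mZ$, and close with Cauchy--Schwarz to reach a contradiction if $\mZ \notin \gG$. The only cosmetic difference is that the paper works with the half boundary-hitting parameter $t_0 = \frac{\|\hat{\mZ}\|_\infty}{2\|\mZ - \hat{\mZ}\|_\infty}$ (so that the strong-convexity segment sits strictly inside $\gG$), whereas you use the actual boundary-hitting parameter $t^\ast = \frac{\|\hat{\mZ}\|_\infty}{\|\mZ - \hat{\mZ}\|_\infty}$; both versions yield a contradiction with the same hypothesis, and the interior-integral justification you would need for the Hessian bound up to the boundary is routine.
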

\begin{proof}[Proof of Lemma~\ref{lemma:grad_to_point}]
    If $\mZ = \hat \mZ$, we immediately have our conclusion. We may assume $\mZ \neq \hat \mZ$. 
    
    Let us define a function $g: \R \rightarrow \R$ as $g(t) = h\left(\hat{\mZ} + t (\mZ - \hat{\mZ})\right)$. Then $g$ is convex and 
    \begin{align*}
        g'(t) &= \left \langle \nabla h\left(\hat{\mZ} + t (\mZ - \hat{\mZ})\right), \mZ - \hat{\mZ}\right \rangle,\\
        g''(t) &= \left( \mZ - \hat{\mZ}\right)^\top \nabla^2 h\left(\hat{\mZ} + t (\mZ - \hat{\mZ})\right) \left( \mZ - \hat{\mZ}\right).
    \end{align*}
 Furthermore, for $0\leq t \leq t_0$ where $t_0:=\frac{\left \lVert \hat{\mZ} \right \rVert_\infty}{2 \left \lVert \mZ - \hat{\mZ} \right \rVert_\infty}$,
    \begin{equation*}
        \hat \mZ + t (\mZ - \hat \mZ ) \in \gG, \qquad  \therefore g''(t) \geq \mu \left \lVert \mZ - \hat{\mZ} \right \rVert^2.
    \end{equation*}
    We can conclude $g$ is $\mu \left \lVert \mZ - \hat{\mZ} \right \rVert^2$-strongly convex in $[0,t_0]$. 
    From strong convexity in $[0,t_0]$ and convexity in $\R$, we have
    \begin{equation*}
        (g'(t_0) - g'(0))t_0 = g'(t_0 )t_0 \geq \mu \left \lVert \mZ - \hat{\mZ} \right \rVert^2 t_0^2 , \quad  (g'(1) - g'(t_0))(1-t_0) \geq 0.
    \end{equation*}
    If $t_0<1$, we have
    \begin{equation*}
        \lVert \nabla h(\mZ) \rVert \left \lVert \mZ - \hat{\mZ} \right \rVert \geq \left \langle \nabla h(\mZ) ,  \mZ - \hat{\mZ} \right \rangle   = g'(1) \geq g'(t_0) \geq \mu \left \lVert \mZ - \hat{\mZ} \right \rVert^2 t_0 ,
    \end{equation*}
    and
    \begin{equation*}
        \lVert \nabla h(\mZ)\rVert \geq \mu \left \lVert \mZ - \hat{\mZ} \right \rVert t_0 =   \frac{\mu \left \lVert \mZ - \hat{\mZ} \right \rVert \left \lVert \hat{\mZ} \right \rVert_\infty}{2 \left \lVert \mZ - \hat{\mZ} \right \rVert_\infty} \geq \frac{\mu \left \lVert \hat{\mZ} \right \rVert_\infty}{2},
    \end{equation*}
    this is contradictory. Thus, we have $t_0\geq 1$ and $\mZ \in \gG$. From the strong convexity of $h(\mZ)$ in $\gG$, we have
    \begin{equation*}
        \mu \left \lVert \mZ - \hat{\mZ} \right \rVert \leq \left \lVert \nabla h(\mZ) - \nabla h(\hat{\mZ}) \right \rVert = \lVert \nabla h(\mZ)\rVert< \mu \epsilon,
    \end{equation*}
    and we have our conclusion $\left \lVert \mZ - \hat{\mZ} \right \rVert < \epsilon$.
\end{proof}

\subsubsection{Gradient Descent Achieves a Near Stationary Point}\label{sec:CutMix_converge}
We will show that $\gL_\CutMix(\mW)$ is a smooth function. 
\begin{lemma}\label{lemma:loss_smooth}
    Suppose the event $E_\mathrm{init}$ occurs.
    \textsf{CutMix} Loss $\gL_\CutMix(\mW)$ is $L$-smooth with $L = 9 r^{-1}P\sigma_\mathrm{d}^2 d$.
\end{lemma}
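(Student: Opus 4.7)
}
The plan is to bound the operator norm of $\nabla^2_\mW \gL_\CutMix(\mW)$ uniformly in $\mW$, since $L$-smoothness is equivalent to $\lVert \nabla^2 \gL_\CutMix(\mW)\rVert_{\mathrm{op}} \leq L$. Since $\gL_\CutMix$ is an average (over $i,j,\gS$) of a convex combination of two loss terms of the form $\ell(y f_\mW(\mX_{i,j,\gS}))$, it suffices to bound the Hessian of a single such term uniformly and then take the convex-combination/average.

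For a single augmented point $\mX = \mX_{i,j,\gS} = (\vx^{(1)},\dots,\vx^{(P)})$ with label $y \in \{\pm 1\}$, the chain rule gives
\begin{equation*}
\nabla^2_\mW \bigl[\ell(y f_\mW(\mX))\bigr] = \ell''(y f_\mW(\mX))\,\nabla_\mW f_\mW(\mX)\,\nabla_\mW f_\mW(\mX)^\top + y\,\ell'(y f_\mW(\mX))\,\nabla^2_\mW f_\mW(\mX).
\end{equation*}
I will bound the two pieces separately, using $|\ell'|\le 1$ and $|\ell''|\le \tfrac14$. For the first piece, $\nabla_{\vw_s} f_\mW(\mX) = s \sum_{p} \phi'(\langle \vw_s,\vx^{(p)}\rangle)\vx^{(p)}$, so $|\phi'|\le 1$ together with Cauchy--Schwarz yields $\lVert \nabla_{\vw_s} f_\mW(\mX)\rVert^2 \le P\sum_p \lVert \vx^{(p)}\rVert^2$. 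On the event $E_\mathrm{init}$, every patch satisfies $\lVert \vx^{(p)}\rVert^2 \le \tfrac32 \sigma_\mathrm{d}^2 d + \alpha^2 \le 2\sigma_\mathrm{d}^2 d$, so $\lVert \nabla_\mW f_\mW(\mX)\rVert^2 \le 4 P^2 \sigma_\mathrm{d}^2 d$. For the second piece, $\nabla^2_\mW f_\mW(\mX)$ is block-diagonal in the $(\vw_1,\vw_{-1})$ split, with $\nabla^2_{\vw_s} f_\mW(\mX) = s \sum_p \phi''(\langle \vw_s,\vx^{(p)}\rangle)\vx^{(p)}(\vx^{(p)})^\top$; using $|\phi''|\le (1-\beta)/r \le 1/r$, its operator norm is at most $r^{-1}\sum_p \lVert \vx^{(p)}\rVert^2 \le 2 r^{-1} P \sigma_\mathrm{d}^2 d$.

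Combining,
\begin{equation*}
\bigl\lVert \nabla^2_\mW \ell(y f_\mW(\mX))\bigr\rVert_{\mathrm{op}} \le \tfrac14 \cdot 4 P^2 \sigma_\mathrm{d}^2 d + 2 r^{-1} P \sigma_\mathrm{d}^2 d = P^2 \sigma_\mathrm{d}^2 d + 2 r^{-1} P \sigma_\mathrm{d}^2 d.
\end{equation*}
By Assumption~\ref{assumption:etc} and \ref{assumption:P}, $P r = \Theta(r) = o(1) \le 1$, so $P^2 \sigma_\mathrm{d}^2 d \le P \cdot r^{-1} \sigma_\mathrm{d}^2 d$, and hence the bound is at most $3 r^{-1} P \sigma_\mathrm{d}^2 d$. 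Averaging (convex combination) over $(i,j,\gS)$ preserves the same bound on the operator norm, so $\lVert \nabla^2 \gL_\CutMix(\mW)\rVert_{\mathrm{op}} \le 3 r^{-1} P \sigma_\mathrm{d}^2 d \le 9 r^{-1} P \sigma_\mathrm{d}^2 d$, yielding $L = 9 r^{-1} P \sigma_\mathrm{d}^2 d$. The only real obstacle is being careful that the patch-norm bound holds for every patch of every augmented data point (which follows from $E_\mathrm{init}$), and that all constants are tracked so that the final quantity matches the claimed $L$.
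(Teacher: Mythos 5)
Your route is genuinely different from the paper's. The paper never touches the Hessian: it writes out $\nabla_{\vw_1}\gL_\CutMix(\widetilde{\mW}) - \nabla_{\vw_1}\gL_\CutMix(\overline{\mW})$ and splits it by a swap-one-factor-at-a-time decomposition into a piece where the outer weight $\ell'(\cdot)$ is held fixed while the $\phi'(\cdot)\vx^{(p)}$ factors change, and a piece where those are held fixed while $\ell'(\cdot)$ changes; each piece is controlled directly by the Lipschitz constants of $\ell'$ (constant $1$) and $\phi'$ (constant $(1-\beta)/r \le r^{-1}$). You instead compute $\nabla^2_\mW\ell(yf_\mW(\mX)) = \ell''(\cdot)\nabla f\,\nabla f^\top + y\ell'(\cdot)\nabla^2 f$ and bound the two terms. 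Where both routes work, yours is tidier bookkeeping and the constant arithmetic is fine (your patch-norm bound $\|\vx^{(p)}\|^2 \le 2\sigma_\mathrm{d}^2 d$ and the subsequent $P r \le 1$ absorption both hold under $E_\mathrm{init}$, \ref{assumption:P}, \ref{assumption:etc}).

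However, there is a real gap that you have to patch: the activation $\phi$ is $C^1$ but not $C^2$. Its second derivative $\phi''$ equals $(1-\beta)/r$ on $(0,r)$ and $0$ elsewhere, with jump discontinuities at $z=0$ and $z=r$. Consequently $\nabla^2_{\vw_s} f_\mW(\mX)$ — and hence $\nabla^2_\mW\gL_\CutMix(\mW)$ — does not exist at those $\mW$ for which some $\langle\vw_s,\vx^{(p)}\rangle\in\{0,r\}$, and the clean identity ``$L$-smooth $\Leftrightarrow \|\nabla^2\gL_\CutMix\|_\mathrm{op}\le L$ everywhere'' that you invoke at the start is not available as stated. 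You need one more sentence: observe that $\nabla\gL_\CutMix$ is absolutely continuous along every line segment (each term is a finite composition of Lipschitz maps), that $\nabla^2\gL_\CutMix$ exists off a closed Lebesgue-null set and is bounded by $L$ there, and hence that the a.e.\ Hessian bound yields $\|\nabla\gL_\CutMix(\mW_2)-\nabla\gL_\CutMix(\mW_1)\| \le L\|\mW_2-\mW_1\|$ by integrating along the segment. Alternatively, avoid the Hessian entirely and argue as the paper does, using only that $\ell'$ and $\phi'$ are Lipschitz — which sidesteps differentiability of $\phi'$ altogether. Either fix is short, but as written the step ``$|\phi''|\le r^{-1}$'' tacitly assumes a pointwise second derivative that the model does not have.
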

\begin{proof}[Proof of Lemma~\ref{lemma:loss_smooth}]
    Note that 
    \begin{align*}
        &\quad \nabla_{\vw_1} \gL_\CutMix(\mW)\\
        &=  \frac{1}{n^2} \sum_{i,j \in [n]} \mathbb{E}_{\gS \sim \gD_\gS }\Bigg[\left( \frac{|\gS|}{P} y_i \ell'(y_i f_\mW(\mX_{i,j,\gS})) + \left(1- \frac{|\gS|}{P} \right) y_j \ell'(y_j f_\mW(\mX_{i,j,\gS})) \right) \\
        & \quad \times \left.\left( \sum_{p \in \gS} \phi'\left( \left \langle \vw_1 , \vx_i^{(p)} \right \rangle \right) \vx_i^{(p)}+ \sum_{p \notin \gS}  \phi'\left( \left \langle \vw_1 , \vx_j^{(p)} \right \rangle \right)\vx_j^{(p)}\right)\right].
    \end{align*}
    Let \textcolor{red}{$\widetilde{\mW} = \{\widetilde{\vw}_{1}, \widetilde{\vw}_{-1}\}$} and \textcolor{blue}{$\overline{\mW} = \{\overline{\vw}_1, \overline{\vw}_{-1}\}$} be any parameters of the neural network $f_\mW$. For any $i,j \in [n]$ and $\gS \subset[P]$,
    \scriptsize
    \begin{align*}
        &\qquad \left( \frac{|\gS|}{P}y_i \ell'(y_i f_{\textcolor{red}{\widetilde{\mW}}}(\mX_{i,j,\gS})) + \left(1- \frac{|\gS|}{P} \right) y_j \ell'(y_j f_{\textcolor{red}{\widetilde{\mW}}}(\mX_{i,j,\gS})) \right) 
        \\& \qquad \qquad \qquad \qquad \qquad \qquad \times 
        \left( \sum_{p \in \gS} \phi'\left(\left \langle \textcolor{red}{\widetilde{\vw}_1} , \vx_i^{(p)} \right \rangle \right) \vx_i^{(p)}+ \sum_{p \notin \gS}  \phi'\left( \left \langle \textcolor{red}{\widetilde{\vw}_1} , \vx_j^{(p)} \right \rangle \right)\vx_j^{(p)}\right)\\
        &\quad - \left( \frac{|\gS|}{P} y_i \ell'(y_i f_{\textcolor{blue}{\overline{\mW}}} (\mX_{i,j,\gS})) + \left(1- \frac{|\gS|}{P} \right) y_j \ell'(y_j f_{\textcolor{blue}{\overline{\mW}}} (\mX_{i,j,\gS})) \right)
        \\& \qquad \qquad \qquad \qquad \qquad \qquad \times
        \left( \sum_{p \in \gS} \phi'\left( \left \langle \textcolor{blue}{\overline{\vw}_1} , \vx_i^{(p)} \right \rangle \right) \vx_i^{(p)}+ \sum_{p \notin \gS}  \phi'\left( \left \langle \textcolor{blue}{\overline{\vw}_1} ,\vx_j^{(p)} \right \rangle \right)\vx_j^{(p)}\right)\\
        &= \quad \left( \frac{|\gS|}{P} y_i \ell'(y_i f_{\textcolor{red}{\widetilde{\mW}}}(\mX_{i,j,\gS})) + \left(1- \frac{|\gS|}{P} \right) y_j \ell'(y_j f_{\textcolor{red}{\widetilde{\mW}}}(\mX_{i,j,\gS})) \right) 
        \\& \qquad \qquad \qquad \qquad \qquad \qquad \times 
        \left( \sum_{p \in \gS} \phi'\left( \left \langle \textcolor{red}{\widetilde{\vw}_1}, \vx_i^{(p)} \right \rangle \right) \vx_i^{(p)}+ \sum_{p \notin \gS}  \phi'\left( \left \langle \textcolor{red}{\widetilde{\vw}_1} , \vx_j^{(p)} \right \rangle\right)\vx_j^{(p)}\right)\\
        &\quad -\left( \frac{|\gS|}{P} y_i \ell'(y_i f_{\textcolor{red}{\widetilde{\mW}}}(\mX_{i,j,\gS})) + \left(1- \frac{|\gS|}{P} \right) y_j \ell'(y_j f_{\textcolor{red}{\widetilde{\mW}}}(\mX_{i,j,\gS})) \right)
        \\& \qquad \qquad \qquad \qquad \qquad \qquad \times 
        \left( \sum_{p \in \gS} \phi'\left( \left \langle \textcolor{blue}{\overline{\vw}_1} , \vx_i^{(p)} \right \rangle \right) \vx_i^{(p)}+ \sum_{p \notin \gS}  \phi'\left( \left \langle \textcolor{blue}{\overline{\vw}_1} , \vx_j^{(p)}\right \rangle \right) \vx_j^{(p)}\right)\\
        &\quad +  \left( \frac{|\gS|}{P} y_i \ell'(y_i f_{\textcolor{red}{\widetilde{\mW}}}(\mX_{i,j,\gS})) + \left(1- \frac{|\gS|}{P} \right) y_j \ell'(y_j f_{\textcolor{red}{\widetilde{\mW}}}(\mX_{i,j,\gS})) \right)
        \\& \qquad \qquad \qquad \qquad \qquad \qquad \times 
        \left( \sum_{p \in \gS} \phi'\left( \left \langle \textcolor{blue}{\overline{\vw}_1}, \vx_i^{(p)} \right \rangle \right) \vx_i^{(p)}+ \sum_{p \notin \gS}  \phi'\left( \left \langle \textcolor{blue}{\overline{\vw}_1} , \vx_j^{(p)} \right \rangle \right) \vx_j^{(p)}\right)\\
        &\quad -  \left( \frac{|\gS|}{P} y_i \ell'(y_i f_{\textcolor{blue}{\overline{\mW}}} (\mX_{i,j,\gS})) + \left(1- \frac{|\gS|}{P} \right) y_j \ell'(y_j f_{\textcolor{blue}{\overline{\mW}}} (\mX_{i,j,\gS})) \right)
        \\& \qquad \qquad \qquad \qquad \qquad \qquad \times 
        \left( \sum_{p \in \gS} \phi'\left( \left \langle \textcolor{blue}{\overline{\vw}_1} , \vx_i^{(p)} \right \rangle \right) \vx_i^{(p)}+ \sum_{p \notin \gS}  \phi'\left( \left \langle \textcolor{blue}{\overline{\vw}_1} , \vx_j^{(p)} \right \rangle \right)\vx_j^{(p)}\right).
    \end{align*}
    \normalsize
    Since $|\ell'|\leq 1$,
    \begin{equation*}
        \left\lvert\frac{|\gS|}{P} y_i \ell'\left (y_i f_{\textcolor{red}{\widetilde{\mW}}}(\mX_{i,j,\gS}) \right) + \left(1- \frac{|\gS|}{P} \right) y_j \ell'\left (y_j f_{\textcolor{red}{\widetilde{\mW}}}(\mX_{i,j,\gS})\right) \right\rvert \leq 1,
    \end{equation*}
    and since $|\phi'| \leq 1$,
    \begin{equation*}
        \left \lVert \sum_{p \in \gS} \phi'\left( \left \langle \textcolor{blue}{\overline{\vw}_1} , \vx_i^{(p)} \right \rangle \right) \vx_i^{(p)}+ \sum_{p \notin \gS}  \phi'\left( \left \langle \textcolor{blue}{\overline{\vw}_1} , \vx_j^{(p)} \right \rangle \right)\vx_j^{(p)} \right \rVert \leq P \max_{i\in [n], p \in [P]} \left \lVert \vx_i^{(p)} \right \rVert.
    \end{equation*}
    In addition, since $\phi$ is $r^{-1}$-smooth,
    \begin{align*}
        &\quad \left \lVert \left( \sum_{p \in \gS} \phi'\left( \left \langle \textcolor{red}{\widetilde{\vw}_1} , \vx_i^{(p)} \right \rangle \right) \vx_i^{(p)}+ \sum_{p \notin \gS}  \phi'\left( \left \langle \textcolor{red}{\widetilde{\vw}_1}, \vx_j^{(p)} \right \rangle \right)\vx_j^{(p)}\right) \right.\\
        &\qquad - \left . \left( \sum_{p \in \gS} \phi'\left( \left \langle \textcolor{blue}{\overline{\vw}_1} , \vx_i^{(p)} \right \rangle \right) \vx_i^{(p)}+ \sum_{p \notin \gS}  \phi'\left( \left \langle \textcolor{blue}{\overline{\vw}_1} , \vx_j^{(p)} \right \rangle \right)\vx_j^{(p)}\right) \right \rVert\\
        &\leq \sum_{p \in \gS} \left| \phi'\left(\left \langle \textcolor{red}{\widetilde{\vw}_1}, \vx_i^{(p)}\right \rangle \right) - \phi'\left( \left \langle \textcolor{blue}{\overline{\vw}_1} , \vx_i^{(p)} \right \rangle \right)\right|\left \lVert \vx_i^{(p)} \right\rVert \\
        &\quad + \sum_{p \notin \gS}\left| \phi'\left( \left \langle \textcolor{red}{\widetilde{\vw}_1} , \vx_j^{(p)} \right \rangle \right) - \phi'\left( \left \langle \textcolor{blue}{\overline{\vw}_1} , \vx_j^{(p)} \right \rangle \right)\right| \left \lVert \vx_j^{(p)}\right \rVert\\
        &\leq r^{-1} \sum_{p \in \gS} \left| \left \langle \textcolor{red}{\widetilde{\vw}_1} - \textcolor{blue}{\overline{\vw}_1}  , \vx_i^{(p)}\right \rangle \right|\left \lVert \vx_i^{(p)} \right \rVert + r^{-1}\sum_{p \notin \gS} \left| \left \langle \textcolor{red}{\widetilde{\vw}_1} - \textcolor{blue}{\overline{\vw}_1}  , \vx_j^{(p)} \right \rangle \right|\left \lVert \vx_j^{(p)} \right \rVert\\
        &\leq r^{-1}P \left(\max_{i \in [n], p \in [P]} \left \lVert \vx_i^{(p)} \right \rVert \right)^2 \left \lVert \textcolor{red}{\widetilde{\vw}_1} - \textcolor{blue}{\overline{\vw}_1} \right\rVert ,
    \end{align*}
    and since $\ell'$ and $\phi$ are $1$-Lipschitz, we have
    \begin{align*}
         &\quad \Bigg |\left( \frac{|\gS|}{P} y_i \ell'(y_i f_{\textcolor{red}{\widetilde{\mW}}}(\mX_{i,j,\gS})) + \left(1- \frac{|\gS|}{P} \right) y_j \ell'(y_j f_{\textcolor{red}{\widetilde{\mW}}}(\mX_{i,j,\gS})) \right)\\
         & \quad -  \left( \frac{|\gS|}{P} y_i \ell'(y_i f_{\textcolor{blue}{\overline{\mW}}} (\mX_{i,j,\gS})) + \left(1- \frac{|\gS|}{P} \right) y_j \ell'(y_j f_{\textcolor{blue}{\overline{\mW}}} (\mX_{i,j,\gS})) \right) \Bigg | \\
         &\leq \left |f_{\textcolor{red}{\widetilde{\mW}}} (\mX_{i,j,\gS}) - f_{\textcolor{blue}{\overline{\mW}}} (\mX_{i,j,\gS}) \right|\\
         &\leq \sum_{p \in \gS}\left(  \left| \left \langle \textcolor{red}{\widetilde{\vw}_1} - \textcolor{blue}{\overline{\vw}_1} ,  \vx_i^{(p)} \right \rangle \right| +  \left|\left \langle \textcolor{red}{\widetilde{\vw}_{-1}} - \textcolor{blue}{\overline{\vw}_{-1}} , \vx_i^{(p)} \right \rangle \right|\right)\\
         &\quad + \sum_{p \notin \gS }\left(  \left|\left \langle \textcolor{red}{\widetilde{\vw}_1} - \textcolor{blue}{\overline{\vw}_1}, \vx_j^{(p)} \right \rangle \right| +  \left| \left \langle \textcolor{red}{\widetilde{\vw}_{-1}} - \textcolor{blue}{\overline{\vw}_{-1}} , \vx_j^{(p)} \right \rangle \right|\right)\\
         &\leq P \max_{i\in[n], j \in [P]} \left\lVert \vx_i^{(p)} \right\rVert \left(\left\lVert \textcolor{red}{\widetilde{\vw}_1} - \textcolor{blue}{\overline{\vw}_{1}}\right\rVert + \left\lVert \textcolor{red}{\widetilde{\vw}_{-1}} - \textcolor{blue}{\overline{\vw}_{-1}}\right\rVert\right)\\
         &\leq \sqrt{2} P \max_{i\in[n], j \in [P]} \left\lVert \vx_i^{(p)} \right\rVert \left \lVert \textcolor{red}{\widetilde{\mW}} - \textcolor{blue}{\overline{\mW}} \right \rVert.
    \end{align*}
    Therefore, 
    \begin{align*}
        &\quad \left \lVert \nabla_{\vw_1} \gL_\CutMix(\textcolor{red}{\widetilde{\mW}})  -\nabla_{\vw_1} \gL_\CutMix(\textcolor{blue}{\overline{\mW}}) \right \rVert \\
        &\leq r^{-1}P \left( \max_{i \in [n], p \in [P]} \left \lVert \vx_i^{(p)} \right \rVert \right)^2 \lVert \textcolor{red}{\widetilde{\vw}_1} - \textcolor{blue}{\overline{\vw}_1} \rVert + \sqrt{2}P ^2 \left( \max_{i \in [n], p \in [P]} \left \lVert \vx_i^{(p)} \right \rVert \right)^2 \left \lVert \textcolor{red}{\widetilde{\mW}} - \textcolor{blue}{\overline{\mW}}\right \rVert\\
        &\leq 2r^{-1}P \left( \max_{i \in [n], p \in [P]} \left \lVert \vx_i^{(p)} \right \rVert \right)^2 \left \lVert \textcolor{red}{\widetilde{\mW}} - \textcolor{blue}{\overline{\mW}}\right \rVert,
    \end{align*}
    where the last equality is due to \ref{assumption:P} and \ref{assumption:etc}.
    In the same way, we can obtain
    \begin{equation*}
        \left \lVert \nabla_{\vw_{-1}} \gL_\CutMix(\textcolor{red}{\widetilde{\mW}})  -\nabla_{\vw_{-1}} \gL_\CutMix(\textcolor{blue}{\overline{\mW}}) \right \rVert \leq 2r^{-1}P \left( \max_{i \in [n], p \in [P]} \left \lVert \vx_i^{(p)} \right \rVert \right)^2 \left \lVert \textcolor{red}{\widetilde{\mW}} - \textcolor{blue}{\overline{\mW}}\right \rVert,
    \end{equation*}
    and
    \begin{align*}
        \left \lVert \nabla \gL_\CutMix (\textcolor{red}{\widetilde{\mW}}) - \nabla \gL_\CutMix (\textcolor{blue}{\overline{\mW}}) \right \rVert &\leq 4r^{-1}P \left( \max_{i \in [n], p \in [P]} \left \lVert \vx_i^{(p)}\right\rVert\right)^2 \left \lVert \textcolor{red}{\widetilde{\mW}} - \textcolor{blue}{\overline{\mW}} \right \rVert \\
        &\leq 9 r^{-1}P\sigma_\mathrm{d}^2 d\left \lVert \textcolor{red}{\widetilde{\mW}} - \textcolor{blue}{\overline{\mW}} \right \rVert,
    \end{align*}
    where the last inequality holds since $\left \lVert \xi_i^{(p)}\right \rVert^2 < \frac{3}{2}\sigma_\mathrm{d}^2 d$ and $\alpha^2\leq \frac{3}{4} \sigma_\mathrm{d}^2 d$ due to \ref{assumption:feature_noise}.
    Hence, $\gL_\CutMix(\mW)$ is $L$-smooth with $L:= 9r^{-1}P\sigma_{\mathrm{d}}^2d$.
\end{proof}

Since our objective function $\gL_\CutMix(\mW)$ is $L$-smooth and $\eta \leq \frac{1}{L}$ due to \ref{assumption:etc}, descent lemma (see Lemma 3.4 in \citet{bubeck2015convex}) implies 
\begin{equation*}
    \gL_\CutMix\left (\mW^{(t+1)}\right ) - \gL_\CutMix\left( \mW^{(t)}\right) \leq -\frac{\eta}{2}\left\lVert \nabla\gL_\CutMix\left( \mW^{(t)} \right) \right\rVert^2,
\end{equation*}
and by telescoping sum, we have 
\begin{equation}\label{eq:epsilon-stationary}
    \frac{1}{T} \sum_{t=0}^{T-1} \left \lVert \nabla \gL_\CutMix\left( \mW^{(t)} \right) \right\rVert^2 \leq \frac{2 \gL_\CutMix\left( \mW^{(0)}\right)}{\eta T} = \frac{\Theta(1)}{\eta T},
\end{equation}
for any $T>0$. 

Choose $\epsilon = \frac{\mu \beta \lVert \hat \mZ \rVert_\infty}{\polylog (d)}$.
Then from \eqref{eq:epsilon-stationary}, there exists $T_\CutMix \leq \frac{\poly(d)} \eta $ such that
\begin{equation*}
    \left \lVert \nabla \gL_\CutMix\left (\mW^{\left(T_\CutMix\right)} \right ) \right \rVert \leq \epsilon. 
\end{equation*}
From characterization of $\sigma_{\min} (\mJ(\mW))$ in Section~\ref{sec:CutMix_reparam},
\begin{align*}
    \epsilon \geq \left \lVert \nabla \gL_\CutMix \left (\mW^{\left(T_\CutMix\right)} \right ) \right\rVert &\geq \sigma_{\min} \left(\mJ \left(\mW^{(T_\CutMix)} \right) \right) \left \lVert \nabla h\left( \mZ^{\left(T_\CutMix\right)}\right) \right \rVert\\
    &\geq \frac \beta 2  \left \lVert \nabla h\left( \mZ^{\left(T_\CutMix\right)}\right) \right \rVert, 
\end{align*}
and thus
\begin{equation*}
    \left \lVert \nabla h\left( \mZ^{\left(T_\CutMix\right)}\right) \right \rVert \leq 2 \beta^{-1} \epsilon = \mu \cdot \frac{2\lVert \hat \mZ \rVert_\infty}{\polylog (d)}.
\end{equation*}
For sufficiently large $d$, the RHS becomes smaller than $\mu \cdot \frac{\lVert \hat \mZ \rVert_\infty}{4}$. Then, by Lemma~\ref{lemma:grad_to_point} we have seen in Section~\ref{sec:CutMix_near_stationary},
\begin{equation*}
    \left \lVert \mZ^{(T_\CutMix)} - \hat{\mZ}\right \rVert \leq \frac{\lVert \hat \mZ \rVert_\infty}{4},
\end{equation*}
and thus
\begin{equation*}
    \phi \left( \left \langle \vw_{y_i}^{(T_\CutMix)} , \vx_i^{(p)}\right \rangle \right) - \phi \left( \left \langle \vw_{-y_i}^{(T_\CutMix)} , \vx_i^{(p)}\right \rangle \right)= \Theta(1),
\end{equation*}
for all $i \in [n]$ and $p \in [P]$, and therefore it reaches perfect training accuracy.

\subsubsection{Test Accuracy of Solution Found by Gradient Descent}\label{sec:CutMix_acc}
The final step is showing that $\mW^{\left(T_\CutMix \right)}$ reaches almost perfect test accuracy. 

From the results of Section~\ref{sec:CutMix_converge}, we have
\begin{align*}
    \phi\left ( \left \langle \vw_s^{\left(T_\CutMix \right)} , \vv_{s,k} \right \rangle \right) - \phi\left( \left \langle \vw_{-s}^{\left(T_\CutMix \right)}, \vv_{s,k} \right \rangle \right ) &= \Theta(1),\\
    \phi\left( \left \langle \vw_{y_i}^{\left(T_\CutMix \right)} , \xi_i^{(p)} \right \rangle \right) - \phi\left( \left \langle \vw_{-y_i}^{\left(T_\CutMix \right)} , \xi_i^{(p)} \right \rangle \right) &= \Theta(1),
\end{align*}
for each $s \in \{\pm 1\},k \in [K], i \in [n]$ and $p \in [P]\setminus \{p_i^*\}$. 

For any $u>v$, by the mean value theorem, we have
\begin{equation*}
    \beta (u-v)\leq \phi(u) - \phi(v) = (u-v) \frac{\phi(u) - \phi(v)}{u-v} \leq (u-v).
\end{equation*}
Hence, we have
\begin{align*}
        \phi\left (\left \langle \vw_s^{\left(T_\CutMix \right)} , \vv_{s,k} \right \rangle \right) - \phi\left( \left \langle \vw_{-s}^{\left(T_\CutMix \right)}, \vv_{s,k} \right \rangle \right ) \leq \left \langle \vw_s^{(T_\CutMix)} - \vw_{-s}^{(T_\CutMix)} , \vv_{s,k} \right \rangle, \\
        \left \langle \vw_s^{(T_\CutMix)} - \vw_{-s}^{(T_\CutMix)} , \vv_{s,k} \right \rangle  \leq \beta^{-1}\left( \phi\left (\left \langle \vw_s^{\left(T_\CutMix \right)} , \vv_{s,k} \right \rangle \right) - \phi\left( \left \langle \vw_{-s}^{\left(T_\CutMix \right)} , \vv_{s,k} \right \rangle \right )\right),
\end{align*}
and
\begin{equation*}
     \Omega(1) \leq \left \langle \vw_{s}^{(T_\CutMix)} - \vw_{-s}^{(T_\CutMix)} , \vv_{s,k} \right \rangle \leq  \bigO(\beta^{-1}),
\end{equation*}
for each $s \in \{\pm 1\}$ and $k \in [K]$.
Similarly, for all $i \in [n]$ and $p \in [P] \setminus \{p_i^*\}$,
\begin{align*}
      \phi\left( \left \langle \vw_{y_i}^{\left(T_\CutMix \right)}, \xi_i^{(p)} \right \rangle \right) - \phi\left( \left \langle \vw_{-y_i}^{\left(T_\CutMix \right)} , \xi_i^{(p)}\right \rangle \right)\leq \left \langle \vw_{y_i}^{(T_\CutMix)} - \vw_{-y_i}^{(T_\CutMix)}, \xi_i^{(p)} \right \rangle,\\
      \left \langle \vw_{y_i}^{(T_\CutMix)} - \vw_{-y_i}^{(T_\CutMix)} , \xi_i^{(p)} \right \rangle \leq \beta^{-1}\left( \phi\left( \left \langle \vw_{y_i}^{\left(T_\CutMix \right)} , \xi_i^{(p)} \right \rangle \right) - \phi\left( \left \langle \vw_{-y_i}^{\left(T_\CutMix \right)} , \xi_i^{(p)} \right \rangle \right)\right),
\end{align*}
and
\begin{equation*}
    \Omega(1) \leq \left \langle \vw_{y_i}^{(T_\CutMix)} - \vw_{-y_i}^{(T_\CutMix)}, \xi_i^{(p)} \right \rangle \leq \bigO(\beta^{-1}).
\end{equation*}

By Lemma~\ref{lemma:decomposition}, 
\begin{align*}
    &\quad \vw_1^{\left(T_\CutMix \right)} - \vw_{-1}^{\left(T_\CutMix \right)}\\
    & =\vw_1^{(0)} - \vw_{-1}^{(0)} +  \sum_{s\in \{\pm 1\}, k \in [K]} s \gamma(s,k) \vv_{s,k} + \sum_{i \in [n], p \in [P]\setminus \{p_i^*\}} y_i \rho(i,p) \frac{\xi_i^{(p)}}{\left \lVert \xi_i^{(p)} \right \rVert^2},
\end{align*}
where
for each $s \in \{\pm1\}$,
\begin{align*}
   \gamma(s,1) &= \gamma_1^{(T_\CutMix)}(s,1) + \gamma_{-1}^{(T_\CutMix)}(s,1)\\
   &\quad + \alpha \sum_{i \in \gF_s} y_i \left (\rho_1^{(T_\CutMix)}(i, \tilde{p}_i) + \rho_{-1}^{(T_\CutMix)}(i,\tilde{p}_i) \right) \left \lVert \xi_i^{(\tilde {p}_i)}\right \rVert^{-2},
\end{align*}
and
\begin{align*}
    \gamma(s,k) = \gamma^{(T_\CutMix)}_1(s,k) +   \gamma^{(T_\CutMix)}_{-1}(s,k),\\
    \rho(i,p) = \rho^{(T_\CutMix)}_1(i,p) + \rho^{(T_\CutMix)}_{-1}(i,p),
\end{align*}
for each $s \in \{\pm 1\},k \in [K]\setminus \{1\}, i \in [n]$ and $p \in [P]\setminus \{p_i^*\}$. 
If we choose $j\in[n], q \in [P]\setminus \{p_j^*\}$ such that $\rho(j,q) = \max_{i \in [n], p \in [P] \setminus \{p_i^*\}} \rho(i,p)$, then we have
\begin{align*}
    &\quad \left \langle \vw^{(T_\CutMix)}_{y_j} - \vw^{(T_\CutMix)}_{-y_j}, \xi_j^{(q)} \right \rangle\\
    &= \left \langle \vw_{y_j}^{(0)} - \vw_{-y_j}^{(0)},\xi_j^{(q)} \right \rangle +\rho(j,q)  + y_j \sum_{\substack{i \in [n], p \in [P] \setminus \{p_i^*\}\\(i,p) \neq (j,q)}} y_i \rho(i,p) \frac{ \left \langle \xi_i^{(p)}, \xi_j^{(q)} \right \rangle }{\left\lVert \xi_i^{(p)} \right\rVert^2}.
\end{align*}
From the event $E_\mathrm{init}$ defined in Lemma~\ref{lemma:initial}, \ref{assumption:etc}, and \eqref{property:noise},
\begin{equation*}
    \left| \left \langle \vw_{y_j}^{(0)} - \vw_{-y_j}^{(0)}, \xi_j^{(q)} \right \rangle \right| = o \left(\frac{1}{\polylog (d)} \right) \leq \frac 1 2 \left \langle \vw_{y_j}^{(T_\CutMix)} - \vw_{-y_j}^{(T_\CutMix)}, \xi_j^{(q)} \right \rangle,
\end{equation*}
where the inequality holds since $\left \langle \vw_{y_j}^{(T_\CutMix)} - \vw_{-y_j}^{(T_\CutMix)}, \xi_j^{(q)} \right \rangle = \Omega(1)$.
In addition, by triangular inequality, we have
\begin{align*}
    \left|\sum_{\substack{i \in [n], p \in [P] \setminus \{p_i^*\}\\(i,p) \neq (j,q)}} y_i \rho(i,p) \frac{ \left \langle \xi_i^{(p)} ,\xi_j^{(q)} \right \rangle}{\left \lVert \xi_i^{(p)} \right \rVert^2} \right| &\leq \sum_{\substack{i \in [n], p \in [P] \setminus \{p_i^*\}\\(i,p) \neq (j,q)}} \rho(i,p)\frac{\left| \left \langle\xi_i^{(p)}, \xi_j^{(q)}\right \rangle \right|}{\left\lVert \xi_i^{(p)} \right\rVert^2}  \\
    &\leq \rho(j,q) \bigOtilde \left(n P \sigma_\mathrm{d} \sigma_\mathrm{b}^{-1} d^{-\frac{1}{2}}\right) \leq \frac{\rho(j,q)}{2},
\end{align*}
where the last inequality is due to \eqref{property:noise_sum}. Hence, 
\begin{equation*}
    \frac{1}{3} \rho(j,q) \leq \left |\left \langle \vw_{y_j}^{(T_\CutMix)} - \vw_{-y_j}^{(T_\CutMix)}, \xi_j^{(q)} \right \rangle \right| \leq 3 \rho(j,q)
\end{equation*}
and we have $\rho(j,q) = \bigOtilde (\beta^{-1})$. 

Let $(\mX, y) \sim \gD$ be a test data with $\mX = \left (\vx^{(1)}, \dots, \vx^{(P)}\right) \in \R^{d \times P}$ having feature patch $p^*$, dominant noise patch $\tilde p$, and feature vector $\vv_{y,k}$. We have $\vx^{(p)} \sim N(\vzero, \sigma_\mathrm{b}^2 \mLambda)$ for each $p \in [P] \setminus \{p^*, \tilde p\}$ and $\vx^{(\tilde p )} - \alpha \vv_{s,1} \sim N(\vzero, \sigma_\mathrm{d}^2 \mLambda)$ for some $s \in \{\pm 1\}$. Therefore, for all $p \in [P] \setminus \{p^*, \tilde p\}$
\begin{align}
    &\quad  \left | \phi \left ( \left \langle \vw_1^{(T_\CutMix)}, \vx^{(p)} \right \rangle \right ) - \phi \left ( \left \langle \vw_{-1}^{(T_\CutMix)} , \vx^{(p)}\right \rangle \right) \right| \nonumber \\
    &\leq \left|  \left \langle \vw_1^{(T_\CutMix)} - \vw_{-1}^{(T_\CutMix)} ,  \vx^{(p)} \right \rangle\right| \nonumber \\
    &= \left| \left \langle \vw_1^{(0)} - \vw_{-1}^{(0)} , \vx^{(p)} \right \rangle \right| +  \sum_{i \in [n], q \in [P] \setminus \{p_i^*\}} \rho(i,q) \frac{\left| \left \langle \xi_i^{(q)} , \vx^{(p)}\right \rangle \right|}{\left \lVert \xi_i^{(q)}\right \rVert^2}\nonumber \\
    &\leq \bigOtilde \left(\sigma_0 \sigma_\mathrm{b} d^{\frac 1 2 }\right) + \bigOtilde \left(n P \beta^{-1} \sigma_\mathrm{d} \sigma_\mathrm{b}^{-1} d^{-\frac 1 2 } \right)\nonumber \\
    &= o\left(\frac{1}{\polylog (d)} \right), \label{eq:cutmix_bacground_noise}
\end{align}
with probability at least $1- o \left( \frac 1 {\poly(d)} \right)$ due to Lemma~\ref{lemma:initial}.  In addition, 
\begin{align}
&\quad  \left| \phi \left( \left \langle \vw_1^{(T_\CutMix)}, \vx^{(\tilde p)} \right \rangle \right) - \phi \left( \left \langle \vw_{-1}^{(T_\CutMix)}, \vx^{(\tilde p)} \right \rangle \right) \right| \nonumber \\
&\leq \left| \left \langle \vw_1^{(T_\CutMix)} - \vw_{-1}^{(T_\CutMix)} , \vx^{ (\tilde p)} \right \rangle \right|\nonumber \\
&\leq \alpha \left|\left \langle \vw_1^{(T_\CutMix)} - \vw_{-1}^{(T_\CutMix)}, \vv_{s,1} \right \rangle  \right| + \left| \left \langle \vw_1^{(T_\CutMix)} - \vw_{-1}^{(T_\CutMix)}, \vx^{(\tilde p)} - \alpha \vv_{s,1}\right  \rangle \right| \nonumber \\
&\leq \alpha \beta^{-1} \left|\phi\left( \left \langle \vw_1^{(T_\CutMix)}, \vv_{s,1} \right \rangle \right) - \phi \left( \left \langle \vw_{-1}^{(T_\CutMix)}, \vv_{s,1} \right \rangle\right) \right|  \nonumber \\
&\quad + \left| \left \langle \vw_1^{(0)} - \vw_{-1}^{(0)}, \vx^{(\tilde p)} - \alpha \vv_{s,1}\right \rangle\right| + \sum_{i \in [n], q \in [P] \setminus \{p_i^*\}} \rho(i,q) \frac{\left| \left \langle \xi_i^{(q)}, \vx^{(\tilde p)} - \alpha \vv_{s,1} \right \rangle \right|}{\left \lVert  \xi_i^{(q)} \right \rVert^2} \nonumber \\
&\leq \bigOtilde \left(\alpha \beta^{-1}\right) + \bigOtilde \left (\sigma_0 \sigma_\mathrm{d} d^{\frac 1 2 }\right) + \bigOtilde\left(n P \beta^{-1} \sigma_\mathrm{d} \sigma_\mathrm{b}^{-1} d^{-\frac 1 2 } \right) \nonumber \\
&= o \left(\frac{1}{\polylog (d)} \right),\label{eq:cutmix_dominant_noise}
\end{align}
with probability at least $1-o \left ( \frac 1 {\poly (d)}\right)$, where the last equality is due to \eqref{property:noise}, \eqref{property:noise_sum}, \eqref{property:feature_noise}, and \ref{assumption:etc}.

Suppose \eqref{eq:cutmix_bacground_noise} and $\eqref{eq:cutmix_dominant_noise}$ holds. Then,
\begin{align*}
    &\quad y f_{\mW^{(T_\CutMix)}} (\mX)\\
    &= \left( \phi \left ( \left \langle \vw_{y}^{(T_\CutMix)}, \vv_{y,k} \right \rangle \right) - \phi \left( \left \langle \vw_{-y}^{(T_\CutMix)}, \vv_{y,k} \right \rangle \right) \right)  \\
    &\quad + \sum_{p \in [P] \setminus \{p^*\}} \left( \phi \left ( \left \langle \vw_y^{(T_\CutMix)} , \vx^{(p)} \right \rangle  \right) - \phi \left ( \left \langle \vw_{-y}^{(T_\CutMix)} , \vx^{(p)} \right \rangle  \right) \right)  \\
    &= \Omega(1) - o \left( \frac{1}{\polylog (d)}\right)\\
    &>0.
\end{align*}
Hence, we have our conclusion. \hfill $\square$

\clearpage
\section{Technical Lemmas}
In this section, we introduce technical lemmas that are used for proving the main theorems. We present their proofs here for better readability.

The following lemma is used in Section~\ref{sec:ERM_maintenance} and Section~\ref{sec:Cutout_maintenance}:
\begin{lemma}\label{lemma:activation}
    For any $z, \delta \in \R$, 
    \begin{equation*}
        \left| \phi(z) - (z+\delta) \phi'(z)\right| \leq r + |\delta|.
    \end{equation*}
\end{lemma}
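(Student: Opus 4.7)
The plan is to split on the three pieces of the piecewise definition of $\phi$ and verify the bound $|\phi(z)-z\phi'(z)|\le r$ in each region, then peel off the $\delta\phi'(z)$ contribution by the triangle inequality, using the uniform bound $|\phi'(z)|\le 1$ that is immediate from the definition (since $\phi'$ takes values in $[\beta,1]$).

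First, I would dispatch the easy regimes. For $z\le 0$, both $\phi(z)=\beta z$ and $z\phi'(z)=\beta z$ agree exactly, so $\phi(z)-z\phi'(z)=0$. For $z\ge r$, $\phi(z)=z-\tfrac{(1-\beta)r}{2}$ and $\phi'(z)=1$, so $\phi(z)-z\phi'(z)=-\tfrac{(1-\beta)r}{2}$, whose absolute value is at most $r/2$.

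The only slightly more involved case is $0\le z\le r$, where
\begin{equation*}
\phi(z)-z\phi'(z)=\Bigl(\tfrac{1-\beta}{2r}z^2+\beta z\Bigr)-z\Bigl(\tfrac{1-\beta}{r}z+\beta\Bigr)=-\tfrac{1-\beta}{2r}z^2.
\end{equation*}
Since $0\le z\le r$ and $0<\beta\le 1$, the absolute value is bounded by $\tfrac{(1-\beta)r}{2}\le r/2$. Combining the three cases gives $|\phi(z)-z\phi'(z)|\le r$ for every $z\in\R$.

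Finally, writing $\phi(z)-(z+\delta)\phi'(z)=\bigl(\phi(z)-z\phi'(z)\bigr)-\delta\phi'(z)$ and applying the triangle inequality together with $|\phi'(z)|\le 1$ yields
\begin{equation*}
\bigl|\phi(z)-(z+\delta)\phi'(z)\bigr|\le r+|\delta|,
\end{equation*}
which is the claim. There is no real obstacle here; the lemma is a direct case analysis on the three branches of the smoothed leaky ReLU, and the entire argument is driven by the elementary inequality $|\phi'|\le 1$.
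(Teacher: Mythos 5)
Your proof is correct and follows essentially the same route as the paper: the same three-branch case analysis showing $|\phi(z)-z\phi'(z)|\le \tfrac{(1-\beta)r}{2}\le r$, followed by the triangle inequality with $|\phi'|\le 1$ to peel off the $\delta$ term. (In fact your sign on the middle branch, $-\tfrac{1-\beta}{2r}z^2$, is the correct one; the paper's displayed computation has a typo there, though it does not affect the bound.)
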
 
\begin{proof}[Proof of Lemma~\ref{lemma:activation}]
    \begin{align*}
        \phi(z) - z \phi'(z) = \begin{cases}
            z- \frac{1-\beta}{2}r - z = -\frac{1-\beta}{2}r  = -\frac{1-\beta}{2}r &\text{if }z \geq r\\
            \frac{1-\beta}{2r}z^2 + \beta z - \left(\frac{1-\beta}{r}z+\beta \right)z  = \frac{1-\beta}{2r}z^2 & \text{if }0 \leq z\leq r\\
            \beta z - \beta z  = 0 &\text{if }z<0
        \end{cases},
    \end{align*}
    and we obtain
    \begin{equation*}
        \left| \phi(z) - (z+\delta)\phi'(z)\right| \leq \left| \phi(z) - z \phi'(z) \right|+ |\delta| \phi'(z) \leq \frac{1-\beta}{2}r + |\delta| \leq r+|\delta|.
    \end{equation*}
\end{proof}

The following lemma is used in Section~\ref{sec:ERM_maintenance}.
\begin{lemma}\label{lemma:ERM_norm_bound}
    Suppose $E_\mathrm{init}$ occurs. Then, for any model parameter $\mW = \{\vw_1, \vw_{-1}\}$, we have
    \begin{equation*}
        \left \lVert \nabla_\mW \sum_{i \in \gV_{s,k}} \ell \left(y_i f_{\mW} (\mX_i) \right) \right \rVert^2 \leq 8 P^2 \sigma_\mathrm{d}^2 d |\gV_{s,k}| \sum_{i \in \gV_{s,k}} \ell (y_i f_{\mW}(\mX_i)),
    \end{equation*}
    for each $ s\in \{\pm1\}$ and $k \in [K]$.
\end{lemma}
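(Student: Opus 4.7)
}

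The plan is to bound the gradient norm by expanding it, applying Cauchy--Schwarz twice (once over the sample index and once over the patch index), bounding the individual patch norms using the event $E_\mathrm{init}$, and then relating $|\ell'|$ to $\ell$ using a standard monotonicity property of the logistic loss.

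First I will write out the gradient explicitly. For each $s \in \{\pm 1\}$,
\[
  \nabla_{\vw_s}\!\!\sum_{i \in \gV_{s,k}}\!\!\ell(y_i f_\mW(\mX_i))
  \;=\; s\sum_{i \in \gV_{s,k}} y_i\,\ell'(y_i f_\mW(\mX_i))\sum_{p \in [P]}\phi'(\langle\vw_s,\vx_i^{(p)}\rangle)\,\vx_i^{(p)}.
\]
Denoting the $i$-th summand by $\vu_{s,i}$, Cauchy--Schwarz gives $\bigl\|\sum_i \vu_{s,i}\bigr\|^2 \le |\gV_{s,k}|\sum_i \|\vu_{s,i}\|^2$. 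Applying Cauchy--Schwarz again inside and using $|\phi'|\le 1$ yields $\|\vu_{s,i}\|^2 \le |\ell'(y_i f_\mW(\mX_i))|^2 \cdot P \sum_{p\in[P]} \|\vx_i^{(p)}\|^2$.

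Next, I will bound the individual patch norms using $E_\mathrm{init}$. A feature patch has norm $1$; a dominant noise patch satisfies $\|\vx_i^{(\tilde p_i)}\|^2 = \alpha^2 + \|\xi_i^{(\tilde p_i)}\|^2 \le 1 + \tfrac{3}{2}\sigma_\mathrm{d}^2 d$ by orthogonality of features with $\mLambda$'s range; a background noise patch satisfies $\|\xi_i^{(p)}\|^2 \le \tfrac{3}{2}\sigma_\mathrm{b}^2 d$. Since $\sigma_\mathrm{d}^2 d = \omega(1)$ by \eqref{property:noise} and $\sigma_\mathrm{b} < \sigma_\mathrm{d}$, every patch satisfies $\|\vx_i^{(p)}\|^2 \le 2\sigma_\mathrm{d}^2 d$ for sufficiently large $d$, so $\sum_{p\in[P]}\|\vx_i^{(p)}\|^2 \le 2 P \sigma_\mathrm{d}^2 d$. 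Consequently $\|\vu_{s,i}\|^2 \le 2 P^2 \sigma_\mathrm{d}^2 d\,|\ell'(y_i f_\mW(\mX_i))|^2$.

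Finally I will convert $|\ell'|^2$ to $\ell$. Since $|\ell'(z)|\le 1$, we have $|\ell'(z)|^2 \le |\ell'(z)|$, and a direct check (setting $g(z):=\ell(z)-|\ell'(z)| = \log(1+e^{-z}) - (1+e^z)^{-1}$, computing $g'(z) = -(1+e^z)^{-2} < 0$ with $\lim_{z\to\infty}g(z)=0$) shows $|\ell'(z)| \le \ell(z)$ for all $z$. Combining, $\|\vu_{s,i}\|^2 \le 2P^2\sigma_\mathrm{d}^2 d\,\ell(y_i f_\mW(\mX_i))$, and summing the two contributions $s\in\{\pm 1\}$ gives
\[
  \Bigl\|\nabla_\mW\!\!\sum_{i \in \gV_{s,k}}\!\!\ell(y_i f_\mW(\mX_i))\Bigr\|^2
  \;\le\; 4 P^2 \sigma_\mathrm{d}^2 d\,|\gV_{s,k}|\!\!\sum_{i \in \gV_{s,k}}\!\!\ell(y_i f_\mW(\mX_i))
  \;\le\; 8 P^2 \sigma_\mathrm{d}^2 d\,|\gV_{s,k}|\!\!\sum_{i \in \gV_{s,k}}\!\!\ell(y_i f_\mW(\mX_i)),
\]
which is the desired inequality. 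There is no real obstacle; the only subtlety is confirming $|\ell'|\le \ell$ and tracking the constants through the two Cauchy--Schwarz steps.
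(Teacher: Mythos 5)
Your proof is correct and follows essentially the same route as the paper: bound the per-sample gradient norm by $\bigO(P\sigma_\mathrm{d}d^{1/2})$ using $|\phi'|\le 1$ and the patch-norm bounds from $E_\mathrm{init}$, apply Cauchy--Schwarz over the sample index to pick up the $|\gV_{s,k}|$ factor, convert $(\ell')^2$ to $\ell$ via $(\ell')^2 \le |\ell'| \le \ell$, and sum over $s \in \{\pm 1\}$. The minor differences (Cauchy--Schwarz vs.\ triangle inequality plus $\max_p$ for the inner patch sum, and your explicit monotonicity check of $\ell - |\ell'|$) only affect intermediate constants and do not change the argument.
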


\begin{proof}[Proof of Lemma~\ref{lemma:ERM_norm_bound}]
    For each $s \in \{\pm 1\}$ and $i \in [n]$, we have
    \begin{equation*}
        \left \lVert \nabla_{\vw_{s}} f_{\mW}(\mX_i)\right \rVert = \left \lVert \sum_{p \in [P]} \phi' \left( \left \langle \vw_s , \vx_i^{(p)} \right \rangle \right) \vx_i^{(p)}\right\rVert \leq P \max_{p \in [P]} \left \lVert \vx_i^{(p)} \right \rVert \leq 2 P \sigma_\mathrm{d} d^{\frac{1}{2}},
    \end{equation*}
where the inequality is due to the condition from the event $E_\mathrm{init}$ defined in Lemma~\ref{lemma:initial} and \ref{assumption:feature_noise}.. 
    Therefore, for each $s \in \{\pm 1\}$, we have
    \begin{align*}
        \left \lVert \nabla_{\vw_s} \sum_{i \in \gV_{s,k}} \ell \left( y_i f_{\mW}(\mX_i)\right) \right \rVert^2 &= \left \lVert \sum_{i \in \gV_{s,k}} \ell' \left(y_i f_{\mW}(\mX_i) \right) \nabla_{\vw_s} f_{\mW}(\mX_i) \right \rVert^2\\
        &\leq \left(\sum_{i \in \gV_{s,k}} \ell' \left(y_i f_{\mW}(\mX_i) \right) \lVert \nabla_{\vw_s} f_{\mW}(\mX_i) \rVert \right)^2\\
        &\leq 4 P^2 \sigma_\mathrm{d}^2 d \left( \sum_{i \in \gV_{s,k}} \ell' \left(y_i f_{\mW}(\mX_i) \right)\right)^2\\
        &\leq 4 P^2 \sigma_\mathrm{d}^2 d |\gV_{s,k}| \sum_{i \in \gV_{s,k}}  \left(\ell' \left(y_i f_{\mW}(\mX_i) \right)\right)^2\\
        &\leq 4 P^2 \sigma_\mathrm{d}^2 d |\gV_{s,k}| \sum_{i \in \gV_{s,k}} \ell\left( y_i f_{\mW}(\mX_i)\right).
    \end{align*}
    The first inequality is due to triangular inequality, the third inequality is due to Cauchy-Schwartz inequality and the last inequality is due to $0 \leq -\ell' \leq 1$, which can be used to show $(\ell')^2 \leq -\ell' \leq \ell$. As a result, we have our conclusion:
    \begin{align*}
         \left \lVert \nabla_\mW \sum_{i \in \gV_{s,k}} \ell \left(y_i f_{\mW} (\mX_i) \right) \right \rVert^2  &=  \left \lVert \nabla_{\vw_1} \sum_{i \in \gV_{s,k}} \ell \left(y_i f_{\mW} (\mX_i) \right) \right \rVert^2+ \left \lVert \nabla_{\vw_{-1}} \sum_{i \in \gV_{s,k}} \ell \left(y_i f_{\mW} (\mX_i) \right) \right \rVert^2 \\
         &\leq 8 P^2 \sigma_\mathrm{d}^2 d |\gV_{s,k}| \sum_{i \in \gV_{s,k}} \ell (y_i f_{\mW}(\mX_i)).
    \end{align*}
\end{proof}

The following lemma is used in Section~\ref{sec:Cutout_maintenance}.
\begin{lemma}\label{lemma:Cutout_norm_bound}
    Suppose $E_\mathrm{init}$ occurs. Then, for any model parameter $\mW = \{\vw_1, \vw_{-1}\}$, we have
    \begin{equation*}
\left \lVert \nabla \sum_{i \in \gV_{s,k}} \mathbb{E}_{\gC \sim \gD_\gC}[ \ell \left(y_i f_{\mW^{(t)}}(\mX_{i,\gC}) \right)]\right \rVert^2 
        \leq 8 P^2 \sigma_\mathrm{d}^2 d |\gV_{s,k}| \sum_{i \in \gV_{s,k}} \mathbb{E}_{\gC \sim \gD_\gC} [ \ell (y_i f_{\mW^{(t)}}(\mX_{i,\gC}) )]
    \end{equation*}
    for each $ s\in \{\pm1\}$ and $k \in [K]$.
\end{lemma}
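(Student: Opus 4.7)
The plan is to mirror the proof of Lemma~\ref{lemma:ERM_norm_bound} (the \textsf{ERM} analogue), adapting each step to accommodate the expectation over the random cut $\gC \sim \gD_\gC$. The only new ingredient needed is an application of Jensen's inequality to move a norm inside the expectation; after that, the structure is identical.

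First, I would bound the per-sample gradient norm. For each fixed $\gC \subset [P]$ with $|\gC|=C$ and each $s \in \{\pm 1\}$, one has
\begin{equation*}
    \nabla_{\vw_s} f_{\mW}(\mX_{i,\gC}) = \sum_{p \notin \gC} \phi'\!\left(\left\langle \vw_s, \vx_i^{(p)}\right\rangle\right) \vx_i^{(p)},
\end{equation*}
so by the triangle inequality together with $|\phi'| \leq 1$ and the bound $\lVert \vx_i^{(p)} \rVert \leq 2\sigma_\mathrm{d} d^{1/2}$ obtained from the event $E_\mathrm{init}$ (using Lemma~\ref{lemma:initial} and \ref{assumption:feature_noise}), we get $\lVert \nabla_{\vw_s} f_{\mW}(\mX_{i,\gC}) \rVert \leq 2P\sigma_\mathrm{d} d^{1/2}$, uniformly in $\gC$.

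Next, I would swap gradient and expectation (the expectation is a finite convex combination), yielding
\begin{equation*}
    \nabla_{\vw_s} \sum_{i \in \gV_{s,k}} \mathbb{E}_{\gC}\!\left[\ell\!\left(y_i f_{\mW}(\mX_{i,\gC})\right)\right] = \sum_{i \in \gV_{s,k}} \mathbb{E}_{\gC}\!\left[y_i\,\ell'\!\left(y_i f_{\mW}(\mX_{i,\gC})\right) \nabla_{\vw_s} f_{\mW}(\mX_{i,\gC})\right].
\end{equation*}
Then, applying the triangle inequality for expectations (Jensen) together with the per-sample bound above, the norm of this vector is at most $2P\sigma_\mathrm{d} d^{1/2} \sum_{i \in \gV_{s,k}} \mathbb{E}_{\gC}[-\ell'(y_i f_{\mW}(\mX_{i,\gC}))]$, using $|\ell'| = -\ell' \in [0,1]$.

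Squaring, applying Cauchy--Schwarz over $i \in \gV_{s,k}$, and then invoking Jensen once more to obtain $\mathbb{E}_{\gC}[-\ell']^2 \leq \mathbb{E}_{\gC}[(\ell')^2] \leq \mathbb{E}_{\gC}[\ell]$ (where the last step uses the classical bound $(\ell')^2 \leq -\ell' \leq \ell$ for the logistic loss), one obtains
\begin{equation*}
    \left\lVert \nabla_{\vw_s} \sum_{i \in \gV_{s,k}} \mathbb{E}_{\gC}[\ell(y_i f_{\mW}(\mX_{i,\gC}))] \right\rVert^2 \leq 4P^2 \sigma_\mathrm{d}^2 d \,|\gV_{s,k}| \sum_{i \in \gV_{s,k}} \mathbb{E}_{\gC}[\ell(y_i f_{\mW}(\mX_{i,\gC}))].
\end{equation*}
Finally, I would add this inequality for $s=1$ and $s=-1$, which doubles the constant and produces the desired factor $8P^2$. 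There is no real obstacle here; the only mildly subtle step is the double application of Jensen's inequality needed to dominate $\mathbb{E}_{\gC}[-\ell']^2$ by $\mathbb{E}_{\gC}[\ell]$, and this is a standard maneuver.
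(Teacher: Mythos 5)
Your proposal is correct and follows essentially the same route as the paper's proof of Lemma~\ref{lemma:Cutout_norm_bound}: bound $\lVert \nabla_{\vw_s} f_{\mW}(\mX_{i,\gC}) \rVert$ uniformly by $2P\sigma_\mathrm{d} d^{1/2}$ from $E_\mathrm{init}$, pull the norm inside the expectation over $\gC$, apply Cauchy--Schwarz over $i$ and Jensen over $\gC$, and finish with $(\ell')^2 \leq -\ell' \leq \ell$ and summing over $s \in \{\pm 1\}$. The two applications of Jensen you flag are exactly what the paper's chain of inequalities uses (implicitly, since there it appears as a combined Cauchy--Schwarz/Jensen step), so there is nothing to add.
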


\begin{proof}[Proof of Lemma~\ref{lemma:Cutout_norm_bound}]
    For each $s \in \{\pm 1\}$, $i \in [n]$ and $\gC \subset [P]$ with $|\gC| = C$, we have
    \begin{equation*}
        \left \lVert \nabla_{\vw_{s}} f_{\mW}(\mX_{i,\gC})\right \rVert = \left \lVert \sum_{p \notin \gC } \phi' \left( \left \langle \vw_s , \vx_i^{(p)} \right \rangle \right) \vx_i^{(p)}\right\rVert \leq P \max_{p \in [P]} \left \lVert \vx_i^{(p)} \right \rVert \leq 2 P \sigma_\mathrm{d} d^{\frac{1}{2}},
    \end{equation*}
where the inequality is due to the condition from the event $E_\mathrm{init}$ defined in Lemma~\ref{lemma:initial} and \ref{assumption:feature_noise}.  
    Therefore, for any $s \in \{\pm 1\}$, we have
    \begin{align*}
        &\quad \left \lVert \nabla_{\vw_s} \sum_{i \in \gV_{s,k}} \mathbb{E}_{\gC \sim \gD_\gC} [\ell \left( y_i f_{\mW}(\mX_{i,\gC})\right)] \right \rVert^2 \\
        &= \left \lVert \sum_{i \in \gV_{s,k}} \mathbb{E}_{\gC \sim \gD_\gC} [\ell' \left(y_i f_{\mW}(\mX_{i,\gC}) \right) \nabla_{\vw_s} f_{\mW}(\mX_{i,\gC})] \right \rVert^2\\
        &\leq \left(\sum_{i \in \gV_{s,k}} \mathbb{E}_{\gC \sim \gD_\gC} \left[ \ell' \left(y_i f_{\mW}(\mX_{i,\gC}) \right) \lVert \nabla_{\vw_s} f_{\mW}(\mX_{i,\gC}) \rVert \right] \right)^2\\
        &\leq 4 P^2 \sigma_\mathrm{d}^2 d \left( \sum_{i \in \gV_{s,k}} \mathbb{E}_{\gC \sim \gD_\gC} \left[ \ell' \left(y_i f_{\mW}(\mX_{i,\gC}) \right) \right] \right)^2\\
        &\leq 4 P^2 \sigma_\mathrm{d}^2 d |\gV_{s,k}| \sum_{i \in \gV_{s,k}}  \mathbb{E}_{\gC \sim \gD_\gC} \left [\left(\ell' \left(y_i f_{\mW}(\mX_{i,\gC}) \right)\right)^2 \right]\ \\
        &\leq 4 P^2 \sigma_\mathrm{d}^2 d |\gV_{s,k}| \sum_{i \in \gV_{s,k}} \mathbb{E}_{\gC \sim \gD_\gC} [\ell\left( y_i f_{\mW}(\mX_{i,\gC})\right)].
    \end{align*}
    The first inequality is due to triangular inequality, the third inequality is due to Cauchy-Schwartz inequality and the last inequality is due to $0 \leq -\ell' \leq 1$, which can be used to show $(\ell')^2 \leq -\ell' \leq \ell$.  As a result, we have our conclusion:
    \begin{align*}
         &\quad \left \lVert \nabla_\mW \sum_{i \in \gV_{s,k}} \mathbb{E}_{\gC \sim \gD_\gC}\left[ \ell \left(y_i f_{\mW} (\mX_{i,\gC})\right)\right]  \right \rVert^2\\
         &=  \left \lVert \nabla_{\vw_1} \sum_{i \in \gV_{s,k}} \mathbb{E}_{\gC \sim \gD_\gC} \left[ \ell \left(y_i f_{\mW} (\mX_{i,\gC}) \right)\right] \right \rVert^2+ \left \lVert \nabla_{\vw_{-1}} \sum_{i \in \gV_{s,k}} \mathbb{E}_{\gC \sim \gD_\gC} \left[ \ell \left(y_i f_{\mW} (\mX_{i,\gC}) \right) \right]\right \rVert^2 \\
         &\leq 8 P^2 \sigma_\mathrm{d}^2 d |\gV_{s,k}| \sum_{i \in \gV_{s,k}} \mathbb{E}_{\gC \sim \gD_\gC} \left[ \ell (y_i f_{\mW}(\mX_{i,\gC})) \right].
    \end{align*}
\end{proof}
The following lemma guarantees the existence and characterizes the minimum of the CutMix loss in Section~\ref{sec:CutMix_min}.
\begin{lemma}\label{lemma:solution}
    Suppose the event $E_\mathrm{init}$ occurs.
    Let $g_1, g_{-1}:\R\times \R \rightarrow \R$ be defined as
    \begin{equation*}
        g_s(z_1,z_{-1}):= \frac{|\gV_{s}|}{|\gV_{-s}|}\ell'(P z_s) + \frac{2}{P}\mathbb{E}_{\gS \sim \gD_\gS} \left[|\gS| \ell' (|\gS|z_s - (P-|\gS|)z_{-s})\right] + \frac{P-1}{3P}
        ,
    \end{equation*}
    for each $s \in \{\pm 1\}$. There exist unique $z^*_1, z^*_{-1} >0$ such that $g_1(z_1^*,z_{-1}^*) = g_{-1}(z_1^*,z_{-1}^*) = 0$. Furthermore, we have $z_1^*, z_{-1}^* = \Theta (1)$.
\end{lemma}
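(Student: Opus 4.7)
The plan is to identify $(g_1, g_{-1})$ as positive rescalings of the two partial derivatives of a two-variable reduction $\tilde h$ of $h$, and then invoke strict convexity plus coercivity of $\tilde h$ to conclude that there is a unique minimum, which is the unique joint zero of $(g_1,g_{-1})$. Concretely, let $\gS_0 \subset \R^{2K+n(P-1)}$ be the affine subspace parameterized by $(z_1, z_{-1}) \in \R^2$ via $z_{s,k} = s z_s$ for all $s \in \{\pm 1\}, k \in [K]$ and $z_i^{(p)} = y_i z_{y_i}$ for all $i \in [n], p \neq p_i^*$, and set $\tilde h(z_1, z_{-1}) := h(\mZ)\big|_{\gS_0}$. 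A chain-rule calculation of the same flavor as the one begun in Section~\ref{sec:CutMix_min} yields
\begin{equation*}
\frac{\partial \tilde h}{\partial z_s}(z_1, z_{-1}) \;=\; \frac{P\,|\gV_s|\,|\gV_{-s}|}{n^2}\, g_s(z_1, z_{-1}),
\end{equation*}
so the critical points of $\tilde h$ are precisely the common zeros of $(g_1, g_{-1})$.

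Since $h$ is strictly convex (Section~\ref{sec:CutMix_min}) and $\gS_0$ is affine, $\tilde h$ is strictly convex as well. Coercivity of $\tilde h$ is checked ray by ray: as $z_s \to -\infty$ the $(y_i = y_j = s,\; |\gS| = P)$ contribution $\tfrac{|\gV_s|^2}{n^2(P+1)}\ell(Pz_s)$ blows up, while as $z_s \to +\infty$ along any direction one of the cross-label contributions with $|\gS| \in \{1, P-1\}$ has a logistic evaluated at an argument going to $-\infty$ and therefore diverges (the assumption $P \geq 8$ from Assumption~\ref{assumption:P} rules out accidental cancellations). Strict convexity plus coercivity produces a unique minimizer $(z_1^*, z_{-1}^*) \in \R^2$, which by the gradient identity above is the unique zero of $(g_1, g_{-1})$.

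For positivity, since $\ell' \leq 0$ pointwise,
\begin{equation*}
g_s(0, z_{-s}) \;\leq\; -\frac{|\gV_s|}{2\,|\gV_{-s}|} + \frac{P-1}{3P} \;<\; 0
\end{equation*}
uniformly in $z_{-s}$, using $|\gV_s|/|\gV_{-s}| \geq 25/27$ from $E_\mathrm{init}$ (Lemma~\ref{lemma:initial}) and $P \geq 8$; strict monotonicity of $g_s$ in its first argument (inherited from strict monotonicity of $\ell'$) then forces $z_s^* > 0$. For the $\Theta(1)$ bounds, note that $\tilde h$ depends on the problem only through $P = \Theta(1)$ and the ratios $|\gV_s|/n \in [25/52, 27/52] = \Theta(1)$, with no explicit dependence on $d$ or $n$; combined with $\tilde h(0,0) = \log 2$ and coercivity, this confines the minimizer to a sublevel set of universal $O(1)$ diameter, so $z_s^* \leq C$. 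The matching lower bound $z_s^* \geq c > 0$ follows from $g_s(z_1^*, z_{-1}^*) = 0$ together with $\ell' \leq 0$, which force $\tfrac{|\gV_s|}{|\gV_{-s}|}|\ell'(Pz_s^*)| \leq \tfrac{P-1}{3P}$, so $|\ell'(Pz_s^*)|$ lies a universal constant below $|\ell'(0)| = 1/2$ and hence $Pz_s^*$ is bounded below by a universal positive constant via the decreasing bijection $x \mapsto |\ell'(x)|$.

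The technically most delicate step will be establishing coercivity uniformly in direction and extracting the resulting universal upper bound on $\|(z_1^*, z_{-1}^*)\|$: for each candidate direction of escape in $(z_1, z_{-1})$-space one must exhibit at least one $(y_i, y_j, |\gS|)$ configuration whose affine argument in $\ell$ heads to $-\infty$ at a rate controlled independently of $d$ and $n$, which requires a short case analysis on the slope of the direction (the threshold slopes $\pm (P-1)^{\pm 1}$ are handled by switching between $|\gS|=P-1$ and $|\gS|=1$). All remaining steps reduce to elementary monotonicity and sign estimates on $\ell'$.
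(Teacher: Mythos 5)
Your proposal is correct and takes a genuinely different route from the paper's proof. The paper establishes existence via an intermediate-value argument: for each $z_1>0$ it locates the unique $z_{-1}=S(z_1)$ with $g_{-1}(z_1,S(z_1))=0$, shows $S$ is increasing with $S(z)\to\infty$, and then exhibits $\underline z$ and $z^*$ with $g_1(\underline z,S(\underline z))\le 0 < g_1(z^*,S(z^*))$. The sign change at $z^*$ is the delicate step: the paper chooses a threshold $\epsilon$ by \eqref{eq:epsilon}, picks $z^*$ so that $\ell'(\tfrac12\min\{z^*,S(z^*)\})=-\epsilon/2$, locates the $k\in[P-2]$ with $kS(z^*)-(P-k)z^*\le 0 < (k+1)S(z^*)-(P-k-1)z^*$, proves $k\ge P/2$, and does a two-case estimate depending on whether $(k+1)S(z^*)-(P-k-1)z^*$ exceeds $\tfrac12(z^*+S(z^*))$. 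Your approach replaces all of this with the observation that $g_s$ is (up to the positive constant $\tfrac{P|\gV_1||\gV_{-1}|}{n^2}$) the partial derivative of the two-variable restriction $\tilde h$ of the already-established strictly convex $h$ to the affine slice $\gS_0$; strict convexity plus coercivity of $\tilde h$ then give a unique critical point with no case analysis. Your gradient identity checks out: the chain rule contributes $\sum_k \partial h/\partial z_{s,k}$ (which carries a factor $|\gV_{s,k}|$ under the paper's identification $\ve_i^{(p_i^*)}=\ve_{s,k}$) plus $\sum_{i\in\gV_s,p\ne p_i^*}\partial h/\partial z_i^{(p)}$, and on $\gS_0$ these collapse to $\tfrac{2P|\gV_s|}{n^2}A_s$ with $A_s=\tfrac{|\gV_{-s}|}{2}g_s$. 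Your positivity and $\Theta(1)$ arguments match the spirit of the paper's last paragraph. The tradeoff: the paper's IVT route is more elementary and self-contained but requires the lengthy $\epsilon$-threshold case analysis; your route reuses the strict-convexity machinery already built in Section~\ref{sec:CutMix_min} and is shorter, at the cost of needing a careful ray-by-ray coercivity check (and the standard fact that for a convex function, coercivity along all rays from a point implies coercivity). One small slip: your parenthetical "($P\ge 8$\ldots rules out accidental cancellations)" in the coercivity step is misplaced---since each term of $\tilde h$ is a nonnegative mixture of $\ell\ge 0$ values, no cancellation can occur and coercivity holds for any $P\ge 2$; $P\ge 8$ is what you actually need in the positivity and the $z_s^*\ge c>0$ steps, where you use it (correctly) to bound $\tfrac{P-1}{3P}\cdot\tfrac{|\gV_{-s}|}{|\gV_s|}$ strictly below $\tfrac12$.
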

\begin{proof}[Proof of Lemma~\ref{lemma:solution}]
For each $z_1> 0$, 
\begin{align*}
    g_{-1}(z_1,0) &= \left( \frac{|\gV_{-1}|}{|\gV_1|}+1 \right) \cdot \left( - \frac 1 2 \right) + \frac{2}{P} \mathbb{E}_{\gS \sim \gD_\gS}[|\gS| \ell'(-(P-|\gS|)z_1 )] + \frac{P-1}{3P}\\
    &< \left(\frac{|\gV_{-1}|}{|\gV_{1}|} + 1 \right) \cdot \left( -\frac{1}{2}\right) + \frac{P-1}{3P} <0,
\end{align*}
since $\ell'(z) \leq - \frac 1 2 $ for any $z\leq 0$ and we use $\frac{25}{52}n \leq |\gV_1|, |\gV_{-1}| \leq \frac{27}{52}n$ from the event $E_\mathrm{init}$ defined in Lemma~\ref{lemma:initial}. In addition,
\begin{align*}
   &\quad g_{-1}(z_1, Pz_1 +  \log 9) \\
   &= \frac{|\gV_{-1}|}{|\gV_1|} \ell'(P^2z_1 + P \log 9) + \frac{2}{P} \mathbb{E}_{\gS \sim \gD_\gS} [|\gS| \ell'(|\gS|Pz_1 + |\gS| \log 9 - (P - |\gS|)z_1)] + \frac{P-1}{3P}\\
   &\geq \left( \frac{|\gV_{-1}|}{ |\gV_1|} +1 \right) \ell'(\log 9) + \frac{P-1}{3P} \\
   &>0,
\end{align*}
where we use $\frac{25}{52}n \leq |\gV_1|, |\gV_{-1}| \leq \frac{27}{52}n$ from the event $E_\mathrm{init}$ defined in Lemma~\ref{lemma:initial} and \ref{assumption:P} for the last inequality. 

Since $z \mapsto g_{-1}(z_1,z)$ is strictly increasing and by intermediate value theorem,  there exists $S : (0, \infty) \rightarrow(0,\infty)$ such that $z = S(z_1)$ is a unique solution of $g_{-1}(z_1, z) = 0$ and $S(z_1)<Pz_1 + \log 9$. Note that $S$ is strictly increasing since $g_{-1}(z_1,z_{-1})$ is strictly decreasing with respect to $z_1$ and strictly increasing with respect to $z_{-1}$. 
Also, if $S(z)$ is bounded above, i.e., there exists some $U>0$ such that $S(z) \leq U$ for any $z>0$, 
\begin{align*}
    &\quad \lim_{z \rightarrow \infty } g_{-1} (z, S(z)) \\
    &= \lim_{z \rightarrow \infty} \left( \frac{|\gV_{-1}|}{|\gV_1|} \ell' \left( PS(z)\right) + \frac{2}{P} \mathbb{E}_{\gS \sim \gD_\gS} \left[|\gS| \ell' \big( |\gS| S(z) - (P-|\gS|) z \big)\right] + \frac{P-1}{3P}\right)\\
    & \leq \lim_{z \rightarrow \infty} \left( \frac{|\gV_{-1}|}{|\gV_1|} \ell' \left( PU\right) + \frac{2}{P} \mathbb{E}_{\gS \sim \gD_\gS} \left[|\gS| \ell' \big( |\gS| U - (P-|\gS|) z \big)\right] + \frac{P-1}{3P}\right)\\
    &\leq - \frac{2}{P} \mathbb{E}_{\gS \sim \gD_\gS}\left [|\gS|\cdot \mathbbm{1}_{|\gS|\neq P} \right] + \frac{P-1}{3P} = -\frac{P-1}{P+1} + \frac{P-1}{3P}\\
    &< 0,
\end{align*}
and it is contradictory. Hence, we have $\lim_{z \rightarrow \infty} S(z) = \infty.$

Let us choose $\underline{z} >0$ such that 
\begin{equation*}
    \underline{z} = \frac{1}{P} \log \left( \frac{3P \left( 1 + \frac{|\gV_1|}{|\gV_{-1}|}\right)}{P-1}  -1\right),
\end{equation*}
and thus
\begin{equation*}
    \ell'(P\underline{z}) = -\frac{P-1}{3P \left(1+ \frac{|\gV_1|}{|\gV_{-1}|} \right)}.
\end{equation*}
We have
\begin{align*}
    g_1(\underline{z}, S(\underline{z}))&= \frac{|\gV_1|}{|\gV_{-1}|} \ell'(P \underline{z}) + \frac{2}{P} \mathbb{E}_{\gS \sim \gD_\gS}\left[|\gS| \ell' \big(|\gS| \underline z - (P-|\gS|) S(\underline z ) \big)\right] + \frac{P-1}{3P}\\
    &\leq \left( \frac{|\gV_1|}{|\gV_{-1}|} + 1 \right) \ell'(P\underline{z}) + \frac{P-1}{3P} \\
    &= 0.
\end{align*}
Next, we will prove the existence of $z^*>0$ such that $g_1(z^*,S(z^*)) >0$. Let us choose $\epsilon>0$ such that 
\begin{align}\label{eq:epsilon}
\epsilon^{-1} 
= \max \Bigg \{\frac{3P(P+1)|\gV_{-1}|}{(P-2)(P+2)|\gV_1|} + \frac{3(P-1)}{P-2}&, \frac{3}{2}\left(1+ \frac{P(P+1)|\gV_{-1}|}{(P-1)(P-2)|\gV_1|} \right), \nonumber\\ \frac{12P}{P-7}\left(1+ \frac{|\gV_{-1}|}{|\gV_1|} \right)&,  \frac{12P(P+1)}{(P-2)(P+2)}\left(1+ \frac{|\gV_1|}{|\gV_{-1}|} \right) \Bigg \},
\end{align}
and note that $\epsilon = \Theta(1)$.
Since $\lim_{z \rightarrow \infty}S(z) = \infty$, 
we can choose $z^*$ such that $\ell'\left( \frac 1 2 \min \left \{ z^*, S(z^*) \right \}\right) = -\frac{\epsilon}{2}$.
Then, for any $t \geq \frac{z^*}{2}$, we have
\begin{equation}\label{eq:lim}
    - \epsilon < \ell'(t) <0 \quad \text{and} \quad -1< \ell'(-t)<-1+\epsilon.
\end{equation}
From the definition of $S$ and \eqref{eq:lim} with $t = PS(z^*)>\frac{1}{2}\min \{z^*, S(z^*)\}$, we have
\begin{align}\label{eq:interval}
    \mathbb{E}_{\gS \sim \gD_\gS} \left[|\gS| \ell' \big(|\gS| S(z^*) - (P-|\gS|)z^*\big)\right]
    &= -\frac{P}{2} \left(\frac{|\gV_{-1}|}{|\gV_1|} \ell'\big(PS(z^*)\big) + \frac{P-1}{3P}\right) \nonumber \\
    &<  -\frac{P-1}{6}+ \frac{P|\gV_{-1}|}{2|\gV_1|} \epsilon.
\end{align}
If $S(z^*) - (P-1)z^* \geq 0$, then 
\begin{align*}
    PS(z^*)> (P-1)S(z^*) - z^*> \dots &>2S(z^*) - (P-2) z^*\\
    &= z^* + S(z^*) + S(z^*) - (P-1)z^*\\ &\geq z^*+ S(z^*) \geq \frac{1}{2} \min \{z^*, S(z^*)\},
\end{align*}
and we have
\begin{align*}
    -\frac{P-1}{6} + \frac{P|\gV_{-1}|}{2|\gV_1|}\epsilon &> \mathbb{E}_{\gS \sim \gD_\gS} \left[|\gS| \ell' \big(|\gS| S(z^*) - (P-|\gS|)z^*\big)\right]\\
    &= \frac{1}{P+1}\left(\ell' \big(S(z^*) - (P-1)z^*\big) + \sum_{m=2}^{P} m\ell' \big( mS(z^*) + (P-m)z^* \big) \right)\\
    &\geq \frac{1}{P+1} \left( -\frac 1 2 - \left( \frac{P(P+1)}{2} -1 \right) \epsilon\right),
\end{align*}
where the last inequality is due to \eqref{eq:lim}. This is contradictory to \eqref{eq:epsilon}, especially the first term inside the maximum, and we have $S(z^*)-(P-1)z^*<0$. 
In addition, if $(P-1)S(z^*)-z^*\leq 0$, then 
\begin{align*}
    Pz^* > (P-1)z^* - S(z^*)> \dots &>2z^*- (P-2)S(z^*) \\
    &= z^* +S(z^*)+z^* -(P-1)S(z^*)\\
    &\geq z^* + S(z^*) \geq  \frac{1}{2} \min \{z^*, S(z^*)\},
\end{align*}
and we have
\begin{align*}
    &\quad -\frac{P-1}{6} - \frac{P|\gV_{-1}|}{2|\gV_1|}\epsilon\\
    &< \mathbb{E}_{\gS \sim \gD_\gS}\left[|\gS| \ell' \big(|\gS|S(z^*) - (P - |\gS|)z^*\big)\right] \\
    &= \frac{1}{P+1} \left( P\ell' \big( PS(z^*)\big) + (P-1)\ell'\big( (P-1)S(z^*) - z^* \big)+\sum_{m=0}^{P-2} m \ell' \big( mS(z^*) - (P-m)z^*\big)  \right)\\
    &< \frac{1}{P+1} \left(- \frac{(P-1)(P-2)}{2}(1-\epsilon) - \frac{P-1}{2} \right),
\end{align*}
where the last inequality is due to \eqref{eq:lim}. 
This is contradictory to \eqref{eq:epsilon}, especially the second term inside the maximum, and we have $(P-1)S(z^*)-z^*> 0$. 
Note that we have
\begin{equation*}
    -\frac{\epsilon}{2} = \ell' \left(\frac{1}{2}\min \{z^*, S(z^*)\} \right)\geq \ell' \left(\frac{z^*}{2P}  \right),
\end{equation*}
and since $\epsilon = \Theta(1)$ in \eqref{eq:epsilon}, we have $z^* \leq 2P \log \left( \frac{2}{\epsilon} -1 \right) = \bigO(1)$.

Thus, we have $S(z^*)-(P-1)z^*<0<(P-1)S(z^*)-z^*<PS(z^*)$. One can consider dividing the interval $[S(z^*)-(P-1)z^*, PS(z^*)]$ into a grid of length $z^*+S(z^*)$. Then, the interval is equally divided into $P-1$ sub-intervals and $0$ belongs to one of them. In other words, there exists $k\in [P-2]$ such that
\begin{equation*}
    k S(z^*) - (P-k)z^* \leq 0 < (k+1) S(z^*) - (P-k-1)z^*,
\end{equation*}
and note that if $P=3$, then $k=1$.
The rest of the proof is divided into two cases: $(k+1) S(z^*) - (P-k-1)z^* \geq \frac{1}{2} (z^* + S(z^*))$ or $(k+1) S(z^*) - (P-k-1)z^* < \frac{1}{2} (z^* + S(z^*))$. In both cases, we show that $g_1(z^*, S(z^*)) > 0$.
\paragraph{Case 1: $(k+1)S(z^*) - (P-k-1)z^* \geq \frac{1}{2} (z^* + S(z^*))$}\quad

From \eqref{eq:lim}, we have
\begin{equation*}
  -1< \ell'(-Pz^*) < \dots < \dots < \ell'\big((k-1)S(z^*) - (P-k+1)z^*\big)   <-1+\epsilon ,
\end{equation*}
and
\begin{equation*}
    -\epsilon < \ell'\big((k+1) S(z^*) - (P-k-1)z^*\big)< \dots < \ell'\big(P S(z^*)\big) <0.
\end{equation*}
Thus, we have 
\begin{align*}
    &\quad \mathbb{E}_{\gS \sim \gD_\gS} \left[|\gS| \ell'\big(|\gS|S(z^*) - (P-|\gS|)z^*)\big)| \right]\\
    &> \frac{1}{P+1} \left( k \ell'\big(k S(z^*) - (P-k)z^*\big) - \frac{k(k-1)}{2}\right) - \frac{P}{2}\epsilon.
\end{align*}

and we obtain $k > \frac{P-1}{2}$ since
\begin{align*}
    &\quad \frac{k(k+1)}{2} \\
    &= \frac{k(k-1)}{2} + k \\
    &>-\frac{P(P+1)}{2} \epsilon - (P+1) \mathbb{E}_{\gS \sim \gD_\gS} [|\gS| \ell' (|\gS| S(z^*) - (P-|\gS|) z^*)]\\
    &\quad + k \ell' (k S(z^*) - (P-k)z^*) + k \\
    &>-\frac{P(P+1)}{2}\left( 1 + \frac{|\gV_{-1}|}{|\gV_1|}\right)\epsilon + \frac{(P-1)(P+1)}{6}\\
    &\geq \frac{\frac{P-1}{2} \left( \frac{P-1}{2} +1 \right)}{2},
\end{align*}
where the second inequality is due to \eqref{eq:interval} and the fact that $\ell' \geq -1$, and the last inequality is due to \eqref{eq:epsilon}, especially the third term inside the maximum. Note that since $k \in \N$, $k \geq \frac{P}{2}$. 

Note that from \eqref{eq:lim}, we have
\begin{equation*}
    -1< \ell'\big(-PS(z^*)\big) < \dots < \ell'\big((P-k-1)z^* - (k+1)S(z^*) \big)< -1+\epsilon,
\end{equation*}
and
\begin{equation*}
    -\epsilon < \ell'\big((P-k+1)z^* - (k-1)z^*\big) < \dots < \ell'(Pz^*) < 0.
\end{equation*}
Hence, we obtain
\begin{align*}
    &\quad \mathbb{E}_{\gS \sim \gD_\gS} \left[|\gS| \ell'\big(|\gS|z^* - (P-|\gS|) S(z^*)\big)\right] \\
    &\geq \frac{1}{P+1} \left( - \frac{(P-k-1)(P-k)}{2} - \frac{1}{2} (P-k) - \left( (P-k+1) +  \dots + P\right) \epsilon \right)\\
    &\geq - \frac{(P-k)^2}{2(P+1)} - \frac{1}{P+1} \cdot \frac{P(P+1)}{2} \epsilon\\
    &\geq - \frac{P^2}{8(P+1)} - \frac{P}{2} \epsilon,
\end{align*}
where we use $k\geq\frac{P}{2}$ for the last inequality.
Therefore, we have
\begin{align*}
    g_1(z^*, S(z^*)) &= \frac{|\gV_{1}|}{|\gV_{-1}|} \ell'(Pz^*) + \frac{2}{P} \mathbb{E}_{\gS \sim \gD_\gS} [|\gS|\ell' (|\gS| z^* - (P- |\gS|)S(z^*))] + \frac{P-1}{3P}\\
    &\geq -\left(\frac{|\gV_1|}{|\gV_{-1}|}+1\right) \epsilon - \frac{P}{4(P+1)} + \frac{P-1}{3P} \\
    &> 0,
\end{align*}
where the last inequality is due to \eqref{eq:epsilon}, especially the fourth term inside the maximum.

\paragraph{Case 2: $(k+1)S(z^*) - (P-k-1)z^* < \frac{1}{2} (z^* + S(z^*))$}\quad

In this case, we have $kS(z^*) - (P-k)z^* \leq -\frac{1}{2} (z^* + S(z^*))$. From \eqref{eq:lim}, we have
\begin{equation*}
  -1< \ell'(-Pz^*) < \dots <\ell'\big(kS(z^*) - (P-k)z^*\big)   <-1+\epsilon ,
\end{equation*}
and
\begin{equation*}
    -\epsilon < \ell'\big((k+2) S(z^*) - (P-k-2)z^*\big)< \dots < \ell'\big(P S(z^*)\big) <0.
\end{equation*}
Thus, we have 
\begin{align*}
    &\quad \mathbb{E}_{\gS \sim \gD_\gS} [|\gS| \ell'(|\gS|S(z^*) - (P-|\gS|)z^*))| ]\\
    &> \frac{1}{P+1} \left( (k+1) \ell'\big((k+1) S(z^*) - (P-k-1)z^*\big) - \frac{k(k+1)}{2}\right) -\frac{P}{2} \epsilon,
\end{align*}
and we obtain $k > \frac{P-1}{2}$ since
\begin{align*}
    &\quad \frac{(k+1)^2}{2}\\
    &= \frac{k(k+1)}{2} + \frac{k+1}{2}\\
    &>-\frac{P(P+1)}{2} \epsilon - (P+1) \mathbb{E}_{\gS \sim \gD_\gS} [|\gS| \ell' (|\gS| S(z^*) - (P-|\gS|) z^*)] \\
    &\quad + (k+1) \ell' ((k+1) S(z^*) - (P-k-1)z^*) + \frac{k+1}{2} \\
    &>- \frac{P(P+1)}{2}\left( 1 + \frac{|\gV_{-1}|}{|\gV_1|}\right)\epsilon + \frac{(P-1)(P+1)}{6} \\
    &> \frac{\left(\frac{P-1}{2}+1\right)^2}{2},
\end{align*}
where the second inequality is due to \eqref{eq:interval} and the fact that $\ell'(z)\geq -\frac{1}{2} \quad \forall z\geq 0$, and the last inequality is due to our \eqref{eq:epsilon}, especially the third term inside the maximum. Note that since $k \in \N$, we have $k \geq \frac{P}{2}$.

Note that from \eqref{eq:lim}, we have
\begin{equation*}
    -1< \ell'\big(-PS(z^*)\big) < \dots < \ell'\big((P-k-2)z^* - (k+2)S(z^*) \big) < -1+\epsilon,
\end{equation*}
and
\begin{equation*}
    -\epsilon < \ell'\big((P-k)z^* - kz^*\big) < \dots < \ell'(Pz^*) < 0.
\end{equation*}
Hence, we obtain
\begin{align*}
    &\quad \mathbb{E}_{\gS \sim \gD_\gS} [|\gS| \ell'(|\gS|z^* - (P-|\gS|) S(z^*)] \\
    &\geq \frac{1}{P+1} \left( - \frac{(P-k-1)(P-k)}{2}  - \left( (P-k) +  \dots + P\right) \epsilon \right)\\
    &\geq - \frac{(P-k)(P-k-1)}{2(P+1)} - \frac{1}{P+1} \cdot \frac{P(P+1)}{2} \epsilon\\
    &\geq - \frac{(P-k)^2}{2(P+1)} - \frac{1}{P+1} \cdot \frac{P(P+1)}{2} \epsilon\\
    &\geq - \frac{P^2}{8(P+1)} - \frac{P}{2} \epsilon,
\end{align*}
where we use $k \geq \frac{P}{2}$ for the last inequality.
Therefore, we have
\begin{align*}
    g_1(z^*, S(z^*)) &= \frac{|\gV_{1}|}{|\gV_{-1}|} \ell'(Pz^*) + \frac{2}{P} \mathbb{E}_{\gS \sim \gD_\gS} [|\gS|\ell' (|\gS| z^* - (P- |\gS|)S(z^*))] + \frac{P-1}{3P}\\
    &\geq -\left(\frac{|\gV_1|}{|\gV_{-1}|}+1\right) \epsilon - \frac{P}{4(P+1)} + \frac{P-1}{3P} \\
    &> 0,
\end{align*}
where the last inequality is due to \eqref{eq:epsilon}, especially the fourth term inside the maximum.

In both cases, we have $g_1(z^*, S(z^*))>0$.
By intermediate value theorem, there exist unique $z_1^*, z_{-1}^* >0$ such that $g_1(z_1^*,z_{-1}^*) = g_{-1}(z_1^*,z_{-1}^*) =0$. In addition, $\underline{z} \leq z_1^* \leq z^*$ and we have $z_1 = \Theta(1)$ since $\underline{z} = \Omega(1)$ and $z^* = \bigO(1)$. By using a similar argument, we can show that $z_{-1}^* = \Theta(1)$, and we have our conclusion.
\end{proof}

\end{document}